\theoremstyle{plain}
\newtheorem{theorem}{Theorem}[section]
\newtheorem{corollary}[theorem]{Corollary}
\theoremstyle{definition}
\newtheorem{definition}[theorem]{Definition}
\theoremstyle{remark}
\definecolor{mygray}{gray}{0.95}
\colorlet{shadecolor}{mygray}
\colorlet{framecolor}{white}
\newcounter{cfinding}[section]
\newenvironment{cfinding}[1][]{\refstepcounter{cfinding}\par\medskip
   \noindent\textsc{Takeaway~\thecfinding. #1} \rmfamily}{\medskip}
\newcommand{\edit}[1]{\textcolor{black}{#1}}
\newcommand{\probname}{Continual Adaptive Robustness}
\DeclareMathOperator*{\argmax}{arg\,max}
\setlist[itemize]{leftmargin=*, noitemsep, topsep=0pt}
\def\BibTeX{{\rm B\kern-.05em{\sc i\kern-.025em b}\kern-.08em
    T\kern-.1667em\lower.7ex\hbox{E}\kern-.125emX}}
\icmltitlerunning{Adapting to Evolving Adversaries with Regularized Continual Robust Training}
\begin{document}

\tcbset{
    myboxstyle/.style={
        colback=gray!10, 
        colframe=white, 
        fonttitle=\bfseries, 
        coltitle=black, 
        left=1mm, 
        right=1mm,
        top=0.5mm,
        bottom=0.5mm,
        enhanced, 
        borderline west={0.2mm}{0pt}{Green}, 
        boxrule=0mm, 
        sharp corners, 
        before skip=5pt,
        after skip=5pt, 
    }
}

\twocolumn[
\icmltitle{Adapting to Evolving Adversaries with Regularized Continual Robust Training}



\icmlsetsymbol{equal}{*}

\begin{icmlauthorlist}
\icmlauthor{Sihui Dai}{equal,princeton}
\icmlauthor{Christian Cianfarani}{equal,uchicago}
\icmlauthor{Arjun Nitin Bhagoji}{uchicago}
\icmlauthor{Vikash Sehwag}{comp}
\icmlauthor{Prateek Mittal}{princeton}
\end{icmlauthorlist}

\icmlaffiliation{princeton}{Department of Electrical and Computer Engineering, Princeton University}
\icmlaffiliation{uchicago}{Department of Computer Science, University of Chicago}
\icmlaffiliation{comp}{Google Deepmind}

\icmlcorrespondingauthor{Sihui Dai}{sihuid@princeton.edu}
\icmlcorrespondingauthor{Christian Cianfarani}{crc@uchicago.edu}

\icmlkeywords{Machine Learning, ICML}

\vskip 0.3in
]



\printAffiliationsAndNotice{\icmlEqualContribution} 

\begin{abstract}
Robust training methods typically defend against specific attack types, such as $\ell_p$ attacks with fixed budgets, and rarely account for the fact that defenders may encounter new attacks over time.  A natural solution is to adapt the defended model to new adversaries as they arise via fine-tuning, a method which we call continual robust training (CRT).  However, when implemented naively, fine-tuning on new attacks degrades robustness on previous attacks.  This raises the question: \textit{how can we improve the initial training and fine-tuning of the model to simultaneously achieve robustness against previous and new attacks?} We present theoretical results which show that the gap in a model's robustness against different attacks is bounded by how far each attack perturbs a sample in the model's logit space, suggesting that regularizing with respect to this logit space distance can help maintain robustness against previous attacks.
Extensive experiments on 3 datasets (CIFAR-10, CIFAR-100, and ImageNette) and over 100 attack combinations demonstrate that the proposed regularization improves robust accuracy with little overhead in training time. Our findings and open-source code\footnote{Our code is available at: \url{https://github.com/inspire-group/continual_robust_training/}} lay the groundwork for the deployment of models robust to evolving attacks.
\end{abstract}

\section{Introduction}

\begin{figure}[th]
    \centering
    \includegraphics[width=0.7\linewidth]{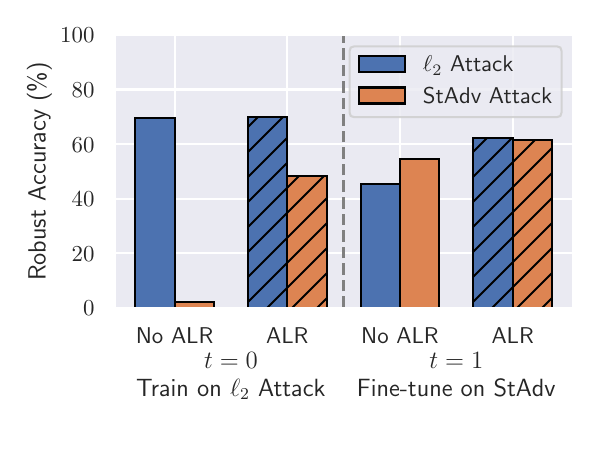}
        \vspace{-20pt}
    \caption{\textbf{Impact of our proposed regularization term (ALR) in both training and fine-tuning on CIFAR-10.}  Adversarial $\ell_2$ regularization (ALR) significantly improves generalization to the unforeseen StAdv attack when performing adversarial training for $\ell_2$ robustness.  Using ALR when subsequently fine-tuning with only StAdv attack also decreases the drop in $\ell_2$ robustness.}
    \label{fig:summary}
    \vspace{-20pt}
\end{figure}

For safety critical applications, it is important to defend machine learning (ML) models against test-time attacks.  However, many existing defenses \citep{madry2017towards, zhang2019theoretically, croce2020robustbench} assume that the adversary is restricted to a narrow threat model such as an $\ell_p$ ball of fixed radius around the input.
When this assumption is violated, the robustness of adversarially trained models can significantly degrade~\citep{dai2023multirobustbench, kaufmann2019testing}. Additionally, due to rapid development of new types of attacks \citep{XiaoZ0HLS18, LaidlawF19, laidlaw2020perceptual, kaufmann2019testing}, it is difficult to anticipate all types of attacks in advance.  This raises the question: \emph{how can we defend models as new attacks emerge?}

For long-term robustness, models must quickly adapt to new attacks without sacrificing robustness to previous ones, a goal known as continual adaptive robustness (CAR) \citep{dai2024position} (\S \ref{sec: setup}). A natural approach is to apply adversarial training on known attacks and fine-tune when new ones emerge, a process we call continual robust training (CRT). However, adversarial training provides poor generalization to unseen attacks, leading to suboptimal starting points for fine-tuning, and fine-tuning itself can degrade robustness against past attacks (\cref{fig:summary}).

We theoretically show that the robustness gap between attacks is linked to logit-space distances between perturbed and clean inputs and regularizing these distances can improve generalization to new attacks and reduce drops in robustness on previous attacks. Extensive experiments confirm these findings. Our key contributions are as follows:

\begin{figure*}[!t]
	\centering
	\includegraphics[width=0.9\textwidth]{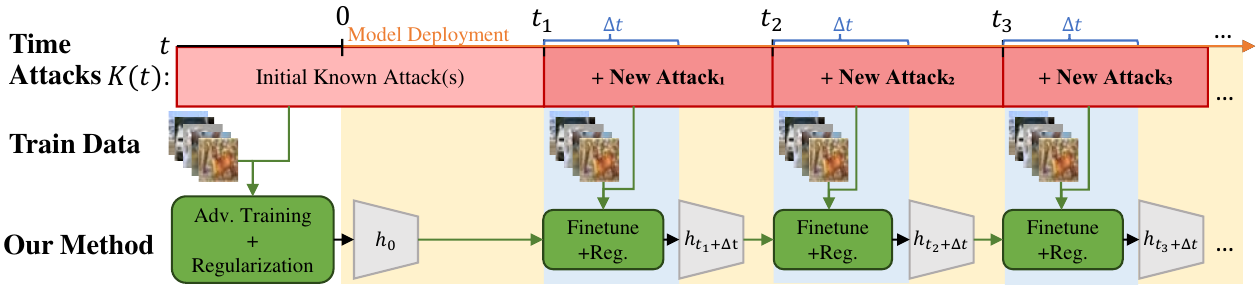}
	\caption{\textbf{An overview of the problem of adapting to new adversaries (continual adaptive robustness) and our solution framework (Regularized Continual Robust Training).} The defender learns about the existence of new attacks sequentially, and at time $t$ aims to achieve robustness against $K(t)$, the set of attacks known at times $\le t$.  A model $h_0$ is deployed at time $0$ to be robust against an initial set of known attacks, and new attacks are introduced at times $t_1$, $t_2$, and $t_3$. We propose performing regularized initial robust training \edit{on the initially known attack(s)} and then using regularized fine-tuning to adapt the model against future attacks within time $\Delta t$, leading to a sequence of models $h_0, h_{t_1 + \Delta t}, h_{t_3 + \Delta t}, h_{t_3 + \Delta t}$.}
	\label{fig:overview}
	\vspace{-10pt}
\end{figure*}

\noindent \textbf{Regularized Continual Robust Training for Adapting to New Adversaries} (\cref{sec: theory_methods}) . To enhance CRT, we analyze the difference in robust losses between attacks and show it is upper bounded by the sum of the maximal $\ell_2$ distance between clean and perturbed logits for both attacks. Minimizing this bound can thus improve generalization to new attacks and preserve robustness against previous ones. This motivates our proposed \textit{adversarial $\ell_2$ regularization (ALR)}, which penalizes the $\ell_2$ distance between adversarial and benign logits.

\noindent \textbf{Empirical Validation on Sequentially Introduced Attacks} (\cref{sec:car_reg}).
We conduct experiments on 2 sequences of 4 attacks across 3 datasets (CIFAR-10, CIFAR-100, and Imagenette). Our results show that ALR improves robustness in CRT with a 5.48\% gain in Union accuracy (worst-case across all attacks) across $\ell_2$, StAdv \citep{XiaoZ0HLS18}, and Recolor attacks \citep{LaidlawF19} over its unregularized counterpart. \cref{fig:summary} visualizes improvements brought through ALR for a sequence of 2 attacks.  

\noindent \textbf{Impact of ALR and Efficient Approximations in Training and Fine-Tuning} (\cref{sec:init_train_impact},\cref{sec:fine-tuning_impact}). We conduct ablations using over $100$ attack combinations (12 attack types, with 9 non-$\ell_p$) to study ALR’s role in different stages of CRT. We also explore random noise-based regularization as a more efficient alternative.  We find that while noise-based regularization improves generalization in initial training, ALR is essential for maintaining robust performance during fine-tuning and improves Union accuracy by up to 7.85\%.

\section{Setup: \probname}
\label{sec: setup}
In this section, we introduce the problem of continual adaptive robustness (CAR)~\cite{dai2024position}, which aims to achieve robustness against new attacks as they are sequentially discovered. We survey existing approaches to this problem, with additional related work included in \cref{appsec: add_rel_work}.  CAR is visualized in Figure \ref{fig:overview}.

\subsection{A Motivating Example}
Consider an entity that wants to deploy a robust ML system. The entity uses recent techniques (\emph{e.g.} adversarial training) to defend their model  against existing attack types (such as $\ell_p$ perturbations) and deploys their model at time $t=0$. At a later time $t_1$, a research group publishes a paper about a new attack type (\emph{e.g.} spatial perturbations \citep{XiaoZ0HLS18}) against which the entity's model is not robust.  Since the ML system has been deployed, the entity would want to \textit{quickly} modify the model to be robust against the new attack while maintaining robustness against previous attacks.  Having a quick update procedure would minimize the time that an attacker can exploit this vulnerability.  Quick adaptation to new attacks is the foundation of continual adaptive robustness (CAR), a problem setting first introduced in a position paper \citep{dai2024position}.  We propose and analyze the first dedicated defense for CAR in this paper.


\subsection{Problem Formulation}
\noindent
\textbf{Notation:} $\mathcal{D} = X \times Y$ denotes a data distribution where $X$ and $Y$ are the support of inputs and labels, respectively.  $\mathcal{H}$ denotes the hypothesis class.  We use $C:X \to \tilde{X}$ to define an adversarial constraint where $\tilde{X}$ is the space of adversarial examples.  $\ell: Y \times Y \to \mathbb{R}$ denotes the loss function.

\noindent
\textbf{Attack sequences:} In CAR~\cite{dai2024position}, different test-time attacks are introduced sequentially (Figure \ref{fig:overview}). Each attack (perturbed input) can be formulated as the maximizer of the loss $P_C(x, y, h) = \argmax_{x' \in C(x)} \ell(h(x), y)$ (within the constraint $C$) and has a corresponding time $T(P_C)$ at which it is discovered by the defender.  We call the set of attacks known by the defender at a given time $t$ the \textit{knowledge set} at time $t$: $K(t) = \{P\ | T(P) \le t\}$. The expansion of $K$ over time models settings such as research groups or a security team discovering new attack types. 

\noindent
\textbf{Goals in CAR:}  A defender in CAR uses a defense algorithm $\mathcal{A}_{\text{CAR}}$ to deploy a model $h_t = \mathcal{A}_{\text{CAR}}(\mathcal{D},K(t),\mathcal{H})$ at each time step $t$. Performance at time $t$ is measured by Union robust loss 
across the knowledge set:
$\mathcal{L}(h, t) = \mathbb{E}_{(x,y)\sim \mathcal{D}} \max_{P \in K(t)} [\ell(P(x,y, h), y)]$.

\begin{definition}[Continual Adaptive Robustness \citep{dai2024position}] Given loss tolerances $\delta_{\text{known}}$ and $\delta_{\text{unknown}}$ with $0 < \delta_{\text{known}} < \delta_{\text{unknown}}$ and grace period $\Delta t$ for recovering from a new attack, a defense algorithm $\mathcal{A}_{\text{CAR}}$ achieves CAR if for all $t > 0$:
\begin{itemize}
    \item When $t - T(P) < \Delta t$ for any attack $P$ and $T(P) < t$, $h_t$ satisfies $\mathcal{L}(h_t, t) \le \delta_{\text{unknown}}$
    \item Otherwise, $\mathcal{L}(h_t, t) \le \delta_{\text{known}}$.
\end{itemize}
\end{definition}

These criteria capture 3 distinct goals for the defender:  (1) The model at time $t$ must \textit{achieve good robustness if no attacks have been introduced recently} (within $\Delta t$ time). This is due to the $\delta_{\text{known}}$ threshold on the robust loss in the second criterion;  (2) If a new attack has occurred within $\Delta t$ period before the current time $t$, the model at time $t$ must \textit{achieve some robustness against the new attack}.  This is modeled by the $\delta_{\text{unknown}}$ threshold in the first criterion.  Since $0 < \delta_{\text{known}} < \delta_{\text{unknown}}$, CAR tolerates a degradation in robustness between the 2 cases; (3) The defense is expected to \textit{recover robustness quickly after new attacks}.  This is modeled by the $\Delta t$ time window; $\Delta t$ time after the introduction of a new attack, the loss threshold changes from $\delta_{\text{unknown}}$ to $\delta_{\text{known}}$.

\subsection{Baseline approaches to CAR}
\noindent
\textbf{CAR through multiattack robustness (MAR). }Prior works for multiattack robustness often involve training with multiple attacks simultaneously \citep{TB19, MainiWK20}, which can be computationally expensive. A trivial (but expensive) defense algorithm for CAR is to use these training-based techniques and retrain a model from scratch on $K(t)$ every time it changes.  However, this would require us to tolerate larger values of $\Delta t$. 

\noindent
\textbf{CAR through unforeseen attack robustness (UAR). } Defenses for unforeseen attack robustness (UAR) aim to attacks outside of those used in the design of the defense. Another trivial defense algorithm for CAR is to use a UAR defense \citep{laidlaw2020perceptual,dai2022formulating} to get a model $h$ and use $h$ for all time steps.  This approach is efficient as no time is spent updating the model but would require much higher values of $\delta_{\text{known}}$ as these methods do not obtain high robustness across attacks \citep{dai2023multirobustbench}.

\section{Theoretical Motivation and Methods}
\label{sec: theory_methods}
In this section, we introduce continual robust training (CRT) and provide theoretical results to demonstrate that adding a regularization term bounding adversarial logit distances can help balance performance across a set of adversaries.
\subsection{Continual Robust Training}
We now introduce continual robust training (CRT).  CRT consists of 2 parts, \textit{initial training} and \textit{iterative fine-tuning} (Figure \ref{fig:overview}).  The output of initial training is deployed at $t=0$ while fine-tuning is used as new attacks are introduced.


At time $t=0$, the goal of the defender is to minimize the initial training objective: $\mathcal{L}(h,0) = \frac{1}{m}\sum_{i=1}^m\ell(h(P_{C_\text{init}}(x_i, y_i, h)), y_i)$
where $\{(x_i, y_i)\}_{i=1}^m$ is the training dataset and $P_{C_\text{init}}$ is the initial attack. 
Notably, using standard training in this stage yields a high $\delta_\text{unknown}$.

At $t>0$, as new attacks are introduced, we use a fine-tuning strategy $F$ to select the attack from $K(t)$ to use for each example.  Specifically, we formulate this as:
$    \mathcal{L}(h, t) = \frac{1}{m}\sum_{i=1}^m \ell(h(P_{C}(x_i, y_i, h)), y_i)$ where $P_{C}= F(K(t), (x_i, y_i))$.
Fine-tuning strategies include picking the attack that maximizes $\ell(x_i, y_i)$, randomly sampling from $K(t)$, and using the newest attack.  A good fine-tuning strategy would be able to quickly adapt the model to new attacks, allowing it to satisfy a small $\Delta t$ threshold.
However, naive fine-tuning does not guarantee good performance across all attacks and may require large values of $\delta_\text{known}$. 
As illustrated in \cref{fig:summary}, a model may lose robustness to the initial attack after the fine-tuning stage. We now discuss how such degradation can be addressed through regularization.

\subsection{Bounding the difference in adversarial losses}\label{subsec: theory}

In order for CRT to be a practical solution for CAR, it is important that CRT enhances robustness to new attacks without losing robustness against attacks we have already learned. 
We now relate the gap in robust loss between two attacks to how far each attack can perturb the logits for a given example,
which suggests that regularization in CRT may improve decrease the drop in robustness across attacks. 

\edit{Let $h:\mathbb{R}^d \rightarrow \mathbb{R}^k$ be a $k$ class neural network classification model.}
To simplify the problem, we focus on the state of the model when attacks $P_{C_1}$ and $P_{C_2}$ (with corresponding adversarial constraints $C_1$ and $C_2$) are known to the defender. 
Consider the following two adversarial loss functions:
$\mathcal{L}_1(h) := \mathbb{E}_{\mathcal{D}}\left[\ell(h(P_{C_1}(x,y)),y)\right]$
and
$\mathcal{L}_2(h) := \mathbb{E}_{\mathcal{D}}\left[\ell(h(P_{C_2}(x,y)),y)\right].$
Without loss of generality, assume that $\mathcal{L}_1(h)
\geq \mathcal{L}_2(h)$. We can then bound the difference between $\mathcal{L}_1(h)$ and $\mathcal{L}_2(h)$, \edit{adapting a result from}~\citet{nern2023transfer}, as follows\footnote{\edit{As stated, these results hold for loss functions that are Lipschitz with respect to the $\ell_2$ norm. We note that similar bounds can be derived for other norms by applying a constant scaling factor to the first term of the bound (i.e. for losses Lipschitz with respect to the $\ell_1$ norm, the scaling factor would be $\sqrt{c}$).}}:
\vspace{-3pt}



\begin{theorem}
    \label{thm:robustness}
    Assume that loss $\ell(\hat{y},y)$ is $M_1$-Lipschitz in $\|\cdot\|_2$, for $\hat{y} \in h(X)$ with $M_1 > 0$ and bounded by $M_2 > 0$ \footnote{We note that surrogate losses such as the cross-entropy used during training are not bounded, but the $0-1$ loss which is often the key quantity of interest \emph{is bounded}.}, i.e. $0 \leq \ell(\hat{y},y) \leq M_2$ $\forall \hat{y} \in h(X)$. Then, for a subset $\mathbb{X} = \{x_i\}_{i=1}^n$ independently drawn from $\mathcal{D}$, the following holds with probability at least $1-\rho$:
    \begin{align*}
        \mathcal{L}_1(h) - \mathcal{L}_2(h) \leq \;&M_1 \frac{1}{n}\sum_{i=1}^n\biggl(\max_{x' \in C_1(x_i)}\|h(x') - h(x_i)\|_2 \\
        &+ \max_{x' \in C_2(x_i)}\|h(x') - h(x_i)\|_2\biggl) + D,
    \end{align*}
    where $D = M_2\sqrt{\frac{\log(\rho/2)}{-2n}}$. 
\end{theorem}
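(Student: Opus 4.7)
The plan is to derive the bound in two conceptual steps: (i) a deterministic pointwise bound that, for a fixed $(x,y)$, controls the difference of the two adversarial losses by the two maximal logit-space deviations, and (ii) a concentration inequality that replaces the expectation under $\mathcal{D}$ by a sample average over the drawn $\{(x_i,y_i)\}_{i=1}^n$, producing the residual term $D$. This mirrors the Nern et al.\ strategy, adapted from their clean-vs-adversarial comparison to an adversarial-vs-adversarial one.

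For the pointwise step, I would fix $(x,y)$ and insert $\pm\,\ell(h(x),y)$ to write
\begin{align*}
&\ell(h(P_{C_1}(x,y)),y) - \ell(h(P_{C_2}(x,y)),y) \\
&\;=\; \bigl[\ell(h(P_{C_1}(x,y)),y) - \ell(h(x),y)\bigr] + \bigl[\ell(h(x),y) - \ell(h(P_{C_2}(x,y)),y)\bigr].
\end{align*}
Each bracket is bounded in absolute value by $M_1$ times an $\ell_2$ logit distance using the $M_1$-Lipschitz hypothesis, and since $P_{C_j}(x,y)\in C_j(x)$, each logit distance is dominated by $\max_{x'\in C_j(x)}\|h(x')-h(x)\|_2$. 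Summing the two brackets produces the pointwise version of the inequality that matches the first term of the theorem.

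For the concentration step, define the bounded random variable $\Phi(x,y):=\ell(h(P_{C_1}(x,y)),y)-\ell(h(P_{C_2}(x,y)),y)$, whose values lie in a range controlled by $M_2$ because each loss lies in $[0,M_2]$. Applying Hoeffding's inequality to the $n$ i.i.d.\ samples $\Phi(x_i,y_i)$, with the failure-probability budget $\rho$ split across the two one-sided tails (yielding the $\rho/2$ that appears in $D$), gives with probability at least $1-\rho$,
\[
\mathcal{L}_1(h)-\mathcal{L}_2(h)\;=\;\mathbb{E}_{\mathcal{D}}[\Phi]\;\le\;\frac{1}{n}\sum_{i=1}^n\Phi(x_i,y_i)\;+\;M_2\sqrt{\tfrac{\log(2/\rho)}{2n}}.
\]
Substituting the pointwise bound from step (i) into each $\Phi(x_i,y_i)$ inside the sum and noting the algebraic identity $M_2\sqrt{\log(2/\rho)/(2n)}=M_2\sqrt{\log(\rho/2)/(-2n)}=D$ finishes the proof.

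The main subtlety is the Hoeffding bookkeeping: the a priori range of $\Phi$ is $[-M_2,M_2]$, so one must be careful about whether to apply Hoeffding directly to $\Phi$ (whose range is $2M_2$) or to $\mathcal{L}_1$ and $\mathcal{L}_2$ separately and union-bound over the two tails, in order to land on precisely the $M_2\sqrt{\log(2/\rho)/(2n)}$ stated. The Lipschitz-plus-triangle-inequality portion of the argument is otherwise routine, so the proof is essentially a careful assembly rather than a technical difficulty.
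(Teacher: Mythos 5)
Your proposal is correct and follows essentially the same route as the paper: the paper also applies Hoeffding to the per-sample difference $D_i = \max_{x'\in C_1(x_i)}\ell(h(x'),y_i) - \max_{x''\in C_2(x_i)}\ell(h(x''),y_i)$ and then combines the Lipschitz hypothesis with a triangle inequality through $h(x_i)$; the only cosmetic difference is that you insert $\pm\,\ell(h(x),y)$ in loss space before applying Lipschitz, whereas the paper applies Lipschitz first to get $M_1\|h(x_i')-h(x_i'')\|_2$ and then inserts $\pm\,h(x_i)$ in logit space, which yields the identical bound. The Hoeffding-range subtlety you flag is real and is not resolved by the paper either: the paper's displayed tail bound $2\exp(-2t^2/(nM_2^2))$ implicitly treats each $D_i$ as having range $M_2$, while $D_i$ actually lives in $[-M_2,M_2]$ (range $2M_2$), so a strictly careful application would give $D = 2M_2\sqrt{\log(\rho/2)/(-2n)}$ rather than the constant stated in the theorem.
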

This result suggests that regularization with respect to a single attack (say, in pre-training) will give the model greater resiliency against unforeseen attacks and help meet the $\delta_\text{unknown}$ threshold. Using regularization when fine-tuning on a new attack could also prevent degradations in robustness against previously seen attacks, helping to meet the $\delta_\text{known}$ threshold.
Using similar reasoning, we can also bound the gap between Union and clean loss:
\begin{corollary}
\label{thm:corollary}
Let $\mathcal{L}_{1,2}(h) := \mathbb{E}_{\mathcal{D}}\left[\max{(\ell(h(P_{C_1}(x,y,h)),y),\ell(h(P_{C_2}(x,y,h)),y)})\right]$. Then, with probability at least $1 - \rho$,
\begin{align*}
        \mathcal{L}_{1,2}&(h) - \mathcal{L}(h) 
        \leq M_1 \frac{1}{n}\sum_{i=1}^n\biggl(\max_{x' \in C_1(x_i)}\|h(x') - h(x_i)\|_2 \\
        &+ \max_{x' \in C_2(x_i)}\|h(x') - h(x_i)\|_2\biggl) + D.
    \end{align*}
\end{corollary}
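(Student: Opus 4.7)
The plan is to mirror the proof of \cref{thm:robustness}, with the only extra ingredient being a pointwise handling of the inner maximum. The key pointwise observation is that because $x$ is itself an admissible perturbation for either constraint (so the adversary's maximized loss dominates the clean loss) and because for non-negative numbers one has $\max(a,b)\le a+b$, we can write
\[
\max_{j\in\{1,2\}}\ell(h(P_{C_j}(x,y,h)),y)-\ell(h(x),y)\;\le\;\sum_{j=1}^{2}\bigl[\ell(h(P_{C_j}(x,y,h)),y)-\ell(h(x),y)\bigr].
\]
This reduces the corollary to two copies of the per-attack difference bound that already sits at the heart of \cref{thm:robustness}.

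The next step is to apply the $M_1$-Lipschitz assumption on $\ell$ to each summand. For $j \in \{1,2\}$,
\[
\ell(h(P_{C_j}(x,y,h)),y)-\ell(h(x),y)\;\le\;M_1\,\|h(P_{C_j}(x,y,h))-h(x)\|_2\;\le\;M_1\max_{x'\in C_j(x)}\|h(x')-h(x)\|_2,
\]
where the second inequality uses that $P_{C_j}(x,y,h)\in C_j(x)$. Averaging the resulting deterministic pointwise bound over the sample $\mathbb{X}=\{x_i\}_{i=1}^{n}$ and taking expectation over $\mathcal{D}$ yields
\[
\mathcal{L}_{1,2}(h)-\mathcal{L}(h)\;\le\;M_1\,\mathbb{E}_{(x,y)\sim\mathcal{D}}\!\left[\sum_{j=1}^{2}\max_{x'\in C_j(x)}\|h(x')-h(x)\|_2\right].
\]

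Finally, I would invoke exactly the same concentration argument as in \cref{thm:robustness} to move between the population expectation on the right-hand side and the empirical average $\tfrac{1}{n}\sum_{i=1}^{n}$ that appears in the stated bound. Since the pointwise-maximum loss is bounded by $M_2$, a Hoeffding-type inequality supplies the additive slack $D=M_2\sqrt{\log(\rho/2)/(-2n)}$ with probability at least $1-\rho$, matching the form of the theorem's bound.

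The main subtlety, I expect, is not the Lipschitz step or the concentration step but rather justifying the max-to-sum reduction. The inequality $\max(a,b)\le a+b$ looks wasteful in general, but it is exactly the right move here because the target right-hand side already sums the two logit-distance terms with unit coefficients; nothing is lost relative to \cref{thm:robustness}. Apart from that one extra line, the argument is a near-verbatim reprise of the theorem.
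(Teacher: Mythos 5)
Your overall strategy is the right one and is close to the paper's, but one step as written does not go through: the concentration step is applied to the wrong random variable. After the pointwise Lipschitz bound you arrive at $\mathcal{L}_{1,2}(h)-\mathcal{L}(h)\le M_1\,\mathbb{E}_{\mathcal{D}}\bigl[\sum_{j}\max_{x'\in C_j(x)}\|h(x')-h(x)\|_2\bigr]$, and you then propose to replace this population expectation of \emph{logit distances} by its empirical average via Hoeffding, justifying the slack $D=M_2\sqrt{\log(\rho/2)/(-2n)}$ by the fact that the loss is bounded by $M_2$. But the quantity you are concentrating there is not the loss; it is $\max_{x'\in C_j(x)}\|h(x')-h(x)\|_2$, and nothing in the hypotheses bounds that norm by $M_2$ (the Lipschitz assumption only bounds loss differences \emph{from above} by $M_1$ times logit distances, not the reverse). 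So the Hoeffding constant is unjustified and the step fails as stated.

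The fix is to reorder the argument the way the paper does: define $D_i=\max_{x_i'\in C_1(x_i)\cup C_2(x_i)}\ell(h(x_i'),y_i)-\ell(h(x_i),y_i)$, which genuinely lies in a range of width $M_2$ because the loss is bounded, apply Hoeffding to these $D_i$ to pass from $\mathcal{L}_{1,2}(h)-\mathcal{L}(h)=\mathbb{E}[D]$ to the empirical average $\frac{1}{n}\sum_i D_i$ plus the slack $D$, and only then apply the deterministic Lipschitz and max-to-sum bounds to each empirical term. Your max-to-sum reduction itself is fine (and is essentially where the paper also relaxes, though the paper does it on the non-negative logit distances rather than on the loss differences, so it does not need your additional observation that $x\in C_j(x)$); with the concentration moved to the loss differences, the rest of your argument is a correct reprise of \cref{thm:robustness}.
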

This corollary helps characterize the trade-off between clean and robust loss in our setting. Proofs of Theorem \ref{thm:robustness} and Corollary \ref{thm:corollary} are present in Appendix \ref{sec:proof}.

\noindent \textbf{Comparison to \citet{dai2022formulating}: } We note that \citet[Theorem 4.2]{dai2022formulating} derive a related bound on the adversarial loss gap between two attacks in the context of UAR. However, their formulation assumes that the constraint set of the target attack is a strict superset of that of the source attack, whereas we make no assumptions about the relationship between the two constraint sets.

\subsection{Regularization Methods}
\label{subsec: methods}
\label{sec:regularization}
Theorem \ref{thm:robustness} suggests that reducing the sensitivity of logits to \textit{either} attack has the potential to reduce the performance gap between attacks (see \cref{fig:loss_gap} in the Appendix for an empirical validation of this effect). To this end, we propose incorporating regularization into both training stages.  Specifically, we adopt modified training objective $\mathcal{L}_{\text{reg}}(h,t) = \mathcal{L}(h,t) + \lambda R(h, K(t))$, 
where $\lambda$ is the regularization strength and $R(h)$ is the regularization term used. We will now discuss several forms of regularization.


\noindent
\textbf{Adversarial $\ell_2$ regularization. (ALR)}
Driven by our theoretical results, we first introduce adversarial $\ell_2$ regularization: $R_{\text{ALR}}(h, K(t)) =  \frac{1}{m} \sum_{i=1}^m \max_{x' \in C(x_i)} \|h(x') - h(x_i)\|_2$ where $C = C_{\text{init}}$ in initial training and corresponds to attack $P_C = F(K(t), (x_i, y_i))$ chosen by the fine-tuning strategy.  $\ell_2$ regularization penalizes the maximum distance between a sample's \edit{logits} and the furthest adversarially perturbed \edit{logits} within that sample's neighborhood. Using this regularization term would directly minimize the upper bounds in Theorem~\ref{thm:robustness} and Corollary~\ref{thm:corollary}.
We note that while ALR is similar in form to  TRADES~\cite{zhang2019theoretically}, it uses a Euclidean distance instead of the KL-divergence. Our paper is the first to show that this form of regularization is beneficial for CAR.

\noindent
\textbf{Efficiently approximating ALR. } Computing ALR uses multi-step optimization which can be costly to compute in practice. To improve efficiency in experiments, we consider (1) using single step optimization for ALR and (2) using randomly sampled, unoptimized perturbations can help with CAR. For (2), we consider Gaussian noise regularization (GR) and Uniform noise regularization (UR), specifically:
$R_{\text{GR}}(h, K(t)) = \frac{1}{m} \sum_{i=1}^m \|\edit{h}(x') - \edit{h}(x_i)\|_2$ where $x' \sim \mathcal{N}(0, \sigma^2)$ and 
$R_{\text{UR}}(h, K(t)) = \frac{1}{m} \sum_{i=1}^m \|\edit{h}(x') - \edit{h}(x_i)\|_2$ where $x' \sim \mathcal{U}(-\sigma, \sigma)$.

\noindent
\textbf{Other Baselines.} We compare to variation regularization (VR), which has been shown to improve generalization to unforeseen attacks \citep{dai2022formulating}. VR is defined as: $R_{\text{VR}}(h, K(t)) = \frac{1}{m} \sum_{i=1}^m \max_{x', x'' \in C(x_i)} \|\edit{h}(x') - \edit{h}(x'')\|_2$ where $C = C_{\text{init}}$ in initial training.  We also consider VR in finetuning with $C$ corresponding to attack $P_C = F(K(t), (x_i, y_i))$.
The link between VR and ALR is discussed in \cref{app:alr_vr}.

\section{Experimental Results}
\label{sec: results}
In this section, we empirically demonstrate that using regularization in CRT helps improve robustness when attacks are introduced sequentially.  This section is organized as follows: (i) experimental setup \S (\ref{sec:exp_setup}), (ii) overall results for using regularization in CRT (\S \ref{sec:car_reg}), (iii) ablations in initial training (\S \ref{sec:init_train_impact}) and (iv) ablations in fine-tuning (\S \ref{sec:fine-tuning_impact}).
\vspace{-5pt}
\subsection{Experimental setup}
\label{sec:exp_setup}

\noindent
\textbf{Datasets. } We experiment with CIFAR-10, CIFAR-100 \citep{krizhevsky2009learning}, and ImageNette \citep{howardimagenette}, a 10-class subset of ImageNet \citep{deng2009imagenet}.

\noindent
\textbf{Architectures. } For CIFAR-10 and CIFAR-100, we use WideResnet-28-10 (WRN-28-10) architecture \citep{zagoruyko2016wide} and ResNet-18 for ImageNette. 

\noindent
\textbf{Attacks. }We include results for $\ell_2$, $\ell_\infty$, StAdv \citep{XiaoZ0HLS18}, ReColor attacks \citep{LaidlawF19}, and the 8 core attacks of Imagenet-UA \citep{kaufmann2019testing}. For $\ell_2$ attacks, we use a bound $\epsilon= 0.5$ for CIFAR datasets and $\epsilon=1$ for ImageNette.  For $\ell_2$ attacks, we use $\epsilon = \frac{8}{255}$, and for StAdv and ReColor attacks, we use the same bounds as used in their original papers \citet{XiaoZ0HLS18} ($\epsilon=0.05$) and \citet{LaidlawF19} ($\epsilon=0.06$) respectively. For ImageNet-UA attacks, we use the \emph{medium distortion} strength bounds used by \citet{kaufmann2019testing}.  For experiments investigating the impact of regularization in the fine-tuning step of CRT (\S\ref{sec:fine-tuning_impact}), we include results for fine-tuning to the same attack type but with larger attack bounds.  For these experiments, the larger bounds are given by $\epsilon = 1$ for $\ell_2$, $\epsilon=\frac{12}{255}$ for $\ell_\infty$, $\epsilon=0.07$ for StAdv, $\epsilon=0.08$ for ReColor, and high distortion strength bounds for ImageNet-UA attacks. 

\noindent\textbf{Training from scratch baselines. }We consider the following baselines for training from scratch:
\begin{itemize}
    \item \textit{Training with AVG and MAX objectives} \citep{TB19}: \citet{TB19} propose two different training objectives, AVG ($L_{\text{AVG}}(h, t) = \frac{1}{m|K(t)|} \sum_{i=1}^{m} \sum_{P_C \in K(t)} \ell(h(P_C(x_i, y_i)), y_i)$) and MAX ($L_{\text{MAX}}(h, t) = \frac{1}{m} \sum_{i=1}^{m} \max_{P_C \in K(t)} \ell(h(P_C(x_i, y_i)), y_i)$), for robustness against multiple known attacks.
    \item \textit{Randomly sampling attacks} \citep{madaan2020learning}: 
    AVG and MAX require generating adversarial examples with all attacks for each image. For a more efficient baseline, we consider randomly sampling an attack for each batch for use in adversarial training.
\end{itemize}

\noindent
\textbf{CRT Baselines. }For CRT, we use PGD adversarial training (AT) \citep{madry2017towards} for initial training and then fine-tune the model using several different fine-tuning strategies:
\begin{itemize}
    \item \textit{MAX objective fine-tuning} (FT-MAX) \citep{TB19}: We use the MAX objective for fine-tuning when a new attack is introduced.
    \item \textit{\citet{croce2022adversarial} fine-tuning} (FT Croce): \citet{croce2022adversarial} introduce a fine-tuning technique for use with $\ell_{\infty}$ and $\ell_1$ attacks which we \emph{generalize to training with arbitrary attacks}. This approach samples a single attack per batch. The probability that an attack $P_C$ is sampled is given by $\frac{\text{err}(P_C)}{\sum_{P \in K(t)} \text{err}(P)}$ where $\text{err}(P)$ denotes the running average of robust loss with respect to attack $P$ computed across batches of each attack.
    \item \textit{Single attack fine-tuning} (FT Single): We also consider fine-tuning with \emph{only the newly introduced attack}, allowing us to determine the extent to which previous attacks are forgotten. The previous two fine-tuning techniques involve replaying previous attacks.
\end{itemize}
We then investigate incorporating regularization into the initial training and fine-tuning phases of CRT.

\noindent
\textbf{Training and Fine-tuning Procedures. } During training, we use 10-step Projected Gradient Descent~\cite{madry2017towards} to generate adversarial examples. For the regularization terms (\cref{subsec: methods}), VR and ALR use single step optimization to reduce time overhead, while UR and GR use $\sigma=2$ and $\sigma=0.2$, respectively.  Results for additional values of $\sigma$ are in \cref{app:random_noise_var_abl}. We train models for 100 epochs for initial training and 10 epochs  for fine-tuning (results with 25 epochs in \cref{app:exp_seq}). We include additional details about the training procedure in \cref{app:exp_setup}.


\noindent
\textbf{Evaluation Attacks and Metrics. } Our main results in Table \ref{tab:main_results_cifar} and additional ones in Appendix \ref{app:exp_seq} use full AutoAttack \citep{croce2020reliable} for evaluating $\ell_p$ robustness. For ablations, we restrict to APGD-T and FAB-T from AutoAttack to reduce evaluation time.  We use 20-step optimization when evaluating StAdv and ReColor attacks and the default evaluation hyperparameters for ImageNet-UA attacks in \citet{kaufmann2019testing}. We report \textit{accuracy on each attack}, \textit{Union accuracy} (overall accuracy when the worst case attack is chosen for each test example), \textit{Average accuracy} (average over accuracy on each attack), and \textit{training time} (in hours). Metrics are reported for the epoch $E^*$ with best performance on the set of known attacks. For training from scratch, the reported training time is scaled by fraction of training for the best epoch (\textit{i.e.} we report $\frac{E^*}{100} \times \text{training time for 100 epochs}$). For fine-tuning we report training time for the full 10 epochs. This allows us to see how much faster fine-tuning is to optimal early stopping when re-training from scratch.


\begin{table*}[ht]
\centering
\scalebox{0.77}{
\begin{tabular}{|c|c|l|l|c|cccc|cc|cc|c|}
\hline
\multicolumn{1}{|c|}{\begin{tabular}[c]{@{}c@{}}Time\\ Step \end{tabular}} & &Procedure & Threat Models & \multicolumn{1}{c|}{Clean} & \multicolumn{1}{c}{$\ell_2$} & \multicolumn{1}{c}{StAdv} & \multicolumn{1}{c}{$\ell_\infty$} & \multicolumn{1}{c|}{Recolor} & \multicolumn{1}{c}{\begin{tabular}[c]{@{}c@{}}Avg\\ (known)\end{tabular}} & \multicolumn{1}{c|}{\begin{tabular}[c]{@{}c@{}}Union\\ (known)\end{tabular}} & \multicolumn{1}{c}{\begin{tabular}[c]{@{}c@{}}Avg\\ (all)\end{tabular}} & \multicolumn{1}{c|}{\begin{tabular}[c]{@{}c@{}}Union\\ (all)\end{tabular}} & \multicolumn{1}{c|}{\begin{tabular}[c]{@{}c@{}}Time\\ (hrs)\end{tabular}} \\ \hline
\multirow{ 2}{*}{0} & \parbox[t]{2mm}{\multirow{2}{*}{\rotatebox[origin=c]{90}{Init}}}& AT & $\ell_2$ & \textbf{91.17} & \cellcolor[HTML]{B7E1CD}69.7 & 2.08 & 28.41 & 44.94 & 69.7 & 69.7 & 36.28 & 1.24 & 8.68 \\ 
& & AT + ALR ($\lambda=1$) & $\ell_2$ & 89.43 & \cellcolor[HTML]{B7E1CD}\textbf{69.84} & \textbf{48.23} & \textbf{34.00} & \textbf{65.46} & \textbf{69.84} & \textbf{69.84} & \textbf{54.38} & \textbf{31.27} & 17.17\\ 
\hline 
\multirow{ 5}{*}{1} & \parbox[t]{2mm}{\multirow{5}{*}{\rotatebox[origin=c]{90}{Finetune}}}& FT MAX &  $\ell_2$, StAdv & 83.73 & \cellcolor[HTML]{B7E1CD}57.07 & \cellcolor[HTML]{B7E1CD}58.67 & 12.51 & 49.03 & 57.87 & 51.32 & 44.32 & 12.36 & 4.00 \\
 
 & & FT Single & $\ell_2$, StAdv &  80.89
& \cellcolor[HTML]{B7E1CD}45.45& \cellcolor[HTML]{B7E1CD}54.5
& 6.09 & 41.98 & 49.98 & 41.05 & 37.0 & 5.87 & 2.78 \\
 & & FT Croce & $\ell_2$, StAdv & 84.7 & \cellcolor[HTML]{B7E1CD}57.88 & \cellcolor[HTML]{B7E1CD}54.27 & 14.38 & 51.08 & 56.07 & 48.13 & 44.4 & 13.8 & 2.40\\

 & & FT Single + ALR & $\ell_2$, StAdv & \textbf{87.24} & \cellcolor[HTML]{B7E1CD}\textbf{62.22}& \cellcolor[HTML]{B7E1CD}61.5
& \textbf{21.4} & \textbf{70.87} & 61.86 & 55.04 & \textbf{54.0} & \textbf{21.14} & 4.24 \\
   & & FT Croce + ALR & $\ell_2$, StAdv & 86.03 & \cellcolor[HTML]{B7E1CD}59.18 &\cellcolor[HTML]{B7E1CD}\textbf{65.14} & 15.36  & 63.31 & \textbf{62.16}& \textbf{55.83} & 50.75 & 15.29 & 3.47 \\

\hline

 \multirow{ 5}{*}{2}&\parbox[t]{2mm}{\multirow{5}{*}{\rotatebox[origin=c]{90}{Finetune}}} &FT MAX & $\ell_2$, StAdv, $\ell_\infty$ & 83.16 &  \cellcolor[HTML]{B7E1CD}65.63 &  \cellcolor[HTML]{B7E1CD}56.68 &  \cellcolor[HTML]{B7E1CD}36.9 & 65.69 & 53.07 & 35.18 & 56.23 & 34.83 &	5.62 \\
  & &FT Single & $\ell_2$, StAdv, $\ell_{\infty}$ & 87.99 & \cellcolor[HTML]{B7E1CD}\textbf{70.53} & \cellcolor[HTML]{B7E1CD}11.17
 & \cellcolor[HTML]{B7E1CD}41.63 & 63.46 &41.11 & 7.95 & 46.7 & 7.74  & 1.57 \\
 & &FT Croce & $\ell_2$, StAdv, $\ell_{\infty}$ & 85.05 &  \cellcolor[HTML]{B7E1CD}67.3 &  \cellcolor[HTML]{B7E1CD}48.07 &  \cellcolor[HTML]{B7E1CD}33.38 & 62.52 & 49.58 & 28.96 & 52.82 & 28.63 & 2.27 \\

 & &FT Single + ALR & $\ell_2$, StAdv, $\ell_{\infty}$ & \textbf{88.74} & \cellcolor[HTML]{B7E1CD}69.15 & \cellcolor[HTML]{B7E1CD}47.33 & \cellcolor[HTML]{B7E1CD}\textbf{42.08} & 68.62 & 52.85 & \textbf{36.66} & 56.8 & \textbf{36.62} & 2.26 \\
 & &FT Croce + ALR & $\ell_2$, StAdv, $\ell_{\infty}$ & 86.57 & \cellcolor[HTML]{B7E1CD}67.99 & \cellcolor[HTML]{B7E1CD}\textbf{61.55} & \cellcolor[HTML]{B7E1CD}36.59 & \textbf{72.16} & \textbf{55.38} & 35.68 & \textbf{59.57} & 35.52 & 2.87 \\

 \hline

\multirow{ 5}{*}{3}&\parbox[t]{2mm}{\multirow{5}{*}{\rotatebox[origin=c]{90}{Finetune}}}&FT MAX & $\ell_2$, StAdv, $\ell_{\infty}$, Recolor & 83.64 & \cellcolor[HTML]{B7E1CD}66.21 & \cellcolor[HTML]{B7E1CD}57.53 & \cellcolor[HTML]{B7E1CD}\textbf{37.77} & \cellcolor[HTML]{B7E1CD}69.32 & 57.71 & \textbf{36.02} & 57.71 & \textbf{36.02} & 8.45 \\
   & &FT Single & $\ell_2$, StAdv, $\ell_{\infty}$, Recolor &  90.41& \cellcolor[HTML]{B7E1CD}66.47 & \cellcolor[HTML]{B7E1CD}3.93 & \cellcolor[HTML]{B7E1CD}29.6& \cellcolor[HTML]{B7E1CD}69.03 & 42.26 & 2.49 & 42.26 & 2.49 & 3.11 \\
& &FT Croce & $\ell_2$, StAdv, $\ell_{\infty}$, Recolor & 86.64 & \cellcolor[HTML]{B7E1CD}\textbf{68.76} & \cellcolor[HTML]{B7E1CD}44.81 & \cellcolor[HTML]{B7E1CD}36.02 & \cellcolor[HTML]{B7E1CD}68.05 & 54.41 & 29.44 & 54.41 & 29.44 & 2.34 \\
 & &FT Single + ALR & $\ell_2$, StAdv, $\ell_{\infty}$, Recolor & \textbf{90.45} & \cellcolor[HTML]{B7E1CD}61.58 & \cellcolor[HTML]{B7E1CD}25.77 & \cellcolor[HTML]{B7E1CD}27.43 & \cellcolor[HTML]{B7E1CD}69.26 & 46.01 & 19.2 & 46.01 & 19.2 & 4.24 \\
  & &FT Croce + ALR & $\ell_2$, StAdv, $\ell_{\infty}$, Recolor &87.62 & \cellcolor[HTML]{B7E1CD}68.14 & \cellcolor[HTML]{B7E1CD}\textbf{58.5} & \cellcolor[HTML]{B7E1CD}36.39 & \cellcolor[HTML]{B7E1CD}\textbf{72.35} & \textbf{58.85} & 34.92 & \textbf{58.85} & 34.92 & 3.35 \\
\hline
\end{tabular}}
\vspace{-5pt}
\caption{\textbf{Continual Robust Training on CIFAR-10.} Best performance for each time step are \textbf{bolded}. The defender initially knows about $\ell_2$ attacks and over time, is sequentially introduced to StAdv, $\ell_\infty$, and ReColor attacks. We report clean accuracy, accuracy on individual attacks, and average and union accuracies.  The ``Threat Models" column specifies known attacks at the current time step, and accuracies on these attacks are in {\color[HTML]{B7E1CD} green cells}. Initial adversarial training occurs at time step 0, and the model is updated through fine-tuning the model from the previous time step.  ``Avg (known)" and ``Union (known)" columns represent average and union accuracies on known attacks while ``Avg (all)" and ``Union (all)" columns report performance across all four attacks.  We report training time for each time step in the ``Time" column.}
\label{tab:main_results_cifar}
\vspace{-10pt}
\end{table*}

\begin{table}[ht]
    \centering
    \scalebox{0.85}{
    \begin{tabular}{|l|c|cc|c|}
    \hline 
    Procedure & Clean & Avg & Union & Time \\ \hline
       MAX & 84.3 &  54.18 & 37.44 &61.09 \\
       AVG  & 87.77 &  54.5 & 30.39 & 51.55\\
       Random & 86.32 & 54.27 & 30.76 & 13.15\\\hline
       CRT + ALR & 87.62 & 58.85 & 34.92 & 26.86\\ \hline
    \end{tabular}}
    \caption{Regularized CRT (using \citet{croce2020reliable} fine-tuning strategy) compared to training from scratch on $\ell_2$, StAdv, $\ell_{\infty}$, and Recolor attacks on CIFAR-10.}
    \label{tab:training_from_scratch}
    \vspace{-20pt}
\end{table}

\begin{table*}[]
\centering
{\renewcommand{\arraystretch}{1.2}
\scalebox{0.7}{
\begin{tabular}{|l|c|c|c|cccccccccccc|cc|}
\hline
\begin{tabular}[c]{@{}l@{}}Initial\\Attack\end{tabular} & \begin{tabular}[c]{@{}c@{}}Reg\\Type\end{tabular} & $\lambda$ & Clean & $\ell_2$ & $\ell_\infty$ & StAdv & ReColor & Gabor & Snow & Pixel & JPEG & Elastic & Wood & Glitch & \begin{tabular}[c]{@{}c@{}}Kaleid-\\oscope\end{tabular} & Avg & Union \\ \hline
$\ell_2$ & None & 0 & 91.08 & 70.02 & 29.38 & 0.79 & 33.69 & 66.93 & 24.59 & 14.99 & 64.22 & 45.13 & 70.85 & 80.30 & 30.08 & 44.25 & 0.10 \\
$\ell_2$ & VR & 0.2 & \cellcolor[HTML]{F4C7C3}89.99 & 70.38 & \cellcolor[HTML]{B7E1CD}34.56 & \cellcolor[HTML]{B7E1CD}13.41 & \cellcolor[HTML]{B7E1CD}48.99 & 67.64 & \cellcolor[HTML]{B7E1CD}29.09 & \cellcolor[HTML]{B7E1CD}22.57 & \cellcolor[HTML]{B7E1CD}66.64 & \cellcolor[HTML]{B7E1CD}48.38 & \cellcolor[HTML]{B7E1CD}73.31 & 80.07 & \cellcolor[HTML]{B7E1CD}32.33 & \cellcolor[HTML]{B7E1CD}48.94 & \cellcolor[HTML]{B7E1CD}5.40 \\ 
$\ell_2$ & ALR & 0.5 & \cellcolor[HTML]{F4CCCC}89.57 & 70.29 & \cellcolor[HTML]{B7E1CD}34.16 & \cellcolor[HTML]{B7E1CD}17.44 & \cellcolor[HTML]{B7E1CD}51.04 & \cellcolor[HTML]{F4CCCC}65.63 & \cellcolor[HTML]{B7E1CD}28.71 & \cellcolor[HTML]{B7E1CD}22.50 & \cellcolor[HTML]{B7E1CD}66.76 & \cellcolor[HTML]{B7E1CD}48.80 & \cellcolor[HTML]{B7E1CD}73.24 & 79.66 & \cellcolor[HTML]{F4CCCC}28.83 & \cellcolor[HTML]{B7E1CD}48.92 & \cellcolor[HTML]{B7E1CD}5.94 \\ 
$\ell_2$ & UR & 5  & \cellcolor[HTML]{F4CCCC}88.34 & \cellcolor[HTML]{F4CCCC}66.66   & \cellcolor[HTML]{F4CCCC}27.41   & \cellcolor[HTML]{B7E1CD}26.22 & \cellcolor[HTML]{B7E1CD}60.22 & \cellcolor[HTML]{B7E1CD}69.16   & \cellcolor[HTML]{B7E1CD}26.67 & \cellcolor[HTML]{B7E1CD}22.57 & 64.08  & \cellcolor[HTML]{B7E1CD}46.83   & 71.14  & \cellcolor[HTML]{F4CCCC}77.60    & \cellcolor[HTML]{B7E1CD}31.36    & \cellcolor[HTML]{B7E1CD}49.16 & \cellcolor[HTML]{B7E1CD}6.23 \\
$\ell_2$ & GR & 0.5& \cellcolor[HTML]{F4CCCC}86.89 & \cellcolor[HTML]{F4CCCC}68.19   & \cellcolor[HTML]{B7E1CD}32.02 & \cellcolor[HTML]{B7E1CD}16.54 & \cellcolor[HTML]{B7E1CD}58.32 & \cellcolor[HTML]{B7E1CD}74.85 & \cellcolor[HTML]{B7E1CD}25.69 & \cellcolor[HTML]{B7E1CD}21.26 & \cellcolor[HTML]{B7E1CD}65.32   & \cellcolor[HTML]{B7E1CD}46.82 & \cellcolor[HTML]{B7E1CD}74.08   & \cellcolor[HTML]{F4CCCC}76.99   & \cellcolor[HTML]{B7E1CD}31.93    & \cellcolor[HTML]{B7E1CD}49.33 & \cellcolor[HTML]{B7E1CD}4.18 \\
\hline
$\ell_\infty$ & None & 0 & 85.53 & 59.36 & 50.98 & 6.34 & 56.27 & 68.94 & 36.79 & 20.57 & 54.02 & 51.00 & 64.24 & 75.94 & 39.44 & 48.66 & 1.31 \\
$\ell_\infty$ & VR & 0.2 & \cellcolor[HTML]{F4C7C3}82.58 & 58.36 & 51.53 & \cellcolor[HTML]{B7E1CD}18.98 & \cellcolor[HTML]{B7E1CD}62.12 & \cellcolor[HTML]{F4C7C3}67.18 & \cellcolor[HTML]{B7E1CD}39.22 & \cellcolor[HTML]{B7E1CD}23.62 & 54.73 & \cellcolor[HTML]{B7E1CD}52 & 63.35 & \cellcolor[HTML]{F4C7C3}71.72 & \cellcolor[HTML]{B7E1CD}43.18 & \cellcolor[HTML]{B7E1CD}50.50 & \cellcolor[HTML]{B7E1CD}5.08 \\ 
$\ell_\infty$ & ALR &  0.5 & \cellcolor[HTML]{F4C7C3}83.18 & \cellcolor[HTML]{F4C7C3}58.21 & 51.47 & \cellcolor[HTML]{B7E1CD}19.50 & \cellcolor[HTML]{B7E1CD}61.02 & 68.75 & \cellcolor[HTML]{B7E1CD}37.94 & \cellcolor[HTML]{B7E1CD}22.78 & 53.89 & \cellcolor[HTML]{F4C7C3}49.82 & 63.47 & \cellcolor[HTML]{F4C7C3}73.57 & 39.88 & \cellcolor[HTML]{B7E1CD}50.02 & \cellcolor[HTML]{B7E1CD}5.52 \\ 
 $\ell_\infty$    & UR  & 5  & \cellcolor[HTML]{F4C7C3}78.04 & 60.28  & \cellcolor[HTML]{F4C7C3}40.59   & \cellcolor[HTML]{B7E1CD}42.25 & \cellcolor[HTML]{B7E1CD}70.00    & \cellcolor[HTML]{F4C7C3}67.06   & \cellcolor[HTML]{F4C7C3}33.40    & \cellcolor[HTML]{B7E1CD}26.57 & \cellcolor[HTML]{B7E1CD}60.07   & \cellcolor[HTML]{F4C7C3}49.21   & 64.61  & \cellcolor[HTML]{F4C7C3}67.08   & \cellcolor[HTML]{F4C7C3}38.43      & \cellcolor[HTML]{B7E1CD}51.63 & \cellcolor[HTML]{B7E1CD}8.36 \\ 
$\ell_\infty$      & GR & 0.5& \cellcolor[HTML]{F4C7C3}80.65 & 59.74  & \cellcolor[HTML]{F4C7C3}46.12   & \cellcolor[HTML]{B7E1CD}34.57 & \cellcolor[HTML]{B7E1CD}70.49 & 68.33  & 35.80   & \cellcolor[HTML]{B7E1CD}26.04 & \cellcolor[HTML]{B7E1CD}57.28   & 51.98  & \cellcolor[HTML]{B7E1CD}65.46   & \cellcolor[HTML]{F4C7C3}70.73   & \cellcolor[HTML]{F4C7C3}38.21      & \cellcolor[HTML]{B7E1CD}52.06 & \cellcolor[HTML]{B7E1CD}6.28 \\ \hline
\end{tabular}
}}
\vspace{-5pt}
\caption{\textbf{Impact of Regularization on Unforeseen Robustness.} We consider the setting where the defender is only aware of a single attack and performs training with and without different types of regularization: variation regularization (VR), adversarial $\ell_2$ regularization (ALR), uniform regularization (UR), and Gaussian regularization (GR) at regularization strength $\lambda$.  We report clean accuracy and robust accuracies on a range of attacks. {\color[HTML]{B7E1CD} Green cells} represent an improvement of at least 1\%  while {\color[HTML]{F4C7C3} red cells} represent a drop of at least 1\% in comparison to no regularization.}
\label{tab:reg_unforeseen_rob}
\vspace{-10pt}
\end{table*}

\subsection{Improving CRT with Regularization}
\label{sec:car_reg}
We now analyze the robustness of models trained using CRT with and without regularization. For simplicity, we focus on ALR with other methods analyzed in \cref{sec:init_train_impact}.  To model a CAR setting, we consider a sequence of 4 attacks: $\ell_2 \to$ StAdv $\to \ell_{\infty} \to$ Recolor.  The first attack is the initially known attack while other attacks are introduced at later time steps. 
We present results for CIFAR-10 in Table \ref{tab:main_results_cifar}.  We include results in Appendix \ref{app:exp_seq} for Imagenette and CIFAR-100 as well as additional results for longer duration of fine-tuning (25 epochs) and a separate sequence of attacks: $\ell_\infty \to$ StAdv $\to$ Recolor $\to \ell_2$.  For these experiments, we use $\lambda=0.5$ unless specified otherwise.

\noindent\textbf{Regularization reduces degradation on previous attacks. } From Table \ref{tab:main_results_cifar}, we observe that fine-tuning with only the new attack (FT Single) can lead to degradation of robustness against previous attacks.  The incorporation of ALR significantly decreases this drop in robustness.  For example, when fine-tuning from an $\ell_2$ robust model with StAdv attacks (time step 1 in Table \ref{tab:main_results_cifar}), FT Single incurs a 24.25\% drop (from 69.7\% to 45.45\%) in $\ell_2$ accuracy from the initial checkpoint (AT at time step 0).  Meanwhile FT Single + ALR only experiences a 7.62\% drop (from 69.84\% to 62.22\%) in $\ell_2$ accuracy from the initial checkpoint (AT + ALR at time step 0).  Similarly, after the introduction of $\ell_\infty$ attack at time step 2, the accuracy of FT Single on StAdv attacks drops 43.42\% (from 54.5\% to 11.17\%) while FT Single + ALR only experiences a 14.17\% drop (from 61.5\% to 47.33\%). These results align with Theorem~\ref{thm:robustness}: when incorporating ALR into training, the gap in loss on the two attacks is lessened.

\begin{figure*}[t!]
    \centering
    \begin{subfigure}[t]{0.45\textwidth}
        \centering
        \includegraphics[width=0.95\textwidth]{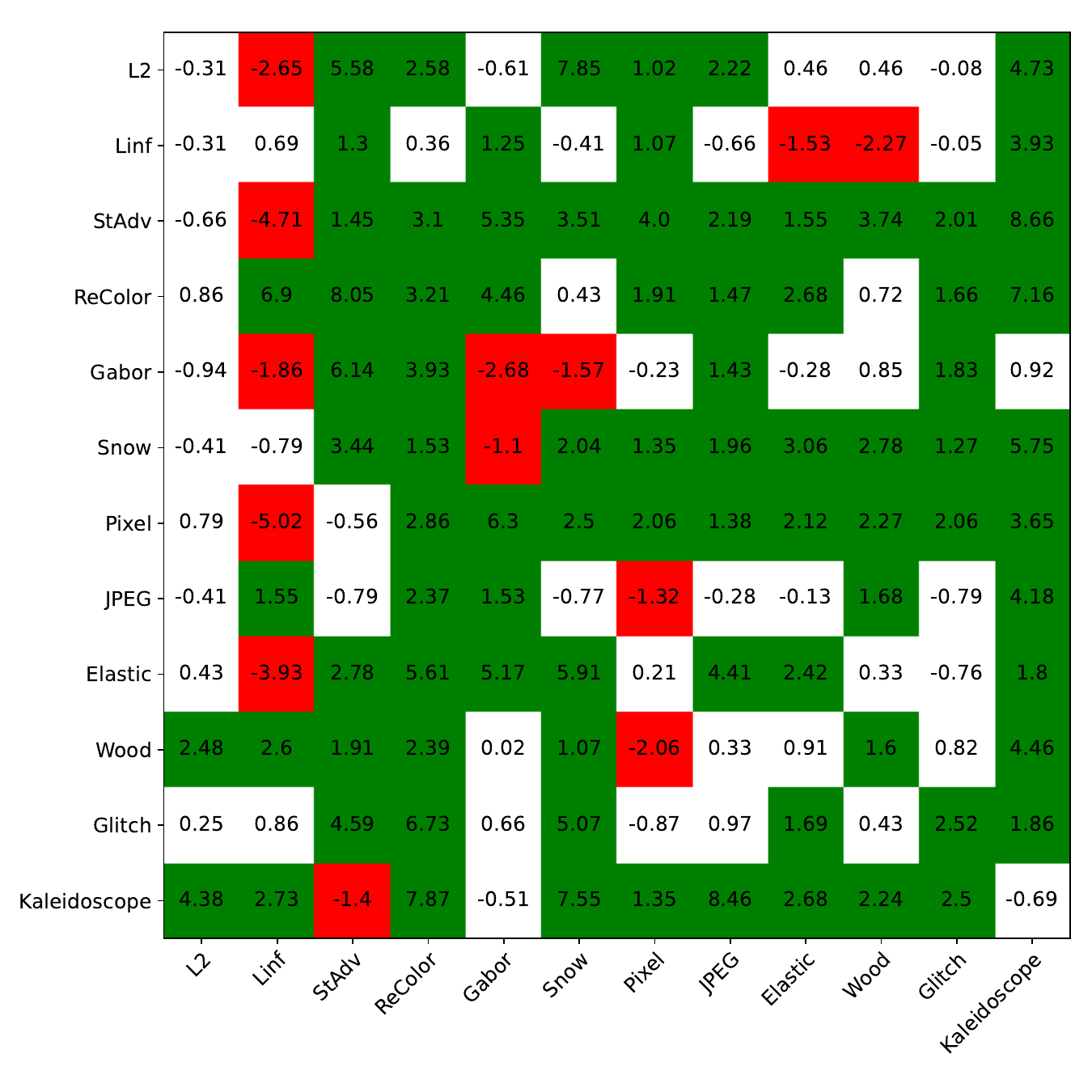}
        \vspace{-15pt}
        \caption{Adversarial $\ell_2$ regularization ($\lambda=0.5$)}
    \end{subfigure}%
    \begin{subfigure}[t]{0.45\textwidth}
        \centering
        \includegraphics[width=0.95\textwidth]{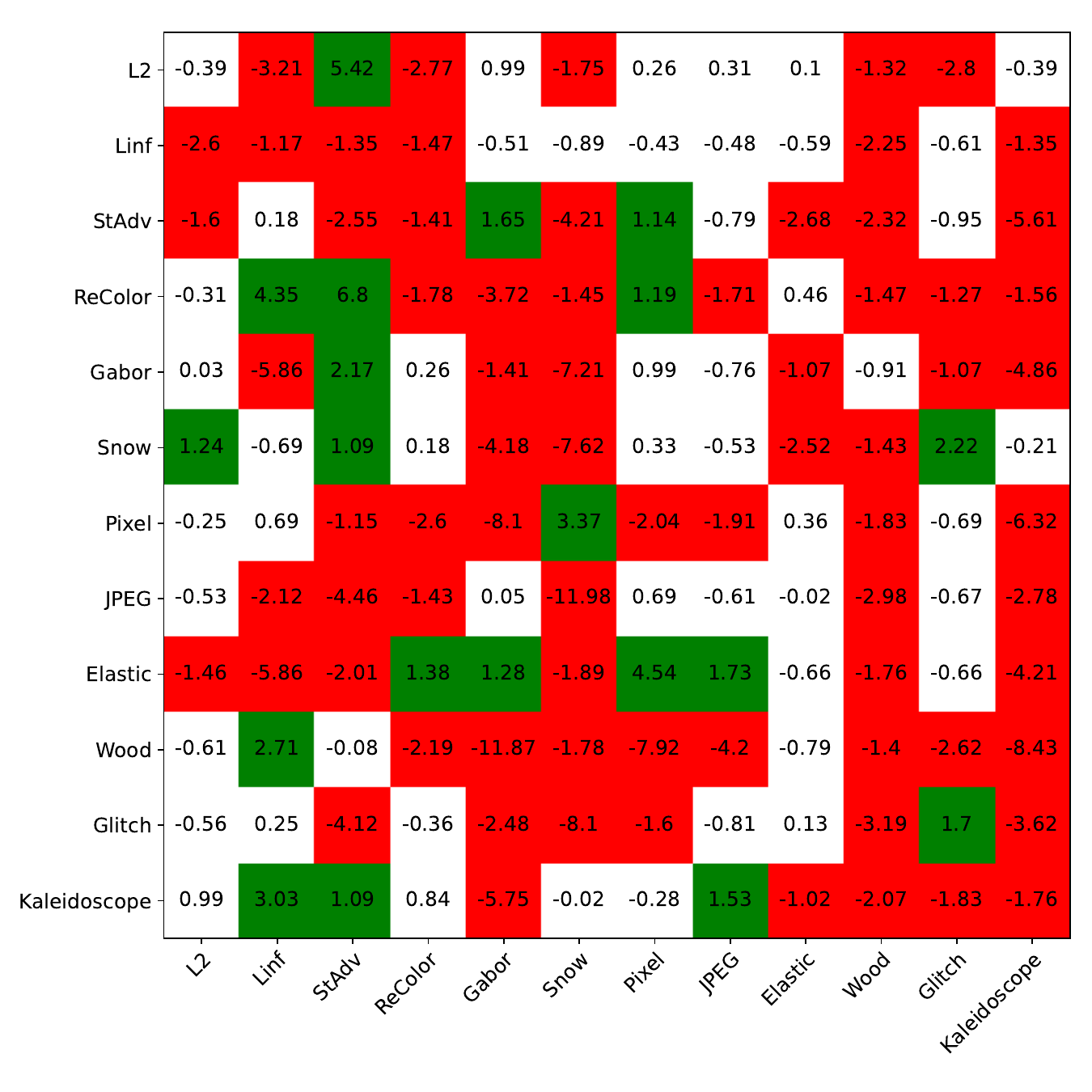}
        \vspace{-15pt}
        \caption{Uniform Regularization
        ($\sigma=2,\lambda=1$)}
    \end{subfigure}
    \vspace{-10pt}
    \caption{\textbf{Ablation 2: Change in union robust accuracy after fine-tuning with regularization (initial model does not use regularization).}  We fine-tune models on Imagenette across 144 pairs of initial attack and new attack.  The initial attack corresponds to the row of each grid and new attack corresponds to each column.  Values represent differences between the accuracy measured on a model \emph{fine-tuned with and without regularization}.  Gains in accuracy of at least 1\% are highlighted in {\color[HTML]{0E7003} green}, while drops in accuracy of at least 1\% in {\color[HTML]{FC0006} red}. Further results are in Appendix~\ref{app:fine-tuning}.}
    \label{fig:fine-tune_abl_main_text}
    \vspace{-15pt}
\end{figure*}

\noindent\textbf{Regularization improves performance on held out (unforeseen) attacks. }We observe that regularized CRT leads to higher robustness on attacks held out from training.  For example, at time step 1 in Table \ref{tab:main_results_cifar}, which trains with $\ell_2$ and StAdv attacks, the best accuracy on Recolor attacks out of unregularized CRT methods is 51.08\%, while FT Single + ALR achieves 70.87\% accuracy on Recolor attacks and FT Croce + ALR achieves 63.31\% accuracy on Recolor attacks.  The improvement in robustness on unforeseen attacks aligns with \cref{thm:corollary} as regularization helps decrease the drop in accuracy between clean inputs and perturbed inputs. This also aligns with CAR's goal of having a small $\delta_{\text{unknown}}$.


\noindent\textbf{Regularization balances performance and efficiency. } Our proposed regularization term adds a small computational overhead over other FT approaches but generally improves union performance on the set of known attacks. For example, when considering the sequence of $\ell_2$ and StAdv attacks (time step 1 in Table \ref{tab:main_results_cifar}), FT Croce + ALR improves union accuracy over FT Croce by 7.7\% while adding a time overhead of 1.07 hours.  Additionally, when considering the sequence of 3 attacks ($\ell_2$, StAdv, and $\ell_\infty$ attacks), FT Croce + ALR improves union accuracy over FT Croce by 6.72\% while adding a time overhead of 0.6 hours.  This increase in time complexity is much smaller than FT MAX which takes 1.6 hours longer than FT Croce for $\ell_2$ and StAdv and 3.35 hours longer for $\ell_2$, StAdv, and $\ell_\infty$. With respect to goals in CAR, regularization balances $\delta_{\text{known}}$ and $\Delta t$.

\noindent\textbf{Comparison to training from scratch.} In Table \ref{tab:training_from_scratch}, we report clean, average, and union accuracies along with total training times for using training from scratch on all 4 attacks compared to training sequentially with regularized CRT on CIFAR-10.  We observe that regularized CRT is significantly more efficient than MAX and AVG training (taking a total of 26.86 hours while AVG and MAX take over 50 hours of training time).  Surprisingly, we find that on CIFAR-10, regularized CRT can outperform training from scratch methods, achieving 4.35\% higher average accuracy compared to the best achieved by training from scratch.  This suggests that transferable robustness between carefully chosen attacks can improve MAR as a whole.  However, we note that the ability to outperform training from scratch seems to be specific to CIFAR-10; for ImageNette and CIFAR-100 (Appendix \ref{app:exp_seq}) training from scratch outperforms using fine-tuning in CAR.

\noindent\textbf{Impact of dataset and attack sequence. }In Appendix \ref{app:exp_seq}, we provide results on ImageNette and CIFAR-100 as well as for attack sequence $\ell_\infty \to$ StAdv $\to$ Recolor $\to \ell_2$.  Overall, we observe that trends such as improved robustness to unforeseen and the union of attacks are generally consistent. However, but the extent to which regularization improves performance over FT Croce varies.  The choice of the initial attack seems to play a role in subsequent robustness, and if defenders are aware of multiple attacks, choosing the right one to start with is an interesting open question.  

\begin{tcolorbox}[myboxstyle]
\begin{cfinding}
CRT+ALR improves robustness on both known and unforeseen attacks, and reduces drop in robustness on previous attacks with only a small overhead in fine-tuning time compared to unregularized CRT.\end{cfinding}
\end{tcolorbox}

\subsection{Ablation 1: Regularization in Initial Training}
\label{sec:init_train_impact}
We now study the impact of regularization \textit{only} in the initial training phase of CRT.  In Table \ref{tab:reg_unforeseen_rob}, we present results for robust accuracies of models initially trained on $\ell_2$ and $\ell_{\infty}$ attacks with different forms of regularization.  We present results for different regularization strengths and initial attack choices in \cref{app:init_train_different_attack_types}.

\noindent\textbf{Regularization improves robustness on unforeseen attacks.} Interestingly, we find that all regularization types including random noise-based regularization can improve unforeseen robustness.  For example, at $\lambda=5$, UR improves union accuracy across all attacks by 6.13\% for $\ell_2$ initial attack and by 7.05\% for $\ell_\infty$ initial attack compared to the model trained without regularization.  Improved unforeseen robustness provides a better starting point for fine-tuning, which we demonstrate experimentally in Appendix \ref{app:fine-tuning_pairs_init_train}.

\noindent\textbf{Trade-offs for clean and different attack accuracies. }We observe that all regularization types generally exhibit a trade-off with clean accuracy and trade-offs with a few attack types such as Glitch.  This trade-off aligns with \cref{thm:corollary} which states that the gap between clean loss and loss over the union of attacks is decreased via regularization. We also find that random noise based regularization (UR and GR) generally exhibits trade-off with the robust accuracy on the initial attack.  This is generally not the case for adversarial regularization (ALR and VR) which maintains performance on the initial attack. 

\noindent\textbf{Regularized initial models are better starting points for fine-tuning. } In Appendix \ref{app:fine-tuning_pairs_init_train}, we present results for fine-tuning with a new attack from models using regularization in only initial training.  We observe that for all regularization types, regularization in initial training can improve the robustness on the union of attacks after fine-tuning, but this trend is more consistent with adversarial regularization types (ALR and VR) compared to random regularization types (UR and GR).

\begin{tcolorbox}[myboxstyle]
    \begin{cfinding}\label{cfind: justification} Adversarial and random noise regularization in initial training improves performance on unforeseen attacks.  Fine-tuning on a new attack from a regularized model boosts resulting Union accuracy.
    \end{cfinding}
\end{tcolorbox}

\subsection{Ablation 2: Regularization during Fine-tuning}
\label{sec:fine-tuning_impact}

We now investigate whether regularization within just the the fine-tuning phase can improve CAR.  We initially train models on a single initial attack using adversarial training (\emph{without regularization}) and then fine-tune with \citet{croce2022adversarial}'s fine-tuning approach both with and without regularization on a new attack.  In Figure \ref{fig:fine-tune_abl_main_text}, we present grids representing differences in Union accuracy between regularized and unregularized fine-tuning.  Rows represent the initial attack used to adversarially train the model (without regularization), columns represent the new attack.  We provide corresponding plots detailing differences in average accuracy, initial attack accuracy, new attack accuracy, and clean accuracy in Appendix \ref{app:fine-tuning_pairs}.  

\noindent\textbf{Adversarial regularization can improve union accuracy in fine-tuning. } We find that across different initial and new attack pairs, using ALR in fine-tuning generally improves union accuracy as most cells in Figure \ref{fig:fine-tune_abl_main_text}(a) are green.  These increases in robustness can be quite large; for example, when the initial attack is StAdv \citep{XiaoZ0HLS18} and the new attack is Kaleidoscope \citep{kaufmann2019testing}, ALR improves robustness on the union by 8.66\%.  Additionally, when the initial attack is $\ell_2$ and the new attack is Snow \citep{kaufmann2019testing}, ALR improves robustness on the Union of both attacks by 7.85\%.  We find same trend holds for VR (\cref{app:fine-tuning_pairs}).

\noindent\textbf{Random noise based regularization is harmful when used in fine-tuning. }Although random noise based regularization can improve robustness when used in the initial training phase of CRT, Figure \ref{fig:fine-tune_abl_main_text}(b) demonstrates that UR in fine-tuning hurts union accuracy for many initial and new attack pairs (corresponding results for GR are present in \cref{app:fine-tuning_pairs}). This suggests that while random noise based regularization can be used to perform initial training more efficiently, they should not be used during fine-tuning.  Since we found that UR and GR trade off accuracy on the initial attack when used in initial training in \cref{sec:init_train_impact}, this suggests that UR and GR generally trade off performance on attacks that are used in training or fine-tuning.

\begin{tcolorbox}[myboxstyle]
    \begin{cfinding}\label{cfind: justification} In fine-tuning, adversarial regularization (ALR and VR) can improve Union accuracy significantly (up to $\sim 7\%$) while random noise-based regularization hurts Union accuracy. 
    \end{cfinding}
\end{tcolorbox}

\section{Discussion and Related Work}
\label{sec: discussion}
This work makes early progress towards deployable defenses that mitigate model obsolescence in the face of evolving adversaries. Such approaches could promote the adoption of robust models, as they allow model trainers to `patch' against vulnerabilities without training from scratch.


\noindent\textbf{Related Work:} Prior works investigate multiattack robustness (MAR) \citep{MainiWK20,TB19,madaan2020learning,Croce020} and unforeseen attack robustness \citep{laidlaw2020perceptual,zhang2018lpips,dai2022formulating,jin2020manifold,dai2023multirobustbench}. Unlike these methods, we assume that the defender may not know all attacks \emph{a priori} but adjusts their model as new attacks emerge. \citet{croce2022adversarial} propose a fine-tuning method for MAR on unions of $\ell_p$ attacks. Our work differs by exploring additional attack types (\emph{e.g.} spatial attacks \citep{XiaoZ0HLS18} and color shifts \citep{LaidlawF19}) and improvements to the initial training stage prior to fine-tuning. Additional related work is discussed in \cref{appsec: add_rel_work}.

\noindent\textbf{Limitations:} 
More work is needed to improve the performance of RCRT, as our approach does not outperform existing baselines in all settings. It also remains unclear whether training from scratch with all attacks or fine-tuning on new attacks is optimal from both a theoretical and empirical perspective. Future theoretical work could characterize the convergence rates of each approach, as well as the gap in robustness between models at different stages in CRT. Further limitations and future directions are discussed in Appendix~\ref{appsec:future_directions}.

\section*{Impact Statement}
 The defense framework proposed can be useful for safety in practical, high-risk applications of supervised machine learning such as autonomous vehicles \citep{tencent2019experimental, 272270, 291108}, content moderation \citep{ye2023noisyhate,schaffner2024community}, and face authentication \citep{komkov2021advhat, wei2022adversarial} and provides first steps towards training and updating models in order to maintain robustness over time.  However, there are cases in which adversarial examples are used for good (\textit{e.g.} defending against website fingerprinting \citep{rahman2020mockingbird,shan2021patch}) which may be adversely affected by models robust to adversarial examples, including our proposed approach.


\bibliography{multi_robust}

\begin{thebibliography}{52}
\providecommand{\natexlab}[1]{#1}
\providecommand{\url}[1]{\texttt{#1}}
\expandafter\ifx\csname urlstyle\endcsname\relax
  \providecommand{\doi}[1]{doi: #1}\else
  \providecommand{\doi}{doi: \begingroup \urlstyle{rm}\Url}\fi

\bibitem[Bai et~al.(2023)Bai, Chen, Lyu, Zhao, and Wen]{bai2023towards}
Bai, T., Chen, C., Lyu, L., Zhao, J., and Wen, B.
\newblock Towards adversarially robust continual learning.
\newblock In \emph{ICASSP 2023-2023 IEEE International Conference on Acoustics, Speech and Signal Processing (ICASSP)}, pp.\  1--5. IEEE, 2023.

\bibitem[Carlini \& Wagner(2017)Carlini and Wagner]{carlini2017towards}
Carlini, N. and Wagner, D.
\newblock Towards evaluating the robustness of neural networks.
\newblock In \emph{2017 ieee symposium on security and privacy (sp)}, pp.\  39--57. IEEE, 2017.

\bibitem[Cianfarani et~al.(2022)Cianfarani, Bhagoji, Sehwag, Zhao, Zheng, and Mittal]{cianfarani2022understanding}
Cianfarani, C., Bhagoji, A.~N., Sehwag, V., Zhao, B., Zheng, H., and Mittal, P.
\newblock Understanding robust learning through the lens of representation similarities.
\newblock \emph{Advances in Neural Information Processing Systems}, 35:\penalty0 34912--34925, 2022.

\bibitem[Cohen et~al.(2019)Cohen, Rosenfeld, and Kolter]{cohen2019certified}
Cohen, J.~M., Rosenfeld, E., and Kolter, J.~Z.
\newblock Certified adversarial robustness via randomized smoothing.
\newblock In Chaudhuri, K. and Salakhutdinov, R. (eds.), \emph{Proceedings of the 36th International Conference on Machine Learning, {ICML} 2019, 9-15 June 2019, Long Beach, California, {USA}}, volume~97 of \emph{Proceedings of Machine Learning Research}, pp.\  1310--1320. {PMLR}, 2019.
\newblock URL \url{http://proceedings.mlr.press/v97/cohen19c.html}.

\bibitem[Croce \& Hein(2020{\natexlab{a}})Croce and Hein]{Croce020}
Croce, F. and Hein, M.
\newblock Provable robustness against all adversarial {\textdollar}l{\_}p{\textdollar}-perturbations for {\textdollar}p{\textbackslash}geq 1{\textdollar}.
\newblock In \emph{8th International Conference on Learning Representations, {ICLR} 2020, Addis Ababa, Ethiopia, April 26-30, 2020}. OpenReview.net, 2020{\natexlab{a}}.
\newblock URL \url{https://openreview.net/forum?id=rklk\_ySYPB}.

\bibitem[Croce \& Hein(2020{\natexlab{b}})Croce and Hein]{croce2020reliable}
Croce, F. and Hein, M.
\newblock Reliable evaluation of adversarial robustness with an ensemble of diverse parameter-free attacks.
\newblock In \emph{International conference on machine learning}, pp.\  2206--2216. PMLR, 2020{\natexlab{b}}.

\bibitem[Croce \& Hein(2022)Croce and Hein]{croce2022adversarial}
Croce, F. and Hein, M.
\newblock Adversarial robustness against multiple and single $ l\_p $-threat models via quick fine-tuning of robust classifiers.
\newblock In \emph{International Conference on Machine Learning}, pp.\  4436--4454. PMLR, 2022.

\bibitem[Croce et~al.(2020)Croce, Andriushchenko, Sehwag, Debenedetti, Flammarion, Chiang, Mittal, and Hein]{croce2020robustbench}
Croce, F., Andriushchenko, M., Sehwag, V., Debenedetti, E., Flammarion, N., Chiang, M., Mittal, P., and Hein, M.
\newblock Robustbench: a standardized adversarial robustness benchmark.
\newblock \emph{arXiv preprint arXiv:2010.09670}, 2020.

\bibitem[Dai et~al.(2022)Dai, Mahloujifar, and Mittal]{dai2022formulating}
Dai, S., Mahloujifar, S., and Mittal, P.
\newblock Formulating robustness against unforeseen attacks.
\newblock \emph{arXiv preprint arXiv:2204.13779}, 2022.

\bibitem[Dai et~al.(2023)Dai, Mahloujifar, Xiang, Sehwag, Chen, and Mittal]{dai2023multirobustbench}
Dai, S., Mahloujifar, S., Xiang, C., Sehwag, V., Chen, P.-Y., and Mittal, P.
\newblock {M}ulti{R}obust{B}ench: Benchmarking robustness against multiple attacks.
\newblock In Krause, A., Brunskill, E., Cho, K., Engelhardt, B., Sabato, S., and Scarlett, J. (eds.), \emph{Proceedings of the 40th International Conference on Machine Learning}, volume 202 of \emph{Proceedings of Machine Learning Research}, pp.\  6760--6785. PMLR, 23--29 Jul 2023.
\newblock URL \url{https://proceedings.mlr.press/v202/dai23c.html}.

\bibitem[Dai et~al.(2024{\natexlab{a}})Dai, Ding, Bhagoji, Cullina, Zheng, Zhao, and Mittal]{dai2024characterizing}
Dai, S., Ding, W., Bhagoji, A.~N., Cullina, D., Zheng, H., Zhao, B., and Mittal, P.
\newblock Characterizing the optimal $0-1$ loss for multi-class classification with a test-time attacker.
\newblock \emph{Advances in Neural Information Processing Systems}, 36, 2024{\natexlab{a}}.

\bibitem[Dai et~al.(2024{\natexlab{b}})Dai, Xiang, Wu, and Mittal]{dai2024position}
Dai, S., Xiang, C., Wu, T., and Mittal, P.
\newblock Position paper: Beyond robustness against single attack types.
\newblock \emph{arXiv preprint arXiv:2405.01349}, 2024{\natexlab{b}}.

\bibitem[Deng et~al.(2009)Deng, Dong, Socher, Li, Li, and Fei-Fei]{deng2009imagenet}
Deng, J., Dong, W., Socher, R., Li, L.-J., Li, K., and Fei-Fei, L.
\newblock Imagenet: A large-scale hierarchical image database.
\newblock In \emph{2009 IEEE conference on computer vision and pattern recognition}, pp.\  248--255. Ieee, 2009.

\bibitem[Ghazanfari et~al.(2023)Ghazanfari, Garg, Krishnamurthy, Khorrami, and Araujo]{ghazanfari2023r}
Ghazanfari, S., Garg, S., Krishnamurthy, P., Khorrami, F., and Araujo, A.
\newblock R-lpips: An adversarially robust perceptual similarity metric.
\newblock \emph{arXiv preprint arXiv:2307.15157}, 2023.

\bibitem[Gowal et~al.(2020)Gowal, Qin, Uesato, Mann, and Kohli]{gowal2020uncovering}
Gowal, S., Qin, C., Uesato, J., Mann, T., and Kohli, P.
\newblock Uncovering the limits of adversarial training against norm-bounded adversarial examples.
\newblock \emph{arXiv preprint arXiv:2010.03593}, 2020.

\bibitem[Gragnaniello et~al.(2021)Gragnaniello, Marra, Verdoliva, and Poggi]{GRAGNANIELLO2021142}
Gragnaniello, D., Marra, F., Verdoliva, L., and Poggi, G.
\newblock Perceptual quality-preserving black-box attack against deep learning image classifiers.
\newblock \emph{Pattern Recognition Letters}, 147:\penalty0 142--149, 2021.
\newblock ISSN 0167-8655.
\newblock \doi{https://doi.org/10.1016/j.patrec.2021.03.033}.
\newblock URL \url{https://www.sciencedirect.com/science/article/pii/S0167865521001288}.

\bibitem[Howard()]{howardimagenette}
Howard, J.
\newblock Imagewang.
\newblock URL \url{https://github.com/fastai/imagenette/}.

\bibitem[Jin \& Rinard(2020)Jin and Rinard]{jin2020manifold}
Jin, C. and Rinard, M.
\newblock Manifold regularization for locally stable deep neural networks.
\newblock \emph{arXiv preprint arXiv:2003.04286}, 2020.

\bibitem[Jing et~al.(2021)Jing, Tang, Du, Xue, Luo, Wang, Nie, and Wu]{272270}
Jing, P., Tang, Q., Du, Y., Xue, L., Luo, X., Wang, T., Nie, S., and Wu, S.
\newblock Too good to be safe: Tricking lane detection in autonomous driving with crafted perturbations.
\newblock In \emph{30th USENIX Security Symposium (USENIX Security 21)}, pp.\  3237--3254. USENIX Association, August 2021.
\newblock ISBN 978-1-939133-24-3.
\newblock URL \url{https://www.usenix.org/conference/usenixsecurity21/presentation/jing}.

\bibitem[Kaufmann et~al.(2019)Kaufmann, Kang, Sun, Basart, Yin, Mazeika, Arora, Dziedzic, Boenisch, Brown, et~al.]{kaufmann2019testing}
Kaufmann, M., Kang, D., Sun, Y., Basart, S., Yin, X., Mazeika, M., Arora, A., Dziedzic, A., Boenisch, F., Brown, T., et~al.
\newblock Testing robustness against unforeseen adversaries.
\newblock \emph{arXiv preprint arXiv:1908.08016}, 2019.

\bibitem[Khan et~al.(2022{\natexlab{a}})Khan, Bouaynaya, and Rasool]{9892970}
Khan, H., Bouaynaya, N.~C., and Rasool, G.
\newblock Adversarially robust continual learning.
\newblock In \emph{2022 International Joint Conference on Neural Networks (IJCNN)}, pp.\  1--8, 2022{\natexlab{a}}.
\newblock \doi{10.1109/IJCNN55064.2022.9892970}.

\bibitem[Khan et~al.(2022{\natexlab{b}})Khan, Shah, Zaidi, et~al.]{khan2022susceptibility}
Khan, H., Shah, P.~M., Zaidi, S. F.~A., et~al.
\newblock Susceptibility of continual learning against adversarial attacks.
\newblock \emph{arXiv preprint arXiv:2207.05225}, 2022{\natexlab{b}}.

\bibitem[Komkov \& Petiushko(2021)Komkov and Petiushko]{komkov2021advhat}
Komkov, S. and Petiushko, A.
\newblock Advhat: Real-world adversarial attack on arcface face id system.
\newblock In \emph{2020 25th international conference on pattern recognition (ICPR)}, pp.\  819--826. IEEE, 2021.

\bibitem[Krizhevsky et~al.(2009)Krizhevsky, Hinton, et~al.]{krizhevsky2009learning}
Krizhevsky, A., Hinton, G., et~al.
\newblock Learning multiple layers of features from tiny images.
\newblock 2009.

\bibitem[Lab(2019)]{tencent2019experimental}
Lab, T. K.~S.
\newblock Experimental security research of tesla autopilot.
\newblock \emph{Tencent Keen Security Lab}, 2019.

\bibitem[Laidlaw \& Feizi(2019)Laidlaw and Feizi]{LaidlawF19}
Laidlaw, C. and Feizi, S.
\newblock Functional adversarial attacks.
\newblock \emph{Advances in neural information processing systems}, 32, 2019.

\bibitem[Laidlaw et~al.(2021)Laidlaw, Singla, and Feizi]{laidlaw2020perceptual}
Laidlaw, C., Singla, S., and Feizi, S.
\newblock Perceptual adversarial robustness: Defense against unseen threat models.
\newblock In \emph{9th International Conference on Learning Representations, {ICLR} 2021, Virtual Event, Austria, May 3-7, 2021}. OpenReview.net, 2021.
\newblock URL \url{https://openreview.net/forum?id=dFwBosAcJkN}.

\bibitem[Madaan et~al.(2020)Madaan, Shin, and Hwang]{madaan2020learning}
Madaan, D., Shin, J., and Hwang, S.~J.
\newblock Learning to generate noise for robustness against multiple perturbations.
\newblock \emph{CoRR}, abs/2006.12135, 2020.
\newblock URL \url{https://arxiv.org/abs/2006.12135}.

\bibitem[Madry et~al.(2018)Madry, Makelov, Schmidt, Tsipras, and Vladu]{madry2017towards}
Madry, A., Makelov, A., Schmidt, L., Tsipras, D., and Vladu, A.
\newblock Towards deep learning models resistant to adversarial attacks.
\newblock In \emph{6th International Conference on Learning Representations, {ICLR} 2018, Vancouver, BC, Canada, April 30 - May 3, 2018, Conference Track Proceedings}. OpenReview.net, 2018.
\newblock URL \url{https://openreview.net/forum?id=rJzIBfZAb}.

\bibitem[Maini et~al.(2020)Maini, Wong, and Kolter]{MainiWK20}
Maini, P., Wong, E., and Kolter, J.~Z.
\newblock Adversarial robustness against the union of multiple perturbation models.
\newblock In \emph{Proceedings of the 37th International Conference on Machine Learning, {ICML} 2020, 13-18 July 2020, Virtual Event}, volume 119 of \emph{Proceedings of Machine Learning Research}, pp.\  6640--6650. {PMLR}, 2020.
\newblock URL \url{http://proceedings.mlr.press/v119/maini20a.html}.

\bibitem[McCloskey \& Cohen(1989)McCloskey and Cohen]{MCCLOSKEY1989109}
McCloskey, M. and Cohen, N.~J.
\newblock Catastrophic interference in connectionist networks: The sequential learning problem.
\newblock volume~24 of \emph{Psychology of Learning and Motivation}, pp.\  109--165. Academic Press, 1989.
\newblock \doi{https://doi.org/10.1016/S0079-7421(08)60536-8}.
\newblock URL \url{https://www.sciencedirect.com/science/article/pii/S0079742108605368}.

\bibitem[Nern et~al.(2023)Nern, Raj, Georgi, and Sharma]{nern2023transfer}
Nern, L.~F., Raj, H., Georgi, M., and Sharma, Y.
\newblock On transfer of adversarial robustness from pretraining to downstream tasks.
\newblock In \emph{Thirty-seventh Conference on Neural Information Processing Systems}, 2023.

\bibitem[Rahman et~al.(2020)Rahman, Imani, Mathews, and Wright]{rahman2020mockingbird}
Rahman, M.~S., Imani, M., Mathews, N., and Wright, M.
\newblock Mockingbird: Defending against deep-learning-based website fingerprinting attacks with adversarial traces.
\newblock \emph{IEEE Transactions on Information Forensics and Security}, 16:\penalty0 1594--1609, 2020.

\bibitem[Rebuffi et~al.()Rebuffi, Croce, and Gowal]{rebuffirevisiting}
Rebuffi, S.-A., Croce, F., and Gowal, S.
\newblock Revisiting adapters with adversarial training.
\newblock In \emph{The Eleventh International Conference on Learning Representations}.

\bibitem[Schaffner et~al.(2024)Schaffner, Bhagoji, Cheng, Mei, Shen, Wang, Chetty, Feamster, Lakier, and Tan]{schaffner2024community}
Schaffner, B., Bhagoji, A.~N., Cheng, S., Mei, J., Shen, J.~L., Wang, G., Chetty, M., Feamster, N., Lakier, G., and Tan, C.
\newblock " community guidelines make this the best party on the internet": An in-depth study of online platforms' content moderation policies.
\newblock In \emph{Proceedings of the CHI Conference on Human Factors in Computing Systems}, pp.\  1--16, 2024.

\bibitem[Shan et~al.(2021)Shan, Bhagoji, Zheng, and Zhao]{shan2021patch}
Shan, S., Bhagoji, A.~N., Zheng, H., and Zhao, B.~Y.
\newblock Patch-based defenses against web fingerprinting attacks.
\newblock In \emph{Proceedings of the 14th ACM Workshop on Artificial Intelligence and Security}, pp.\  97--109, 2021.

\bibitem[Song et~al.(2023)Song, Ozmen, Kim, Muller, Celik, and Bianchi]{291108}
Song, R., Ozmen, M.~O., Kim, H., Muller, R., Celik, Z.~B., and Bianchi, A.
\newblock Discovering adversarial driving maneuvers against autonomous vehicles.
\newblock In \emph{32nd USENIX Security Symposium (USENIX Security 23)}, pp.\  2957--2974, Anaheim, CA, August 2023. USENIX Association.
\newblock ISBN 978-1-939133-37-3.
\newblock URL \url{https://www.usenix.org/conference/usenixsecurity23/presentation/song}.

\bibitem[Szegedy et~al.(2014)Szegedy, Zaremba, Sutskever, Bruna, Erhan, Goodfellow, and Fergus]{szegedy2013intriguing}
Szegedy, C., Zaremba, W., Sutskever, I., Bruna, J., Erhan, D., Goodfellow, I.~J., and Fergus, R.
\newblock Intriguing properties of neural networks.
\newblock In Bengio, Y. and LeCun, Y. (eds.), \emph{2nd International Conference on Learning Representations, {ICLR} 2014, Banff, AB, Canada, April 14-16, 2014, Conference Track Proceedings}, 2014.
\newblock URL \url{http://arxiv.org/abs/1312.6199}.

\bibitem[Tram{\`e}r \& Boneh(2019)Tram{\`e}r and Boneh]{TB19}
Tram{\`e}r, F. and Boneh, D.
\newblock Adversarial training and robustness for multiple perturbations.
\newblock In \emph{Conference on Neural Information Processing Systems (NeurIPS)}, 2019.
\newblock URL \url{https://arxiv.org/abs/1904.13000}.

\bibitem[Tripuraneni et~al.(2020)Tripuraneni, Jordan, and Jin]{tripuraneni2020theory}
Tripuraneni, N., Jordan, M., and Jin, C.
\newblock On the theory of transfer learning: The importance of task diversity.
\newblock \emph{Advances in neural information processing systems}, 33:\penalty0 7852--7862, 2020.

\bibitem[Wang et~al.(2023{\natexlab{a}})Wang, Zhang, Su, and Zhu]{wang2023comprehensive}
Wang, L., Zhang, X., Su, H., and Zhu, J.
\newblock A comprehensive survey of continual learning: Theory, method and application.
\newblock \emph{arXiv preprint arXiv:2302.00487}, 2023{\natexlab{a}}.

\bibitem[Wang et~al.(2023{\natexlab{b}})Wang, Liu, Ling, Li, Liu, Li, Chen, Yuille, and Yu]{wang2023continual}
Wang, Q., Liu, Y., Ling, H., Li, Y., Liu, Q., Li, P., Chen, J., Yuille, A., and Yu, N.
\newblock Continual adversarial defense.
\newblock \emph{arXiv preprint arXiv:2312.09481}, 2023{\natexlab{b}}.

\bibitem[Watkins et~al.(2024)Watkins, Nguyen-Tang, Ullah, and Arora]{watkinsadversarially}
Watkins, A., Nguyen-Tang, T., Ullah, E., and Arora, R.
\newblock Adversarially robust multi-task representation learning.
\newblock In \emph{The Thirty-eighth Annual Conference on Neural Information Processing Systems}, 2024.

\bibitem[Wei et~al.(2022)Wei, Guo, and Yu]{wei2022adversarial}
Wei, X., Guo, Y., and Yu, J.
\newblock Adversarial sticker: A stealthy attack method in the physical world.
\newblock \emph{IEEE Transactions on Pattern Analysis and Machine Intelligence}, 45\penalty0 (3):\penalty0 2711--2725, 2022.

\bibitem[Wong et~al.(2019)Wong, Schmidt, and Kolter]{wasserstein_attacks}
Wong, E., Schmidt, F.~R., and Kolter, J.~Z.
\newblock Wasserstein adversarial examples via projected sinkhorn iterations.
\newblock \emph{CoRR}, abs/1902.07906, 2019.
\newblock URL \url{http://arxiv.org/abs/1902.07906}.

\bibitem[Wu et~al.(2020)Wu, Wang, and Yu]{wu2020stronger}
Wu, K., Wang, A., and Yu, Y.
\newblock Stronger and faster wasserstein adversarial attacks.
\newblock In \emph{International Conference on Machine Learning}, pp.\  10377--10387. PMLR, 2020.

\bibitem[Xiao et~al.(2018)Xiao, Zhu, Li, He, Liu, and Song]{XiaoZ0HLS18}
Xiao, C., Zhu, J., Li, B., He, W., Liu, M., and Song, D.
\newblock Spatially transformed adversarial examples.
\newblock In \emph{6th International Conference on Learning Representations, {ICLR} 2018, Vancouver, BC, Canada, April 30 - May 3, 2018, Conference Track Proceedings}. OpenReview.net, 2018.
\newblock URL \url{https://openreview.net/forum?id=HyydRMZC-}.

\bibitem[Ye et~al.(2023)Ye, Le, and Lee]{ye2023noisyhate}
Ye, Y., Le, T., and Lee, D.
\newblock Noisyhate: Benchmarking content moderation machine learning models with human-written perturbations online.
\newblock \emph{arXiv preprint arXiv:2303.10430}, 2023.

\bibitem[Zagoruyko \& Komodakis(2016)Zagoruyko and Komodakis]{zagoruyko2016wide}
Zagoruyko, S. and Komodakis, N.
\newblock Wide residual networks.
\newblock In Wilson, R.~C., Hancock, E.~R., and Smith, W. A.~P. (eds.), \emph{Proceedings of the British Machine Vision Conference 2016, {BMVC} 2016, York, UK, September 19-22, 2016}. {BMVA} Press, 2016.
\newblock URL \url{http://www.bmva.org/bmvc/2016/papers/paper087/index.html}.

\bibitem[Zhang et~al.(2019)Zhang, Yu, Jiao, Xing, Ghaoui, and Jordan]{zhang2019theoretically}
Zhang, H., Yu, Y., Jiao, J., Xing, E.~P., Ghaoui, L.~E., and Jordan, M.~I.
\newblock Theoretically principled trade-off between robustness and accuracy.
\newblock In Chaudhuri, K. and Salakhutdinov, R. (eds.), \emph{Proceedings of the 36th International Conference on Machine Learning, {ICML} 2019, 9-15 June 2019, Long Beach, California, {USA}}, volume~97 of \emph{Proceedings of Machine Learning Research}, pp.\  7472--7482. {PMLR}, 2019.
\newblock URL \url{http://proceedings.mlr.press/v97/zhang19p.html}.

\bibitem[Zhang et~al.(2020)Zhang, Chen, Xiao, Gowal, Stanforth, Li, Boning, and Hsieh]{zhang2020towards}
Zhang, H., Chen, H., Xiao, C., Gowal, S., Stanforth, R., Li, B., Boning, D.~S., and Hsieh, C.
\newblock Towards stable and efficient training of verifiably robust neural networks.
\newblock In \emph{8th International Conference on Learning Representations, {ICLR} 2020, Addis Ababa, Ethiopia, April 26-30, 2020}. OpenReview.net, 2020.
\newblock URL \url{https://openreview.net/forum?id=Skxuk1rFwB}.

\bibitem[Zhang et~al.(2018)Zhang, Isola, Efros, Shechtman, and Wang]{zhang2018lpips}
Zhang, R., Isola, P., Efros, A.~A., Shechtman, E., and Wang, O.
\newblock The unreasonable effectiveness of deep features as a perceptual metric.
\newblock In \emph{2018 {IEEE} Conference on Computer Vision and Pattern Recognition, {CVPR} 2018, Salt Lake City, UT, USA, June 18-22, 2018}, pp.\  586--595. Computer Vision Foundation / {IEEE} Computer Society, 2018.
\newblock \doi{10.1109/CVPR.2018.00068}.
\newblock URL \url{http://openaccess.thecvf.com/content\_cvpr\_2018/html/Zhang\_The\_Unreasonable\_Effectiveness\_CVPR\_2018\_paper.html}.

\end{thebibliography}
\bibliographystyle{icml2025}

\newpage
\appendix
\onecolumn

This appendix is organized as follows:
\begin{enumerate}
    \item Additional related work (\cref{appsec: add_rel_work})
    \item Future directions (\cref{appsec:future_directions})
        \item Proofs (\cref{sec:proof})
    \item Connection to variation regularization (\cref{app:alr_vr})

        \item Experimental verification of theoretical results (\cref{appsec:experimental_verification_of_theory})
    \item Additional experimental setup details (training and attack parameters, model selection, regularization setup) (\cref{app:exp_setup})
    \item Additional experiments
    \begin{itemize}
        \item Longer attack sequences and different datasets (CIFAR-100 and ImageNette) (\cref{app:exp_seq})
        \item Ablations on initial training (attack choice, regularization parameters) (\cref{app:init_train})
        \item Ablations on fine-tuning (attack choice, regularization parameters) ((\cref{app:fine-tuning}))
    \end{itemize}
\end{enumerate}

\section{Additional Related Work}
\label{appsec: add_rel_work}

\textbf{Adversarial Attacks and Defenses:} ML models are vulnerable to input-space perturbations known as adversarial examples \citep{szegedy2013intriguing}.  These attacks come in different formulations including $\ell_p$-norm bounded attacks \citep{madry2017towards, carlini2017towards}, spatial transformations \citep{XiaoZ0HLS18}, color shifts \citep{LaidlawF19}, JPEG compression and weather changes \citep{kaufmann2019testing}, bounded Wasserstein distance \citep{wasserstein_attacks, wu2020stronger} as well as attacks based on distances that are more aligned with human perception such as SSIM \citep{GRAGNANIELLO2021142} and LPIPS distances \citep{laidlaw2020perceptual, ghazanfari2023r}.

Despite the wide variety of attacks that have been introduced, defenses against adversarial examples focus mainly on $\ell_{\infty}$ or $\ell_2$-norm bounded perturbations \citep{cohen2019certified, zhang2020towards, madry2017towards, zhang2019theoretically, croce2020robustbench}.  Of existing defenses, adversarial training \citep{madry2017towards}, an approach that uses adversarial examples generated by the attack of interest during training, can most easily be adjusted to different attacks.  In our work, we build off of adversarial training in order to adapt to new adversaries.

\textbf{Training Techniques for Multi-Robustness:}A few prior works have studied the problem of achieving robustness against multiple attacks, under the assumption that all attacks are known a priori.  These include training based approaches \citep{MainiWK20, TB19, madaan2020learning} which incorporate adversarial examples from the threat models of interest (usually the combination of $\ell_1$, $\ell_2$, and $\ell_\infty$ norm bounded attacks) during training.  \citet{Croce020} provides a robustness certificate of all $\ell_p$ norms given certified robustness against $\ell_\infty$ and $\ell_1$ attacks.

Another line of works has looked at defending against attacks that are not known by the defender, which is a problem known as unforeseen robustness.  These techniques are all training-based and include \citet{laidlaw2020perceptual} which proposes training based on LPIPS \citep{zhang2018lpips}, a metric more aligned with human perception than $\ell_p$ distances, and \citet{dai2022formulating, jin2020manifold} which use regularization during training in order to obtain better generalization to unforeseen attacks.  \citet{dai2023multirobustbench} provides a comprehensive leaderboard for the performance of existing defenses against a large variety of attacks at different attack strengths.

Our work differs from these lines of works since we assume that while the defender may not know all attacks a priori, they are allowed to adjust their defense when they become aware of new attacks.  The work most similar to ours is \citet{croce2022adversarial}, which proposes fine-tuning a model robust against one $\ell_p$ attack to be robust against the union of $\ell_p$ attacks.  Specifically, they demonstrate that we can achieve simultaneous multiattack robustness for the union of $\ell_p$ attacks by obtaining robustness against $\ell_1$ and $\ell_{\infty}$ attacks, and thus propose fine-tuning with $\ell_1$ and $\ell_{\infty}$ attacks to achieve this efficiently.  Our work differs from this work since we explore adapting to attacks outside of $\ell_p$ attacks, investigate ways of improving the initial state of the model prior to fine-tuning, and consider adapting to sequences of attacks.

\textbf{Continual Learning:} A similar direction of research is continual learning (CL) in which a set of tasks are learned sequentially with the goal of performing as well as if they were learned simultaneously \citep{wang2023comprehensive}.  Few works have studied CL in conjunction with adversarial ML.  Of these, most works focus on evaluating or improving the robustness of models trained in the CL framework \citep{bai2023towards,9892970, khan2022susceptibility}. The most similar to our work is \citet{wang2023continual} which treats different attacks as tasks and uses approaches in CL in order to sequentially adapt a model against new attacks.  The attacks they consider follow the same threat model (ie. $\ell_{\infty}$ attacks using different optimization procedures to find the adversarial example).  In our work, we investigate adapting to new threat models.


\section{Future Directions}
\label{appsec:future_directions}
We now discuss a few directions for future work in depth.

\noindent\edit{\textbf{Choice of initial attacks and attack similarities. } In this work, we looked at $\ell_2$ and $\ell_\infty$ attacks as the initial attack in the CAR problem.  However, in practice, we would like to choose an initial attack that is the most representative of the attacks we want to be robust against, in order to generalize to downstream new attacks.  Further research on understanding and improving the initial attack can improve the accuracies achieved through training with CRT.  Additionally, having ways of measuring attack similarity between the known attacks and new attacks can help allow us to decide whether using CRT is sufficient for achieving good robustness or whether we need to train from scratch or combine the model with other defenses tailored towards the new attack.}

\noindent\edit{\textbf{Attack Monitoring. }}One assumption of CAR is that the defender is able to discover when a new attack exists.  While this is clear in cases such as a research group publishing a paper with a new attack or a company's security team finding a vulnerabilities, in practice, we would also be interested in recovering after an adversary discovers a new, unknown attack and successfully attacks the model. \edit{In this case, we would need a good monitoring system for detecting and synthesizing these new attacks for use with CRT.}

\noindent\edit{\textbf{Reducing catastrophic forgetting. } In CAR, since attacks are introduced sequentially, catastrophic forgetting is an important problem.  In our work, we utilized replay via \citet{croce2022adversarial}'s fine-tuning approach and also found that ALR reduces catastrophic forgetting to some extent.  Future work on reducing catastrophic forgetting can help improve the effectiveness of updating the model with CRT.}

\noindent\edit{\textbf{Training and fine-tuning efficiency.}} \edit{In our experiments,} we combine regularization with \citet{croce2022adversarial}'s fine-tuning approach due to the effectiveness and efficiency of that approach.  Further research on developing better and more efficient fine-tuning techniques for achieving robustness to new attacks (while maintaining robustness against previous attacks) can improve our CRT framework.

\noindent\edit{\textbf{Model capacity.} Current works in adversarial robustness literature show that adversarially robust models need higher model capacity~\citep{madry2017towards, gowal2020uncovering,cianfarani2022understanding}. As we increase the space of attacks to defend against, we may need to increase the capacity of the model in order to achieve multi-robustness~\citep{dai2024characterizing}. An interesting future direction is looking at the connection between model capacity and CAR and seeing if adding more parameters to the network during fine-tuning (such as using adapters~\citep{rebuffirevisiting}) can be used to address the issue of model capacity.}

\noindent\edit{\textbf{Theory.}}  We believe further work is necessary to extend the theory of CAR. Our results focus on the relationship between robust loss and \edit{logit} distance between attacks for a \emph{single model}. However, we do not extend them to comparisons between loss under different attacks for \emph{different} models, such as the initial robust model and the one at the end of fine-tuning. Additionally, the CAR framework could be extended to the multi-task setting, as is the case in multi-task representation learning \cite{watkinsadversarially,tripuraneni2020theory}. These prior works connect the ability of a class of models to learn a set of tasks to the complexity of that class (measured using Gaussian or Rademacher complexity, for example). Similar methods may also be useful for proving a model's ability to defend against multiple adversaries.

\section{Proofs}
\label{sec:proof}
\noindent\subsection{Proof of Theorem~\ref{thm:robustness}}
The proof of Theorem~\ref{thm:robustness} adapts that of Theorem I from \citet{nern2023transfer} by considering multiple attacks compared to the single one considered there.
\begin{proof}
    Define independent random variables $D_1,\ldots,D_n$ as
    \[D_i = \max_{x_i' \in C_1(x_i)}\ell(h(x_i'),y_i) - \max_{x_i'' \in C_2(x_i)}\ell(h(x_i''),y_i),\]
    based on independently drawn data points with probability distribution $\mathcal{P}(X)$. Using Hoeffding’s inequality, we get
    \begin{align*}
    &\mathbb{P}\left(\left| \sum_{i=1}^n D_i - n\mathbb{E}[D] \right| \geq t\right) \leq 2\cdot\exp\left( \frac{-2t^2}{nM_2^2} \right) \\ 
    \implies &\mathbb{P}\left(\left| \frac{1}{n}\sum_{i=1}^n D_i - \mathbb{E}[D] \right| \leq M_2\sqrt{\frac{\log(\rho/2)}{-2n}}\right) \geq 1 - \rho. \end{align*}
    Thus, with probability at least $1-\rho$ it holds that
    \begin{align}
    \nonumber\mathbb{E}[D] &= \left| \mathcal{L}_1(h) - \mathcal{L}_2(h) \right| \\
    \nonumber&= \left| \mathbb{E}_{(x,y)}\left[\max_{x' \in C_1(x)}\ell(h(x'),y) - \max_{x'' \in C_2(x)}\ell(h(x''),y)\right]\right|\\ 
    &\leq \left| \frac{1}{n}\sum_{i=1}^n\max_{x' \in C_1(x)}\ell(h(x'),y_i) - \max_{x'' \in C_2(x)}\ell(h(x''),y_i)\right| + M_2\sqrt{\frac{\log(\rho/2)}{-2n}}.\label{eq1}\end{align}
    We can further bound the first term on the right hand side, since the loss function $\ell(r,y)$ is $M_1$-Lipschitz in $\|\cdot\|_2$ for $r \in h(X)$:
    \begin{align}
    \nonumber&\left|\frac{1}{n}\sum_{i=1}^{n}\max_{x' \in C_1(x)}\ell(h(x'),y_i) - \max_{x'' \in C_2(x)}\ell(h(x''),y_i)\right|\\
    \nonumber&\leq \left|\frac{1}{n}\sum_{i=1}^{n}|\ell(h(x_i'),y_i) - \ell(h(x_i''),y_i)|\right| \\
    &\leq M_1 \frac{1}{n} \sum_{i=1}^{n}\|h(x_i') - h(x_i'')\|_2, \label{eq2}
    \end{align}
    where $x_1',\ldots,x_n'$ with $x_i' \in C_1(x_i)$ and $x_1'',\ldots,x_n''$ with $x_i'' \in C_2(x_i)$ are chosen to maximize $\ell(h(\cdot),y_i)$ for each $i$. 
    The perturbed samples represented in this inequality might not maximize the distance between the logits, but that distance can be bounded by the maximally distant perturbations within each neighborhood. Making use of the triangle inequality, we obtain:
    \begin{align}
    \nonumber&\sum_{i=1}^n\|h(x_i') - h(x_i'')\|_2\\
    \nonumber&=\sum_{i=1}^n\|(h(x_i') - h(x_i)) - (h(x_i'') - h(x_i))\|_2\\
    \nonumber&\leq\sum_{i=1}^n\|h(x_i') - h(x_i)\|_2 + \|h(x_i'') - h(x_i)\|_2\\
    &\leq\sum_{i=1}^n\max_{x' \in C_1(x_i)}\|h(x') - h(x_i)\|_2 + \max_{x'' \in C_2(x_i)}\|h(x'') - h(x_i)\|_2.
    \end{align}
    We then achieve our final result, recalling the assumption that $\mathcal{L}_1(h) \geq \mathcal{L}_2(h)$:
    \begin{align}
        \nonumber\mathcal{L}_1(h) - \mathcal{L}_2(h) &= \left| \mathcal{L}_1(h) - \mathcal{L}_2(h)\right|  \\
        &\leq M_1\frac{1}{n}\sum_{i=1}^n\Bigl(\max_{x' \in C_1(x_i)}\|h(x') - h(x_i)\|_2 +  \max_{x'' \in C_2(x_i)}\|h(x'') - h(x_i)\|_2\Bigr) + D,
    \end{align}
    where $D = M_2\sqrt{\frac{\log(\rho/2)}{-2n}}$.
\end{proof}

\noindent\subsection{Proof of Corollary~\ref{thm:corollary}}
\begin{proof}
    Define independent random variables $D_1,\ldots,D_n$ as
    \[D_i = \max_{x_i' \in C_1(x_i) \cup C_2(x_i)}\ell(h(x_i'),y_i) - \ell(h(x_i),y_i),\]
    based on independently drawn data points with probability distribution $\mathcal{P}(\mathcal{X})$. Using Hoeffding’s inequality, we get
    \begin{align*}
    &\mathbb{P}\left(\left| \sum_{i=1}^n D_i - n\mathbb{E}[D] \right| \geq t\right) \leq 2\cdot\exp\left( \frac{-2t^2}{nM_2^2} \right) \\ 
    \implies &\mathbb{P}\left(\left| \frac{1}{n}\sum_{i=1}^n D_i - \mathbb{E}[D] \right| \leq M_2\sqrt{\frac{\log(\rho/2)}{-2n}}\right) \geq 1 - \rho. \end{align*}
    Thus, with probability at least $1-\rho$ it holds that
    \begin{align}
    \nonumber\mathbb{E}[D] &= \left| \mathcal{L}_{1,2}(h) - \mathcal{L}(h) \right| \\
    \nonumber&= \left| \mathbb{E}_{(x,y)}\left[\max_{x' \in C_1(x) \cup C_2(x)}\ell(h(x'),y) - \ell(h(x),y)\right]\right|\\ 
    &\leq \left| \frac{1}{n}\sum_{i=1}^n\max_{x' \in C_1(x_i) \cup C_2(x_i)}\ell(h(x'),y_i) - \ell(h(x_i),y_i)\right| + M_2\sqrt{\frac{\log(\rho/2)}{-2n}}.\label{eq1}\end{align}
    We can further bound the first term on the right hand side, since the loss function $\ell(r,y)$ is $M_1$-Lipschitz in $\|\cdot\|_2$ for $r \in h(\mathcal{X})$:
    \begin{align}
    \nonumber&\left|\frac{1}{n}\sum_{i=1}^{n}\max_{x' \in C_1(x_i) \cup C_2(x_i)}\ell(h(x'),y_i) - \ell(h(x_i),y_i)\right|\\
    \nonumber&= \left|\frac{1}{n}\sum_{i=1}^{n}|\ell(h(x_i'),y_i) - \ell(h(x_i),y_i)|\right| \\
    &\leq M_1 \frac{1}{n} \sum_{i=1}^{n}\|h(x_i') - h(x_i)\|_2, \label{eq2}
    \end{align}
    where $x_1',\ldots,x_n'$ with $x_i' \in C_1(x_i) \cup C_2(x_i)$ are chosen to maximize $\ell(h(\cdot),y_i)$ for each $i$. 
    The perturbed samples represented in this inequality might not maximize the distance between the logits, but that distance can be bounded by the maximally distant perturbations within each neighborhood. 
    \begin{align}
    \nonumber&\sum_{i=1}^n\|h(x_i') - h(x_i)\|_2\\
    \nonumber&\leq\sum_{i=1}^n \max_{x'\in C_1(x_i) \cup C_2(x_i)} \|h(x') - h(x_i)\|_2\\
    \nonumber&=\sum_{i=1}^n \max_{C \in \{C_1,C_2\}}\max_{x'\in C(x_i)} \|h(x') - h(x_i)\|_2\\
    \end{align}
    We then achieve our final result :
    \begin{align}
        \nonumber\mathcal{L}_{1,2}(h) - \mathcal{L}(h) &= \left| \mathcal{L}_{1,2}(h) - \mathcal{L}(h)\right|\\
        &\leq M_1\frac{1}{n}\sum_{i=1}^n\Bigl(\max_{x' \in C_1(x_i)}\|h(x') - h(x_i)\|_2 +  \max_{x'' \in C_2(x_i)}\|h(x'') - h(x_i)\|_2\Bigr) + D,
    \end{align}
    where $D = M_2\sqrt{\frac{\log(\rho/2)}{-2n}}$.
\end{proof}

\noindent\subsection{Relating the Loss Gap to Internal Representations}

While our results bound the robust loss gap in terms of the distance between logits of samples perturbed with different attacks, similar results hold for the distance between internal activations. To show how our results can apply to common transfer learning settings (such as that of \citet{nern2023transfer}), we prove the following corollary:
\begin{corollary} \label{corollary: deeper_reps}
    Let $h: \mathbb{R}^d \rightarrow \mathbb{R}^c$ be a $c$ class neural network classification model with a final linear layer (i.e. $h(c) = Wg(x)$, where $g: \mathbb{R}^d \rightarrow \mathbb{R}^r$, and $W \in \mathbb{R}^{c \times r}$). 
    Assume that loss $\ell(\hat{y},y)$ is $M_1$-Lipschitz in $\|\cdot\|_\alpha$ for $\alpha \in \{1,2,\infty\}$, for $\hat{y} \in h(X)$ with $M_1 > 0$ and bounded by $M_2 > 0$, i.e. $0 \leq \ell(\hat{y},y) \leq M_2$ $\forall \hat{y} \in h(X)$. Then, for a subset $\mathbb{X} = \{x_i\}_{i=1}^n$ independently drawn from $\mathcal{D}$, the following holds with probability at least $1-\rho$:
    \begin{align*}
        \mathcal{L}_1(h) - \mathcal{L}_2(h) \leq L_\alpha(W)M_1 \frac{1}{n}\sum_{i=1}^n\biggl(\max_{x' \in C_1(x_i)}\|g(x') - g(x_i)\|_2 + \max_{x' \in C_2(x_i)}\|g(x') - g(x_i)\|_2\biggl) + D,
    \end{align*}

    where $D = M_2\sqrt{\frac{\log(\rho/2)}{-2n}}$ and
    \[
    L_\alpha(W) := \begin{cases}
    \|W\|_2 &, \text{if } \|\cdot\|_\alpha = \|\cdot\|_2, \\
    \sum_i\|W_i\|_2 &, \text{if } \|\cdot\|_\alpha = \|\cdot\|_1, \\
    \max_i\|W_i\|_2 &, \text{if } \|\cdot\|_\alpha = \|\cdot\|_\infty. 
    \end{cases}
    \]
\end{corollary}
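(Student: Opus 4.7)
\textbf{Proof plan for Corollary \ref{corollary: deeper_reps}.} The plan is to mirror the proof of Theorem \ref{thm:robustness} verbatim up to the point where the Lipschitz bound on $\ell$ is applied, and then insert one extra step that converts a norm of a logit-difference into a factor $L_\alpha(W)$ times an $\ell_2$-norm of an internal-representation difference. Concretely, I first introduce the independent random variables
\[
D_i \;=\; \max_{x_i' \in C_1(x_i)} \ell(h(x_i'), y_i) \;-\; \max_{x_i'' \in C_2(x_i)} \ell(h(x_i''), y_i),
\]
apply Hoeffding's inequality exactly as in the proof of Theorem \ref{thm:robustness} to get the additive $D = M_2 \sqrt{\log(\rho/2)/(-2n)}$ concentration term, and reduce the expectation to an empirical average over maximizing perturbations $x_i', x_i''$.

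Next I use the $M_1$-Lipschitz assumption of $\ell(\cdot, y)$ in the $\|\cdot\|_\alpha$ norm to bound each summand by $M_1 \|h(x_i') - h(x_i'')\|_\alpha$. This is the one place where the proof deviates from Theorem \ref{thm:robustness}: rather than bound $\|h(x') - h(x'')\|_2$ directly, I write $h(x') - h(x'') = W\bigl(g(x') - g(x'')\bigr)$ and bound
\[
\|W v\|_\alpha \;\le\; L_\alpha(W) \, \|v\|_2
\]
for $v = g(x') - g(x'')$. The three cases match the three definitions of $L_\alpha(W)$ given in the statement: $\alpha = 2$ follows from the definition of the operator norm; $\alpha = 1$ follows from $\|Wv\|_1 = \sum_i |\langle W_i, v\rangle| \le \sum_i \|W_i\|_2 \|v\|_2$ by Cauchy--Schwarz applied rowwise; and $\alpha = \infty$ follows from $\|Wv\|_\infty = \max_i |\langle W_i, v\rangle| \le \max_i \|W_i\|_2 \|v\|_2$, again by Cauchy--Schwarz.

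Finally I recover the desired form by the same triangle-inequality step as in Theorem \ref{thm:robustness}, namely
\[
\|g(x_i') - g(x_i'')\|_2 \;\le\; \|g(x_i') - g(x_i)\|_2 + \|g(x_i'') - g(x_i)\|_2,
\]
followed by replacing the specific maximizers $x_i', x_i''$ by the suprema $\max_{x' \in C_1(x_i)} \|g(x') - g(x_i)\|_2$ and $\max_{x' \in C_2(x_i)} \|g(x') - g(x_i)\|_2$. Collecting the scalar $L_\alpha(W) M_1$ in front of the average and adding back the Hoeffding term $D$ yields the stated bound.

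I do not expect a genuine obstacle here: the argument is essentially bookkeeping on top of Theorem \ref{thm:robustness}, and the only nontrivial check is the casewise verification that $\|W\cdot\|_\alpha \le L_\alpha(W)\|\cdot\|_2$ for $\alpha \in \{1, 2, \infty\}$, which is immediate from Cauchy--Schwarz / the definition of the operator norm. The only mild subtlety is making sure the Lipschitz assumption is stated in the norm $\|\cdot\|_\alpha$ that one intends to push through the linear layer, since a mismatch (e.g.\ a loss Lipschitz in $\|\cdot\|_2$ combined with $\|Wv\|_\infty$) would force a further norm-equivalence factor like $\sqrt{c}$ and change the constant $L_\alpha(W)$.
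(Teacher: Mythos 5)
Your proposal is correct and follows essentially the same route as the paper's proof: Hoeffding for the concentration term, the Lipschitz bound in $\|\cdot\|_\alpha$, factoring the final linear layer to get $\|W(g(x_i')-g(x_i''))\|_\alpha \le L_\alpha(W)\|g(x_i')-g(x_i'')\|_2$, and the triangle inequality to pass to the per-constraint maxima. The only cosmetic difference is that the paper cites Lemma~2 of \citet{nern2023transfer} for the inequality $\|Wv\|_\alpha \le L_\alpha(W)\|v\|_2$, whereas you verify the three cases directly via the operator norm and Cauchy--Schwarz, which is equivalent.
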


\begin{proof}
    From (\ref{eq1}), (\ref{eq2}), and the definition of $h$, we have that 
    \begin{align*}
        \left|\mathcal{L}_1 - \mathcal{L}_2\right| \leq M_1 \frac{1}{n}\sum_{i=1}^n \|Wg(x_i') - h(x_i'')\|_2 + M_2\sqrt{\frac{\log{\rho/2}}{-2n}}.
    \end{align*}
    We then apply Lemma 2 from \citet{nern2023transfer} and the definition of $L_\alpha$:
    \begin{align*}
    M_1 \frac{1}{n} &\sum_{i=1}^{n}\|Wg(x_i') - Wg(x_i'')\|_\alpha \\
    &\leq L_\alpha(W)M_1 \frac{1}{n} \sum_{i=1}^{n}\|g(x_i') - g(x_i'')\|_2.
    \end{align*}
    As in the proof for Theorem~\ref{thm:robustness}, the perturbed samples represented in this inequality might not maximize the distance between the representations, but that distance can be bounded by the maximally distant perturbations within each neighborhood. Making use of the triangle inequality, we obtain:
    \begin{align*}
    &\sum_{i=1}^n\|g(x_i') - g(x_i'')\|_2\\
    &=\sum_{i=1}^n\|(g(x_i') - g(x_i)) - (g(x_i'') - g(x_i))\|_2\\
    &\leq\sum_{i=1}^n\|g(x_i') - g(x_i)\|_2 + \|g(x_i'') - g(x_i)\|_2\\
    &\leq\sum_{i=1}^n\max_{x' \in C_1(x_i)}\|g(x') - g(x_i)\|_2 + \max_{x'' \in C_2(x_i)}\|g(x'') - g(x_i)\|_2.
    \end{align*}
    We then achieve our final result:
    \begin{align*}
        \mathcal{L}_1(h) - \mathcal{L}_2(h) &= \left| \mathcal{L}_1(h) - \mathcal{L}_2(h)\right|\\
        &\leq L_\alpha(W)M_1 \frac{1}{n}\sum_{i=1}^n\Bigl(\max_{x' \in C_1(x_i)}\|g(x') - g(x_i)\|_2 +  \max_{x'' \in C_2(x_i)}\|g(x'') - g(x_i)\|_2\Bigr) + D,
    \end{align*}
    where $D = M_2\sqrt{\frac{\log(\rho/2)}{-2n}}$.
\end{proof}

\section{Connection Between Adversarial $\ell_2$ Regularization and Variation Regularization} \label{app:alr_vr}
In this section, we will show the relationship between adversarial $\ell_2$ regularization (ALR) and variation regularization (VR) \citep{dai2022formulating}.  To begin, we first revisit the definitions of ALR and VR:
$$R_{\text{ALR}}(h, K(t)) = \frac{1}{m} \sum_{i=1}^m \max_{x' \in C(x_i)} \|\edit{h}(x') - \edit{h}(x_i)\|_2$$
$$R_{\text{VR}}(h, K(t)) = \frac{1}{m} \sum_{i=1}^m  \max_{x', x'' \in C(x_i)} \|\edit{h}(x') - \edit{h}(x'')\|_2$$
Since VR optimizes over 2 perturbations $x'$ and $x''$ for each example while ALR optimizes only for $x'$, it is clear that $R_{\text{ALR}} \le R_{\text{VR}}$.  Additionally, we note that:
$$R_{\text{VR}}(h, K(t)) = \frac{1}{m} \sum_{i=1}^m \max_{x', x'' \in C(x_i)} \|\edit{h}(x') -h(x) + h(x) - \edit{h}(x'')\|_2$$
$$\le  \frac{1}{m} \sum_{i=1}^m \max_{x', x'' \in C(x_i)} \|h(x') -h(x)\|_2 + \|h(x) - h(x'')\|_2$$
$$= \frac{2}{m}\sum_{i=1}^m \max_{x'\in C(x_i)}\|h(x') -h(x)\|_2$$
$$= 2 R_\text{ALR}$$
Thus, ALR and VR are related in the sense that $R_{\text{ALR}} \le R_{\text{VR}} \le 2R_{\text{ALR}}$.

\section{Experimental Verification of Theoretical Results} 
\label{appsec:experimental_verification_of_theory}
\begin{figure*}
    \centering
    \includegraphics[width=\textwidth]{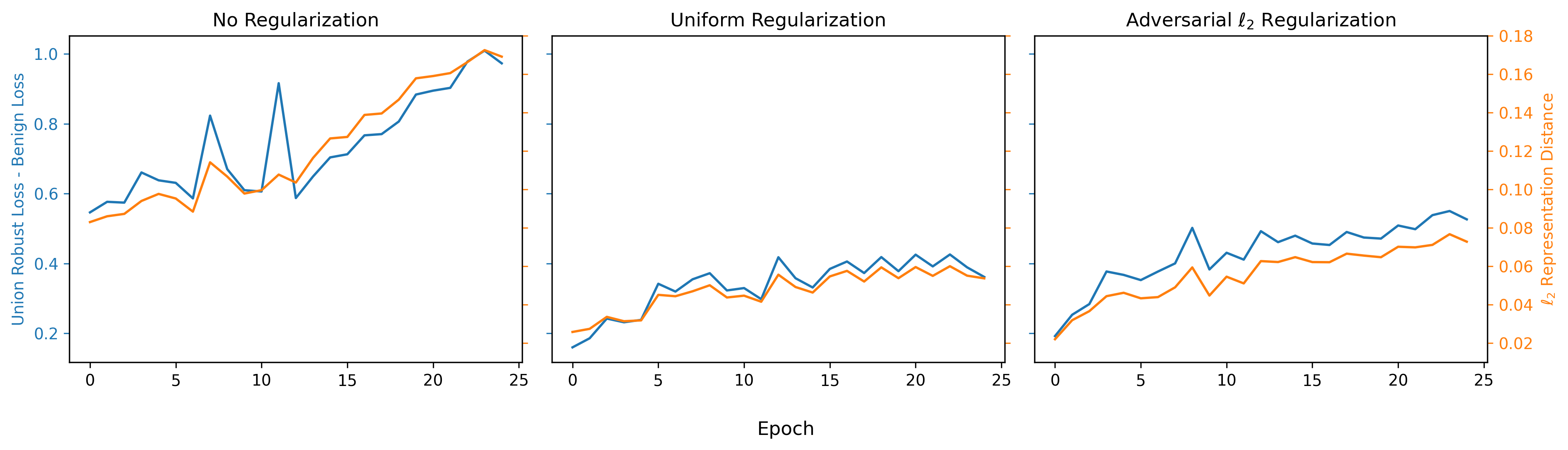}
    \vspace{-15pt}
    \caption{Adversarial loss gap ($\mathcal{L}_{1,2}(h) - \mathcal{L}(h)$) and average $\ell_2$ distance between \edit{logits} of $\ell_2$ ($\epsilon = 0.5$, representing $P_{C_1}$) and StAdv ($\epsilon = 0.05$, representing $P_{C_2}$) attacked samples over 25 epochs of fine-tuning using \cite{croce2022adversarial}'s fine-tuning method, both with and without regularization. Each model is fine-tuned starting from a model that is adversarially trained against an $\ell_2$ adversary, as described in Section~\ref{sec:exp_setup}. In all training scenarios, there is a visible correlation between the loss gap and the \edit{logit} distance, aligning with the theoretical result in Corollary~\ref{thm:corollary}. }
    \label{fig:loss_gap}
    \vspace{-10pt}
\end{figure*}

We now briefly demonstrate that our chosen regularization terms align with our theoretical results. In Figure~\ref{fig:loss_gap}, we start with WRN-28-10 models that were adversarially trained to be robust against $\ell_2$-bounded attacks, and fine-tune them to increase their robustness against StAdv attacks using either no regularization, uniform regularization, or adversarial $\ell_2$ regularization. We observe a number of trends:

\noindent
\textbf{Sensitivity correlates with loss gap.} Whether or not regularization is used, there is a clear correlation between total adversarial sensitivity across both attacks (i.e. $\max_{x'\in C_1(x)}\|\edit{h}(x')-\edit{h}(x)\| + \max_{x'\in C_2(x)}\|\edit{h}(x')-\edit{h}(x)\|$) and the loss gap between the union robust loss and the benign loss (i.e. $\mathcal{L}_{1,2}(h) - \mathcal{L}(h)$). 

\noindent
\textbf{Regularization reduces sensitivity and loss gap.} Both metrics are significantly lower throughout fine-tuning when regularization is used, indicating that regularization is successfully targeting our theoretical bounds.

\noindent
\textbf{Loss gap increases over time.} Across all three models there is an increase in both loss gap and adversarial sensitivity over the course of fine-tuning. While this may seem like a failure of regularization, the benefit is more apparent when further analyzing what is causing the loss gap to increase. In the regularized fine-tuning runs, both benign and robust losses are decreasing, with benign loss decreasing more quickly. This is likely influenced by an initial increase in benign loss at the very beginning of fine-tuning which is not captured in Figure~\ref{fig:loss_gap}. However, without regularization, benign loss decreases while union robust loss increases. This shows us that despite theoretically targeting the gap between union robust loss and benign loss, the use of regularization still aids in individually reducing both losses in absolute terms.

\section{Additional Experimental Setup Details}
\label{app:exp_setup}
\textbf{Additional training details. }For initial training, we start with a learning rate of 0.1 and then use the multistep learning rate scheduling proposed by \citet{gowal2020uncovering}; specifically, we scale the learning rate down by a factor of 10 halfway and 3/4 of the way through initial training or fine-tuning.  For fine-tuning, we maintain a learning rate of 0.001.  We train with SGD with momentum of 0.9 and weight decay of 0.0005.

\textbf{Additional Attack parameters in training. } Following other works on adversarial robustness, we use a step size of 0.075 for $\ell_2$ attacks on CIFAR-10, 0.15 for $\ell_2$ attacks on ImageNette, and $\frac{2}{255}$ for $\ell_\infty$ attacks.  For other attacks, we use $\frac{\epsilon}{8}$ where $\epsilon$ is the attack strength as the step size during training.

\textbf{Model selection. }In the main paper, we stated that we perform evaluation using the epoch at which the model has the best performance measured across known attack types.  Specifically, after each epoch of training, we evaluate the performance of each model against the attacks used during training (with the same attack parameters as used during training).  For training with AVG, we use the best performing model with respect to the AVG objective (which is the model with the best performance measured as an average over individual attack accuracies).  Meanwhile for MAX and FT MAX, we use the best performing model with respect to the MAX objective (which is the best performing model across the union of all attacks).  For procedures that only use a single attack per batch during training (Random, FT Single, FT Croce, and our procedure), we use the best performing model measured by sampling attacks per batch randomly.

\textbf{Regularization setup. }\edit{We note that all attacks used in this paper use a gradient based optimization scheme for finding the attack.  In order to compute regularization for non-$\ell_p$ threat models, we follow the same optimization scheme used by the attack \citep{XiaoZ0HLS18, LaidlawF19, kaufmann2019testing} but replace the classification loss portion of the optimization objective to be the $\ell_2$ distance between features/logits between the perturbed and unperturbed input. For fine-tuning with regularization, }since \citet{croce2022adversarial}'s fine-tuning approach only uses a single attack per batch, we structure the regularization to mimic \citet{croce2022adversarial}'s fine-tuning procedure.  Specifically, for each batch, the regularization is for a single attack type (the same one which is selected to use with adversarial training by \citet{croce2022adversarial}'s fine-tuning approach).  This helps to reduce the overhead from regularization.

\section{Additional Experimental Results for CAR}
\label{app:exp_seq}
We present additional results for CAR on CIFAR-10 in Tables \ref{tab:main_results_cifar_epochs} and  \ref{app:main_results_cifar_epochs_2}, results for CAR on Imagenette in Table \ref{app:main_results_imagenette_epochs_1} and \ref{app:main_results_imagenette_epochs_2} and results for CIFAR-100 in Tables \ref{app:main_results_cifar100_full} and \ref{app:main_results_cifar100_full_2}.  We also compare different fine-tuning approaches in the absense of regularization.


\noindent\textbf{Training time and robust performance.} We find that fine-tuning with MAX objective (FT MAX) or \citet{croce2022adversarial} (FT Croce) can generally achieve robustness across previous attacks and the new attack in the sequence comparable to training from scratch.  For example, in Table \ref{app:main_results_cifar_epochs_2}, when fine-tuning to gain robustness against StAdv attack starting from a model initially trained with adversarial training on $\ell_\infty$ attacks on CIFAR-10, we find that FT MAX achieves 50.75\% average robustness across the two attacks and 41.57\% union robustness across the two attacks, and FT Croce achieves 49.48\% average robustness and 29.69\% union robustness.  These values lie within (or even above) the range obtained through training from scratch (42.23\%-49.61\% average 
robustness and 28.03\%-40.8\% union robustness).  We find that this trend generally holds as well across time steps when new attacks are introduced, when using a different sequence ordering (Table \ref{tab:main_results_cifar_epochs}).

Of these two techniques, we find that FT MAX generally achieves higher average and union accuracies across the set of known attacks, but is less efficient when used in fine-tuning.  For example, In Table \ref{app:main_results_cifar_epochs_2}, FT MAX takes 3.99 hours for 10 epochs of fine-tuning from an $\ell_{\infty}$ robust model while FT Croce takes 2.31 hours.  The time complexity of FT MAX also scales as the number of attacks increases, leading to 7.9 hours of fine-tuning for 10 epochs when there are 4 known attacks while FT Croce maintains approximately the same training time.

In comparison to naively training from scratch, we also find that these fine-tuning techniques can be much more efficient.  For example, a model robust to a sequence of 4 attacks in Table~\ref{tab:main_results_cifar_epochs} can be found in roughly 17 hours using CRT, but training from scratch each time would require 44 hours cumulatively.

\noindent\textbf{Importance of replay. }We find that replay of previous attacks is important for achieving good robustness across the set of known attacks when training with CRT. Fine-tuning with only the new attack (FT Single) usually leads to rapid forgetting of the previous attack.  For example, in Table \ref{app:main_results_cifar_epochs_2} we observe that the accuracy of robustness on the initial attack ($\ell_\infty$) drops to 31.14\% robust accuracy at time step 1 (from the initial accuracy of 51.49\% at time step 0) and then further drops to 25.27\% at time step 2 when the third attack (Recolor) is introduced.  This forgetting is independent from tradeoffs between attacks as we find that training from scratch and FT MAX and FT Croce techniques can all achieve at least 40\% $\ell_{\infty}$ accuracy at time step 1 and at least 35\% $\ell_{\infty}$ accuracy at time step 2.  The forgetting of previous attacks is also analogous to catastrophic forgetting of previous tasks in continual learning \citep{wang2023continual, MCCLOSKEY1989109}.  We note however that forgetting is less of a limitation in CAR than in continual learning since the defender's knowledge set only grows over time; they do not forget the formulation of previous attacks and can thus can always use methods such as replay.

\textbf{ALR applied on logits vs features. }In Table \ref{tab:main_results_cifar_epochs}, we also provide results for using regularization based on distances in the feature space (before the final linear layer), which are labelled with ``+ ALR feature".  Overall we observe that using regularization in the feature space can also help improve performance on average and union robustness across known attacks as well as improve unforeseen robustness over baselines.  However, we observe that feature space regularization leads to larger tradeoffs in clean accuracy than regularization on the logits (``+ ALR" rows) while robust performance is comparable to regularization applied on the logits.

\textbf{Training durations. } Across all tables we also provide experiments for fine-tuning with 25 epochs (as opposed to 10 epochs reported in the main body).  We find that increasing the number of fine-tuning epochs can help methods such as FT Croce achieve robustness closer to that of training from scratch, but at the cost of increased time for updating the model.

\textbf{Performance on other datasets. }We find that the gain in performance through using ALR varies across datasets.  For Imagenette the gain in performance is generally much smaller than on CIFAR-10 (ALR closes the gap between fine-tuning based updates and training from scratch rather than surpassing training from scratch as in CIFAR-10.  On CIFAR-100 ALR generally does not improve performance over fine-tuning.  We believe that this is because achieving robustness on multiple attacks is quite hard on CIFAR-10; clean accuracy is between 60-70\% and robust accuracies are even lower with StAdv and $\ell_{\infty}$ robustness only achieving up to 32\% robust accuracy and 25\% robust accuracy respectively.

\begin{table*}[ht]
\centering
\scalebox{0.68}{
\begin{tabular}{|c|l|l|c|cccc|cc|cc|c|}
\hline
\multicolumn{1}{|c|}{\begin{tabular}[c]{@{}c@{}}Time\\ Step \end{tabular}} &Procedure & Threat Models & \multicolumn{1}{c|}{Clean} & \multicolumn{1}{c}{$\ell_2$} & \multicolumn{1}{c}{StAdv} & \multicolumn{1}{c}{$\ell_\infty$} & \multicolumn{1}{c|}{Recolor} & \multicolumn{1}{c}{\begin{tabular}[c]{@{}c@{}}Avg\\ (known)\end{tabular}} & \multicolumn{1}{c|}{\begin{tabular}[c]{@{}c@{}}Union\\ (known)\end{tabular}} & \multicolumn{1}{c}{\begin{tabular}[c]{@{}c@{}}Avg\\ (all)\end{tabular}} & \multicolumn{1}{c|}{\begin{tabular}[c]{@{}c@{}}Union\\ (all)\end{tabular}} & \multicolumn{1}{c|}{\begin{tabular}[c]{@{}c@{}}Time\\ (hrs)\end{tabular}} \\ \hline
\multirow{ 3}{*}{0} & AT & $\ell_2$ & \textbf{91.17} & \cellcolor[HTML]{B7E1CD}69.7 & 2.08 & 28.41 & 44.94 & 69.7 & 69.7 & 36.28 & 1.24 & 8.35 \\ 
& AT + ALR ($\lambda=1$) & $\ell_2$ & 89.43 & \cellcolor[HTML]{B7E1CD}\textbf{69.84} & \textbf{48.23} & \textbf{34.00} & \textbf{65.46} & \textbf{69.84} & \textbf{69.84} & \textbf{54.38} & \textbf{31.27} & 11.15\\ 
& AT + ALR feature ($\lambda=5$) & $\ell_2$ & 83.7 & \cellcolor[HTML]{B7E1CD}63.1 & 26.57 & 31.6 & 62.53  & 63.1 & 63.1 & 45.95 & 20.16 & 11.13\\

\hline
\multirow{ 17}{*}{1} & AVG & $\ell_2$, StAdv & 87.74 & \cellcolor[HTML]{B7E1CD} 62.17 & \cellcolor[HTML]{B7E1CD}50.92 & 17.17 & 45.47 & 56.55 & 47.55 & 43.93 & 15.92 &23.72 \\ 
& MAX & $\ell_2$, StAdv & 86.18 & \cellcolor[HTML]{B7E1CD}58.65 & \cellcolor[HTML]{B7E1CD}57.21 & 11.21 & 43.07 & 57.93 & 51.72 & 42.54 & 11.03 & 23.69 \\ 
& Random & $\ell_2$, StAdv & 84.91 & \cellcolor[HTML]{B7E1CD}57.77 & \cellcolor[HTML]{B7E1CD}59.74 & 14.05 & 44.88 & 58.76 & 52.15 & 44.11 & 13.68 & 10.92 \\ 
\cdashline{2-13}
 & FT MAX (10 ep) &  $\ell_2$, StAdv & 83.73 & \cellcolor[HTML]{B7E1CD}57.07 & \cellcolor[HTML]{B7E1CD}58.67 & 12.51 & 49.03 & 57.87 & 51.32 & 44.32 & 12.36 & 4 \\
 & FT MAX (25 ep) & $\ell_2$, StAdv & 84.85 & \cellcolor[HTML]{B7E1CD}56.44 & \cellcolor[HTML]{B7E1CD} 61.34 & 10.35 & 48.08 & 58.89 & 52.52 & 44.05 & 10.24 & 10 \\ %
 & FT Croce (10 ep) & $\ell_2$, StAdv & 84.7 & \cellcolor[HTML]{B7E1CD}57.88 & \cellcolor[HTML]{B7E1CD}54.27 & 14.38 & 51.08 & 56.07 & 48.13 & 44.4 & 13.8 & 2.4\\
  & FT Croce (25 ep) & $\ell_2$, StAdv & 86.24 & \cellcolor[HTML]{B7E1CD}58.94 & \cellcolor[HTML]{B7E1CD}57.37 & 13.26 & 50.36 & 58.16 & 50.89 & 44.98 & 13 & 5.98 \\ %
   & FT Single (10 ep) & $\ell_2$, StAdv &  80.89
& \cellcolor[HTML]{B7E1CD}45.45& \cellcolor[HTML]{B7E1CD}54.5
& 6.09 & 41.98 & 49.98 & 41.05 & 37 & 5.87 & 2.78 \\
 & FT Single (25 ep)  & $\ell_2$, StAdv & 81.21 & \cellcolor[HTML]{B7E1CD}44.17 & \cellcolor[HTML]{B7E1CD}54.6 & 5.56 & 40.95 & 49.38 & 39.76 & 36.32 & 5.36 & 6.92 \\ %
    & FT Single + ALR (10 ep) & $\ell_2$, StAdv & 87.24 & \cellcolor[HTML]{B7E1CD}62.22& \cellcolor[HTML]{B7E1CD}61.5
& 21.4 & \textbf{\underline{70.87}} & 61.86 & 55.04 & 54 & 21.14 & 4.24 \\
 & FT Single + ALR (25 ep) & $\ell_2$, StAdv & 87.54 & \cellcolor[HTML]{B7E1CD}61.21 & \cellcolor[HTML]{B7E1CD}60.38 & 20.81 & 69.49 & 60.8 & 54.22 & 52.97 & 20.48 & 8.77 \\ %
& FT Single + ALR feature ($\lambda=2$, 10 ep) &  $\ell_2$, StAdv & 81.79 & \cellcolor[HTML]{B7E1CD}56.98 & \cellcolor[HTML]{B7E1CD}60.28 & 20.59 & 63.64 & 58.63 & 51.65 & 50.37 & 20.21 & 3.52 \\
  & FT Single + ALR feature ($\lambda=5$, 10 ep) &  $\ell_2$, StAdv & 81.26 & \cellcolor[HTML]{B7E1CD}60.43 & \cellcolor[HTML]{B7E1CD}57.61 & \textbf{\underline{28.17}} & 67.95 & 59.02 & 51.99 & 53.54 & \textbf{\underline{27.24}} & 3.53 \\
   & FT Croce + ALR (10 ep) & $\ell_2$, StAdv & 86.03 & \cellcolor[HTML]{B7E1CD}59.18 &\cellcolor[HTML]{B7E1CD}\textbf{\underline{65.14}} & 15.36  & 63.31 & \textbf{\underline{62.16}} & \textbf{\underline{55.83}} & 50.75 & 15.29 & 3.47 \\
 & FT Croce + ALR (25 ep) & $\ell_2$, StAdv & \underline{\textbf{88.5}} & \cellcolor[HTML]{B7E1CD}\textbf{\underline{64.88}} & \cellcolor[HTML]{B7E1CD}58.98 & 23.9 & 70.79 & 61.93 & 55.03 & \underline{\textbf{54.64}} & 23.33 & 7.96 \\ %
 & FT Croce + ALR feature ($\lambda=2$, 10 ep) & $\ell_2$, StAdv &  83.19 & \cellcolor[HTML]{B7E1CD}61.28 &\cellcolor[HTML]{B7E1CD} 59.04 & 23.98 & 62.69 & 60.16 & 53.25 & 51.75 & 23.2 & 2.97
\\
 & FT Croce + ALR feature ($\lambda=5$, 10 ep) & $\ell_2$, StAdv & 83.51 & \cellcolor[HTML]{B7E1CD}61.69 &\cellcolor[HTML]{B7E1CD} 61.76 & 23.33 & 62.48 & 61.73 & 55.25 & 52.31 & 22.77 & 3.13\\

\hline

\multirow{ 17}{*}{2}&AVG & $\ell_2$, StAdv, $\ell_\infty$ & 85.98 & \cellcolor[HTML]{B7E1CD}67.60 & \cellcolor[HTML]{B7E1CD}45.81 & \cellcolor[HTML]{B7E1CD}42.39 & 62.43 & 51.93 & 34.05 & 54.56 & 33.39 & 33.12 \\
&MAX & $\ell_2$, StAdv, $\ell_\infty$ & 84.54 & \cellcolor[HTML]{B7E1CD}54.87 & \cellcolor[HTML]{B7E1CD}52.33 & \cellcolor[HTML]{B7E1CD}38.23 & 55.90 & 48.48 & 35.25 & 50.33 & 34.08 & 79.04 \\
&Random & $\ell_2$, StAdv, $\ell_\infty$ & 39.52 & \cellcolor[HTML]{B7E1CD}67.46 & \cellcolor[HTML]{B7E1CD}47.35 & \cellcolor[HTML]{B7E1CD}42.12 & 63.61 & 52.31 & 35.46 & 55.13 & 34.79 & 10.92 \\ \cdashline{2-13}
 & FT MAX (10 ep) & $\ell_2$, StAdv, $\ell_\infty$ & 83.16 &  \cellcolor[HTML]{B7E1CD}65.63 &  \cellcolor[HTML]{B7E1CD}56.68 &  \cellcolor[HTML]{B7E1CD}36.9 & 65.69 & 53.07 & 35.18 & 56.23 & 34.83 & 5.62 \\
 & FT MAX (25 ep) & $\ell_2$, StAdv, $\ell_{\infty}$ & 83.99 & \cellcolor[HTML]{B7E1CD}65.69 & \cellcolor[HTML]{B7E1CD} 58.16 & \cellcolor[HTML]{B7E1CD}37.21 & 65.52 & 53.69 & 35.76 & 56.65 & 35.31 & 12.88 \\
 & FT Croce (10 ep) & $\ell_2$, StAdv, $\ell_{\infty}$ & 85.05 &  \cellcolor[HTML]{B7E1CD}67.3 &  \cellcolor[HTML]{B7E1CD}48.07 &  \cellcolor[HTML]{B7E1CD}33.38 & 62.52 & 49.58 & 28.96 & 52.82 & 28.63 & 2.27 \\
 & FT Croce (25 ep) & $\ell_2$, StAdv, $\ell_{\infty}$ & 86.14 & \cellcolor[HTML]{B7E1CD}67.3 & \cellcolor[HTML]{B7E1CD}52.47 & \cellcolor[HTML]{B7E1CD}35.86 & 63.43 & 51.88 & 32.54 & 54.77 & 32.08 & 5.01 \\
  & FT Single (10 ep) & $\ell_2$, StAdv, $\ell_{\infty}$ & 87.99 & \cellcolor[HTML]{B7E1CD}\textbf{\underline{70.53}} & \cellcolor[HTML]{B7E1CD}11.17
 & \cellcolor[HTML]{B7E1CD}41.63 & 63.46 &41.11 & 7.95 & 46.7 & 7.74  & 1.57 \\
 & FT Single (25 ep) & $\ell_2$, StAdv, $\ell_{\infty}$ & 88.67 & \cellcolor[HTML]{B7E1CD} 70.23 & \cellcolor[HTML]{B7E1CD}8.79 & \cellcolor[HTML]{B7E1CD} 43.4 & 63.03 & 40.81 & 6.19 & 46.36 & 6.05 & 3.91 \\
 & FT Single + ALR (10 ep) & $\ell_2$, StAdv, $\ell_{\infty}$ & \textbf{\underline{88.74}} & \cellcolor[HTML]{B7E1CD}69.15 & \cellcolor[HTML]{B7E1CD}47.33 & \cellcolor[HTML]{B7E1CD}42.08 & 68.62 & 52.85 & 36.66 & 56.8 & 36.62 & 2.26 \\
 & FT Single + ALR (25 ep) & $\ell_2$, StAdv, $\ell_{\infty}$ & 88.14 & \cellcolor[HTML]{B7E1CD}68.26 & \cellcolor[HTML]{B7E1CD}49.1 & \cellcolor[HTML]{B7E1CD}41.48 & 66.73 & 52.95 & \textbf{\underline{37.55}} & 56.39 & 37.5 & 5.4 \\
 & FT Single + ALR feature ($\lambda=2$, 10 ep) &  $\ell_2$, StAdv, $\ell_{\infty}$ & 85.69 & \cellcolor[HTML]{B7E1CD}67.62 & \cellcolor[HTML]{B7E1CD}29.42 & \cellcolor[HTML]{B7E1CD}43.68 & 68.75 & 46.91 & 24.44 & 52.37 & 24.38 & 2.16 \\
 & FT Single + ALR feature ($\lambda=5$, 10 ep) &  $\ell_2$, StAdv, $\ell_{\infty}$ & 84.03 & \cellcolor[HTML]{B7E1CD}67.64 & \cellcolor[HTML]{B7E1CD}42.03 & \cellcolor[HTML]{B7E1CD} \textbf{\underline{44.36}} & 71.36 & 51.34 & 32.54 & 56.35 & 32.48 & 2.29 \\
 & FT Croce + ALR (10 ep) & $\ell_2$, StAdv, $\ell_{\infty}$ & 86.57 & \cellcolor[HTML]{B7E1CD}67.99 & \cellcolor[HTML]{B7E1CD} \textbf{\underline{61.55}} & \cellcolor[HTML]{B7E1CD}36.59 & 72.16 & \textbf{\underline{55.38}} & 35.68 & 59.57 & 35.52 & 2.87 \\
 & FT Croce + ALR (25 ep) & $\ell_2$, StAdv, $\ell_{\infty}$ & 86.96 & \cellcolor[HTML]{B7E1CD}68.91 & \cellcolor[HTML]{B7E1CD}57.21 & \cellcolor[HTML]{B7E1CD}39.65 & \textbf{\underline{72.22}} & 55.26 & 37.25 & \textbf{\underline{59.5}} & 37.14 & 6.87 \\
  & FT Croce + ALR feature ($\lambda=2$, 10 ep) &  $\ell_2$, StAdv, $\ell_{\infty}$ & 83.13 & \cellcolor[HTML]{B7E1CD}66.91 & \cellcolor[HTML]{B7E1CD}56.76 & \cellcolor[HTML]{B7E1CD}38.66 & 68.57 & 54.11 & 35.95 & 57.73 & 35.76 & 2.82 \\
  & FT Croce + ALR feature ($\lambda=5$, 10 ep) &  $\ell_2$, StAdv, $\ell_{\infty}$ & 84.25 & \cellcolor[HTML]{B7E1CD}68.14 & \cellcolor[HTML]{B7E1CD}57.7 & \cellcolor[HTML]{B7E1CD}39.8 & 70.29 & 55.21& 37.4 & 58.98 & \underline{\textbf{37.21}} & 2.79
  \\
  \hline

\multirow{ 17}{*}{3} & AVG & $\ell_2$, StAdv, $\ell_\infty$, Recolor & 87.77 & \cellcolor[HTML]{B7E1CD} 68.55 & \cellcolor[HTML]{B7E1CD}39.55 & \cellcolor[HTML]{B7E1CD}\textbf{41.97} & \cellcolor[HTML]{B7E1CD}67.93 & 54.5 & 30.39 & 54.5 & 30.39 & 50.54 \\
& MAX & $\ell_2$, StAdv, $\ell_\infty$, Recolor & 84.3 & \cellcolor[HTML]{B7E1CD}57.62 & \cellcolor[HTML]{B7E1CD}52.3 & \cellcolor[HTML]{B7E1CD}41.69 & \cellcolor[HTML]{B7E1CD}65.1 & 54.18 & \textbf{37.44} & 54.18 & \textbf{37.44} & 55.54 \\
& Random & $\ell_2$, StAdv, $\ell_\infty$, Recolor & 86.32 & \cellcolor[HTML]{B7E1CD}65.87 & \cellcolor[HTML]{B7E1CD}47.82 & \cellcolor[HTML]{B7E1CD}35.04 & \cellcolor[HTML]{B7E1CD}68.35 & 54.27 & 30.76 & 54.27 & 30.76 & 12.41 \\ \cdashline{2-13}
& FT MAX (10 ep) & $\ell_2$, StAdv, $\ell_{\infty}$, Recolor & 83.64 & \cellcolor[HTML]{B7E1CD}66.21 & \cellcolor[HTML]{B7E1CD}57.53 & \cellcolor[HTML]{B7E1CD}37.77 & \cellcolor[HTML]{B7E1CD}69.32 & 57.71 & 36.02 & 57.71 & 36.02 & 8.45\\
& FT MAX (25 ep) & $\ell_2$, StAdv, $\ell_{\infty}$, Recolor & 83.9 & \cellcolor[HTML]{B7E1CD}65.72 & \cellcolor[HTML]{B7E1CD}57.84 & \cellcolor[HTML]{B7E1CD}38.37 & \cellcolor[HTML]{B7E1CD}68.84 & 57.69 & 36.87 & 57.69 & 36.87 & 21.44 \\
& FT Croce (10 ep) & $\ell_2$, StAdv, $\ell_{\infty}$, Recolor & 86.64 & \cellcolor[HTML]{B7E1CD} \textbf{\underline{68.76}} & \cellcolor[HTML]{B7E1CD}44.81 & \cellcolor[HTML]{B7E1CD}36.02 & \cellcolor[HTML]{B7E1CD}68.05 & 54.41 & 29.44 & 54.41 & 29.44 & 2.34 \\
 & FT Croce (25 ep) & $\ell_2$, StAdv, $\ell_{\infty}$, Recolor & 87.11 & \cellcolor[HTML]{B7E1CD}67.89 & \cellcolor[HTML]{B7E1CD}49.57 & \cellcolor[HTML]{B7E1CD}35.58 & \cellcolor[HTML]{B7E1CD}67.05 & 55.02 & 31.21 & 55.02 & 31.21 & 5.9 \\
   & FT Single (10 ep) & $\ell_2$, StAdv, $\ell_{\infty}$, Recolor &  90.41& \cellcolor[HTML]{B7E1CD}66.47 & \cellcolor[HTML]{B7E1CD}3.93 & \cellcolor[HTML]{B7E1CD}29.6& \cellcolor[HTML]{B7E1CD}69.03 & 42.26 & 2.49 & 42.26 & 2.49 & 3.11 \\
 & FT Single (25 ep) & $\ell_2$, StAdv, $\ell_{\infty}$, Recolor & \textbf{\underline{90.89}} & \cellcolor[HTML]{B7E1CD}65.14 & \cellcolor[HTML]{B7E1CD}3.02 & \cellcolor[HTML]{B7E1CD}30.32 & \cellcolor[HTML]{B7E1CD}68.54 & 41.75 & 1.92 & 41.75 & 1.92 & 7.41 \\
 & FT Single + ALR (10 ep) & $\ell_2$, StAdv, $\ell_{\infty}$, Recolor & 90.45 & \cellcolor[HTML]{B7E1CD}61.58 & \cellcolor[HTML]{B7E1CD}25.77 & \cellcolor[HTML]{B7E1CD}27.43 & \cellcolor[HTML]{B7E1CD}69.26 & 46.01 & 19.2 & 46.01 & 19.2 & 4.24 \\
 & FT Single + ALR (25 ep) & $\ell_2$, StAdv, $\ell_{\infty}$, Recolor & 90.4 & \cellcolor[HTML]{B7E1CD}57.07 & \cellcolor[HTML]{B7E1CD}24.91 & \cellcolor[HTML]{B7E1CD}22.91 & \cellcolor[HTML]{B7E1CD}67.39 & 43.07 & 17.21 & 43.07 & 17.21 & 9.79 \\
 & FT Single + ALR feature ($\lambda=2$, 10 ep) & $\ell_2$, StAdv, $\ell_{\infty}$, Recolor & 90.15 & \cellcolor[HTML]{B7E1CD}57.89 & \cellcolor[HTML]{B7E1CD}8.75 & \cellcolor[HTML]{B7E1CD}22.86 & \cellcolor[HTML]{B7E1CD}72.27 & 40.44 & 6.61 & 40.44 & 6.61 & 3.94\\
 & FT Single + ALR feature ($\lambda=5$, 10 ep) & $\ell_2$, StAdv, $\ell_{\infty}$, Recolor & 88.44 & \cellcolor[HTML]{B7E1CD}66.03 & \cellcolor[HTML]{B7E1CD}18.88 & \cellcolor[HTML]{B7E1CD}34.17 & \cellcolor[HTML]{B7E1CD}69.35 & 47.11 & 16.1 & 47.11 & 16.1 & 3.76 \\
  & FT Croce + ALR (10 ep) & $\ell_2$, StAdv, $\ell_{\infty}$, Recolor &87.62 & \cellcolor[HTML]{B7E1CD}68.14 & \cellcolor[HTML]{B7E1CD}58.5 & \cellcolor[HTML]{B7E1CD}36.39 & \cellcolor[HTML]{B7E1CD}72.35 & 58.85 & 34.92 & 58.85 & 34.92 & 3.35 \\
 & FT Croce + ALR (25 ep) & $\ell_2$, StAdv, $\ell_{\infty}$, Recolor & 87.05 & \cellcolor[HTML]{B7E1CD} 68.05 & \cellcolor[HTML]{B7E1CD}\underline{\textbf{59.26}} & \cellcolor[HTML]{B7E1CD} 38.38 & \cellcolor[HTML]{B7E1CD}\underline{\textbf{73.42}} & \textbf{\underline{59.78}} & 36.83 & \underline{\textbf{59.78}} & 36.83 & 7.78\\
 & FT Croce + ALR feature ($\lambda=2$, 10 ep) & $\ell_2$, StAdv, $\ell_{\infty}$, Recolor & 84.78 & \cellcolor[HTML]{B7E1CD}67.67 & \cellcolor[HTML]{B7E1CD}53.13 & \cellcolor[HTML]{B7E1CD} \underline{40.25} & \cellcolor[HTML]{B7E1CD}69.99 & 57.76 & 36.3 & 57.76 & 36.3 & 3.04\\
  & FT Croce + ALR feature ($\lambda=5$, 10 ep) & $\ell_2$, StAdv, $\ell_{\infty}$, Recolor &83.94 & \cellcolor[HTML]{B7E1CD}67.28 & \cellcolor[HTML]{B7E1CD}59.21 & \cellcolor[HTML]{B7E1CD}39.38 & \cellcolor[HTML]{B7E1CD}71.67 & 59.38 & \underline{37.15} & 59.38 & \underline{37.15} & 2.91\\
 \hline
\end{tabular}}

\caption{\textbf{Continual Robust Training on CIFAR-10 (Sequence of 4 attacks starting with $\ell_2$).} The learner initially has knowledge of $\ell_2$ attacks and over time, we are sequentially introduced to StAdv, $\ell_\infty$, and ReColor attacks. We report clean accuracy, accuracy on different attack types, and average and union accuracies.  The threat models column represents the set of attacks known to the defender and accuracies on known attacks are highlighted with in green cells.  ``FT" procedures are fine-tuning approaches starting from adversarially trained to $\ell_2$ model (AT) and then sequentially fine-tuning with new attacks for 25 epochs. AVG, MAX, and Random strategies train models from scratch with all attacks for 100 epochs. The ``Avg (known)" and ``Union (known)" columns represent average and union accuracies on attacks known to the defender while ``Avg (all)" and ``Union (all)" columns represent average and union accuracies on all four attacks. Additionally, we report training times for the procedure (non-cumulative) in the ``Time" column.  Best performance out of both training from scratch and fine-tuning are bolded, while best performance when only comparing fine-tuning approaches is underlined.}
\label{tab:main_results_cifar_epochs}
\end{table*}

\begin{table*}[ht]
\centering
\scalebox{0.75}{
\begin{tabular}{|c|l|l|c|cccc|cc|cc|c|}
\hline
 \multicolumn{1}{|c|}{\begin{tabular}[c]{@{}c@{}}Time\\ Step \end{tabular}} &Procedure & Threat Models & Clean & $\ell_\infty$ & StAdv & Recolor & $\ell_2$ & \begin{tabular}[c]{@{}c@{}}Avg\\ (known)\end{tabular} & \begin{tabular}[c]{@{}c@{}}Union\\ (known)\end{tabular} & \begin{tabular}[c]{@{}c@{}}Avg\\ (all)\end{tabular} & \begin{tabular}[c]{@{}c@{}}Union\\ (all)\end{tabular} & Time \\ \hline
\multirow{ 2}{*}{0} & AT & $\ell_\infty$ & \textbf{85.93} & \cellcolor[HTML]{B7E1CD}51.44 & 14.87 & 62.48 & \textbf{59.48} & 51.44 & 51.44 & 47.07 & 11.9 & 7.52 \\
& AT + ALR & $\ell_\infty$ & 83.18 & \cellcolor[HTML]{B7E1CD}\textbf{51.49} & \textbf{34.78} & \textbf{58.15} & 58.15 & \textbf{51.49} & \textbf{51.49} & \textbf{53.27} & \textbf{29.87} & 11.12 \\ \hline

\multirow{ 13}{*}{1}& AVG & $\ell_\infty$, StAdv & \textbf{86.44} & \cellcolor[HTML]{B7E1CD}30.05 & \cellcolor[HTML]{B7E1CD}54.4 & 46.71 & 52.1 & 42.23 & 28.03 & 45.81 & 26.75 & 23.68 \\
& MAX & $\ell_\infty$, StAdv & 82.62 & \cellcolor[HTML]{B7E1CD}44.96 & \cellcolor[HTML]{B7E1CD}53.68 & 64.24 & \textbf{60.85} & 49.32 & 40.8 & 55.93 & 39.81 & 23.68 \\
& Random & $\ell_\infty$, StAdv & 83.15 & \cellcolor[HTML]{B7E1CD}40.86 & \cellcolor[HTML]{B7E1CD}58.37 & 60.53 & 58.17 & 49.61 & 38.95 & 54.48 & 37.64 & 11.70 \\ \cdashline{2-13}
 & FT MAX (10 ep) & $\ell_{\infty}$, StAdv & 81.63 &\cellcolor[HTML]{B7E1CD}44.13 & \cellcolor[HTML]{B7E1CD}57.38 & 66.66 & 60.27 & 50.75 & 41.57 & 57.11 & 40.96 & 3.99 \\
 & FT MAX (25 ep) & $\ell_{\infty}$, StAdv & 81.99 & \cellcolor[HTML]{B7E1CD}44.32 & \cellcolor[HTML]{B7E1CD}57.8 & 66.25 & 60.29 & 51.06 & 41.98 & 57.16 & 41.25 & 9.93 \\
 & FT Croce (10 ep) & $\ell_{\infty}$, StAdv & 82.66 & \cellcolor[HTML]{B7E1CD}44.75 & \cellcolor[HTML]{B7E1CD}54.2 & 65.99 & 60.27 & 49.48 & 39.69 & 56.3 & 39.01 & 2.31 \\

 & FT Croce (25 ep) & $\ell_{\infty}$, StAdv & \underline{83.55} & \cellcolor[HTML]{B7E1CD}45.12 & \cellcolor[HTML]{B7E1CD}53.25 & 66.44 & \underline{60.65} & 49.19 & 39.43 & 56.36 & 38.74 & 5.44 \\
 & FT Single (10 ep) & $\ell_{\infty}$, StAdv & 80.39 & \cellcolor[HTML]{B7E1CD}31.14 & \cellcolor[HTML]{B7E1CD}55.88 & 59.13 & 51.58 & 43.51 & 29.01 & 49.43 & 28.67 & 2.77\\
 & FT Single (25 ep) & $\ell_{\infty}$, StAdv & 79.85 & \cellcolor[HTML]{B7E1CD}31.34 & \cellcolor[HTML]{B7E1CD}54.86 & 58.69 & 51.43 & 43.1 & 29.01 & 49.08 & 28.66 & 6.6 \\
 & FT Single + ALR (10 ep) & $\ell_{\infty}$, StAdv & 82.77 & \cellcolor[HTML]{B7E1CD}35.67 & \cellcolor[HTML]{B7E1CD}57.92 & 68.38 & 54.91 & 46.8 & 33.69 & 54.22 & 33.65 & 3.51 \\
 & FT Single + ALR (25 ep) & $\ell_{\infty}$, StAdv & 81.81 & \cellcolor[HTML]{B7E1CD}35.4 & \cellcolor[HTML]{B7E1CD}59.47 & 68.63 & 54.34 & 47.44 & 33.72 & 54.46 & 33.66 & 8.73 \\
  & FT Croce + ALR (10 ep) & $\ell_{\infty}$, StAdv & 82.94
& \cellcolor[HTML]{B7E1CD}\textbf{\underline{46.39}} & \cellcolor[HTML]{B7E1CD}\textbf{\underline{64.13}} & \textbf{\underline{73.58}} & 59.41 & \textbf{\underline{55.26}} & \textbf{\underline{44.47}} & \underline{\textbf{60.88}} & \underline{\textbf{44.03}}&2.99

 \\
 & FT Croce + ALR (25 ep) & $\ell_{\infty}$, StAdv & 82.3 & \cellcolor[HTML]{B7E1CD}45.89 & \cellcolor[HTML]{B7E1CD}63.76 & 72.8 & 59.56 & 54.82 & 44 & 60.5 & 43.54 & 7.5 \\ \hline

\multirow{ 13}{*}{2} & AVG & $\ell_\infty$, StAdv, Recolor & 88.67 & \cellcolor[HTML]{B7E1CD}39.46 & \cellcolor[HTML]{B7E1CD}47.1 & \cellcolor[HTML]{B7E1CD}66.87 & 57.16 & 51.14 & 32.61 & 52.65 & 32.55 & 39.72 \\
& MAX & $\ell_\infty$, StAdv, Recolor & 83.42 & \cellcolor[HTML]{B7E1CD}44.54 & \cellcolor[HTML]{B7E1CD}53.06 & \cellcolor[HTML]{B7E1CD}67.56 & 60.71 & 55.05 & 40.23 & 56.47 & 40.17 & 47.21 \\
& Random & $\ell_\infty$, StAdv, Recolor & 83.23 & \cellcolor[HTML]{B7E1CD}35.01 & \cellcolor[HTML]{B7E1CD}54.7 & \cellcolor[HTML]{B7E1CD}68.68 & \textbf{62.92} & 52.8 & 32.83 & 55.33 & 32.83 & 13.81 \\ \cdashline{2-13}
& FT MAX (10 ep) & $\ell_{\infty}$, StAdv, Recolor & 81.97 & \cellcolor[HTML]{B7E1CD}44.1 & \cellcolor[HTML]{B7E1CD}57.36 & \cellcolor[HTML]{B7E1CD}68.68 & 60.37 & 56.71 & 41.21 & 57.63 & 41.2 & 6.72\\
& FT MAX (25 ep) & $\ell_{\infty}$, StAdv, Recolor & 82.24 & \cellcolor[HTML]{B7E1CD}44.36 & \cellcolor[HTML]{B7E1CD}58.52 & \cellcolor[HTML]{B7E1CD}68.87 & 60.23 & 57.25 & \textbf{\underline{41.73}} & 57.99 & \textbf{\underline{41.67}} & 16.69 \\
& FT Croce (10 ep) & $\ell_{\infty}$, StAdv, Recolor & 84.98
&\cellcolor[HTML]{B7E1CD}43.32 & \cellcolor[HTML]{B7E1CD}52.45 & \cellcolor[HTML]{B7E1CD}69.46 & 61.04 & 55.08 & 37.05 & 56.57 & 37.01 & 2.53 \\
 & FT Croce (25 ep) & $\ell_{\infty}$, StAdv, Recolor & 84.89 & \cellcolor[HTML]{B7E1CD}44.66 & \cellcolor[HTML]{B7E1CD}51.6 & \cellcolor[HTML]{B7E1CD}68.86 & 61.59 & 55.04 & 38.02 & 56.68 & 37.96 & 6.3 \\
 & FT Single (10 ep) & $\ell_{\infty}$, StAdv, Recolor & 90.55 & \cellcolor[HTML]{B7E1CD}25.27 & \cellcolor[HTML]{B7E1CD}12.77 & \cellcolor[HTML]{B7E1CD}\underline{\textbf{74.01}} & 48.99 & 37.35 & 10.85 & 40.26 & 10.85 & 4.35\\
 & FT Single (25 ep) & $\ell_{\infty}$, StAdv, Recolor & \textbf{90.24} & \cellcolor[HTML]{B7E1CD}33.94 & \cellcolor[HTML]{B7E1CD}13.43 & \cellcolor[HTML]{B7E1CD}73.23 & 53.51 & 40.2 & 10.67 & 43.53 & 10.64 & 7.83 \\
 & FT Single + ALR (10 ep) & $\ell_{\infty}$, StAdv, Recolor & 88.38 & \cellcolor[HTML]{B7E1CD}38.62 & \cellcolor[HTML]{B7E1CD}24.87 & \cellcolor[HTML]{B7E1CD}72.69 & 56.66 & 45.39 & 19.2 & 48.21 & 19.19 & 3.41 \\
 & FT Single + ALR (25 ep) & $\ell_{\infty}$, StAdv, Recolor & 89.38 & \cellcolor[HTML]{B7E1CD}33.64 & \cellcolor[HTML]{B7E1CD}20.91 & \cellcolor[HTML]{B7E1CD}73.52 & 53.36 & 42.69 & 17.39 & 45.36 & 17.38 & 9.87 \\
  & FT Croce + ALR (10 ep) & $\ell_{\infty}$, StAdv, Recolor& 84.3 & \cellcolor[HTML]{B7E1CD}44.39 & \cellcolor[HTML]{B7E1CD}58.86 & \cellcolor[HTML]{B7E1CD}71.67 & 60.42 & 58.31 & 40.82 & 58.84 & 40.69 & 3.52 \\
 & FT Croce + ALR (25 ep) & $\ell_{\infty}$, StAdv, Recolor & 84.69 & \cellcolor[HTML]{B7E1CD}\textbf{\underline{44.96}} & \cellcolor[HTML]{B7E1CD}\textbf{\underline{59.53}} & \cellcolor[HTML]{B7E1CD}73.54 & \underline{61.73} & \underline{\textbf{59.34}} & 41.39 & \textbf{\underline{59.94}} & 41.22 & 8.21 \\ \hline

\multirow{ 13}{*}{3}& AVG & $\ell_\infty$, StAdv, Recolor, $\ell_2$ & 87.77 & \cellcolor[HTML]{B7E1CD}41.97 & \cellcolor[HTML]{B7E1CD}39.55 & \cellcolor[HTML]{B7E1CD}67.93 & \cellcolor[HTML]{B7E1CD}68.55 & 54.5 & 30.39 & 54.5 & 30.39 & 50.54 \\
& MAX & $\ell_\infty$, StAdv, Recolor, $\ell_2$ & 84.3 & \cellcolor[HTML]{B7E1CD}41.69 & \cellcolor[HTML]{B7E1CD}52.3 & \cellcolor[HTML]{B7E1CD}65.1 & \cellcolor[HTML]{B7E1CD}57.62 & 54.18 & 37.44 & 54.18 & 37.44 & 55.54 \\
& Random & $\ell_\infty$, StAdv, Recolor, $\ell_2$ & 86.32 & \cellcolor[HTML]{B7E1CD}35.04 & \cellcolor[HTML]{B7E1CD}47.82 & \cellcolor[HTML]{B7E1CD}68.35 & \cellcolor[HTML]{B7E1CD}65.87 & 54.27 & 30.76 & 54.27 & 30.76 & 12.41 \\ \cdashline{2-13}
 & FT MAX (10 ep) & $\ell_{\infty}$, StAdv, Recolor, $\ell_{2}$ & 82.27
 & \cellcolor[HTML]{B7E1CD}44.21 & \cellcolor[HTML]{B7E1CD}58.13 & \cellcolor[HTML]{B7E1CD}69.08 & \cellcolor[HTML]{B7E1CD}60.7 & 58.03 & \textbf{\underline{41.48}} & 58.03 & \textbf{\underline{41.48}} & 7.9\\
 & FT MAX (25 ep) & $\ell_{\infty}$, StAdv, Recolor, $\ell_{2}$ & 82.6 & \cellcolor[HTML]{B7E1CD}43.84 & \cellcolor[HTML]{B7E1CD}57.75 & \cellcolor[HTML]{B7E1CD}68.84 & \cellcolor[HTML]{B7E1CD}60.23 & 57.66 & 41.19 & 57.66 & 41.19 & 19.74 \\
   & FT Croce (10 ep) & $\ell_{\infty}$, StAdv, Recolor, $\ell_{2}$ & 85.11 & \cellcolor[HTML]{B7E1CD}44.71 & \cellcolor[HTML]{B7E1CD}50.32 & \cellcolor[HTML]{B7E1CD}68.39 & \cellcolor[HTML]{B7E1CD}63.29 & 56.68 & 37.23 & 56.68 & 37.23 & 2.37 \\

 & FT Croce (25 ep) & $\ell_{\infty}$, StAdv, Recolor, $\ell_{2}$ & 85.33 & \cellcolor[HTML]{B7E1CD}43.8 & \cellcolor[HTML]{B7E1CD}50.28 & \cellcolor[HTML]{B7E1CD}68.77 & \cellcolor[HTML]{B7E1CD}63.17 & 56.51 & 36.77 & 56.51 & 36.77 & 5.95 \\ 
 & FT Single (10 ep) & $\ell_{\infty}$, StAdv, Recolor, $\ell_{2}$ &88.49 & \cellcolor[HTML]{B7E1CD}\underline{\textbf{44.93}} & \cellcolor[HTML]{B7E1CD}18.06 & \cellcolor[HTML]{B7E1CD}65.96 & \cellcolor[HTML]{B7E1CD}67.56 & 49.13 & 15.78 & 49.13 & 15.78 & 1.63 \\
 & FT Single (25 ep) & $\ell_{\infty}$, StAdv, Recolor, $\ell_{2}$ & \textbf{\underline{89.3}} & \cellcolor[HTML]{B7E1CD}42.72 & \cellcolor[HTML]{B7E1CD}11.85 & \cellcolor[HTML]{B7E1CD}60.27 & \cellcolor[HTML]{B7E1CD}\underline{\textbf{69.12}} & 45.99 & 10.71 & 45.99 & 10.71 & 4.07 \\
 & FT Single + ALR (10 ep) & $\ell_{\infty}$, StAdv, Recolor, $\ell_{2}$ & 88.14 & \cellcolor[HTML]{B7E1CD}41.52 & \cellcolor[HTML]{B7E1CD}26.06 & \cellcolor[HTML]{B7E1CD}61.97 & \cellcolor[HTML]{B7E1CD}68.77 & 49.58 & 24.19 & 49.58 & 24.19 & 2.52\\
 & FT Single + ALR (25 ep) & $\ell_{\infty}$, StAdv, Recolor, $\ell_{2}$ & 87.8 & \cellcolor[HTML]{B7E1CD}40.78 & \cellcolor[HTML]{B7E1CD}28.34 & \cellcolor[HTML]{B7E1CD}59.47 & \cellcolor[HTML]{B7E1CD}68.32 & 49.23 & 25.92 & 49.23 & 25.92 & 5.89 \\
  & FT Croce + ALR (10 ep) & $\ell_{\infty}$, StAdv, Recolor, $\ell_{2}$ &  84.56
 & \cellcolor[HTML]{B7E1CD}42.19 & \cellcolor[HTML]{B7E1CD}55.55 & \cellcolor[HTML]{B7E1CD}69.95 & \cellcolor[HTML]{B7E1CD}60.69 & 57.1 & 38.24 & 57.1 & 38.24 & 3.4 \\
 & FT Croce + ALR (25 ep) & $\ell_{\infty}$, StAdv, Recolor, $\ell_{2}$ & 84.1 & \cellcolor[HTML]{B7E1CD}43.32 & \cellcolor[HTML]{B7E1CD}\textbf{\underline{58.2}} & \cellcolor[HTML]{B7E1CD}\textbf{\underline{72.09}} & \cellcolor[HTML]{B7E1CD}61.96 & \textbf{\underline{58.89}} & 39.97 & \textbf{\underline{58.89}} & 39.97 & 8.28 \\ \hline
\end{tabular}}
\caption{\textbf{Continual Robust Training on CIFAR-10 (Sequence of 4 attacks starting with $\ell_\infty$).} The learner initially has knowledge of $\ell_\infty$ attacks and over time, we are sequentially introduced to StAdv, ReColor, and $\ell_2$ attacks. We report clean accuracy, accuracy on different attack types, and average and union accuracies.  The threat models column represents the set of attacks known to the defender and accuracies on known attacks are highlighted with in green cells.  ``FT" procedures are fine-tuning approaches starting from adversarially trained to $\ell_\infty$ model (AT) and then sequentially fine-tuning with new attacks for 25 epochs. AVG, MAX, and Random strategies train models from scratch with all attacks for 100 epochs. The ``Avg (known)" and ``Union (known)" columns represent average and union accuracies on attacks known to the defender while ``Avg (all)" and ``Union (all)" columns represent average and union accuracies on all four attacks. Additionally, we report training times for the procedure (non-cumulative) in the ``Time" column.  Best performance out of both training from scratch and fine-tuning are bolded, while best performance when only comparing fine-tuning approaches is underlined.}
\label{app:main_results_cifar_epochs_2}
\end{table*}

\begin{table*}[]
\scalebox{0.75}{
\begin{tabular}{|c|l|l|c|cccc|cc|cc|c|}
\hline
\begin{tabular}[c]{@{}c@{}}Time\\ Step\end{tabular} & Procedure & Threat Models & Clean & $\ell_2$ & StAdv & $\ell_\infty$ & Recolor & \begin{tabular}[c]{@{}c@{}}Avg\\ (known)\end{tabular} & \begin{tabular}[c]{@{}c@{}}Union\\ (known)\end{tabular} & \begin{tabular}[c]{@{}c@{}}Avg\\ (all)\end{tabular} & \begin{tabular}[c]{@{}c@{}}Union\\ (all)\end{tabular} & \begin{tabular}[c]{@{}c@{}}Time\\ (hrs)\end{tabular} \\ \hline
\multirow{2}{*}{0} & AT & $\ell_2$ & \textbf{90.22} & \cellcolor[HTML]{B7E1CD}83.95 & 10.65 & 7.67 & 49.22 & 83.95 & 83.95 & 37.87 & 3.16 & 1.71 \\
 & AT + ALR & $\ell_2$ & 89.76 & \cellcolor[HTML]{B7E1CD}\textbf{84.41} & \textbf{28.23} & \textbf{25.22} & \textbf{54.70} & \textbf{84.41} & \textbf{84.41} & \textbf{48.14} & \textbf{18.01} & 2.15 \\ \hline
 \multirow{13}{*}{1}& AVG & $\ell_2$, StAdv & 84.56 & \cellcolor[HTML]{B7E1CD}77.68 & \cellcolor[HTML]{B7E1CD}74.32 & 7.57 & 31.33 & 76 & 73.68 & 47.73 & 7.44 & 3.58 \\
 & MAX & $\ell_2$, StAdv & 85.22 & \cellcolor[HTML]{B7E1CD}76.87 & \cellcolor[HTML]{B7E1CD}\textbf{77.63} & 4.94 & 27.61 & \textbf{77.25} & \textbf{75.57} & 46.76 & 4.76 & 3.52 \\
 & Random & $\ell_2$, StAdv & 85.71 & \cellcolor[HTML]{B7E1CD}77.55 & \cellcolor[HTML]{B7E1CD}74.32 & 5.78 & 29.61 & 75.94 & 73.55 & 46.82 & 5.53 & 2.58 \\\cdashline{2-13}
  & FT MAX (10 ep) & $\ell_{2}$, StAdv & 83.92 & \cellcolor[HTML]{B7E1CD}77.5 & \cellcolor[HTML]{B7E1CD}69.02 & 10.78 & 35.77 & 73.26 & 68.89 & 48.27 & 10.45 & 0.61 \\
  & FT MAX (25 ep) & $\ell_{2}$, StAdv & 84.56 &\cellcolor[HTML]{B7E1CD}77.73 & \cellcolor[HTML]{B7E1CD}69.35 & 9.76 & 36.15 & 73.54 & 69.1 & 48.25 & 9.43 & 1.44 \\
  & FT Croce (10 ep) & $\ell_{2}$, StAdv & 85.07 & \cellcolor[HTML]{B7E1CD}78.62 & \cellcolor[HTML]{B7E1CD}67.52 & 10.57 & 38.34 & 73.07 & 67.31 & 48.76 & 10.29 & 0.4  \\
 & FT Croce (25 ep) & $\ell_{2}$, StAdv & \textbf{\underline{86.37}} & \cellcolor[HTML]{B7E1CD}\underline{\textbf{79.67}} & \cellcolor[HTML]{B7E1CD}69.32 & 9.81 & 38.27 & 74.5 & 69.17 & 49.27 & 9.63 & 0.98 \\
 & FT Single (10 ep) & $\ell_2$, StAdv & 84.08 & \cellcolor[HTML]{B7E1CD}77.86 & \cellcolor[HTML]{B7E1CD}68.31 & 10.83 & 36.97 & 73.08 & 68.13 & 48.49 & 10.45 & 0.51 \\
 & FT Single (25 ep) & $\ell_{2}$, StAdv & 85.63 & \cellcolor[HTML]{B7E1CD}78.39 & \cellcolor[HTML]{B7E1CD}\underline{72.31} & 7.57 & 35.31 & \underline{75.35} & \underline{72.08} & 48.39 & 7.36 & 1.15 \\
 & FT Single + ALR (10 ep) & $\ell_{2}$, StAdv & 83.8 & \cellcolor[HTML]{B7E1CD}77.94 & \cellcolor[HTML]{B7E1CD}71.62 & \underline{\textbf{20.71}} & 43.13 & 74.78 & 71.34 & \underline{\textbf{53.35}} & \underline{\textbf{20.13}} & 0.58 \\
 & FT Single + ALR (25 ep) & $\ell_{2}$, StAdv & 83.9 & \cellcolor[HTML]{B7E1CD}77.78 & \cellcolor[HTML]{B7E1CD}71.97 & 17.35 & 38.39 & 74.88 & 71.59 & 51.38 & 16.76 & 1.44  \\
 & FT Croce + ALR (10 ep) & $\ell_{2}$, StAdv & 85.04 & \cellcolor[HTML]{B7E1CD}79.54 &\cellcolor[HTML]{B7E1CD}69.99 & 18.68 & 42.93 & 74.76 & 69.89 & 52.78 & 18.09 & 0.51 \\
 & FT Croce + ALR (25 ep) & $\ell_{2}$, StAdv & 85.07 & \cellcolor[HTML]{B7E1CD}79.39 & \cellcolor[HTML]{B7E1CD}68 & 19.57 & \underline{\textbf{43.67}} & 73.69 & 67.97 & 52.66 & 19.16 & 1.24 \\ \hline
\multirow{13}{*}{2} & AVG & $\ell_2$, StAdv, $\ell_\infty$ & \textbf{86.62} & \cellcolor[HTML]{B7E1CD}\textbf{84.92} & \cellcolor[HTML]{B7E1CD}68.89 & \cellcolor[HTML]{B7E1CD}50.57 & 66.98 & \textbf{68.13} & 49.17 & \textbf{67.84} & 47.82 & 10.51 \\
 & MAX & $\ell_2$, StAdv, $\ell_\infty$ & 80.36 & \cellcolor[HTML]{B7E1CD}78.09 & \cellcolor[HTML]{B7E1CD}68.38 & \cellcolor[HTML]{B7E1CD}\textbf{52.61} & 67.29 & 66.36 & \textbf{51.77} & 66.59 & \textbf{50.37} & 11.96 \\
 & Random & $\ell_2$, StAdv, $\ell_\infty$ & 84.92 & \cellcolor[HTML]{B7E1CD}83.06 & \cellcolor[HTML]{B7E1CD}68.76 & \cellcolor[HTML]{B7E1CD}49.50 & 66.11 & 67.11 & 48.15 & 66.86 & 46.60 & 4.29 \\\cdashline{2-13}
 & FT MAX (10 ep) & $\ell_{2}$, StAdv, $\ell_{\infty}$ & 81.76 & \cellcolor[HTML]{B7E1CD}76.69 & \cellcolor[HTML]{B7E1CD}71.03 & \cellcolor[HTML]{B7E1CD}28.31 & 54.32 & 58.68 & 28.31 & 57.59 & 27.69 & 0.67\\
 & FT MAX (25 ep) & $\ell_{2}$, StAdv, $\ell_{\infty}$ & 82.04 & \cellcolor[HTML]{B7E1CD}77.86 & \cellcolor[HTML]{B7E1CD}69.02 & \cellcolor[HTML]{B7E1CD}42.83 & 66.9 & 63.24 & 42.37 & 64.15 & 41.86 & 1.71 \\
  & FT Croce (10 ep) & $\ell_{2}$, StAdv, $\ell_{\infty}$ & 83.59 & \cellcolor[HTML]{B7E1CD}78.8 & \cellcolor[HTML]{B7E1CD}69.53 & \cellcolor[HTML]{B7E1CD}34.17 & 61.5 & 60.83 & 34.06 & 61 & 33.61 & 0.3\\
 & FT Croce (25 ep) & $\ell_{2}$, StAdv, $\ell_{\infty}$ & \underline{85.22} & \cellcolor[HTML]{B7E1CD}\underline{81.02} & \cellcolor[HTML]{B7E1CD}69.58 & \cellcolor[HTML]{B7E1CD}39.92 & 64.79 & 63.51 & 39.59 & 63.83 & 39.03 & 0.73 \\
 & FT Single (10 ep) & $\ell_{2}$, StAdv, $\ell_{\infty}$ & 82.06 & \cellcolor[HTML]{B7E1CD}77.25 & \cellcolor[HTML]{B7E1CD}73.1 & \cellcolor[HTML]{B7E1CD}27.21 & 57.4 & 59.18 & 27.21 & 58.74 & 26.9 & 0.22\\
 & FT Single (25 ep) & $\ell_{2}$, StAdv, $\ell_{\infty}$ & 82.04 & \cellcolor[HTML]{B7E1CD}77.96 & \cellcolor[HTML]{B7E1CD}70.42 & \cellcolor[HTML]{B7E1CD}41.15 & 66.09 & 63.18 & 40.92 & 63.9 & 40.46 & 0.54 \\
  & FT Single + ALR (10 ep)& $\ell_{2}$, StAdv, $\ell_{\infty}$ & 81.38 & \cellcolor[HTML]{B7E1CD}77.89 & \cellcolor[HTML]{B7E1CD}71.8 & \cellcolor[HTML]{B7E1CD}46.68 & \underline{\textbf{72.13}} & \underline{65.45} & 46.5 & \underline{67.12} & 46.14 & 0.31 \\
 & FT Single + ALR (25 ep)& $\ell_{2}$, StAdv, $\ell_{\infty}$ & 80.92 & \cellcolor[HTML]{B7E1CD}77.43 & \cellcolor[HTML]{B7E1CD}70.78 & \cellcolor[HTML]{B7E1CD}\underline{47.16} & 70.6 & 65.12 & \underline{46.96} & 66.49 & \underline{46.62} & 0.79 \\
 & FT Croce + ALR (10 ep) & $\ell_{2}$, StAdv, $\ell_{\infty}$ & 83.95  & \cellcolor[HTML]{B7E1CD}79.57 &  \cellcolor[HTML]{B7E1CD}69.22 & \cellcolor[HTML]{B7E1CD}37.96 & 59.77 & 62.25 & 37.86 &  61.63 & 36.99 & 0.40\\
 & FT Croce + ALR (25 ep) & $\ell_{2}$, StAdv, $\ell_{\infty}$ & 83.11 & \cellcolor[HTML]{B7E1CD}79.24 & \cellcolor[HTML]{B7E1CD}\underline{\textbf{72.38}} & \cellcolor[HTML]{B7E1CD}36.61 & 60.18 & 62.74 & 36.59 & 62.1 & 36.15 & 1.01 \\
  \hline
\multirow{13}{*}{3} & AVG & $\ell_2$, StAdv, $\ell_\infty$, Recolor & \textbf{87.67} & \cellcolor[HTML]{B7E1CD}\textbf{85.66} & \cellcolor[HTML]{B7E1CD}66.06 & \cellcolor[HTML]{B7E1CD}50.42 & \cellcolor[HTML]{B7E1CD}75.90 & 69.51 & 47.90 & 69.51 & 47.90 & 13.79 \\
 & MAX & $\ell_2$, StAdv, $\ell_\infty$, Recolor & 83.26 & \cellcolor[HTML]{B7E1CD}81.22 & \cellcolor[HTML]{B7E1CD}70.70 & \cellcolor[HTML]{B7E1CD}\textbf{56.94} & \cellcolor[HTML]{B7E1CD}74.80 & \textbf{70.92} & \textbf{55.31} & \textbf{70.92} & \textbf{55.31} & 14.60 \\
 & Random & $\ell_2$, StAdv, $\ell_\infty$, Recolor & 86.55 & \cellcolor[HTML]{B7E1CD}84.64 & \cellcolor[HTML]{B7E1CD}66.52 & \cellcolor[HTML]{B7E1CD}47.29 & \cellcolor[HTML]{B7E1CD}74.93 & 68.34 & 45.71 & 68.34 & 45.71 & 9.61 \\\cdashline{2-13}
 & FT MAX (10 ep) & $\ell_{2}$, StAdv, $\ell_{\infty}$, Recolor & 81.99 & \cellcolor[HTML]{B7E1CD}77.78 & \cellcolor[HTML]{B7E1CD}68.28 & \cellcolor[HTML]{B7E1CD}41.83 & \cellcolor[HTML]{B7E1CD}69.91 & 64.45 & 41.4 & 64.45 & 41.4 & 1.31
 \\
  & FT MAX (25 ep) & $\ell_{2}$, StAdv, $\ell_{\infty}$, Recolor & 82.78 & \cellcolor[HTML]{B7E1CD}79.21 & \cellcolor[HTML]{B7E1CD}\textbf{\underline{70.83}} & \cellcolor[HTML]{B7E1CD}45.15 & \cellcolor[HTML]{B7E1CD}71.39 & 66.64 & \underline{44.76} & 66.64 & \underline{44.76} & 3.6 \\
   & FT Croce (10 ep) & $\ell_{2}$, StAdv, $\ell_{\infty}$, Recolor & 84.87 & \cellcolor[HTML]{B7E1CD}80.38 & \cellcolor[HTML]{B7E1CD}66.68 & \cellcolor[HTML]{B7E1CD}36.82 &\cellcolor[HTML]{B7E1CD}68.61 & 63.12 & 36.31 & 63.12 & 36.31 & 0.45\\ 
 & FT Croce (25 ep) & $\ell_{2}$, StAdv, $\ell_{\infty}$, Recolor & 86.32 & \cellcolor[HTML]{B7E1CD}82.11 & \cellcolor[HTML]{B7E1CD}68.79 & \cellcolor[HTML]{B7E1CD}41.27 & \cellcolor[HTML]{B7E1CD}72.41 & 66.15 & 40.69 & 66.15 & 40.69 & 1.2 \\
 & FT Single (10 ep)& $\ell_{2}$, StAdv, $\ell_{\infty}$, Recolor & 86.27 & \cellcolor[HTML]{B7E1CD}81.35 & \cellcolor[HTML]{B7E1CD}54.73 & \cellcolor[HTML]{B7E1CD}23.59 & \cellcolor[HTML]{B7E1CD}70.17 & 57.46 & 22.55 & 57.46 & 22.55 & 0.71\\
 & FT Single (25 ep)& $\ell_{2}$, StAdv, $\ell_{\infty}$, Recolor & 85.1 & \cellcolor[HTML]{B7E1CD}80.48 & \cellcolor[HTML]{B7E1CD}58.17 & \cellcolor[HTML]{B7E1CD}36.38 & \cellcolor[HTML]{B7E1CD}70.62 & 61.41 & 34.45 & 61.41 & 34.45 & 2.03 \\
 & FT Single + ALR (10 ep) &  $\ell_{2}$, StAdv, $\ell_{\infty}$, Recolor & 85.3 & \cellcolor[HTML]{B7E1CD}81.04 & \cellcolor[HTML]{B7E1CD}49.35 & \cellcolor[HTML]{B7E1CD}40.48 & \cellcolor[HTML]{B7E1CD}74.8 & 61.42 & 35.62 & 61.42 & 35.62 & 0.85\\
 & FT Single + ALR (25 ep)& $\ell_{2}$, StAdv, $\ell_{\infty}$, Recolor & \underline{86.78} & \cellcolor[HTML]{B7E1CD}\underline{82.8} & \cellcolor[HTML]{B7E1CD}47.82 & \cellcolor[HTML]{B7E1CD}33.12 & \cellcolor[HTML]{B7E1CD}\textbf{\underline{77.58}} & 60.33 & 29.17 & 60.33 & 29.17 & 2.38 \\
 & FT Croce + ALR (10 ep) & $\ell_{2}$, StAdv, $\ell_{\infty}$, Recolor & 85.3 & \cellcolor[HTML]{B7E1CD}81.3 & \cellcolor[HTML]{B7E1CD}69.35 & \cellcolor[HTML]{B7E1CD}43.13 & \cellcolor[HTML]{B7E1CD}70.85 & 66.16 & 42.62 & 66.16 & 42.62 & 0.53
\\
 & FT Croce + ALR (25 ep) & $\ell_{2}$, StAdv, $\ell_{\infty}$, Recolor & 85.81 & \cellcolor[HTML]{B7E1CD}81.76 & \cellcolor[HTML]{B7E1CD}67.13 & \cellcolor[HTML]{B7E1CD}\underline{45.38} & \cellcolor[HTML]{B7E1CD}73.02 & \underline{66.82} & 44.56 & \underline{66.82} & 44.56 & 1.36
 \\ \hline
\end{tabular}
}\caption{\textbf{Continual Robust Training on ImageNette (Sequence of 4 attacks starting with $\ell_2$).}}
\label{app:main_results_imagenette_epochs_1}
\end{table*}

\begin{table*}[]
\scalebox{0.75}{
\begin{tabular}{|c|l|l|c|cccc|cc|cc|c|}
\hline
\begin{tabular}[c]{@{}c@{}}Time\\ Step\end{tabular} & Procedure & Threat Models & Clean & $\ell_\infty$ & StAdv & Recolor & $\ell_2$ & \begin{tabular}[c]{@{}c@{}}Avg\\ (known)\end{tabular} & \begin{tabular}[c]{@{}c@{}}Union\\ (known)\end{tabular} & \begin{tabular}[c]{@{}c@{}}Avg\\ (all)\end{tabular} & \begin{tabular}[c]{@{}c@{}}Union\\ (all)\end{tabular} & \begin{tabular}[c]{@{}c@{}}Time\\ (hrs)\end{tabular} \\ \hline
 & AT & $\ell_\infty$ & \textbf{82.52} & \cellcolor[HTML]{B7E1CD}56.94 & \textbf{61.32} & 71.62 & \textbf{78.39} & 56.94 & 56.94 & \textbf{67.07} & 50.32 & 1.70 \\
\multirow{-2}{*}{0} & AT + ALR & $\ell_\infty$ & 81.52 & \cellcolor[HTML]{B7E1CD}\textbf{59.62} & 60.51 & \textbf{73.50} & 72.69 & \textbf{59.62} & \textbf{59.62} & 66.58 & \textbf{52.92} & 2.67 \\ \hline
 \multirow{13}{*}{1} & AVG & $\ell_\infty$, StAdv & \textbf{85.78} & \cellcolor[HTML]{B7E1CD}53.30 & \cellcolor[HTML]{B7E1CD}\textbf{75.69} & 67.69 & \textbf{81.96} & 64.5 & 53.02 & 69.66 & 51.11 & 5.87 \\
 & MAX & $\ell_\infty$, StAdv & 83.77 & \cellcolor[HTML]{B7E1CD}\textbf{58.04 }& \cellcolor[HTML]{B7E1CD}70.04 & \textbf{72.76} & 80.38 & 64.04 & \textbf{56.54} & \textbf{70.31} & 55.26 & 6.11 \\
 & Random & $\ell_\infty$, StAdv & 83.34 & \cellcolor[HTML]{B7E1CD}52.23 & \cellcolor[HTML]{B7E1CD}73.76 & 67.85 & 79.87 & 62.99 & 51.77 & 68.43 & 50.39 & 2.44 \\\cdashline{2-13}
 & FT MAX (10 ep) & $\ell_{\infty}$, StAdv & 82.27 & \cellcolor[HTML]{B7E1CD}55.03 & \cellcolor[HTML]{B7E1CD}70.52 & 69.78 & 78.27 & 62.78 & 54.17 & 68.4 & 52.66 & 0.62 \\
 & FT MAX (25 ep) & $\ell_{\infty}$, StAdv & 82.57 & \cellcolor[HTML]{B7E1CD}55.46 & \cellcolor[HTML]{B7E1CD}71.75 & 69.94 & 78.73 & 63.61 & 54.85 & 68.97 & 53.22 & 1.48 \\
 & FT Croce (10 ep)& $\ell_{\infty}$, StAdv & 82.29 & \cellcolor[HTML]{B7E1CD}54.62 & \cellcolor[HTML]{B7E1CD}69.2 & 68.87 & 78.32 & 61.91 & 53.35 & 67.75 & 51.9 & 0.37\\
 & FT Croce (25 ep) & $\ell_{\infty}$, StAdv & 83.67 & \cellcolor[HTML]{B7E1CD}54.27 & \cellcolor[HTML]{B7E1CD}71.57 & 69.07 & \underline{79.54} & 62.92 & 53.58 & 68.61 & 52.2 & 0.86 \\
 & FT Single (10 ep) & $\ell_{\infty}$, StAdv & 83.06 & \cellcolor[HTML]{B7E1CD}50.52 & \cellcolor[HTML]{B7E1CD}71.52 & 65.43 & 78.78 & 61.02 & 49.96 & 66.56 & 48.18 & 0.51 \\
 & FT Single (25 ep) & $\ell_{\infty}$, StAdv & \underline{84.00} & \cellcolor[HTML]{B7E1CD}43.59 & \cellcolor[HTML]{B7E1CD}\underline{73.68} & 58.14 & 78.85 & 58.64 & 43.46 & 63.57 & 41.27 & 1.16 \\
 & FT Single + ALR (10 ep) & $\ell_{\infty}$, StAdv & 82.19 & \cellcolor[HTML]{B7E1CD}42.7 & \cellcolor[HTML]{B7E1CD}73.17 & 60.97 & 77.55 & 57.94 & 42.6 & 63.6 & 41.27 & 0.58\\
 & FT Single + ALR (25 ep) & $\ell_{\infty}$, StAdv & 81.4 & \cellcolor[HTML]{B7E1CD}51.8 & \cellcolor[HTML]{B7E1CD}69.35 & 66.32 & 76.54 & 60.57 & 51.08 & 66 & 50.04 & 1.47 \\
 & FT Croce + ALR (10 ep) & $\ell_{\infty}$, StAdv & 82.62 & \cellcolor[HTML]{B7E1CD}\underline{57.71} & \cellcolor[HTML]{B7E1CD}70.11 & \underline{72.41} & 77.89 & 63.91 & \underline{\textbf{56.54}} & 69.53 & 55.62 & 0.49\\
 & FT Croce + ALR (25 ep) & $\ell_{\infty}$, StAdv & 82.37 & \cellcolor[HTML]{B7E1CD}57.55 & \cellcolor[HTML]{B7E1CD}71.64 & 70.93 & 78.14 & \underline{\textbf{64.6}} & \underline{\textbf{56.54}} & \underline{69.57} & \underline{\textbf{55.64}} & 1.11 \\ \hline
 \multirow{13}{*}{2}& AVG & $\ell_\infty$, StAdv, Recolor & \textbf{86.39} & \cellcolor[HTML]{B7E1CD}51.80 & \cellcolor[HTML]{B7E1CD}\textbf{73.81} & \cellcolor[HTML]{B7E1CD}\textbf{77.99} & \textbf{83.13} & \textbf{67.86} & 51.31 & \textbf{71.68} & 51.31 & 11.57 \\
 & MAX & $\ell_\infty$, StAdv, Recolor & 81.20 & \cellcolor[HTML]{B7E1CD}54.55& \cellcolor[HTML]{B7E1CD}68.64 & \cellcolor[HTML]{B7E1CD}72.82 & 78.01 & 65.33 & 53.12 & 68.5 & 53.12 & 13.55 \\
 & Random & $\ell_\infty$, StAdv, Recolor & 86.29 & \cellcolor[HTML]{B7E1CD}50.96 & \cellcolor[HTML]{B7E1CD}72.28 & \cellcolor[HTML]{B7E1CD}76.59 & 82.96 & 66.61 & 50.27 & 70.69 & 50.27 & 4.90 \\\cdashline{2-13}
 & FT MAX (10 ep) & $\ell_{\infty}$, StAdv, Recolor & 82.34 & \cellcolor[HTML]{B7E1CD}55.34 & \cellcolor[HTML]{B7E1CD}71.34 & \cellcolor[HTML]{B7E1CD}72.87 & 78.22 & 66.51 & 54.04 & 69.44 & 54.04 &1.21 \\
& FT MAX (25 ep) & $\ell_{\infty}$, StAdv, Recolor & 83.75 & \cellcolor[HTML]{B7E1CD}55.29 & \cellcolor[HTML]{B7E1CD}\underline{72.56} & \cellcolor[HTML]{B7E1CD}74.83 & 79.97 & \underline{67.56} & 54.27 & \underline{70.66} & 54.27 & 3.03 \\
& FT Croce (10 ep) & $\ell_{\infty}$, StAdv, Recolor & 84.28 & \cellcolor[HTML]{B7E1CD}54.01 & \cellcolor[HTML]{B7E1CD}69.96 & \cellcolor[HTML]{B7E1CD}72.56 & 79.72 & 65.51 & 52.13 & 69.06 & 52.13 & 0.5\\
 & FT Croce (25 ep) & $\ell_{\infty}$, StAdv, Recolor & 84.05 & \cellcolor[HTML]{B7E1CD}52.99 & \cellcolor[HTML]{B7E1CD}70.47 & \cellcolor[HTML]{B7E1CD}73.07 & 80.33 & 65.51 & 51.92 & 69.22 & 51.92 & 1.23 \\
 & FT Single (10 ep) & $\ell_{\infty}$, StAdv, Recolor & 83.77 & \cellcolor[HTML]{B7E1CD}53.61 & \cellcolor[HTML]{B7E1CD}65.38 & \cellcolor[HTML]{B7E1CD}73.35 & 79.36 & 64.11 & 50.7 & 67.92 & 50.7 & 0.75\\
 & FT Single (25 ep) & $\ell_{\infty}$, StAdv, Recolor & \underline{85.07} & \cellcolor[HTML]{B7E1CD}48.2 & \cellcolor[HTML]{B7E1CD}65.86 & \cellcolor[HTML]{B7E1CD}\underline{75.41} & \underline{80.41} & 63.16 & 46.34 & 67.47 & 46.34 & 1.88 \\
 & FT Single + ALR (10 ep) & $\ell_{\infty}$, StAdv, Recolor & 84.94	& \cellcolor[HTML]{B7E1CD}50.8 & \cellcolor[HTML]{B7E1CD}65.71 & \cellcolor[HTML]{B7E1CD}76.33 & 80.15 & 64.28 & 48.31 & 68.25 & 48.31 & 0.88\\
 & FT Single + ALR (25 ep) & $\ell_{\infty}$, StAdv, Recolor & 82.09 & \cellcolor[HTML]{B7E1CD}55.46 & \cellcolor[HTML]{B7E1CD}66.27 & \cellcolor[HTML]{B7E1CD}73.63 & 77.76 & 65.12 & 52.89 & 68.28 & 52.89 & 2.14 \\
 & FT Croce + ALR (10 ep) & $\ell_{\infty}$, StAdv, Recolor & 82.42 & \cellcolor[HTML]{B7E1CD}\underline{\textbf{55.75}} & \cellcolor[HTML]{B7E1CD}65.83 & \cellcolor[HTML]{B7E1CD}73.3 & 78.06 & 64.96 & 52.94 & 68.24 & 52.94& 0.61 \\
   & FT Croce + ALR (25 ep) & $\ell_{\infty}$, StAdv, Recolor & 83.9 & \cellcolor[HTML]{B7E1CD}55.52 & \cellcolor[HTML]{B7E1CD}71.21 & \cellcolor[HTML]{B7E1CD}75.29 & 79.82 & 67.34 & \underline{\textbf{54.32}} & 70.46 & \underline{\textbf{54.32}} & 1.49 \\ \hline
 \multirow{13}{*}{3}& AVG & $\ell_\infty$, StAdv, Recolor, $\ell_2$ & \textbf{87.67} & \cellcolor[HTML]{B7E1CD}50.42 & \cellcolor[HTML]{B7E1CD}66.06 & \cellcolor[HTML]{B7E1CD}\textbf{75.90} & \cellcolor[HTML]{B7E1CD}\textbf{85.66} & 69.51 & 47.90 & 69.51 & 47.90 & 13.79 \\
 & MAX & $\ell_\infty$, StAdv, Recolor, $\ell_2$ & 83.26 & \cellcolor[HTML]{B7E1CD}56.94 & \cellcolor[HTML]{B7E1CD}\textbf{70.70} & \cellcolor[HTML]{B7E1CD}74.80 & \cellcolor[HTML]{B7E1CD}81.22 & \textbf{70.92} & \textbf{55.31} & \textbf{70.92} & \textbf{55.31} & 14.60 \\
 & Random & $\ell_\infty$, StAdv, Recolor, $\ell_2$ & 86.55 & \cellcolor[HTML]{B7E1CD}47.29 & \cellcolor[HTML]{B7E1CD}66.52 & \cellcolor[HTML]{B7E1CD}74.93 & \cellcolor[HTML]{B7E1CD}84.64 & 68.34 & 45.71 & 68.34 & 45.71 & 4.58 \\\cdashline{2-13}
 & FT MAX (10 ep) & $\ell_{\infty}$, StAdv, Recolor, $\ell_{2}$ & 82.73 & \cellcolor[HTML]{B7E1CD}\underline{\textbf{55.08}} & \cellcolor[HTML]{B7E1CD}71.36 & \cellcolor[HTML]{B7E1CD}73.5 & \cellcolor[HTML]{B7E1CD}78.98 & 69.73 & \underline{54.19} & 69.73 & \underline{54.19} & 1.3\\
  & FT MAX (25 ep) & $\ell_{\infty}$, StAdv, Recolor, $\ell_{2}$ & 83.72 & \cellcolor[HTML]{B7E1CD}54.93 & \cellcolor[HTML]{B7E1CD}\underline{72.18} & \cellcolor[HTML]{B7E1CD}74.42 & \cellcolor[HTML]{B7E1CD}79.9 & \underline{70.36} & 53.94 & \underline{70.36} & 53.94 & 3.26 \\
  & FT Croce (10 ep) & $\ell_{\infty}$, StAdv, Recolor, $\ell_{2}$ & 84.33 & \cellcolor[HTML]{B7E1CD}52.18 & \cellcolor[HTML]{B7E1CD}69.5 & \cellcolor[HTML]{B7E1CD}72.74 & \cellcolor[HTML]{B7E1CD}79.97 & 68.6 & 50.55 & 68.6 & 50.55 & 0.44 \\
 & FT Croce (25 ep) & $\ell_{\infty}$, StAdv, Recolor, $\ell_{2}$ & 84.84 & \cellcolor[HTML]{B7E1CD}52.59 & \cellcolor[HTML]{B7E1CD}68.74 & \cellcolor[HTML]{B7E1CD}73.27 & \cellcolor[HTML]{B7E1CD}\underline{81.45} & 69.01 & 50.96 & 69.01 & 50.96 & 1.1 \\
  & FT Single (10 ep) & $\ell_{\infty}$, StAdv, Recolor, $\ell_{2}$ & 84.79 & \cellcolor[HTML]{B7E1CD}53.02 & \cellcolor[HTML]{B7E1CD}65.43 & \cellcolor[HTML]{B7E1CD}72.82 & \cellcolor[HTML]{B7E1CD}80.08 & 67.83 & 50.29 & 67.83 & 50.29 & 0.26\\
 & FT Single (25 ep) & $\ell_{\infty}$, StAdv, Recolor, $\ell_{2}$ & \underline{85.5} & \cellcolor[HTML]{B7E1CD}49.12 & \cellcolor[HTML]{B7E1CD}64.79 & \cellcolor[HTML]{B7E1CD}74.27 & \cellcolor[HTML]{B7E1CD}80.76 & 67.24 & 47.21 & 67.24 & 47.21 & 0.63 \\
 & FT Single + ALR (10 ep)& $\ell_{\infty}$, StAdv, Recolor, $\ell_{2}$ & 85.1 & \cellcolor[HTML]{B7E1CD}45.4 & \cellcolor[HTML]{B7E1CD}63.44 & \cellcolor[HTML]{B7E1CD}67.06 & \cellcolor[HTML]{B7E1CD}80.59 & 64.12 & 42.96 & 64.12 & 42.96 & 0.35\\
 & FT Single + ALR (25 ep)& $\ell_{\infty}$, StAdv, Recolor, $\ell_{2}$ & 83.64 & \cellcolor[HTML]{B7E1CD}54.37 & \cellcolor[HTML]{B7E1CD}64.48 & \cellcolor[HTML]{B7E1CD}72.41 & \cellcolor[HTML]{B7E1CD}79.69 & 67.74 & 50.96 & 67.74 & 50.96 & 0.92 \\
 & FT Croce + ALR (10 ep) & $\ell_{\infty}$, StAdv, Recolor, $\ell_{2}$ & 83.03 & \cellcolor[HTML]{B7E1CD}53.96 & \cellcolor[HTML]{B7E1CD}67.9 & \cellcolor[HTML]{B7E1CD}72.38 & \cellcolor[HTML]{B7E1CD}79.08 & 68.33 & 51.95 & 68.33 & 51.95 & 0.55 \\
  & FT Croce + ALR (25 ep)& $\ell_{\infty}$, StAdv, Recolor, $\ell_{2}$ & 84.84 & \cellcolor[HTML]{B7E1CD}53.94 & \cellcolor[HTML]{B7E1CD}68.51 & \cellcolor[HTML]{B7E1CD}\underline{75.11} & \cellcolor[HTML]{B7E1CD}81.32 & 69.72 & 52.23 & 69.72 & 52.23 & 1.36 \\ \hline
\end{tabular}
}\caption{\textbf{Continual Robust Training on ImageNette (Sequence of 4 attacks starting with $\ell_\infty$).}}
\label{app:main_results_imagenette_epochs_2}
\end{table*}

\begin{table*}[]
\scalebox{0.75}{\begin{tabular}{|c|l|l|c|cccc|cc|cc|c|}
\hline
 \multicolumn{1}{|c|}{\begin{tabular}[c]{@{}c@{}}Time\\ Step \end{tabular}} & \multicolumn{1}{|c|}{Procedure} & \multicolumn{1}{c|}{Threat Models} & Clean & $\ell_2$ & StAdv & $\ell_\infty$ & Recolor & \begin{tabular}[c]{@{}c@{}}Avg\\ (known)\end{tabular} & \begin{tabular}[c]{@{}c@{}}Union\\ (known)\end{tabular} & \begin{tabular}[c]{@{}c@{}}Avg\\ (all)\end{tabular} & \begin{tabular}[c]{@{}c@{}}Union\\ (all)\end{tabular} & \begin{tabular}[c]{@{}c@{}}Time\\ (hrs)\end{tabular} \\ \hline
\multirow{ 2}{*}{0}& AT & $\ell_2$ & \textbf{67.75} & \cellcolor[HTML]{B7E1CD}41.65 & 4.21 & 14.28 & 22.46 & 41.65 & 41.65 & 20.65 & 2.34 & 14.85 \\
& Ours & $\ell_2$ & 63.53 & \cellcolor[HTML]{B7E1CD}\textbf{42.88} & \textbf{6.16} & \textbf{19.8} & \textbf{23.36} & \textbf{42.88} & \textbf{42.88} & \textbf{23.05} & \textbf{4.37} & 21.98 \\ \hline
\multirow{13}{*}{1}&AVG & $\ell_2$, StAdv & \textbf{64.48} & \cellcolor[HTML]{B7E1CD}\textbf{35.72} & \cellcolor[HTML]{B7E1CD}28.57 & 9.18 & 19.51 & 32.15 & 25.25 & \textbf{23.24} & 7.53 & 47.24 \\
&MAX & $\ell_2$, StAdv & 61.80 & \cellcolor[HTML]{B7E1CD}33.82 & \cellcolor[HTML]{B7E1CD}31.77 & 7.58 & 17.76 & \textbf{32.79} & \textbf{27.68} & 22.73 & 6.51 & 47.24 \\
&Random & $\ell_2$, StAdv & 62.7 & \cellcolor[HTML]{B7E1CD}31.75 & \cellcolor[HTML]{B7E1CD}\textbf{32.25} & 6.50 & 17.62 & 32.00 & 26.28 & 22.03 & 5.79 & 23.56 \\ \cdashline{2-13}
& FT MAX (10 ep) & $\ell_{2}$, StAdv & 60.3 & \cellcolor[HTML]{B7E1CD}30.9 & \cellcolor[HTML]{B7E1CD}29.32 & 5.96 & 17.04 & 30.11 & 23.97 & 20.8 & 5.15 & 4\\
 & FT MAX (25 ep) & $\ell_{2}$, StAdv & 61.14 & \cellcolor[HTML]{B7E1CD}31.69 & \cellcolor[HTML]{B7E1CD}29.93 & 6.21 & 17.84 & \underline{30.81} & \underline{24.84} & 21.42 & 5.3 & 10.71 \\
 & FT Croce (10 ep) & $\ell_{2}$, StAdv & 62.68 & \cellcolor[HTML]{B7E1CD}35.2 & \cellcolor[HTML]{B7E1CD}23.7 & 8.9 & \textbf{\underline{19.93}} & 29.45 & 21.02 & 21.93 & 6.77 & 2.6\\
 & FT Croce (25 ep) & $\ell_{2}$, StAdv & \underline{63.39} & \cellcolor[HTML]{B7E1CD}31.38 & \cellcolor[HTML]{B7E1CD}28.95 & 5.7 & 17.98 & 30.16 & 23.87 & 21 & 5.04 & 5.96 \\
  & FT Single (10 ep) & $\ell_{2}$, StAdv & 60.48 & \cellcolor[HTML]{B7E1CD}18.29 & \cellcolor[HTML]{B7E1CD}30.83 & 2.24 & 13.28 & 24.56 & 16.19 & 16.16 & 1.84 & 2.77\\
 & FT Single (25 ep) & $\ell_{2}$, StAdv & 60.78 & \cellcolor[HTML]{B7E1CD}18.22 & \cellcolor[HTML]{B7E1CD}\underline{31.79} & 1.99 & 13.32 & 25 & 16.34 & 16.33 & 1.6 & 6.91 \\
 & FT Single + ALR (10 ep) & $\ell_{2}$, StAdv & 53 & \cellcolor[HTML]{B7E1CD}29.23 & \cellcolor[HTML]{B7E1CD}24 & 8.45 & 17.18 & 26.61 & 20.12 & 19.71 & 6.71 & 2.77\\
 & FT Single + ALR (25 ep) & $\ell_{2}$, StAdv & 54.44 & \cellcolor[HTML]{B7E1CD}22.03 & \cellcolor[HTML]{B7E1CD}29.1 & 4.46 & 13.6 & 25.56 & 19.21 & 17.3 & 3.78 & 8.73 \\
 & FT Croce + ALR (10 ep) & $\ell_{2}$, StAdv & 59.09 & \cellcolor[HTML]{B7E1CD}34.09 & \cellcolor[HTML]{B7E1CD}26.59 & 9.84 & 19.2 & 30.34 & 23.62 & 22.43 & 8.28 & 3.11 \\
 & FT Croce + ALR (25 ep) & $\ell_{2}$, StAdv & 58.59 & \cellcolor[HTML]{B7E1CD}\underline{34.65} & \cellcolor[HTML]{B7E1CD}26.45 & \textbf{\underline{10.55}} & 19.65 & 30.55 & 23.67 & \underline{22.82} & \underline{\textbf{8.64}} & 7.67 \\ \hline
\multirow{13}{*}{2}& AVG & $\ell_2$, StAdv, $\ell_\infty$ & 62.09 & \cellcolor[HTML]{B7E1CD}40.34 & \cellcolor[HTML]{B7E1CD}25.13 & \cellcolor[HTML]{B7E1CD}20.88 & 28.84 & \textbf{28.78} & 16.18 & 28.80 & 14.71 & 69.49 \\
& MAX & $\ell_2$, StAdv, $\ell_\infty$ & 56.94 & \cellcolor[HTML]{B7E1CD}34.74 & \cellcolor[HTML]{B7E1CD}\textbf{28.46} & \cellcolor[HTML]{B7E1CD}22.72 & \textbf{30.35} & 28.64 & \textbf{19.66} & \textbf{29.04} & \textbf{17.55} & 69.36 \\
& Random & $\ell_2$, StAdv, $\ell_\infty$ & 60.98 & \cellcolor[HTML]{B7E1CD}38.01 & \cellcolor[HTML]{B7E1CD}25.21 & \cellcolor[HTML]{B7E1CD}17.25 & 25.76 & 26.82 & 14.26 & 26.56 & 12.97 & 19.58 \\\cdashline{2-13} 
& FT MAX (10 ep) & $\ell_{2}$, StAdv, $\ell_{\infty}$ & 59.9 & \cellcolor[HTML]{B7E1CD}38.94 & \cellcolor[HTML]{B7E1CD}26.55 & \cellcolor[HTML]{B7E1CD}16.95 & 25.72 & 27.48 & 14.78 & 27.04 & 13.07 & 5.2\\
& FT MAX (25 ep) & $\ell_{2}$, StAdv, $\ell_{\infty}$ & 61.01 & \cellcolor[HTML]{B7E1CD}38.56 & \cellcolor[HTML]{B7E1CD}\underline{27.54} & \cellcolor[HTML]{B7E1CD}17.05 & 25.91 & \underline{27.72} & \underline{15.34} & \underline{27.27} & \underline{13.66} & 12.25 \\
 & FT Croce (10 ep) & $\ell_{2}$, StAdv, $\ell_{\infty}$ & 62.78 & \cellcolor[HTML]{B7E1CD}39.87 & \cellcolor[HTML]{B7E1CD}20.17 & \cellcolor[HTML]{B7E1CD}14.59 & 23.7 & 24.88 & 10.46 & 24.58 & 9.2 & 2.29 \\
 & FT Croce (25 ep) & $\ell_{2}$, StAdv, $\ell_{\infty}$ & 65.85 & \cellcolor[HTML]{B7E1CD}42.51 & \cellcolor[HTML]{B7E1CD}11.05 & \cellcolor[HTML]{B7E1CD}19.67 & 26.55 & 24.41 & 8.06 & 24.95 & 7.3 & 5.06 \\
 & FT Single (10 ep) & $\ell_{2}$, StAdv, $\ell_{\infty}$ & 65.62 & \cellcolor[HTML]{B7E1CD}42.95 & \cellcolor[HTML]{B7E1CD}7.55 & \cellcolor[HTML]{B7E1CD}22.03 & 28.06 & 24.18 & 5.57 & 25.15 & 4.98 & 1.49\\
 & FT Single (25 ep) & $\ell_{2}$, StAdv, $\ell_{\infty}$ & \underline{\textbf{65.93}} & \cellcolor[HTML]{B7E1CD}42.82 & \cellcolor[HTML]{B7E1CD}7.62 & \cellcolor[HTML]{B7E1CD}21.83 & \underline{28.2} & 24.09 & 5.65 & 25.12 & 5.12 & 3.71 \\
  & FT Single + ALR (10 ep) & $\ell_{2}$, StAdv, $\ell_{\infty}$ & 62.35 & \cellcolor[HTML]{B7E1CD}\underline{\textbf{43.56}} & \cellcolor[HTML]{B7E1CD}8.76 & \cellcolor[HTML]{B7E1CD}23.72 & 27.77 & 25.35 & 6.98 & 25.95 & 6.29 & 2.16
\\
 & FT Single + ALR (25 ep) & $\ell_{2}$, StAdv, $\ell_{\infty}$ & 62.56 & \cellcolor[HTML]{B7E1CD}42.33 & \cellcolor[HTML]{B7E1CD}7.7 & \cellcolor[HTML]{B7E1CD}\underline{\textbf{25.06}} & 26.57 & 25.03 & 6.67 & 25.41 & 6.02 & 5.39 \\
 & FT Croce + ALR (10 ep) & $\ell_{2}$, StAdv, $\ell_{\infty}$ & 60.67 & \cellcolor[HTML]{B7E1CD}42.06 & \cellcolor[HTML]{B7E1CD}16.66 & \cellcolor[HTML]{B7E1CD}21.18 & 25.89 & 26.63 & 12.59 & 26.45 & 11.21 & 3.05\\
 & FT Croce + ALR (25 ep) & $\ell_{2}$, StAdv, $\ell_{\infty}$ & 63.43 & \cellcolor[HTML]{B7E1CD}42.92 & \cellcolor[HTML]{B7E1CD}10.14 & \cellcolor[HTML]{B7E1CD}23.16 & 26.37 & 25.41 & 8.29 & 25.65 & 7.64 & 6.7 \\
  \hline
\multirow{ 13}{*}{3}&AVG & $\ell_2$, StAdv, $\ell_\infty$, Recolor & 65.61 & \cellcolor[HTML]{B7E1CD}40.86 & \cellcolor[HTML]{B7E1CD}22.4 & \cellcolor[HTML]{B7E1CD}20.45 & \cellcolor[HTML]{B7E1CD}37.27 & 30.25 & 14.09 & 30.25 & 14.09 & 101.43 \\
& MAX & $\ell_2$, StAdv, $\ell_\infty$, Recolor & 59.12 & \cellcolor[HTML]{B7E1CD}33.89 & \cellcolor[HTML]{B7E1CD}\textbf{28.02} & \cellcolor[HTML]{B7E1CD}\textbf{22.20} & \cellcolor[HTML]{B7E1CD}35.00 & 29.78 & \textbf{18.74} & 29.78 & \textbf{18.74} & 101.43 \\
& Random & $\ell_2$, StAdv, $\ell_\infty$, Recolor & 63.1 & \cellcolor[HTML]{B7E1CD}39.47 & \cellcolor[HTML]{B7E1CD}24.79 & \cellcolor[HTML]{B7E1CD}19.04 & \cellcolor[HTML]{B7E1CD}38.15 & \textbf{30.36} & 14.57 & \textbf{30.36} & 14.57 & 22.87 \\\cdashline{2-13}
& FT MAX (10 ep) & $\ell_{2}$, StAdv, $\ell_{\infty}$, Recolor & 61.5 & \cellcolor[HTML]{B7E1CD}39.34 & \cellcolor[HTML]{B7E1CD}26.97 & \cellcolor[HTML]{B7E1CD}17.25 & \cellcolor[HTML]{B7E1CD}33.56 & \underline{29.28} & 14.55 & \underline{29.28} & 14.55 & 8.61\\
& FT MAX (25 ep) & $\ell_{2}$, StAdv, $\ell_{\infty}$, Recolor & 62.14 & \cellcolor[HTML]{B7E1CD}38.68 & \cellcolor[HTML]{B7E1CD}\underline{27.51} & \cellcolor[HTML]{B7E1CD}17.13 & \cellcolor[HTML]{B7E1CD}33.06 & 29.09 & \underline{14.84} & 29.09 & \underline{14.84} & 19.67 \\
& FT Croce (10 ep) & $\ell_{2}$, StAdv, $\ell_{\infty}$, Recolor & 64.82 & \cellcolor[HTML]{B7E1CD}41.09 & \cellcolor[HTML]{B7E1CD}19.78 & \cellcolor[HTML]{B7E1CD}16.55 & \cellcolor[HTML]{B7E1CD}32.26 & 27.42 & 10.57 & 27.42 & 10.57 & 2.42\\
 & FT Croce (25 ep) & $\ell_{2}$, StAdv, $\ell_{\infty}$, Recolor & 66.31 & \cellcolor[HTML]{B7E1CD}41.02 & \cellcolor[HTML]{B7E1CD}13.42 & \cellcolor[HTML]{B7E1CD}17.34 & \cellcolor[HTML]{B7E1CD}31.02 & 25.7 & 8.4 & 25.7 & 8.4 & 6.03 \\
 & FT Single (10 ep) & $\ell_{2}$, StAdv, $\ell_{\infty}$, Recolor & \underline{\textbf{69.82}} & \cellcolor[HTML]{B7E1CD}32.63 & \cellcolor[HTML]{B7E1CD}4.07 & \cellcolor[HTML]{B7E1CD}9.38 & \cellcolor[HTML]{B7E1CD}40.07 & 21.54 & 1.42 & 21.54 & 1.42 & 3.06\\
 & FT Single (25 ep) & $\ell_{2}$, StAdv, $\ell_{\infty}$, Recolor & 68.63 & \cellcolor[HTML]{B7E1CD}37.06 & \cellcolor[HTML]{B7E1CD}5.57 & \cellcolor[HTML]{B7E1CD}13.28 & \cellcolor[HTML]{B7E1CD}37.66 & 23.39 & 2.76 & 23.39 & 2.76 & 7.78 \\
 & FT Single + ALR (10 ep) & $\ell_{2}$, StAdv, $\ell_{\infty}$, Recolor & 66.58 & \cellcolor[HTML]{B7E1CD}37.98 & \cellcolor[HTML]{B7E1CD}6.65 & \cellcolor[HTML]{B7E1CD}16.37 & \cellcolor[HTML]{B7E1CD}39.23 & 25.06 & 3.83 & 25.06 & 3.83 & 3.91\\
 & FT Single + ALR (25 ep) & $\ell_{2}$, StAdv, $\ell_{\infty}$, Recolor & 68.15 & \cellcolor[HTML]{B7E1CD}32.05 & \cellcolor[HTML]{B7E1CD}5.08 & \cellcolor[HTML]{B7E1CD}11.82 & \cellcolor[HTML]{B7E1CD}\underline{\textbf{41.5}} & 22.61 & 2.46 & 22.61 & 2.46 & 9.72 \\
  & FT Croce + ALR (10 ep) & $\ell_{2}$, StAdv, $\ell_{\infty}$, Recolor & 64.11 & \cellcolor[HTML]{B7E1CD}\textbf{\underline{42.52}} & \cellcolor[HTML]{B7E1CD}10.89 & \cellcolor[HTML]{B7E1CD}\underline{21.36} & \cellcolor[HTML]{B7E1CD}34.05 & 27.21 & 8.11 & 27.21 & 8.11 & 3.42\\
 & FT Croce + ALR (25 ep) & $\ell_{2}$, StAdv, $\ell_{\infty}$, Recolor & 65.33 & \cellcolor[HTML]{B7E1CD}39.4 & \cellcolor[HTML]{B7E1CD}11.41 & \cellcolor[HTML]{B7E1CD}16.84 & \cellcolor[HTML]{B7E1CD}34.15 & 25.45 & 7.35 & 25.45 & 7.35 & 7.41
 \\ \hline
\end{tabular}}
\caption{\textbf{Continual Robust Training on CIFAR-100 (Sequence of 4 attacks starting with $\ell_2$).}}
\label{app:main_results_cifar100_full}
\end{table*}

\begin{table*}[]
\scalebox{0.75}{\begin{tabular}{|c|l|l|c|cccc|cc|cc|c|}
\hline
\multicolumn{1}{|c|}{\begin{tabular}[c]{@{}c@{}}Time\\ Step \end{tabular}}&\multicolumn{1}{|c|}{Procedure} & Threat Models & Clean & $\ell_\infty$ & StAdv & Recolor & $\ell_2$ & \begin{tabular}[c]{@{}c@{}}Avg\\ (known)\end{tabular} & \begin{tabular}[c]{@{}c@{}}Union\\ (known)\end{tabular} & \begin{tabular}[c]{@{}c@{}}Avg\\ (all)\end{tabular} & \begin{tabular}[c]{@{}c@{}}Union\\ (all)\end{tabular} & \begin{tabular}[c]{@{}c@{}}Time\\ (hrs)\end{tabular} \\ \hline
\multirow{ 2}{*}{0}&AT & $\ell_\infty$ & \textbf{60.95} & \cellcolor[HTML]{B7E1CD}27.61 & 9.92 & \textbf{33.2} & \textbf{35.6} & 27.61 & 27.61 & \textbf{26.58} & 7.45 & 16.33 \\
& AT + ALR & $\ell_{\infty}$ & 55.36 & \cellcolor[HTML]{B7E1CD}\textbf{28.01} & \textbf{11.25} & 31.01 & 33.95 & \textbf{28.01} & \textbf{28.01} & 26.05 & 8.62 & 23.75 \\ \hline
\multirow{ 13}{*}{1}& AVG & $\ell_\infty$, StAdv & \textbf{66.09} & \cellcolor[HTML]{B7E1CD}8.65 & \cellcolor[HTML]{B7E1CD}33.19 & 18.42 & 21.58 & 20.92 & 8.18 & 20.46 & 7.47 & 47.59 \\
&MAX & $\ell_\infty$, StAdv & 57.01 & \cellcolor[HTML]{B7E1CD}22.8 & \cellcolor[HTML]{B7E1CD}29.23 & 30.54 & \textbf{33.58} & \textbf{26.02} & \textbf{20.28} & \textbf{29.04} & \textbf{17.96} & 46.97 \\
&Random & $\ell_\infty$, StAdv & 47.14 & \cellcolor[HTML]{B7E1CD}16.62 & \cellcolor[HTML]{B7E1CD}25.61 & 25.35 & 27.25 & 21.12 & 14.63 & 23.71 & 13.37 & 23.92 \\\cdashline{2-13}
& FT MAX (10 ep) & $\ell_{\infty}$, StAdv & 58.05 & \cellcolor[HTML]{B7E1CD}22.48 & \cellcolor[HTML]{B7E1CD}28.66 & 30.43 & 33.15 & 25.57 & 18.95 & 28.68 & 17.05 & 4.03\\
 & FT MAX (25 ep) & $\ell_{\infty}$, StAdv & 58.56 & \cellcolor[HTML]{B7E1CD}22.36 & \cellcolor[HTML]{B7E1CD}29.38 & \textbf{\underline{30.97}} & 33.16 & \underline{25.87} & \underline{19.38} & \underline{28.97} & \underline{17.43} & 10.02 \\
  & FT Croce (10 ep) & $\ell_{\infty}$, StAdv & 60.17 & \cellcolor[HTML]{B7E1CD}22.75 & \cellcolor[HTML]{B7E1CD}26.81 & 31.22 & 33.68 & 24.78 & 17.54 & 28.62 & 15.95 & 2.49\\
 & FT Croce (25 ep) & $\ell_{\infty}$, StAdv & \underline{60.27} & \cellcolor[HTML]{B7E1CD}22.18 & \cellcolor[HTML]{B7E1CD}28.17 & 30.38 & \underline{33.25} & 25.18 & 17.81 & 28.5 & 16.21 & 5.69 \\
 & FT Single (10 ep) & $\ell_{\infty}$, StAdv & 55.87 & \cellcolor[HTML]{B7E1CD}15.43 & \cellcolor[HTML]{B7E1CD}24.29 & 24.34 & 27.54 & 19.86 & 11.9 & 22.9 & 10.95 & 2.77\\
 & FT Single (25 ep) & $\ell_{\infty}$, StAdv & 56.11 & \cellcolor[HTML]{B7E1CD}16.09 & \cellcolor[HTML]{B7E1CD}24.46 & 24.84 & 28.65 & 20.27 & 12.5 & 23.51 & 11.33 & 6.95 \\
 & FT Single + ALR (10 ep) & $\ell_{\infty}$, StAdv & 56.03 & \cellcolor[HTML]{B7E1CD}3.81 & \cellcolor[HTML]{B7E1CD}35.21 & 18.52 & 17.31 & 19.51 & 3.77 & 18.71 & 3.51 & 75.49 \\
 & FT Single + ALR (25 ep) & $\ell_{\infty}$, StAdv & 59.65 & \cellcolor[HTML]{B7E1CD}2.6 & \cellcolor[HTML]{B7E1CD}\underline{\textbf{37.96}} & 18.51 & 13.52 & 20.28 & 2.58 & 18.15 & 2.41 & 8.34 \\
  & FT Croce + ALR (10 ep) & $\ell_{\infty}$, StAdv & 54.86 & \cellcolor[HTML]{B7E1CD}\textbf{\underline{23.33}} & \cellcolor[HTML]{B7E1CD}27.19 & 30.15 & 31.54 & 25.26 & 18.75 & 28.05 & 16.86 & 2.97\\
 & FT Croce + ALR (25 ep) & $\ell_{\infty}$, StAdv & 54.27 & \cellcolor[HTML]{B7E1CD}23.27 & \cellcolor[HTML]{B7E1CD}26.27 & 29.4 & 31.79 & 24.77 & 18.25 & 27.68 & 16.06 & 7.4 \\
  \hline
\multirow{ 13}{*}{2}&AVG & $\ell_\infty$, StAdv, Recolor & \textbf{68.19 }& \cellcolor[HTML]{B7E1CD}16.88 & \cellcolor[HTML]{B7E1CD}\textbf{29.53} & \cellcolor[HTML]{B7E1CD}38.12 & 30.75 & 28.18 & 14.14 & 28.82 & 14.11 & 79.47 \\
& MAX & $\ell_\infty$, StAdv, Recolor & 57.96 & \cellcolor[HTML]{B7E1CD}22.38 & \cellcolor[HTML]{B7E1CD}28.92 & \cellcolor[HTML]{B7E1CD}35.24 & 33.9 & 28.85 & \textbf{19.27} & 30.11 & \textbf{19.17} & 79.5 \\
& Random & $\ell_\infty$, StAdv, Recolor & 47.14 & \cellcolor[HTML]{B7E1CD}16.62 & \cellcolor[HTML]{B7E1CD}25.61 & \cellcolor[HTML]{B7E1CD}25.35 & 27.25 & 21.12 & 14.63 & 23.71 & 13.37 & 23.92 \\\cdashline{2-13}
& FT MAX (10 ep) & $\ell_{\infty}$, StAdv, Recolor & 58.96 & \cellcolor[HTML]{B7E1CD}22.04 & \cellcolor[HTML]{B7E1CD}29 & \cellcolor[HTML]{B7E1CD}36.35 & 34.55 & 29.13 & 18.37 & 30.48 & 18.31 & 6.75\\
 & FT MAX (25 ep) & $\ell_{\infty}$, StAdv, Recolor & 59.41 & \cellcolor[HTML]{B7E1CD}21.7 & \cellcolor[HTML]{B7E1CD}\underline{29.2} & \cellcolor[HTML]{B7E1CD}35.57 & 33.24 & 28.82 & \underline{18.21} & 29.93 & \underline{18.09} & 16.75 \\
  & FT Croce (10 ep) & $\ell_{\infty}$, StAdv, Recolor & 62.27 & \cellcolor[HTML]{B7E1CD}22.42 & \cellcolor[HTML]{B7E1CD}26.16 & \cellcolor[HTML]{B7E1CD}37.25 & 34.5 & 28.61 & 16.34 & 30.08 & 16.24 & 2.77 \\
 & FT Croce (25 ep) & $\ell_{\infty}$, StAdv, Recolor & 62.09 & \cellcolor[HTML]{B7E1CD}\underline{23.24} & \cellcolor[HTML]{B7E1CD}25.6 & \cellcolor[HTML]{B7E1CD}36.91 & \underline{35.89} & 28.58 & 16.63 & \underline{30.41} & 16.52 & 6.46 \\
 & FT Single (10 ep) & $\ell_{\infty}$, StAdv, Recolor & 64.02 & \cellcolor[HTML]{B7E1CD}20.72 & \cellcolor[HTML]{B7E1CD}14.72 & \cellcolor[HTML]{B7E1CD}41.7 & 32.89 & 25.71 & 9.61 & 27.51 & 9.57 & 3.01\\
 & FT Single (25 ep) & $\ell_{\infty}$, StAdv, Recolor & \underline{67.39} & \cellcolor[HTML]{B7E1CD}13.25 & \cellcolor[HTML]{B7E1CD}6.94 & \cellcolor[HTML]{B7E1CD}45.1 & 27.71 & 21.76 & 3.59 & 23.25 & 3.57 & 7.86 \\
 & FT Single + ALR (10 ep) & $\ell_{\infty}$, StAdv, Recolor & 64.86 & \cellcolor[HTML]{B7E1CD}8.76 & \cellcolor[HTML]{B7E1CD}9.34 & \cellcolor[HTML]{B7E1CD}46.4 & 25.62 & 21.5 & 3.91 & 22.53 & 3.91 & 3.8 \\
 & FT Single + ALR (25 ep) & $\ell_{\infty}$, StAdv, Recolor & 66.87 & \cellcolor[HTML]{B7E1CD}3.74 & \cellcolor[HTML]{B7E1CD}8.44 & \cellcolor[HTML]{B7E1CD}\underline{\textbf{50.81}} & 19.81 & 21 & 2.03 & 20.7 & 2.03 & 9.89 \\
 & FT Croce + ALR (10 ep) & $\ell_{\infty}$, StAdv, Recolor & 56.41 & \cellcolor[HTML]{B7E1CD}24.28 & \cellcolor[HTML]{B7E1CD}25.62 & \cellcolor[HTML]{B7E1CD}36.21 & 34.43 & 28.7 & 17.6 & 30.14 & 17.41& 3.6\\
 & FT Croce + ALR (25 ep) & $\ell_{\infty}$, StAdv, Recolor & 56.83 & \cellcolor[HTML]{B7E1CD}22.43 & \cellcolor[HTML]{B7E1CD}25.9 & \cellcolor[HTML]{B7E1CD}38.27 & 33.5 & \textbf{\underline{28.87}} & 16.98 & 30.03 & 16.76 & 8.28 \\
  \hline
\multirow{ 13}{*}{3}&AVG & $\ell_\infty$, StAdv, Recolor, $\ell_2$ & 64.8 & \cellcolor[HTML]{B7E1CD}20.9 & \cellcolor[HTML]{B7E1CD}22.46 & \cellcolor[HTML]{B7E1CD}37.27 & \cellcolor[HTML]{B7E1CD}41.05 & 30.42 & 14.56 & 30.42 & 14.56 & 101.39 \\
&MAX & $\ell_\infty$, StAdv, Recolor, $\ell_2$ & 57.9 & \cellcolor[HTML]{B7E1CD}22.39 & \cellcolor[HTML]{B7E1CD}28.72 & \cellcolor[HTML]{B7E1CD}35.96 & \cellcolor[HTML]{B7E1CD}35.65 & \textbf{30.68} & \textbf{19.15} & \textbf{30.68} & \textbf{19.15} & 101.25 \\
&Random & $\ell_\infty$, StAdv, Recolor, $\ell_2$ & 63.23 & \cellcolor[HTML]{B7E1CD}19.52 & \cellcolor[HTML]{B7E1CD}21.29 & \cellcolor[HTML]{B7E1CD}\textbf{39.71} & \cellcolor[HTML]{B7E1CD}39.95 & 30.12 & 13.6 & 30.12 & 13.6 & 24.88 \\\cdashline{2-13}
& FT MAX (10 ep) & $\ell_{\infty}$, StAdv, Recolor, $\ell_{2}$ & 59.61 & \cellcolor[HTML]{B7E1CD}22.14 & \cellcolor[HTML]{B7E1CD}29.13 & \cellcolor[HTML]{B7E1CD}36.17 & \cellcolor[HTML]{B7E1CD}34.35 & 30.45 & 18.61 & 30.45 & 18.61 & 8.58\\
 & FT MAX (25 ep) & $\ell_{\infty}$, StAdv, Recolor, $\ell_{2}$ & 59.42 & \cellcolor[HTML]{B7E1CD}22.02 & \cellcolor[HTML]{B7E1CD}\underline{\textbf{29.28}} & \cellcolor[HTML]{B7E1CD}35.96 & \cellcolor[HTML]{B7E1CD}34.45 & \underline{30.43} & \underline{18.64} & \underline{30.43} & \underline{18.64} & 21.36 \\
 & FT Croce (10 ep) & $\ell_{\infty}$, StAdv, Recolor, $\ell_{2}$ & 62.44 & \cellcolor[HTML]{B7E1CD}20.96 & \cellcolor[HTML]{B7E1CD}26.06 & \cellcolor[HTML]{B7E1CD}35.91 & \cellcolor[HTML]{B7E1CD}36.77 & 29.93 & 15.83 & 29.93 & 15.83 & 2.38 \\
 & FT Croce (25 ep) & $\ell_{\infty}$, StAdv, Recolor, $\ell_{2}$ & 62.17 & \cellcolor[HTML]{B7E1CD}21.84 & \cellcolor[HTML]{B7E1CD}26.14 & \cellcolor[HTML]{B7E1CD}\underline{36.92} & \cellcolor[HTML]{B7E1CD}36.69 & 30.4 & 16.08 & 30.4 & 16.08 & 5.81 \\
 & FT Single (10 ep) & $\ell_{\infty}$, StAdv, Recolor, $\ell_{2}$ & 63.94 & \cellcolor[HTML]{B7E1CD}23.86 & \cellcolor[HTML]{B7E1CD}13.73 & \cellcolor[HTML]{B7E1CD}37.22 & \cellcolor[HTML]{B7E1CD}41.47 & 29.07 & 9.92 & 29.07 & 9.92 & 1.61\\
 & FT Single (25 ep) & $\ell_{\infty}$, StAdv, Recolor, $\ell_{2}$ & \underline{\textbf{66.44}} & \cellcolor[HTML]{B7E1CD}21.17 & \cellcolor[HTML]{B7E1CD}7.72 & \cellcolor[HTML]{B7E1CD}31.83 & \cellcolor[HTML]{B7E1CD}\underline{\textbf{42.5}} & 25.8 & 5.67 & 25.8 & 5.67 & 4.07 \\
  & FT Single + ALR (10 ep) & $\ell_{\infty}$, StAdv, Recolor, $\ell_{2}$ & 60.76 & \cellcolor[HTML]{B7E1CD}22.36 & \cellcolor[HTML]{B7E1CD}10.35 & \cellcolor[HTML]{B7E1CD}31.33 & \cellcolor[HTML]{B7E1CD}41.91 & 26.49 & 7.99 & 26.49 & 7.99 & 2.35\\

 & FT Single + ALR (25 ep) & $\ell_{\infty}$, StAdv, Recolor, $\ell_{2}$ & 62.25 & \cellcolor[HTML]{B7E1CD}20.56 & \cellcolor[HTML]{B7E1CD}7.92 & \cellcolor[HTML]{B7E1CD}30.69 & \cellcolor[HTML]{B7E1CD}41.42 & 25.15 & 6.25 & 25.15 & 6.25 & 6.35 \\
 & FT Croce + ALR (10 ep) & $\ell_{\infty}$, StAdv, Recolor, $\ell_{2}$ & 57.56 & \cellcolor[HTML]{B7E1CD}24.64 & \cellcolor[HTML]{B7E1CD}22.52 & \cellcolor[HTML]{B7E1CD}35.77 & \cellcolor[HTML]{B7E1CD}37.55 & 30.12 & 15.96 & 30.12 & 15.96 & 3.36\\
 & FT Croce + ALR (25 ep) & $\ell_{\infty}$, StAdv, Recolor, $\ell_{2}$ & 58.14 & \cellcolor[HTML]{B7E1CD}\underline{\textbf{24.85}} & \cellcolor[HTML]{B7E1CD}18.69 & \cellcolor[HTML]{B7E1CD}36.75 & \cellcolor[HTML]{B7E1CD}39.16 & 29.86 & 13.87 & 29.86 & 13.87 & 7.64\\
 \hline
\end{tabular}}
\caption{\textbf{Continual Robust Training on CIFAR-100 (Sequence of 4 attacks starting with $\ell_\infty$).}}
\label{app:main_results_cifar100_full_2}
\end{table*}

\section{Initial Training Ablations}
\label{app:init_train}
In this section, we present some ablations across regularization strength of each regularization method on the initial training portion of our approach pipeline.  We present ablation results for CIFAR-10 and ImageNette.
\subsection{Performance across different threat models} \label{app:init_train_different_attack_types}
In this section, we perform initial training with models using different initial attacks including attacks in the UAR benchmark \citep{kaufmann2019testing} and evaluate the performance across attack types for training with single-step variation regularization, single-step adversarial $\ell_2$ regularization, uniform regularization, and gaussian regularization.

We present ablation results for CIFAR-10 (Table \ref{app:IT_ablation_VR_CIF} for variation regularization, Table \ref{app:IT_ablation_L2_CIF} for adversarial $\ell_2$ regularization, Table \ref{app:IT_ablation_UR_CIF} for uniform regularization, and Table \ref{app:IT_ablation_GR_CIF} for Gaussian regularization) and ImageNette (Table \ref{app:IT_ablation_VR_IM} for variation regularization, Table \ref{app:IT_ablation_L2_IM} for adversarial $\ell_2$ regularization, Table \ref{app:IT_ablation_UR_IM} for uniform regularization, and Table \ref{app:IT_ablation_GR_IM} for Gaussian regularization).  Overall, we find that across different starting attacks and unseen test attacks, regularization generally improves performance on unseen attacks, leading to increases in average and union accuracy across all attacks with regularization.  We find that in many cases (especially using random noise types) using regularization trades off clean accuracy.  Additionally, some threat models such as Snow are generally more difficult to gain improvement on via regularization; for many starting models, using regularization decreases accuracy on Snow attack.

\begin{table*}[]{\renewcommand{\arraystretch}{1.2}
\scalebox{0.73}{
\begin{tabular}{|l|c|c|cccccccccccc|cc|}
\hline
\begin{tabular}[c]{@{}l@{}}Train\\ Attack\end{tabular} & $\lambda$ & Clean & $\ell_2$ & $\ell_\infty$ & StAdv & ReColor & Gabor & Snow & Pixel & JPEG & Elastic & Wood & Glitch & \begin{tabular}[c]{@{}c@{}}Kaleid-\\ oscope\end{tabular} & Avg & Union \\ \hline
$\ell_2$ & 0 & \textbf{91.08} & 70.02 & 29.38 & 0.79 & 33.69 & 66.93 & 24.59 & 14.99 & 64.22 & 45.13 & 70.85 & \textbf{80.3} & 30.08 & 44.25 & 0.1 \\
$\ell_2$ & 0.1 & 90.4 & 69.7 & \cellcolor[HTML]{B7E1CD}31.78 & \cellcolor[HTML]{B7E1CD}2.27 & \cellcolor[HTML]{B7E1CD}38.16 & \cellcolor[HTML]{F4CCCC}62.99 & \cellcolor[HTML]{B7E1CD}25.77 & \cellcolor[HTML]{B7E1CD}16.91 & \cellcolor[HTML]{B7E1CD}65.35 & 45.94 & 71.05 & 79.72 & 30.17 & 44.98 & 0.74 \\
$\ell_2$ & 0.2 & \cellcolor[HTML]{F4CCCC}89.49 & \textbf{70.49} & \cellcolor[HTML]{B7E1CD}33.43 & \cellcolor[HTML]{B7E1CD}4.29 & \cellcolor[HTML]{B7E1CD}42.8 & \cellcolor[HTML]{B7E1CD}\textbf{68.03} & \cellcolor[HTML]{B7E1CD}26.85 & \cellcolor[HTML]{B7E1CD}18.79 & \cellcolor[HTML]{B7E1CD}66.04 & \cellcolor[HTML]{B7E1CD}47.27 & \cellcolor[HTML]{B7E1CD}72.21 & 79.65 & \cellcolor[HTML]{B7E1CD}\textbf{33.38} & \cellcolor[HTML]{B7E1CD}46.94 & \cellcolor[HTML]{B7E1CD}1.78 \\
$\ell_2$ & 0.5 & \cellcolor[HTML]{F4CCCC}89.57 & 70.29 & \cellcolor[HTML]{B7E1CD}\textbf{34.16} & \cellcolor[HTML]{B7E1CD}\textbf{17.44} & \cellcolor[HTML]{B7E1CD}\textbf{51.04} & \cellcolor[HTML]{F4CCCC}65.63 & \cellcolor[HTML]{B7E1CD}\textbf{28.71} & \cellcolor[HTML]{B7E1CD}\textbf{22.5} & \cellcolor[HTML]{B7E1CD}\textbf{66.76} & \cellcolor[HTML]{B7E1CD}\textbf{48.8} & \cellcolor[HTML]{B7E1CD}\textbf{73.24} & 79.66 & \cellcolor[HTML]{F4CCCC}28.83 & \cellcolor[HTML]{B7E1CD}\textbf{48.92} & \cellcolor[HTML]{B7E1CD}\textbf{5.94} \\ \hline
$\ell_\infty$ & 0 & \textbf{85.53} & \textbf{59.36} & 50.98 & 6.34 & 56.27 & \textbf{68.94} & 36.79 & 20.57 & \textbf{54.02} & 51 & \textbf{64.24} & \textbf{75.94} & \textbf{39.44} & 48.66 & 1.31 \\
$\ell_\infty$ & 0.1 & 85.06 & 58.77 & 51.44 & \cellcolor[HTML]{B7E1CD}7.43 & 55.59 & 68.33 & 37.09 & 20.11 & 53.89 & 51.84 & 64.38 & 74.96 & \cellcolor[HTML]{B7E1CD}42.43 & 48.86 & 1.95 \\
$\ell_\infty$ & 0.2 & 85.23 & \cellcolor[HTML]{F4C7C3}58.08 & \textbf{51.49} & \cellcolor[HTML]{B7E1CD}8.96 & 56.32 & 68.3 & 37.11 & 21.48 & \cellcolor[HTML]{F4C7C3}52.86 & 51.61 & 63.72 & 75.49 & 40.22 & 48.8 & \cellcolor[HTML]{B7E1CD}2.44 \\
$\ell_\infty$ & 0.5 & \cellcolor[HTML]{F4C7C3}83.18 & \cellcolor[HTML]{F4C7C3}58.21 & 51.47 & \cellcolor[HTML]{B7E1CD}\textbf{19.5} & \cellcolor[HTML]{B7E1CD}\textbf{61.02} & 68.75 & \cellcolor[HTML]{B7E1CD}\textbf{37.94} & \cellcolor[HTML]{B7E1CD}\textbf{22.78} & 53.89 & \cellcolor[HTML]{F4C7C3}\textbf{49.82} & 63.47 & \cellcolor[HTML]{F4C7C3}73.57 & 39.88 & \cellcolor[HTML]{B7E1CD}\textbf{50.02} & \cellcolor[HTML]{B7E1CD}\textbf{5.52} \\ \hline
StAdv & 0 & \textbf{87.12} & 5.48 & 0.07 & 56.22 & 5.69 & 17.62 & 57.8 & \textbf{5.93} & 11.09 & \textbf{76.02} & \textbf{77.47} & \textbf{54.04} & 43.4 & \textbf{34.24} & 0.05 \\
StAdv & 0.1 & 86.95 & 4.63 & 0.08 & 56.16 & \cellcolor[HTML]{F4C7C3}4.44 & \cellcolor[HTML]{B7E1CD}20.44 & 57.25 & 4.93 & \cellcolor[HTML]{F4C7C3}9.27 & 75.2 & 76.66 & \cellcolor[HTML]{F4C7C3}52.77 & \cellcolor[HTML]{F4C7C3}40.68 & 33.54 & 0.06 \\
StAdv & 0.2 & \cellcolor[HTML]{F4C7C3}81.39 & \textbf{5.99} & \textbf{0.1} & \cellcolor[HTML]{F4C7C3}54.98 & \cellcolor[HTML]{B7E1CD}\textbf{8.16} & \cellcolor[HTML]{F4C7C3}15.97 & \cellcolor[HTML]{F4C7C3}48.37 & 5.81 & 11.07 & \cellcolor[HTML]{F4C7C3}68.05 & \cellcolor[HTML]{F4C7C3}72.03 & \cellcolor[HTML]{F4C7C3}48.7 & \cellcolor[HTML]{B7E1CD}\textbf{45.22} & \cellcolor[HTML]{F4C7C3}32.04 & \textbf{0.09} \\
StAdv & 0.5 & \cellcolor[HTML]{F4C7C3}85.05 & 5.35 & 0.07 & \textbf{56.49} & 5.09 & \cellcolor[HTML]{B7E1CD}\textbf{23.63} & \textbf{58.09} & 5.51 & \textbf{11.29} & \cellcolor[HTML]{F4C7C3}73.77 & \cellcolor[HTML]{F4C7C3}75.86 & \cellcolor[HTML]{F4C7C3}51.32 & 42.85 & 34.11 & 0.06 \\ \hline
ReColor & 0 & 93.61 & 37.17 & 7.03 & 0.01 & 67.48 & 55.53 & 37.14 & 8.27 & 45.36 & 35.55 & 60.92 & 77.2 & \textbf{32.28} & 38.66 & 0 \\
ReColor & 0.1 & 93.79 & \cellcolor[HTML]{F4C7C3}35.12 & 6.7 & 0 & 67.12 & \cellcolor[HTML]{F4C7C3}51.54 & 37.64 & 8.69 & \cellcolor[HTML]{F4C7C3}43.64 & 36.17 & \cellcolor[HTML]{B7E1CD}\textbf{63.33} & \cellcolor[HTML]{F4C7C3}76 & \cellcolor[HTML]{F4C7C3}28.53 & 37.87 & 0 \\
ReColor & 0.2 & \textbf{93.84} & \textbf{37.7} & \textbf{7.87} & 0.01 & \cellcolor[HTML]{B7E1CD}68.67 & \textbf{55.97} & \cellcolor[HTML]{B7E1CD}38.3 & \cellcolor[HTML]{B7E1CD}9.81 & \cellcolor[HTML]{B7E1CD}46.85 & \cellcolor[HTML]{B7E1CD}\textbf{38.54} & 60.01 & \textbf{77.73} & \cellcolor[HTML]{F4C7C3}31.13 & 39.38 & 0.01 \\
ReColor & 0.5 & 94.57 & \cellcolor[HTML]{F4C7C3}32.67 & \cellcolor[HTML]{F4C7C3}5.83 & \cellcolor[HTML]{B7E1CD}\textbf{32.12} & \cellcolor[HTML]{B7E1CD}\textbf{73.79} & \cellcolor[HTML]{F4C7C3}52.74 & \cellcolor[HTML]{B7E1CD}\textbf{38.66} & \cellcolor[HTML]{B7E1CD}\textbf{20.19} & \cellcolor[HTML]{B7E1CD}\textbf{50.07} & 35.85 & \cellcolor[HTML]{B7E1CD}61.98 & \cellcolor[HTML]{F4C7C3}75.72 & \cellcolor[HTML]{F4C7C3}24.93 & \cellcolor[HTML]{B7E1CD}\textbf{42.05} & \cellcolor[HTML]{B7E1CD}\textbf{2.27} \\ \hline
Gabor & 0 & \textbf{94.08} & 0.3 & 0.01 & 0.01 & 4.43 & \textbf{92.39} & 16.96 & 8.96 & 2.08 & 2.31 & 17.99 & 41.61 & \textbf{11.87} & 16.58 & 0 \\
Gabor & 0.1 & 93.33 & 0.75 & 0.02 & 0.28 & \cellcolor[HTML]{B7E1CD}29.93 & \cellcolor[HTML]{F4C7C3}91.15 & \cellcolor[HTML]{B7E1CD}17.97 & \cellcolor[HTML]{B7E1CD}16.13 & \cellcolor[HTML]{B7E1CD}11.13 & \cellcolor[HTML]{B7E1CD}8.11 & \cellcolor[HTML]{B7E1CD}20.77 & \cellcolor[HTML]{B7E1CD}46.33 & \cellcolor[HTML]{F4C7C3}10.78 & \cellcolor[HTML]{B7E1CD}21.11 & 0 \\
Gabor & 0.2 & 93.38 & 1.17 & 0.01 & \cellcolor[HTML]{B7E1CD}9.4 & \cellcolor[HTML]{B7E1CD}54.87 & \cellcolor[HTML]{F4C7C3}91.15 & \cellcolor[HTML]{B7E1CD}\textbf{25.47} & \cellcolor[HTML]{B7E1CD}\textbf{33.47} & \cellcolor[HTML]{B7E1CD}26.37 & \cellcolor[HTML]{B7E1CD}\textbf{19.74} & \cellcolor[HTML]{B7E1CD}21.57 & \cellcolor[HTML]{B7E1CD}51.05 & \cellcolor[HTML]{F4C7C3}9.43 & \cellcolor[HTML]{B7E1CD}\textbf{28.64} & 0 \\
Gabor & 0.5 & 93.27 & \cellcolor[HTML]{B7E1CD}\textbf{1.57} & \textbf{0.03} & \cellcolor[HTML]{B7E1CD}\textbf{10.16} & \cellcolor[HTML]{B7E1CD}\textbf{56.04} & 91.52 & \cellcolor[HTML]{B7E1CD}24.9 & \cellcolor[HTML]{B7E1CD}30.66 & \cellcolor[HTML]{B7E1CD}\textbf{26.36} & \cellcolor[HTML]{B7E1CD}15.33 & \cellcolor[HTML]{B7E1CD}\textbf{24.18} & \cellcolor[HTML]{B7E1CD}\textbf{51.55} & 11.15 & \cellcolor[HTML]{B7E1CD}28.62 & \textbf{0.01} \\ \hline
Snow & 0 & \textbf{95.89} & 0.05 & 0 & 0.01 & 2.63 & \textbf{30.13} & \textbf{92.02} & \textbf{7.23} & 0.9 & 15.17 & 31.49 & \textbf{47.98} & 22.39 & 20.83 & 0 \\
Snow & 0.1 & \cellcolor[HTML]{F4C7C3}94.68 & 0.17 & 0 & 0.12 & \cellcolor[HTML]{B7E1CD}5.99 & \cellcolor[HTML]{F4C7C3}23.92 & \cellcolor[HTML]{F4C7C3}89.23 & \cellcolor[HTML]{F4C7C3}3.96 & 1.17 & \cellcolor[HTML]{B7E1CD}26.82 & \cellcolor[HTML]{B7E1CD}45.03 & \cellcolor[HTML]{F4C7C3}42.74 & \cellcolor[HTML]{B7E1CD}23.93 & \cellcolor[HTML]{B7E1CD}21.92 & 0 \\
Snow & 0.2 & \cellcolor[HTML]{F4C7C3}94.51 & 0.21 & 0 & \cellcolor[HTML]{B7E1CD}6.28 & \cellcolor[HTML]{B7E1CD}\textbf{20.53} & 29.14 & \cellcolor[HTML]{F4C7C3}89.7 & 7.12 & \cellcolor[HTML]{B7E1CD}\textbf{6.48} & \cellcolor[HTML]{B7E1CD}\textbf{40.55} & \cellcolor[HTML]{B7E1CD}51.42 & \cellcolor[HTML]{F4C7C3}45.23 & \cellcolor[HTML]{B7E1CD}\textbf{25.96} & \cellcolor[HTML]{B7E1CD}\textbf{26.89} & 0 \\
Snow & 0.5 & \cellcolor[HTML]{F4C7C3}88.84 & \textbf{0.53} & 0 & \cellcolor[HTML]{B7E1CD}\textbf{9.13} & \cellcolor[HTML]{B7E1CD}15.16 & \cellcolor[HTML]{F4C7C3}19.39 & \cellcolor[HTML]{F4C7C3}83.83 & \cellcolor[HTML]{F4C7C3}3.6 & \cellcolor[HTML]{B7E1CD}6.13 & \cellcolor[HTML]{B7E1CD}39.54 & \cellcolor[HTML]{B7E1CD}\textbf{57.62} & \cellcolor[HTML]{F4C7C3}33.58 & \cellcolor[HTML]{F4C7C3}20.73 & \cellcolor[HTML]{B7E1CD}24.1 & 0 \\ \hline
Pixel & 0 & \textbf{94.76} & 0.07 & 0 & 0.01 & 8.87 & \textbf{57.65} & 36.6 & 88.35 & 1.77 & 14.52 & 38.18 & 67.5 & 16.21 & 27.48 & 0 \\
Pixel & 0.1 & 94.47 & \textbf{0.27} & 0 & \cellcolor[HTML]{B7E1CD}1.86 & \cellcolor[HTML]{B7E1CD}31.03 & \cellcolor[HTML]{F4C7C3}52.29 & \cellcolor[HTML]{B7E1CD}41.44 & 88.01 & \cellcolor[HTML]{B7E1CD}7.59 & \cellcolor[HTML]{B7E1CD}26.96 & \cellcolor[HTML]{B7E1CD}41.24 & \textbf{68.23} & \cellcolor[HTML]{B7E1CD}19.99 & \cellcolor[HTML]{B7E1CD}31.58 & 0 \\
Pixel & 0.2 & 94.01 & \textbf{0.27} & 0 & \cellcolor[HTML]{B7E1CD}\textbf{5.57} & \cellcolor[HTML]{B7E1CD}\textbf{34.43} & \cellcolor[HTML]{F4C7C3}51.54 & \cellcolor[HTML]{B7E1CD}\textbf{43.31} & \textbf{88.53} & \cellcolor[HTML]{B7E1CD}\textbf{8.9} & \cellcolor[HTML]{B7E1CD}\textbf{28.75} & \cellcolor[HTML]{B7E1CD}43.53 & 66.73 & \cellcolor[HTML]{B7E1CD}18.36 & \cellcolor[HTML]{B7E1CD}\textbf{32.49} & 0 \\
Pixel & 0.5 & \cellcolor[HTML]{F4C7C3}92.34 & 0.06 & 0 & \cellcolor[HTML]{B7E1CD}5.38 & \cellcolor[HTML]{B7E1CD}19.85 & \cellcolor[HTML]{F4C7C3}44.07 & \cellcolor[HTML]{B7E1CD}38.11 & \cellcolor[HTML]{F4C7C3}87.21 & \cellcolor[HTML]{B7E1CD}4.85 & \cellcolor[HTML]{B7E1CD}27.22 & \cellcolor[HTML]{B7E1CD}\textbf{49.64} & \cellcolor[HTML]{F4C7C3}58.39 & \cellcolor[HTML]{B7E1CD}\textbf{20.92} & \cellcolor[HTML]{B7E1CD}29.64 & 0 \\ \hline
JPEG & 0 & \textbf{90.26} & 56.48 & 21.5 & 0.52 & 34.74 & 68.59 & 21.12 & 10.57 & 73.46 & 40 & 74.3 & \textbf{78.35} & \textbf{28.02} & 42.3 & 0.09 \\
JPEG & 0.1 & 89.41 & \cellcolor[HTML]{B7E1CD}58.2 & \cellcolor[HTML]{B7E1CD}24.43 & 1.24 & \cellcolor[HTML]{B7E1CD}37.73 & \cellcolor[HTML]{B7E1CD}\textbf{73.19} & 22 & \cellcolor[HTML]{B7E1CD}12.59 & 74.05 & 40.57 & 75.02 & 77.88 & 27.61 & \cellcolor[HTML]{B7E1CD}43.71 & 0.41 \\
JPEG & 0.2 & \cellcolor[HTML]{F4C7C3}88.56 & \cellcolor[HTML]{B7E1CD}58.55 & \cellcolor[HTML]{B7E1CD}26.21 & \cellcolor[HTML]{B7E1CD}3.19 & \cellcolor[HTML]{B7E1CD}41.12 & \cellcolor[HTML]{B7E1CD}71.49 & 22.1 & \cellcolor[HTML]{B7E1CD}14.65 & 74.23 & 40.7 & \cellcolor[HTML]{B7E1CD}75.43 & 78.17 & \cellcolor[HTML]{F4C7C3}24.93 & \cellcolor[HTML]{B7E1CD}44.23 & 1.08 \\
JPEG & 0.5 & \cellcolor[HTML]{F4C7C3}87.33 & \cellcolor[HTML]{B7E1CD}\textbf{60.43} & \cellcolor[HTML]{B7E1CD}\textbf{29.14} & \cellcolor[HTML]{B7E1CD}\textbf{11.66} & \cellcolor[HTML]{B7E1CD}\textbf{46.74} & \cellcolor[HTML]{B7E1CD}72.68 & \cellcolor[HTML]{B7E1CD}\textbf{24.34} & \cellcolor[HTML]{B7E1CD}\textbf{17.81} & \textbf{74.37} & \cellcolor[HTML]{B7E1CD}\textbf{43.52} & \cellcolor[HTML]{B7E1CD}\textbf{75.44} & \cellcolor[HTML]{F4C7C3}77.08 & \cellcolor[HTML]{F4C7C3}25.76 & \cellcolor[HTML]{B7E1CD}\textbf{46.58} & \cellcolor[HTML]{B7E1CD}\textbf{3.39} \\ \hline
Elastic & 0 & \textbf{94.06} & 1.32 & 0.02 & 7.5 & 7.92 & 25.41 & 53.68 & 9.16 & 11.2 & 79.47 & 72.94 & 50.24 & 33.1 & 29.33 & 0.01 \\
Elastic & 0.1 & 93.49 & 1.68 & \textbf{0.03} & \cellcolor[HTML]{B7E1CD}\textbf{51.42} & \cellcolor[HTML]{B7E1CD}\textbf{41.84} & \cellcolor[HTML]{B7E1CD}\textbf{28.14} & \cellcolor[HTML]{B7E1CD}\textbf{56.78} & \cellcolor[HTML]{B7E1CD}\textbf{14.2} & \cellcolor[HTML]{B7E1CD}\textbf{26.91} & 80.19 & \cellcolor[HTML]{B7E1CD}74.93 & \cellcolor[HTML]{B7E1CD}53.78 & \cellcolor[HTML]{F4C7C3}29.16 & \cellcolor[HTML]{B7E1CD}\textbf{38.25} & 0.02 \\
Elastic & 0.2 & 93.32 & 1.87 & 0.01 & \cellcolor[HTML]{B7E1CD}17.64 & \cellcolor[HTML]{B7E1CD}11.84 & 25.67 & \cellcolor[HTML]{B7E1CD}55.38 & \cellcolor[HTML]{F4C7C3}5.61 & \cellcolor[HTML]{F4C7C3}9.2 & \cellcolor[HTML]{B7E1CD}\textbf{80.66} & \cellcolor[HTML]{B7E1CD}77.2 & \cellcolor[HTML]{B7E1CD}51.52 & \cellcolor[HTML]{B7E1CD}\textbf{36.14} & \cellcolor[HTML]{B7E1CD}31.06 & 0.01 \\
Elastic & 0.5 & \cellcolor[HTML]{F4C7C3}92.62 & \cellcolor[HTML]{B7E1CD}\textbf{2.64} & 0.1 & \cellcolor[HTML]{B7E1CD}40.11 & \cellcolor[HTML]{B7E1CD}28.38 & \cellcolor[HTML]{B7E1CD}27.2 & \cellcolor[HTML]{F4C7C3}51.69 & \cellcolor[HTML]{B7E1CD}10.24 & \cellcolor[HTML]{B7E1CD}20.46 & 80.3 & \cellcolor[HTML]{B7E1CD}\textbf{77.69} & \cellcolor[HTML]{B7E1CD}\textbf{55.69} & \cellcolor[HTML]{B7E1CD}34.25 & \cellcolor[HTML]{B7E1CD}35.73 & \textbf{0.08} \\ \hline
Wood & 0 & \textbf{93.57} & 0.03 & 0 & 0.4 & 1.27 & \textbf{18.47} & 39.44 & 3.43 & 0.37 & 33.68 & \textbf{93.04} & 28.04 & 14.11 & 19.36 & 0 \\
Wood & 0.1 & 92.79 & \textbf{0.04} & 0 & \cellcolor[HTML]{B7E1CD}3.84 & \cellcolor[HTML]{B7E1CD}4.88 & \cellcolor[HTML]{F4C7C3}16.77 & \cellcolor[HTML]{B7E1CD}42.61 & 3.32 & 1.21 & \cellcolor[HTML]{B7E1CD}37.35 & 92.3 & \cellcolor[HTML]{B7E1CD}29.72 & \cellcolor[HTML]{B7E1CD}\textbf{16.04} & \cellcolor[HTML]{B7E1CD}20.67 & 0 \\
Wood & 0.2 & 92.68 & 0.02 & 0 & \cellcolor[HTML]{B7E1CD}10.98 & \cellcolor[HTML]{B7E1CD}11.85 & \cellcolor[HTML]{F4C7C3}15.74 & \cellcolor[HTML]{B7E1CD}44.47 & \cellcolor[HTML]{B7E1CD}5.55 & \cellcolor[HTML]{B7E1CD}3.76 & \cellcolor[HTML]{B7E1CD}38.77 & 92.25 & \cellcolor[HTML]{B7E1CD}32.48 & 14.83 & \cellcolor[HTML]{B7E1CD}22.56 & 0 \\
Wood & 0.5 & \cellcolor[HTML]{F4C7C3}92.03 & \textbf{0.04} & 0 & \cellcolor[HTML]{B7E1CD}\textbf{25.21} & \cellcolor[HTML]{B7E1CD}\textbf{21.39} & 17.58 & \cellcolor[HTML]{B7E1CD}\textbf{47.09} & \cellcolor[HTML]{B7E1CD}\textbf{7.52} & \cellcolor[HTML]{B7E1CD}\textbf{6.2} & \cellcolor[HTML]{B7E1CD}\textbf{43.25} & \cellcolor[HTML]{F4C7C3}91.36 & \cellcolor[HTML]{B7E1CD}\textbf{36.32} & \cellcolor[HTML]{B7E1CD}15.18 & \cellcolor[HTML]{B7E1CD}\textbf{25.93} & 0 \\ \hline
Glitch & 0 & \textbf{93.26} & 0.02 & 0 & 0 & 11.49 & 49.03 & 24.44 & 12.47 & 3.14 & 10.89 & 31.99 & \textbf{90.77} & 16.61 & 20.9 & 0 \\
Glitch & 0.1 & \cellcolor[HTML]{F4C7C3}92.06 & 0.07 & 0 & 0 & \cellcolor[HTML]{F4C7C3}8.05 & \cellcolor[HTML]{F4C7C3}45.58 & \cellcolor[HTML]{F4C7C3}20.58 & \cellcolor[HTML]{F4C7C3}5.59 & \cellcolor[HTML]{F4C7C3}1.16 & \cellcolor[HTML]{F4C7C3}9.27 & 32.45 & \cellcolor[HTML]{F4C7C3}87.47 & 16.96 & \cellcolor[HTML]{F4C7C3}18.93 & 0 \\
Glitch & 0.2 & \cellcolor[HTML]{F4C7C3}90.33 & \cellcolor[HTML]{B7E1CD}\textbf{1.09} & \textbf{0.09} & 0.01 & \cellcolor[HTML]{B7E1CD}17.43 & \cellcolor[HTML]{B7E1CD}54.43 & \cellcolor[HTML]{F4C7C3}20.15 & \cellcolor[HTML]{F4C7C3}8.3 & \cellcolor[HTML]{B7E1CD}7.71 & \cellcolor[HTML]{B7E1CD}21.52 & \cellcolor[HTML]{B7E1CD}47.55 & \cellcolor[HTML]{F4C7C3}83.74 & \cellcolor[HTML]{B7E1CD}19.34 & \cellcolor[HTML]{B7E1CD}23.45 & 0 \\
Glitch & 0.5 & \cellcolor[HTML]{F4C7C3}92.03 & 0.36 & 0.02 & \cellcolor[HTML]{B7E1CD}\textbf{16.62} & \cellcolor[HTML]{B7E1CD}\textbf{45.09} & \cellcolor[HTML]{B7E1CD}\textbf{54.52} & \textbf{25.11} & \cellcolor[HTML]{B7E1CD}\textbf{18.73} & \cellcolor[HTML]{B7E1CD}\textbf{21.68} & \cellcolor[HTML]{B7E1CD}\textbf{27.67} & \cellcolor[HTML]{B7E1CD}\textbf{47.92} & \cellcolor[HTML]{F4C7C3}87.59 & \cellcolor[HTML]{B7E1CD}\textbf{19.87} & \cellcolor[HTML]{B7E1CD}\textbf{30.43} & 0 \\ \hline
\begin{tabular}[c]{@{}l@{}}Kaleid-\\ oscope\end{tabular} & 0 & 96.03 & 0 & 0 & 0 & 0.8 & 39.49 & 40.94 & 5.75 & 0.02 & 2.4 & 43.08 & 33.71 & 91.97 & 21.51 & 0 \\
\begin{tabular}[c]{@{}l@{}}Kaleid-\\ oscope\end{tabular} & 0.1 & \textbf{96.22} & 0.03 & 0 & \cellcolor[HTML]{B7E1CD}2.15 & \cellcolor[HTML]{B7E1CD}33.97 & \cellcolor[HTML]{F4C7C3}37.41 & \cellcolor[HTML]{F4C7C3}35.7 & \cellcolor[HTML]{B7E1CD}16.06 & \cellcolor[HTML]{B7E1CD}5.21 & \cellcolor[HTML]{B7E1CD}13.03 & \cellcolor[HTML]{B7E1CD}50.03 & \cellcolor[HTML]{B7E1CD}45.1 & \cellcolor[HTML]{B7E1CD}\textbf{93.13} & \cellcolor[HTML]{B7E1CD}27.65 & 0 \\
\begin{tabular}[c]{@{}l@{}}Kaleid-\\ oscope\end{tabular} & 0.2 & 96.14 & 0.07 & 0 & \cellcolor[HTML]{B7E1CD}7.48 & \cellcolor[HTML]{B7E1CD}43.57 & 39.96 & \cellcolor[HTML]{F4C7C3}38.95 & \cellcolor[HTML]{B7E1CD}18.03 & \cellcolor[HTML]{B7E1CD}8.43 & \cellcolor[HTML]{B7E1CD}16.61 & \cellcolor[HTML]{B7E1CD}\textbf{50.22} & \cellcolor[HTML]{B7E1CD}48.81 & 92.94 & \cellcolor[HTML]{B7E1CD}30.42 & 0 \\
\begin{tabular}[c]{@{}l@{}}Kaleid-\\ oscope\end{tabular} & 0.5 & 95.71 & \textbf{0.09} & 0 & \cellcolor[HTML]{B7E1CD}\textbf{27.92} & \cellcolor[HTML]{B7E1CD}\textbf{63.33} & \cellcolor[HTML]{B7E1CD}\textbf{41.84} & \cellcolor[HTML]{B7E1CD}\textbf{43.26} & \cellcolor[HTML]{B7E1CD}\textbf{25.65} & \cellcolor[HTML]{B7E1CD}\textbf{16.73} & \cellcolor[HTML]{B7E1CD}\textbf{23.98} & \cellcolor[HTML]{B7E1CD}48.61 & \cellcolor[HTML]{B7E1CD}\textbf{52.28} & 92.43 & \cellcolor[HTML]{B7E1CD}\textbf{36.34} & 0 \\ \hline
\end{tabular}
}}
\caption{\textbf{Intial Training Ablations- adversarial $\ell_2$ regularization on CIFAR-10. }Accuracy of initially trained models on CIFAR-10 trained using different attacks as indicated in ``Train Attack" column measured across different attacks. $\ell_2$ regularization computed using single step optimization is also considered during initial training, with regularization strength $\lambda$.  Results where regularization improves over no regularization ($\lambda = 0$) by at least 1\% accuracy are highlighted in green, while results where regularization incurs at least a 1\% drop in accuracy are highlighted in red.  Best performing with respect to regularization strength are bolded.}
\label{app:IT_ablation_L2_CIF}

\end{table*}

\begin{table*}[]{\renewcommand{\arraystretch}{1.2}
\scalebox{0.73}{
\begin{tabular}{|l|c|c|cccccccccccc|cc|}
\hline
\begin{tabular}[c]{@{}l@{}}Train\\ Attack\end{tabular} & $\lambda$ & Clean & $\ell_2$ & $\ell_\infty$ & StAdv & ReColor & Gabor & Snow & Pixel & JPEG & Elastic & Wood & Glitch & \begin{tabular}[c]{@{}c@{}}Kaleid-\\ oscope\end{tabular} & Avg & Union \\ \hline
$\ell_2$ & 0 & \textbf{91.08} & 70.02 & 29.38 & 0.79 & 33.69 & 66.93 & 24.59 & 14.99 & 64.22 & 45.13 & 70.85 & \textbf{80.3} & 30.08 & 44.25 & 0.1 \\
$\ell_2$ & 0.05 & 90.15 & 69.61 & \cellcolor[HTML]{B7E1CD}32.39 & \cellcolor[HTML]{B7E1CD}1.88 & \cellcolor[HTML]{B7E1CD}40.16 & 65.99 & \cellcolor[HTML]{B7E1CD}27.04 & \cellcolor[HTML]{B7E1CD}19.33 & 64.8 & \cellcolor[HTML]{B7E1CD}47.01 & \cellcolor[HTML]{B7E1CD}72.13 & 79.82 & 30.18 & \cellcolor[HTML]{B7E1CD}45.86 & 0.72 \\
$\ell_2$ & 0.1 & \cellcolor[HTML]{F4CCCC}89.24 & 70.36 & \cellcolor[HTML]{B7E1CD}33.37 & \cellcolor[HTML]{B7E1CD}4.27 & \cellcolor[HTML]{B7E1CD}42.96 & \cellcolor[HTML]{F4CCCC}64.75 & \cellcolor[HTML]{B7E1CD}\textbf{29.09} & \cellcolor[HTML]{B7E1CD}19.85 & \cellcolor[HTML]{B7E1CD}66.28 & \cellcolor[HTML]{B7E1CD}\textbf{49.46} & \cellcolor[HTML]{B7E1CD}72.01 & 79.6 & \cellcolor[HTML]{B7E1CD}\textbf{32.48} & \cellcolor[HTML]{B7E1CD}47.04 & \cellcolor[HTML]{B7E1CD}1.92 \\
$\ell_2$ & 0.2 & \cellcolor[HTML]{F4C7C3}89.99 & \textbf{70.38} & \cellcolor[HTML]{B7E1CD}\textbf{34.56} & \cellcolor[HTML]{B7E1CD}\textbf{13.41} & \cellcolor[HTML]{B7E1CD}\textbf{48.99} & \textbf{67.64} & \cellcolor[HTML]{B7E1CD}\textbf{29.09} & \cellcolor[HTML]{B7E1CD}\textbf{22.57} & \cellcolor[HTML]{B7E1CD}\textbf{66.64} & \cellcolor[HTML]{B7E1CD}48.38 & \cellcolor[HTML]{B7E1CD}\textbf{73.31} & 80.07 & \cellcolor[HTML]{B7E1CD}32.33 & \cellcolor[HTML]{B7E1CD}\textbf{48.94} & \cellcolor[HTML]{B7E1CD}\textbf{5.4} \\ \hline
$\ell_\infty$ & 0 & \textbf{85.53} & \textbf{59.36} & 50.98 & 6.34 & 56.27 & \textbf{68.94} & 36.79 & 20.57 & 54.02 & 51 & \textbf{64.24} & \textbf{75.94} & 39.44 & 48.66 & 1.31 \\
$\ell_\infty$ & 0.05 & 84.57 & 58.68 & 51.28 & \cellcolor[HTML]{B7E1CD}7.82 & 55.74 & \cellcolor[HTML]{F4C7C3}65.31 & \cellcolor[HTML]{B7E1CD}38.58 & 21.17 & 54.41 & \cellcolor[HTML]{B7E1CD}\textbf{52.59} & 63.87 & \cellcolor[HTML]{F4C7C3}74.38 & 39.5 & 48.61 & 1.92 \\
$\ell_\infty$ & 0.1 & 84.98 & \cellcolor[HTML]{F4C7C3}57.76 & 51.52 & \cellcolor[HTML]{B7E1CD}11.45 & \cellcolor[HTML]{B7E1CD}57.69 & \cellcolor[HTML]{F4C7C3}67.39 & \cellcolor[HTML]{B7E1CD}\textbf{39.27} & \cellcolor[HTML]{B7E1CD}22.22 & 53.62 & 51.09 & \cellcolor[HTML]{F4C7C3}60.19 & \cellcolor[HTML]{F4C7C3}74.73 & \cellcolor[HTML]{B7E1CD}41.48 & 49.03 & \cellcolor[HTML]{B7E1CD}3.45 \\
$\ell_\infty$ & 0.2 & \cellcolor[HTML]{F4C7C3}82.58 & 58.36 & \textbf{51.53} & \cellcolor[HTML]{B7E1CD}\textbf{18.98} & \cellcolor[HTML]{B7E1CD}\textbf{62.12} & \cellcolor[HTML]{F4C7C3}67.18 & \cellcolor[HTML]{B7E1CD}39.22 & \cellcolor[HTML]{B7E1CD}\textbf{23.62} & \textbf{54.73} & \cellcolor[HTML]{B7E1CD}52 & 63.35 & \cellcolor[HTML]{F4C7C3}71.72 & \cellcolor[HTML]{B7E1CD}\textbf{43.18} & \cellcolor[HTML]{B7E1CD}\textbf{50.5} & \cellcolor[HTML]{B7E1CD}\textbf{5.08} \\ \hline
StAdv & 0 & \textbf{87.12} & 5.48 & 0.07 & 56.22 & 5.69 & 17.62 & \textbf{57.8} & 5.93 & 11.09 & \textbf{76.02} & \textbf{77.47} & 54.04 & \textbf{43.4} & 34.24 & 0.05 \\
StAdv & 0.05 & \cellcolor[HTML]{F4C7C3}75.6 & \cellcolor[HTML]{F4C7C3}3.52 & 0.02 & \cellcolor[HTML]{B7E1CD}69.34 & \cellcolor[HTML]{B7E1CD}23.98 & \cellcolor[HTML]{F4C7C3}15.36 & \cellcolor[HTML]{F4C7C3}37.22 & \cellcolor[HTML]{F4C7C3}3.69 & \cellcolor[HTML]{F4C7C3}9.73 & \cellcolor[HTML]{F4C7C3}55.75 & \cellcolor[HTML]{F4C7C3}66.64 & \cellcolor[HTML]{F4C7C3}37.79 & \cellcolor[HTML]{F4C7C3}26.08 & \cellcolor[HTML]{F4C7C3}29.09 & 0.02 \\
StAdv & 0.1 & \cellcolor[HTML]{F4C7C3}81.12 & \cellcolor[HTML]{B7E1CD}8.78 & \textbf{0.24} & \cellcolor[HTML]{B7E1CD}\textbf{69.82} & \cellcolor[HTML]{B7E1CD}15.75 & \cellcolor[HTML]{B7E1CD}27.34 & \cellcolor[HTML]{F4C7C3}48.4 & 5.1 & \cellcolor[HTML]{B7E1CD}\textbf{29.68} & \cellcolor[HTML]{F4C7C3}66.67 & 77.1 & 53.77 & \cellcolor[HTML]{F4C7C3}34.7 & \cellcolor[HTML]{B7E1CD}36.45 & \textbf{0.17} \\
StAdv & 0.2 & \cellcolor[HTML]{F4C7C3}84.4 & \cellcolor[HTML]{B7E1CD}\textbf{10.93} & 0.19 & \cellcolor[HTML]{B7E1CD}69.1 & \cellcolor[HTML]{B7E1CD}\textbf{28.03} & \cellcolor[HTML]{B7E1CD}\textbf{33.19} & \cellcolor[HTML]{F4C7C3}47.19 & \textbf{6.06} & \cellcolor[HTML]{B7E1CD}28.29 & \cellcolor[HTML]{F4C7C3}66.46 & \cellcolor[HTML]{F4C7C3}76.31 & \cellcolor[HTML]{B7E1CD}\textbf{58.5} & \cellcolor[HTML]{F4C7C3}40.51 & \cellcolor[HTML]{B7E1CD}\textbf{38.73} & 0.14 \\ \hline
ReColor & 0 & 93.61 & \textbf{37.17} & 7.03 & 0.01 & 67.48 & 55.53 & 37.14 & 8.27 & 45.36 & 35.55 & 60.92 & 77.2 & \textbf{32.28} & 38.66 & 0 \\
ReColor & 0.05 & \textbf{93.68} & \cellcolor[HTML]{F4C7C3}35.65 & \textbf{7.10} & 0.07 & \cellcolor[HTML]{B7E1CD}69.73 & \cellcolor[HTML]{F4C7C3}51.67 & 36.85 & 9.26 & 44.65 & \cellcolor[HTML]{B7E1CD}\textbf{38.42} & 60.68 & \textbf{77.36} & 31.85 & 38.61 & 0.02 \\
ReColor & 0.1 & 93.63 & \cellcolor[HTML]{F4C7C3}33.32 & 7.00 & \cellcolor[HTML]{B7E1CD}19.46 & \cellcolor[HTML]{B7E1CD}77.93 & \cellcolor[HTML]{B7E1CD}\textbf{57.73} & 36.18 & \cellcolor[HTML]{B7E1CD}17.35 & \cellcolor[HTML]{B7E1CD}47.63 & 34.61 & \textbf{61.9} & 76.26 & \cellcolor[HTML]{F4C7C3}27.3 & \cellcolor[HTML]{B7E1CD}41.39 & \cellcolor[HTML]{B7E1CD}\textbf{2.22} \\
ReColor & 0.2 & 92.67 & \cellcolor[HTML]{F4C7C3}29.88 & \cellcolor[HTML]{F4C7C3}5.83 & \cellcolor[HTML]{B7E1CD}\textbf{30.85} & \cellcolor[HTML]{B7E1CD}\textbf{83.95} & 55.41 & \textbf{37.24} & \cellcolor[HTML]{B7E1CD}\textbf{18.71} & \cellcolor[HTML]{B7E1CD}\textbf{48.31} & 34.9 & 60.33 & \cellcolor[HTML]{F4C7C3}74.58 & \cellcolor[HTML]{F4C7C3}27.93 & \cellcolor[HTML]{B7E1CD}\textbf{42.33} & \cellcolor[HTML]{B7E1CD}2.07 \\ \hline
Gabor & 0 & \textbf{94.08} & 0.3 & 0.01 & 0.01 & 4.43 & \textbf{92.39} & 16.96 & 8.96 & 2.08 & 2.31 & 17.99 & 41.61 & \textbf{11.87} & 16.58 & 0 \\
Gabor & 0.05 & 93.84 & 0.41 & 0.01 & 0.17 & \cellcolor[HTML]{B7E1CD}20.82 & 91.9 & 17.71 & \cellcolor[HTML]{B7E1CD}14.23 & \cellcolor[HTML]{B7E1CD}7.61 & \cellcolor[HTML]{B7E1CD}6.36 & \cellcolor[HTML]{F4C7C3}16.61 & \cellcolor[HTML]{B7E1CD}45.52 & \cellcolor[HTML]{F4C7C3}10.71 & \cellcolor[HTML]{B7E1CD}19.34 & 0 \\
Gabor & 0.1 & 93.69 & 1.14 & \textbf{0.03} & \cellcolor[HTML]{B7E1CD}4.47 & \cellcolor[HTML]{B7E1CD}45.21 & \cellcolor[HTML]{F4C7C3}91.2 & \cellcolor[HTML]{B7E1CD}22.26 & \cellcolor[HTML]{B7E1CD}27.21 & \cellcolor[HTML]{B7E1CD}22.28 & \cellcolor[HTML]{B7E1CD}19.2 & \cellcolor[HTML]{B7E1CD}19.73 & \cellcolor[HTML]{B7E1CD}47.75 & \cellcolor[HTML]{F4C7C3}9.99 & \cellcolor[HTML]{B7E1CD}25.87 & 0 \\
Gabor & 0.2 & 93.61 & \cellcolor[HTML]{B7E1CD}\textbf{1.35} & 0.02 & \cellcolor[HTML]{B7E1CD}\textbf{13.59} & \cellcolor[HTML]{B7E1CD}\textbf{57.17} & \cellcolor[HTML]{F4C7C3}90.85 & \cellcolor[HTML]{B7E1CD}\textbf{29.97} & \cellcolor[HTML]{B7E1CD}\textbf{33.81} & \cellcolor[HTML]{B7E1CD}\textbf{31.63} & \cellcolor[HTML]{B7E1CD}\textbf{25.28} & \cellcolor[HTML]{B7E1CD}\textbf{22.52} & \cellcolor[HTML]{B7E1CD}\textbf{52.4} & \cellcolor[HTML]{F4C7C3}9.08 & \cellcolor[HTML]{B7E1CD}\textbf{30.64} & 0 \\ \hline
Snow & 0 & \textbf{95.89} & 0.05 & 0 & 0.01 & 2.63 & \textbf{30.13} & \textbf{92.02} & 7.23 & 0.9 & 15.17 & 31.49 & 47.98 & 22.39 & 20.83 & 0 \\
Snow & 0.05 & \cellcolor[HTML]{F4C7C3}89.84 & 0.12 & 0 & 0.01 & \cellcolor[HTML]{F4C7C3}0.81 & \cellcolor[HTML]{F4C7C3}24.79 & \cellcolor[HTML]{F4C7C3}82.66 & \cellcolor[HTML]{F4C7C3}0.95 & 0.14 & 14.89 & \cellcolor[HTML]{F4C7C3}28.76 & \cellcolor[HTML]{F4C7C3}35.46 & \cellcolor[HTML]{F4C7C3}21.3 & \cellcolor[HTML]{F4C7C3}17.49 & 0 \\
Snow & 0.1 & \cellcolor[HTML]{F4C7C3}88.07 & \textbf{0.28} & 0 & 0.18 & \cellcolor[HTML]{F4C7C3}1.39 & \cellcolor[HTML]{F4C7C3}18.33 & \cellcolor[HTML]{F4C7C3}83.28 & \cellcolor[HTML]{F4C7C3}1.12 & 0.88 & \cellcolor[HTML]{B7E1CD}29.73 & \cellcolor[HTML]{B7E1CD}\textbf{56.23} & \cellcolor[HTML]{F4C7C3}27.04 & \cellcolor[HTML]{F4C7C3}19.81 & 19.86 & 0 \\
Snow & 0.2 & \cellcolor[HTML]{F4C7C3}94.56 & 0.21 & 0 & \cellcolor[HTML]{B7E1CD}\textbf{29.86} & \cellcolor[HTML]{B7E1CD}\textbf{42.55} & \cellcolor[HTML]{F4C7C3}23.26 & \cellcolor[HTML]{F4C7C3}90.96 & \cellcolor[HTML]{B7E1CD}\textbf{14.4} & \cellcolor[HTML]{B7E1CD}\textbf{15.56} & \cellcolor[HTML]{B7E1CD}\textbf{49.51} & \cellcolor[HTML]{B7E1CD}50.92 & \cellcolor[HTML]{B7E1CD}\textbf{50.35} & \cellcolor[HTML]{B7E1CD}\textbf{26.23} & \cellcolor[HTML]{B7E1CD}\textbf{32.82} & 0 \\ \hline
Pixel & 0 & \textbf{94.76} & \textbf{0.07} & 0 & 0.01 & 8.87 & 57.65 & 36.6 & 88.35 & 1.77 & 14.52 & 38.18 & \textbf{67.5} & 16.21 & 27.48 & 0 \\
Pixel & 0.05 & 93.81 & 0.01 & 0 & 0.03 & \cellcolor[HTML]{B7E1CD}12.65 & \cellcolor[HTML]{B7E1CD}\textbf{59.31} & 36.8 & 88.88 & \cellcolor[HTML]{B7E1CD}3.83 & 14.19 & \cellcolor[HTML]{B7E1CD}40.33 & \cellcolor[HTML]{F4C7C3}62.99 & 16.9 & 27.99 & 0 \\
Pixel & 0.1 & \cellcolor[HTML]{F4C7C3}93.54 & 0.02 & 0 & 0.22 & \cellcolor[HTML]{B7E1CD}15.32 & \cellcolor[HTML]{F4C7C3}54.00 & \cellcolor[HTML]{B7E1CD}38.37 & \cellcolor[HTML]{B7E1CD}\textbf{89.45} & \cellcolor[HTML]{B7E1CD}3.9 & 14.35 & 38.81 & \cellcolor[HTML]{F4C7C3}63.48 & \cellcolor[HTML]{B7E1CD}17.75 & 27.97 & 0 \\
Pixel & 0.2 & \cellcolor[HTML]{F4C7C3}93.06 & 0.06 & \textbf{0.01} & \cellcolor[HTML]{B7E1CD}\textbf{3.59} & \cellcolor[HTML]{B7E1CD}\textbf{22.18} & \cellcolor[HTML]{F4C7C3}50.55 & \cellcolor[HTML]{B7E1CD}\textbf{40.96} & \cellcolor[HTML]{B7E1CD}89.39 & \cellcolor[HTML]{B7E1CD}\textbf{9.85} & \cellcolor[HTML]{B7E1CD}\textbf{16.49} & \cellcolor[HTML]{B7E1CD}\textbf{40.79} & \cellcolor[HTML]{F4C7C3}61.44 & \cellcolor[HTML]{B7E1CD}\textbf{18.25} & \cellcolor[HTML]{B7E1CD}\textbf{29.46} & 0 \\ \hline
JPEG & 0 & \textbf{90.26} & 56.48 & 21.5 & 0.52 & 34.74 & 68.59 & 21.12 & 10.57 & 73.46 & 40 & 74.3 & 78.35 & \textbf{28.02} & 42.3 & 0.09 \\
JPEG & 0.05 & 89.52 & \cellcolor[HTML]{B7E1CD}57.89 & \cellcolor[HTML]{B7E1CD}23.81 & 1.19 & \cellcolor[HTML]{B7E1CD}37.2 & \cellcolor[HTML]{B7E1CD}71.84 & 21.88 & \cellcolor[HTML]{B7E1CD}13.16 & 73.9 & \cellcolor[HTML]{F4C7C3}38.82 & 74.05 & \textbf{78.61} & 27.97 & \cellcolor[HTML]{B7E1CD}43.36 & 0.46 \\
JPEG & 0.1 & \cellcolor[HTML]{F4C7C3}89.09 & \cellcolor[HTML]{B7E1CD}58.18 & \cellcolor[HTML]{B7E1CD}25.89 & \cellcolor[HTML]{B7E1CD}4.28 & \cellcolor[HTML]{B7E1CD}40.45 & \cellcolor[HTML]{B7E1CD}\textbf{74.3} & 20.85 & \cellcolor[HTML]{B7E1CD}14.35 & \textbf{74.42} & 40.82 & 75.08 & 78.07 & \cellcolor[HTML]{F4C7C3}25.35 & \cellcolor[HTML]{B7E1CD}44.34 & \cellcolor[HTML]{B7E1CD}1.42 \\
JPEG & 0.2 & \cellcolor[HTML]{F4C7C3}87.98 & \cellcolor[HTML]{B7E1CD}\textbf{58.94} & \cellcolor[HTML]{B7E1CD}\textbf{26.85} & \cellcolor[HTML]{B7E1CD}\textbf{10.19} & \cellcolor[HTML]{B7E1CD}\textbf{44.26} & \cellcolor[HTML]{B7E1CD}70.96 & \cellcolor[HTML]{B7E1CD}\textbf{22.18} & \cellcolor[HTML]{B7E1CD}\textbf{16.37} & 74.17 & \cellcolor[HTML]{B7E1CD}\textbf{41.13} & \cellcolor[HTML]{B7E1CD}\textbf{76.03} & 77.65 & \cellcolor[HTML]{F4C7C3}24.9 & \cellcolor[HTML]{B7E1CD}\textbf{45.3} & \cellcolor[HTML]{B7E1CD}\textbf{2.74} \\ \hline
Elastic & 0 & 94.06 & 1.32 & \textbf{0.02} & \textbf{7.5} & 7.92 & \textbf{25.41} & \textbf{53.68} & 9.16 & 11.2 & 79.47 & 72.94 & \textbf{50.24} & \textbf{33.1} & \textbf{29.33} & \textbf{0.01} \\
Elastic & 0.05 & 93.88 & \textbf{1.49} & 0.01 & 7.27 & \textbf{8.18} & \cellcolor[HTML]{F4C7C3}23.48 & \cellcolor[HTML]{F4C7C3}51.12 & \textbf{9.32} & \textbf{11.5} & 79.77 & 73.88 & 50.2 & \cellcolor[HTML]{F4C7C3}30.32 & 28.88 & \textbf{0.01} \\
Elastic & 0.1 & \textbf{94.13} & 1.2 & 0 & 7.25 & 7.87 & \cellcolor[HTML]{F4C7C3}20.68 & 53.62 & 8.52 & 10.93 & \textbf{79.93} & \cellcolor[HTML]{B7E1CD}\textbf{74.77} & 49.44 & \cellcolor[HTML]{F4C7C3}30.54 & 28.73 & 0 \\
Elastic & 0.2 & 93.79 & 1.23 & 0.01 & \cellcolor[HTML]{F4C7C3}6.49 & \cellcolor[HTML]{F4C7C3}6.77 & \cellcolor[HTML]{F4C7C3}21.8 & 53.29 & \cellcolor[HTML]{F4C7C3}7.64 & 10.75 & 79.76 & 73.77 & 49.72 & 32.75 & 28.67 & 0 \\ \hline
Wood & 0 & 93.57 & 0.03 & 0 & 0.4 & 1.27 & 18.47 & 39.44 & 3.43 & 0.37 & 33.68 & \textbf{93.04} & 28.04 & 14.11 & 19.36 & 0 \\
Wood & 0.05 & 92.78 & 0.01 & 0 & 0.65 & 1.65 & \cellcolor[HTML]{F4C7C3}16.47 & \cellcolor[HTML]{B7E1CD}41.6 & \cellcolor[HTML]{F4C7C3}2.41 & 0.51 & \cellcolor[HTML]{B7E1CD}35.08 & 92.36 & 27.79 & \cellcolor[HTML]{B7E1CD}\textbf{17.15} & 19.64 & 0 \\
Wood & 0.1 & 92.59 & 0.02 & 0 & \cellcolor[HTML]{B7E1CD}3.3 & \cellcolor[HTML]{B7E1CD}\textbf{5.25} & 18.41 & \cellcolor[HTML]{B7E1CD}43.33 & 4.07 & \cellcolor[HTML]{B7E1CD}1.67 & \cellcolor[HTML]{B7E1CD}37.47 & \cellcolor[HTML]{F4C7C3}92.02 & \cellcolor[HTML]{B7E1CD}29.95 & 14.92 & \cellcolor[HTML]{B7E1CD}20.87 & 0 \\
Wood & 0.2 & \cellcolor[HTML]{F4C7C3}91.92 & \textbf{0.03} & 0 & \cellcolor[HTML]{B7E1CD}\textbf{8.21} & \cellcolor[HTML]{B7E1CD}\textbf{8.76} & \cellcolor[HTML]{F4C7C3}\textbf{15.98} & \cellcolor[HTML]{B7E1CD}42.64 & \cellcolor[HTML]{B7E1CD}\textbf{5.12} & \cellcolor[HTML]{B7E1CD}\textbf{2.58} & \cellcolor[HTML]{B7E1CD}\textbf{38.15} & \cellcolor[HTML]{F4C7C3}91.5 & \cellcolor[HTML]{B7E1CD}\textbf{32.01} & 14.15 & \cellcolor[HTML]{B7E1CD}\textbf{21.59} & 0 \\ \hline
Glitch & 0 & 93.26 & 0.02 & 0 & 0 & 11.49 & 49.03 & 24.44 & 12.47 & 3.14 & 10.89 & 31.99 & 90.77 & \textbf{16.61} & 20.9 & 0 \\
Glitch & 0.05 & 92.38 & 0.01 & 0 & 0.03 & \cellcolor[HTML]{B7E1CD}16.86 & 48.73 & \cellcolor[HTML]{F4C7C3}23.12 & \cellcolor[HTML]{F4C7C3}9.96 & \cellcolor[HTML]{B7E1CD}5.16 & \cellcolor[HTML]{B7E1CD}14.01 & 31.82 & 90.8 & 15.63 & 21.34 & 0 \\
Glitch & 0.1 & \cellcolor[HTML]{F4C7C3}92.14 & 0.02 & 0 & 0.23 & \cellcolor[HTML]{B7E1CD}24.38 & 48.71 & \cellcolor[HTML]{F4C7C3}22.89 & \cellcolor[HTML]{F4C7C3}10.69 & \cellcolor[HTML]{B7E1CD}5.38 & \cellcolor[HTML]{B7E1CD}17.19 & \textbf{32.08} & \textbf{91.19} & \cellcolor[HTML]{F4C7C3}14.59 & \cellcolor[HTML]{B7E1CD}22.28 & 0 \\
Glitch & 0.2 & 92.62 & \textbf{0.04} & 0 & \cellcolor[HTML]{B7E1CD}\textbf{5.11} & \cellcolor[HTML]{B7E1CD}\textbf{39.62} & \cellcolor[HTML]{B7E1CD}\textbf{56.19} & \cellcolor[HTML]{B7E1CD}\textbf{26.18} & \cellcolor[HTML]{B7E1CD}\textbf{18.76} & \cellcolor[HTML]{B7E1CD}\textbf{14.5} & \cellcolor[HTML]{B7E1CD}\textbf{26.28} & 31.86 & 90.95 & 16.25 & \cellcolor[HTML]{B7E1CD}\textbf{27.15} & 0 \\ \hline
\begin{tabular}[c]{@{}l@{}}Kaleid-\\ oscope\end{tabular} & 0 & 96.03 & 0 & 0 & 0 & 0.8 & 39.49 & \textbf{40.94} & 5.75 & 0.02 & 2.4 & 43.08 & 33.71 & 91.97 & \textbf{21.51} & 0 \\
\begin{tabular}[c]{@{}l@{}}Kaleid-\\ oscope\end{tabular} & 0.05 & \textbf{96.31} & 0 & 0 & 0 & \textbf{1.59} & 39 & \cellcolor[HTML]{F4C7C3}35.73 & 6.29 & 0.02 & 1.88 & \cellcolor[HTML]{F4C7C3}40.31 & \cellcolor[HTML]{B7E1CD}36.53 & \cellcolor[HTML]{B7E1CD}\textbf{93.07} & 21.2 & 0 \\
\begin{tabular}[c]{@{}l@{}}Kaleid-\\ oscope\end{tabular} & 0.1 & 96.22 & 0 & 0 & 0 & 1.4 & \textbf{39.98} & \cellcolor[HTML]{F4C7C3}33.78 & \cellcolor[HTML]{B7E1CD}\textbf{6.75} & 0.01 & 1.9 & \textbf{43.17} & \cellcolor[HTML]{B7E1CD}35.1 & 92.83 & 21.24 & 0 \\
\begin{tabular}[c]{@{}l@{}}Kaleid-\\ oscope\end{tabular} & 0.2 & 96.01 & 0 & 0 & 0 & 1.33 & \cellcolor[HTML]{F4C7C3}34.92 & \cellcolor[HTML]{F4C7C3}35.41 & 6.01 & \textbf{0.1} & \textbf{2.61} & \cellcolor[HTML]{F4C7C3}38.35 & \cellcolor[HTML]{B7E1CD}\textbf{36.8} & 92.7 & 20.69 & 0 \\ \hline
\end{tabular}}}
\caption{\textbf{Intial Training Ablations- variation regularization on CIFAR-10. }Accuracy of initially trained models on CIFAR-10 trained using different attacks as indicated in ``Train Attack" column measured across different attacks.  Variation regularization computed using single step optimization is also considered during initial training, with regularization strength $\lambda$.  Results where regularization improves over no regularization ($\lambda = 0$) by at least 1\% accuracy are highlighted in green, while results where regularization incurs at least a 1\% drop in accuracy are highlighted in red.  Best performing with respect to regularization strength are bolded.}
\label{app:IT_ablation_VR_CIF}
\end{table*}

\begin{table*}[]{\renewcommand{\arraystretch}{1.2}
\scalebox{0.73}{
\begin{tabular}{|l|c|c|cccccccccccc|cc|}
\hline
\begin{tabular}[c]{@{}l@{}}Train\\ Attack\end{tabular}     & $\lambda$ & Clean   & $\ell_2$     & $\ell_\infty$   & StAdv  & ReColor& Gabor  & Snow   & Pixel  & JPEG   & Elastic& Wood   & Glitch & \begin{tabular}[c]{@{}c@{}}Kaleid-\\ oscope\end{tabular} & Avg    & Union \\ \hline
$\ell_2$ & 0  & \textbf{91.08} & 70.02  & 29.38  & 0.79   & 33.69  & 66.93  & 24.59  & 14.99  & 64.22  & 45.13  & 70.85  & \textbf{80.3}    & 30.08     & 44.25  & 0.1   \\
$\ell_2$ & 1  & 90.4    & \textbf{70.22}   & \cellcolor[HTML]{B7E1CD}\textbf{32.73} & \cellcolor[HTML]{B7E1CD}6.17    & \cellcolor[HTML]{B7E1CD}41.96   & \cellcolor[HTML]{B7E1CD}\textbf{69.81} & 25.46  & \cellcolor[HTML]{B7E1CD}18.13   & \cellcolor[HTML]{B7E1CD}65.98   & 45.71  & \cellcolor[HTML]{B7E1CD}\textbf{72.72} & 79.51  & 30.13     & \cellcolor[HTML]{B7E1CD}46.54   & \cellcolor[HTML]{B7E1CD}2.48   \\
$\ell_2$ & 2  & \cellcolor[HTML]{F4CCCC}89.45 & 69.44  & \cellcolor[HTML]{B7E1CD}31.97   & \cellcolor[HTML]{B7E1CD}12.15   & \cellcolor[HTML]{B7E1CD}51.04   & \cellcolor[HTML]{B7E1CD}69.03   & \cellcolor[HTML]{B7E1CD}25.71   & \cellcolor[HTML]{B7E1CD}19.54   & \cellcolor[HTML]{B7E1CD}\textbf{66.11} & \cellcolor[HTML]{B7E1CD}\textbf{47.5}  & 71.44  & \cellcolor[HTML]{F4CCCC}79.25   & 30.8      & \cellcolor[HTML]{B7E1CD}47.83   & \cellcolor[HTML]{B7E1CD}3.21   \\
$\ell_2$ & 5  & \cellcolor[HTML]{F4CCCC}88.34 & \cellcolor[HTML]{F4CCCC}66.66   & \cellcolor[HTML]{F4CCCC}27.41   & \cellcolor[HTML]{B7E1CD}\textbf{26.22} & \cellcolor[HTML]{B7E1CD}\textbf{60.22} & \cellcolor[HTML]{B7E1CD}69.16   & \cellcolor[HTML]{B7E1CD}\textbf{26.67} & \cellcolor[HTML]{B7E1CD}\textbf{22.57} & 64.08  & \cellcolor[HTML]{B7E1CD}46.83   & 71.14  & \cellcolor[HTML]{F4CCCC}77.6    & \cellcolor[HTML]{B7E1CD}\textbf{31.36}    & \cellcolor[HTML]{B7E1CD}\textbf{49.16} & \cellcolor[HTML]{B7E1CD}\textbf{6.23} \\ \hline
$\ell_\infty$      & 0  & \textbf{85.53} & 59.36  & \textbf{50.98}   & 6.34   & 56.27  & \textbf{68.94}   & 36.79  & 20.57  & 54.02  & 51     & 64.24  & 75.94  & \textbf{39.44}    & 48.66  & 1.31  \\
$\ell_\infty$      & 1  & 85.28   & \cellcolor[HTML]{B7E1CD}\textbf{63.66} & 50.72  & \cellcolor[HTML]{B7E1CD}10.51   & \cellcolor[HTML]{B7E1CD}60.1    & \cellcolor[HTML]{F4C7C3}66.63   & 36.67  & \cellcolor[HTML]{B7E1CD}21.66   & \cellcolor[HTML]{B7E1CD}60.69   & 51.72  & \cellcolor[HTML]{B7E1CD}\textbf{65.61} & \textbf{76.04}   & \cellcolor[HTML]{F4C7C3}38.41      & \cellcolor[HTML]{B7E1CD}50.2    & \cellcolor[HTML]{B7E1CD}3.01   \\
$\ell_\infty$      & 2  & \cellcolor[HTML]{F4C7C3}81.97 & \cellcolor[HTML]{B7E1CD}63.5    & \cellcolor[HTML]{F4C7C3}48.24   & \cellcolor[HTML]{B7E1CD}17.33   & \cellcolor[HTML]{B7E1CD}64.72   & \cellcolor[HTML]{F4C7C3}65.73   & \cellcolor[HTML]{B7E1CD}\textbf{38.7}  & \cellcolor[HTML]{B7E1CD}24.49   & \cellcolor[HTML]{B7E1CD}\textbf{61.25} & \textbf{51.96}   & 63.63  & \cellcolor[HTML]{F4C7C3}72.82   & 39.16     & \cellcolor[HTML]{B7E1CD}50.96   & \cellcolor[HTML]{B7E1CD}4.97   \\
$\ell_\infty$      & 5  & \cellcolor[HTML]{F4C7C3}78.04 & 60.28  & \cellcolor[HTML]{F4C7C3}40.59   & \cellcolor[HTML]{B7E1CD}\textbf{42.25} & \cellcolor[HTML]{B7E1CD}\textbf{70}    & \cellcolor[HTML]{F4C7C3}67.06   & \cellcolor[HTML]{F4C7C3}33.4    & \cellcolor[HTML]{B7E1CD}\textbf{26.57} & \cellcolor[HTML]{B7E1CD}60.07   & \cellcolor[HTML]{F4C7C3}49.21   & 64.61  & \cellcolor[HTML]{F4C7C3}67.08   & \cellcolor[HTML]{F4C7C3}38.43      & \cellcolor[HTML]{B7E1CD}\textbf{51.63} & \cellcolor[HTML]{B7E1CD}\textbf{8.36} \\ \hline
StAdv     & 0  & \textbf{87.12} & 5.48   & 0.07   & 56.22  & 5.69   & 17.62  & \textbf{57.8}    & 5.93   & 11.09  & \textbf{76.02}   & \textbf{77.47}   & 54.04  & \textbf{43.4}     & 34.24  & 0.05  \\
StAdv     & 1  & \cellcolor[HTML]{F4C7C3}81.36 & \cellcolor[HTML]{B7E1CD}24.44   & \cellcolor[HTML]{B7E1CD}1.29    & \cellcolor[HTML]{B7E1CD}\textbf{73.69} & \cellcolor[HTML]{B7E1CD}53.63   & \cellcolor[HTML]{B7E1CD}40.94   & \cellcolor[HTML]{F4C7C3}39.39   & \cellcolor[HTML]{B7E1CD}10.47   & \cellcolor[HTML]{B7E1CD}39.05   & \cellcolor[HTML]{F4C7C3}65.18   & \cellcolor[HTML]{F4C7C3}72.43   & \cellcolor[HTML]{B7E1CD}62.43   & \cellcolor[HTML]{F4C7C3}36.21      & \cellcolor[HTML]{B7E1CD}43.26   & 0.88  \\
StAdv     & 2  & \cellcolor[HTML]{F4C7C3}82.8  & \cellcolor[HTML]{B7E1CD}\textbf{38.42} & \cellcolor[HTML]{B7E1CD}\textbf{3.9}   & \cellcolor[HTML]{B7E1CD}71.03   & \cellcolor[HTML]{B7E1CD}\textbf{55.3}  & \cellcolor[HTML]{B7E1CD}\textbf{51.03} & \cellcolor[HTML]{F4C7C3}39.71   & \cellcolor[HTML]{B7E1CD}\textbf{12.12} & \cellcolor[HTML]{B7E1CD}\textbf{50.4}  & \cellcolor[HTML]{F4C7C3}63.57   & \cellcolor[HTML]{F4C7C3}74.24   & \cellcolor[HTML]{B7E1CD}\textbf{68.98} & \cellcolor[HTML]{F4C7C3}38.32      & \cellcolor[HTML]{B7E1CD}\textbf{47.25} & \cellcolor[HTML]{B7E1CD}\textbf{2.11} \\ \hline
ReColor   & 0  & \textbf{93.61} & 37.17  & 7.03   & 0.01   & 67.48  & 55.53  & \textbf{37.14}   & 8.27   & 45.36  & 35.55  & 60.92  & 77.2   & 32.28     & 38.66  & 0     \\
ReColor   & 1  & 92.96   & \cellcolor[HTML]{B7E1CD}51.36   & \cellcolor[HTML]{B7E1CD}13.58   & \cellcolor[HTML]{B7E1CD}5.96    & \cellcolor[HTML]{B7E1CD}73.47   & \cellcolor[HTML]{B7E1CD}63.31   & \cellcolor[HTML]{F4C7C3}31.94   & 8.73   & \cellcolor[HTML]{B7E1CD}\textbf{59.67} & \cellcolor[HTML]{B7E1CD}42.6    & \cellcolor[HTML]{B7E1CD}65.49   & \cellcolor[HTML]{B7E1CD}\textbf{78.37} & \cellcolor[HTML]{B7E1CD}34.22      & \cellcolor[HTML]{B7E1CD}44.06   & \cellcolor[HTML]{B7E1CD}1.43   \\
ReColor   & 2  & \cellcolor[HTML]{F4C7C3}92.34 & \cellcolor[HTML]{B7E1CD}53.34   & \cellcolor[HTML]{B7E1CD}14.88   & \cellcolor[HTML]{B7E1CD}17.02   & \cellcolor[HTML]{B7E1CD}78.21   & \cellcolor[HTML]{B7E1CD}63.32   & \cellcolor[HTML]{F4C7C3}34.31   & \cellcolor[HTML]{B7E1CD}13.28   & \cellcolor[HTML]{B7E1CD}58.69   & \cellcolor[HTML]{B7E1CD}44.23   & \cellcolor[HTML]{B7E1CD}65.48   & 77.02  & \cellcolor[HTML]{B7E1CD}\textbf{37.79}    & \cellcolor[HTML]{B7E1CD}46.46   & \cellcolor[HTML]{B7E1CD}3.7    \\
ReColor   & 5  & \cellcolor[HTML]{F4C7C3}86.49 & \cellcolor[HTML]{B7E1CD}\textbf{54.92} & \cellcolor[HTML]{B7E1CD}\textbf{16.5}  & \cellcolor[HTML]{B7E1CD}\textbf{44.41} & \cellcolor[HTML]{B7E1CD}\textbf{78.3}  & \cellcolor[HTML]{B7E1CD}\textbf{68.38} & \cellcolor[HTML]{F4C7C3}32.22   & \cellcolor[HTML]{B7E1CD}\textbf{27.48} & \cellcolor[HTML]{B7E1CD}56.59   & \cellcolor[HTML]{B7E1CD}\textbf{45.33} & \cellcolor[HTML]{B7E1CD}\textbf{65.82} & \cellcolor[HTML]{F4C7C3}73.44   & \cellcolor[HTML]{F4C7C3}27.4& \cellcolor[HTML]{B7E1CD}\textbf{49.23} & \cellcolor[HTML]{B7E1CD}\textbf{6.81} \\ \hline
Gabor     & 0  & \textbf{94.08} & 0.3    & 0.01   & 0.01   & 4.43   & \textbf{92.39}   & 16.96  & 8.96   & 2.08   & 2.31   & 17.99  & 41.61  & 11.87     & 16.58  & 0     \\
Gabor     & 1  & \cellcolor[HTML]{F4C7C3}91.84 & \cellcolor[HTML]{B7E1CD}\textbf{24.27} & \textbf{0.69}    & \cellcolor[HTML]{B7E1CD}\textbf{10.02} & \cellcolor[HTML]{B7E1CD}\textbf{41.71} & \cellcolor[HTML]{F4C7C3}87.58   & \cellcolor[HTML]{B7E1CD}\textbf{23.11} & \cellcolor[HTML]{B7E1CD}\textbf{13.99} & \cellcolor[HTML]{B7E1CD}\textbf{32.19} & \cellcolor[HTML]{B7E1CD}\textbf{27.95} & \cellcolor[HTML]{B7E1CD}\textbf{47.51} & \cellcolor[HTML]{B7E1CD}\textbf{62.62} & \cellcolor[HTML]{B7E1CD}\textbf{15.27}    & \cellcolor[HTML]{B7E1CD}\textbf{32.24} & \textbf{0.24}   \\
Gabor     & 2  & \cellcolor[HTML]{F4C7C3}90.58 & \cellcolor[HTML]{B7E1CD}9.65    & 0.06   & 0.17   & \cellcolor[HTML]{B7E1CD}17.39   & \cellcolor[HTML]{F4C7C3}87.02   & 16.34  & \cellcolor[HTML]{F4C7C3}5.68    & \cellcolor[HTML]{B7E1CD}9.84    & \cellcolor[HTML]{B7E1CD}15.66   & \cellcolor[HTML]{B7E1CD}42.22   & \cellcolor[HTML]{B7E1CD}52.81   & \cellcolor[HTML]{B7E1CD}14.41      & \cellcolor[HTML]{B7E1CD}22.6    & 0     \\ \hline
Snow      & 0  & \textbf{95.89} & 0.05   & 0      & 0.01   & 2.63   & 30.13  & \textbf{92.02}   & 7.23   & 0.9    & 15.17  & 31.49  & 47.98  & 22.39     & 20.83  & 0     \\
Snow      & 0.5& \cellcolor[HTML]{F4C7C3}90.18 & \cellcolor[HTML]{B7E1CD}26.34   & 0.63   & \cellcolor[HTML]{B7E1CD}4.74    & \cellcolor[HTML]{B7E1CD}22.47   & \cellcolor[HTML]{B7E1CD}\textbf{70.47} & \cellcolor[HTML]{F4C7C3}79.6    & 7.67   & \cellcolor[HTML]{B7E1CD}25.69   & \cellcolor[HTML]{B7E1CD}39.56   & \cellcolor[HTML]{B7E1CD}46.21   & \cellcolor[HTML]{B7E1CD}\textbf{69.51} & \cellcolor[HTML]{F4C7C3}14.55      & \cellcolor[HTML]{B7E1CD}33.95   & 0.13  \\
Snow      & 1  & \cellcolor[HTML]{F4C7C3}86.82 & \cellcolor[HTML]{B7E1CD}25.02   & 0.93   & \cellcolor[HTML]{B7E1CD}\textbf{14.74} & \cellcolor[HTML]{B7E1CD}\textbf{40.4}  & \cellcolor[HTML]{B7E1CD}65.19   & \cellcolor[HTML]{F4C7C3}74.04   & \cellcolor[HTML]{B7E1CD}\textbf{8.76}  & \cellcolor[HTML]{B7E1CD}31.82   & \cellcolor[HTML]{B7E1CD}\textbf{41.78} & \cellcolor[HTML]{B7E1CD}\textbf{50.21} & \cellcolor[HTML]{B7E1CD}67.76   & 22.3      & \cellcolor[HTML]{B7E1CD}\textbf{36.91} & 0.36  \\
Snow      & 2  & \cellcolor[HTML]{F4C7C3}75.97 & \cellcolor[HTML]{B7E1CD}\textbf{38.33} & \cellcolor[HTML]{B7E1CD}\textbf{8.48}  & \cellcolor[HTML]{B7E1CD}3.21    & \cellcolor[HTML]{B7E1CD}24.12   & \cellcolor[HTML]{B7E1CD}57.66   & \cellcolor[HTML]{F4C7C3}66.64   & 8.1    & \cellcolor[HTML]{B7E1CD}\textbf{34.75} & \cellcolor[HTML]{B7E1CD}39.83   & \cellcolor[HTML]{B7E1CD}47.16   & \cellcolor[HTML]{B7E1CD}60.87   & \cellcolor[HTML]{B7E1CD}\textbf{25.36}    & \cellcolor[HTML]{B7E1CD}34.54   & \textbf{0.85}   \\ \hline
Pixel     & 0  & \textbf{94.76} & 0.07   & 0      & 0.01   & 8.87   & 57.65  & \textbf{36.6}    & \textbf{88.35}   & 1.77   & 14.52  & 38.18  & 67.5   & 16.21     & 27.48  & 0     \\
Pixel     & 1  & \cellcolor[HTML]{F4C7C3}88.35 & \cellcolor[HTML]{B7E1CD}11.76   & 0.16   & 0.38   & \cellcolor[HTML]{B7E1CD}23.23   & \cellcolor[HTML]{B7E1CD}\textbf{63.75} & \cellcolor[HTML]{F4C7C3}28.4    & \cellcolor[HTML]{F4C7C3}71.23   & \cellcolor[HTML]{B7E1CD}7.2     & \cellcolor[HTML]{B7E1CD}31.88   & \cellcolor[HTML]{B7E1CD}47.73   & \cellcolor[HTML]{B7E1CD}\textbf{71.87} & \cellcolor[HTML]{B7E1CD}21.77      & \cellcolor[HTML]{B7E1CD}31.61   & 0.02  \\
Pixel     & 2  & \cellcolor[HTML]{F4C7C3}79.87 & \cellcolor[HTML]{B7E1CD}\textbf{22.8}  & \cellcolor[HTML]{B7E1CD}\textbf{3.38}  & \cellcolor[HTML]{B7E1CD}\textbf{1.75}  & \cellcolor[HTML]{B7E1CD}\textbf{34.1}  & \cellcolor[HTML]{B7E1CD}59      & \cellcolor[HTML]{F4C7C3}24      & \cellcolor[HTML]{F4C7C3}60.74   & \cellcolor[HTML]{B7E1CD}\textbf{13.63} & \cellcolor[HTML]{B7E1CD}\textbf{33.78} & \cellcolor[HTML]{B7E1CD}\textbf{52.8}  & 66.58  & \cellcolor[HTML]{B7E1CD}\textbf{31.06}    & \cellcolor[HTML]{B7E1CD}\textbf{33.63} & \textbf{0.22}   \\ \hline
JPEG      & 0  & \textbf{90.26} & 56.48  & 21.5   & 0.52   & 34.74  & 68.59  & 21.12  & 10.57  & 73.46  & 40     & 74.3   & \textbf{78.35}   & 28.02     & 42.3   & 0.09  \\
JPEG      & 1  & 89.29   & \cellcolor[HTML]{B7E1CD}60.05   & \cellcolor[HTML]{B7E1CD}26.16   & \cellcolor[HTML]{B7E1CD}4.69    & \cellcolor[HTML]{B7E1CD}42.29   & \cellcolor[HTML]{B7E1CD}\textbf{74.59} & \cellcolor[HTML]{B7E1CD}22.64   & \cellcolor[HTML]{B7E1CD}13.53   & \textbf{74.29}   & \cellcolor[HTML]{B7E1CD}43.93   & \textbf{74.64}   & 77.74  & \cellcolor[HTML]{B7E1CD}\textbf{29.07}    & \cellcolor[HTML]{B7E1CD}45.3    & \cellcolor[HTML]{B7E1CD}1.33   \\
JPEG      & 2  & \cellcolor[HTML]{F4C7C3}88.49 & \cellcolor[HTML]{B7E1CD}\textbf{61.79} & \cellcolor[HTML]{B7E1CD}\textbf{27.73} & \cellcolor[HTML]{B7E1CD}\textbf{12.37} & \cellcolor[HTML]{B7E1CD}\textbf{47.32} & \cellcolor[HTML]{B7E1CD}71.46   & \cellcolor[HTML]{B7E1CD}\textbf{25.63} & \cellcolor[HTML]{B7E1CD}\textbf{17.62} & 73.98  & \cellcolor[HTML]{B7E1CD}\textbf{45.12} & \cellcolor[HTML]{F4C7C3}72.8    & 78.03  & 27.9      & \cellcolor[HTML]{B7E1CD}\textbf{46.81} & \cellcolor[HTML]{B7E1CD}\textbf{3.87} \\ \hline
Elastic   & 0  & \textbf{94.06} & 1.32   & 0.02   & 7.5    & 7.92   & 25.41  & \textbf{53.68}   & 9.16   & 11.2   & \textbf{79.47}   & 72.94  & 50.24  & 33.1      & 29.33  & 0.01  \\
Elastic   & 1  & \cellcolor[HTML]{F4C7C3}91.85 & \cellcolor[HTML]{B7E1CD}17.14   & 0.54   & \cellcolor[HTML]{F4C7C3}5.55    & \cellcolor[HTML]{B7E1CD}14.6    & \cellcolor[HTML]{B7E1CD}40.8    & \cellcolor[HTML]{F4C7C3}35.05   & \cellcolor[HTML]{F4C7C3}5.55    & \cellcolor[HTML]{B7E1CD}31.41   & \cellcolor[HTML]{F4C7C3}66.9    & \cellcolor[HTML]{B7E1CD}\textbf{75.26} & \cellcolor[HTML]{B7E1CD}\textbf{65.89} & \cellcolor[HTML]{F4C7C3}30.44      & \cellcolor[HTML]{B7E1CD}32.43   & 0.17  \\
Elastic   & 2  & \cellcolor[HTML]{F4C7C3}67.79 & \cellcolor[HTML]{B7E1CD}\textbf{43.66} & \cellcolor[HTML]{B7E1CD}\textbf{17.81} & \cellcolor[HTML]{B7E1CD}\textbf{23}    & \cellcolor[HTML]{B7E1CD}\textbf{31.04} & \cellcolor[HTML]{B7E1CD}\textbf{56.14} & \cellcolor[HTML]{F4C7C3}28.89   & \cellcolor[HTML]{B7E1CD}\textbf{15.31} & \cellcolor[HTML]{B7E1CD}\textbf{47.71} & \cellcolor[HTML]{F4C7C3}51.31   & \cellcolor[HTML]{F4C7C3}57.9    & \cellcolor[HTML]{B7E1CD}59.33   & \cellcolor[HTML]{B7E1CD}\textbf{34.56}    & \cellcolor[HTML]{B7E1CD}\textbf{38.89} & \cellcolor[HTML]{B7E1CD}\textbf{4.6}  \\ \hline
Wood      & 0  & \textbf{93.57} & 0.03   & 0      & 0.4    & 1.27   & 18.47  & \textbf{39.44}   & 3.43   & 0.37   & 33.68  & \textbf{93.04}   & 28.04  & 14.11     & 19.36  & 0     \\
Wood      & 1  & \cellcolor[HTML]{F4C7C3}89.61 & \cellcolor[HTML]{B7E1CD}\textbf{30.94} & \cellcolor[HTML]{B7E1CD}\textbf{3.52}  & \cellcolor[HTML]{B7E1CD}\textbf{17.14} & \cellcolor[HTML]{B7E1CD}\textbf{30.98} & \cellcolor[HTML]{B7E1CD}26.2    & \cellcolor[HTML]{F4C7C3}4.9     & \cellcolor[HTML]{B7E1CD}\textbf{53.36} & \cellcolor[HTML]{B7E1CD}\textbf{45.05} & \cellcolor[HTML]{B7E1CD}\textbf{56.82} & \cellcolor[HTML]{F4C7C3}82.92   & \cellcolor[HTML]{B7E1CD}\textbf{66.69} & \cellcolor[HTML]{B7E1CD}\textbf{25.1}     & \cellcolor[HTML]{B7E1CD}\textbf{36.97} & \textbf{0.69}   \\
Wood      & 2  & \cellcolor[HTML]{F4C7C3}85.72 & \cellcolor[HTML]{B7E1CD}\textbf{30.94} & \cellcolor[HTML]{B7E1CD}\textbf{3.52}  & \cellcolor[HTML]{B7E1CD}\textbf{17.14} & \cellcolor[HTML]{B7E1CD}\textbf{30.98} & \cellcolor[HTML]{B7E1CD}\textbf{53.36} & \cellcolor[HTML]{F4C7C3}26.2    & \cellcolor[HTML]{B7E1CD}4.9     & \cellcolor[HTML]{B7E1CD}\textbf{45.05} & \cellcolor[HTML]{B7E1CD}\textbf{56.82} & \cellcolor[HTML]{F4C7C3}82.92   & \cellcolor[HTML]{B7E1CD}\textbf{66.69} & \cellcolor[HTML]{B7E1CD}\textbf{25.1}     & \cellcolor[HTML]{B7E1CD}\textbf{36.97} & \textbf{0.69}   \\ \hline
Glitch    & 0  & \textbf{93.26} & 0.02   & 0      & 0      & 11.49  & 49.03   & \textbf{24.44}   & 12.47  & 3.14   & 10.89  & 31.99  & \textbf{90.77}   & 16.61     & 20.9   & 0     \\
Glitch    & 1  & \cellcolor[HTML]{F4C7C3}90.16 & \cellcolor[HTML]{B7E1CD}34.66   & \cellcolor[HTML]{B7E1CD}2.61    & 0.28   & \cellcolor[HTML]{B7E1CD}22.95   & \cellcolor[HTML]{B7E1CD}\textbf{64.19} & \cellcolor[HTML]{F4C7C3}22.01   & 12.38  & \cellcolor[HTML]{B7E1CD}28.75   & \cellcolor[HTML]{B7E1CD}39.22   & \cellcolor[HTML]{B7E1CD}\textbf{60.2}  & \cellcolor[HTML]{F4C7C3}83.26   & \cellcolor[HTML]{B7E1CD}23.6& \cellcolor[HTML]{B7E1CD}32.84   & 0.01  \\
Glitch    & 2  & \cellcolor[HTML]{F4C7C3}79.67 & \cellcolor[HTML]{B7E1CD}\textbf{44.33} & \cellcolor[HTML]{B7E1CD}\textbf{9.64}  & \cellcolor[HTML]{B7E1CD}\textbf{1.01}  & \cellcolor[HTML]{B7E1CD}\textbf{27.31} & \cellcolor[HTML]{B7E1CD}60.07   & \cellcolor[HTML]{F4C7C3}17.26   & \cellcolor[HTML]{B7E1CD}\textbf{15.28} & \cellcolor[HTML]{B7E1CD}\textbf{38.23} & \cellcolor[HTML]{B7E1CD}\textbf{42.31} & \cellcolor[HTML]{B7E1CD}58.64   & \cellcolor[HTML]{F4C7C3}72.43   & \cellcolor[HTML]{B7E1CD}\textbf{30.1}     & \cellcolor[HTML]{B7E1CD}\textbf{34.72} & \textbf{0.36}   \\ \hline
\begin{tabular}[c]{@{}l@{}}Kaleid-\\ oscope\end{tabular} & 0  & \textbf{96.03} & 0      & 0      & 0      & 0.8    & 39.49  & \textbf{40.94}   & 5.75   & 0.02   & 2.4    & 43.08  & 33.71  & \textbf{91.97}    & 21.51  & 0     \\
\begin{tabular}[c]{@{}l@{}}Kaleid-\\ oscope\end{tabular} & 0.1& \cellcolor[HTML]{F4C7C3}93.16 & \cellcolor[HTML]{B7E1CD}5.39    & 0.02   & 0.02   & \cellcolor[HTML]{B7E1CD}4.89    & \cellcolor[HTML]{B7E1CD}\textbf{55.54} & \cellcolor[HTML]{F4C7C3}35.63   & 5.32   & \cellcolor[HTML]{B7E1CD}3.25    & \cellcolor[HTML]{B7E1CD}17.43   & \cellcolor[HTML]{B7E1CD}\textbf{52.59} & \cellcolor[HTML]{B7E1CD}60.06   & \cellcolor[HTML]{F4C7C3}86.77      & \cellcolor[HTML]{B7E1CD}27.24   & 0     \\
\begin{tabular}[c]{@{}l@{}}Kaleid-\\ oscope\end{tabular} & 0.5& \cellcolor[HTML]{F4C7C3}88.9  & \cellcolor[HTML]{B7E1CD}\textbf{16.02} & 0.22   & 0.17   & \cellcolor[HTML]{B7E1CD}9.98    & \cellcolor[HTML]{B7E1CD}54.63   & \cellcolor[HTML]{F4C7C3}29.68   & 6.35   & \cellcolor[HTML]{B7E1CD}9.75    & \cellcolor[HTML]{B7E1CD}\textbf{29.38} & \cellcolor[HTML]{B7E1CD}49.24   & \cellcolor[HTML]{B7E1CD}\textbf{64.47} & \cellcolor[HTML]{F4C7C3}78.4& \cellcolor[HTML]{B7E1CD}\textbf{29.02} & 0     \\
\begin{tabular}[c]{@{}l@{}}Kaleid-\\ oscope\end{tabular} & 1  & \cellcolor[HTML]{F4C7C3}68.25 & \cellcolor[HTML]{B7E1CD}14.05   & \textbf{0.51}    & \cellcolor[HTML]{B7E1CD}\textbf{5.33}  & \cellcolor[HTML]{B7E1CD}\textbf{42}    & \cellcolor[HTML]{B7E1CD}48.53   & \cellcolor[HTML]{F4C7C3}14.55   & \cellcolor[HTML]{B7E1CD}\textbf{10.46} & \cellcolor[HTML]{B7E1CD}\textbf{13.6}  & \cellcolor[HTML]{B7E1CD}21.92   & \cellcolor[HTML]{F4C7C3}32.66   & \cellcolor[HTML]{B7E1CD}45.02   & \cellcolor[HTML]{F4C7C3}62.09      & \cellcolor[HTML]{B7E1CD}25.89   & \textbf{0.09}   \\ \hline
\end{tabular}}}
\caption{\textbf{Intial Training Ablations- Uniform regularization on CIFAR-10. }Accuracy of initially trained models on CIFAR-10 trained using different attacks as indicated in ``Train Attack" column measured across different attacks.  Uniform regularization (with $\sigma=2$) is also considered during initial training, with regularization strength $\lambda$.  Results where regularization improves over no regularization ($\lambda = 0$) by at least 1\% accuracy are highlighted in green, while results where regularization incurs at least a 1\% drop in accuracy are highlighted in red.  Best performing with respect to regularization strength are bolded.}
\label{app:IT_ablation_UR_CIF}
\end{table*}

\begin{table*}[]
{\renewcommand{\arraystretch}{1.2}
\scalebox{0.73}{
\begin{tabular}{|l|c|c|cccccccccccc|cc|}
\hline
\begin{tabular}[c]{@{}l@{}}Train\\ Attack\end{tabular}     & $\lambda$ & Clean   & $\ell_2$     & $\ell_\infty$   & StAdv  & ReColor& Gabor  & Snow   & Pixel  & JPEG   & Elastic& Wood   & Glitch & \begin{tabular}[c]{@{}c@{}}Kaleid-\\ oscope\end{tabular} & Avg    & Union \\ \hline
$\ell_2$ & 0  & \textbf{91.08} & 70.02  & 29.38  & 0.79   & 33.69  & 66.93  & 24.59  & 14.99  & 64.22  & 45.13  & 70.85  & \textbf{80.3}    & 30.08     & 44.25  & 0.1   \\
$\ell_2$ & 0.1& 90.24   & \textbf{70.25}   & \cellcolor[HTML]{B7E1CD}31.05   & \cellcolor[HTML]{B7E1CD}3.01    & \cellcolor[HTML]{B7E1CD}40.33   & \cellcolor[HTML]{B7E1CD}70.49   & 24.6   & \cellcolor[HTML]{B7E1CD}16.78   & \cellcolor[HTML]{B7E1CD}\textbf{65.57} & 44.87  & \cellcolor[HTML]{B7E1CD}73.93   & 79.68  & 29.94     & \cellcolor[HTML]{B7E1CD}45.86   & \cellcolor[HTML]{B7E1CD}1.21   \\
$\ell_2$ & 0.2& \cellcolor[HTML]{F4CCCC}90.07 & 69.57  & \cellcolor[HTML]{B7E1CD}31.8    & \cellcolor[HTML]{B7E1CD}9.58    & \cellcolor[HTML]{B7E1CD}46.6    & \cellcolor[HTML]{B7E1CD}72.49   & 25.22  & \cellcolor[HTML]{B7E1CD}18.95   & \cellcolor[HTML]{B7E1CD}65.22   & \cellcolor[HTML]{B7E1CD}46.79   & \cellcolor[HTML]{B7E1CD}\textbf{76.23} & 79.36  & \cellcolor[HTML]{F4CCCC}28.74      & \cellcolor[HTML]{B7E1CD}47.55   & \cellcolor[HTML]{B7E1CD}3      \\
$\ell_2$ & 0.5& \cellcolor[HTML]{F4CCCC}86.89 & \cellcolor[HTML]{F4CCCC}68.19   & \cellcolor[HTML]{B7E1CD}\textbf{32.02} & \cellcolor[HTML]{B7E1CD}\textbf{16.54} & \cellcolor[HTML]{B7E1CD}\textbf{58.32} & \cellcolor[HTML]{B7E1CD}\textbf{74.85} & \cellcolor[HTML]{B7E1CD}\textbf{25.69} & \cellcolor[HTML]{B7E1CD}\textbf{21.26} & \cellcolor[HTML]{B7E1CD}65.32   & \cellcolor[HTML]{B7E1CD}\textbf{46.82} & \cellcolor[HTML]{B7E1CD}74.08   & \cellcolor[HTML]{F4CCCC}76.99   & \cellcolor[HTML]{B7E1CD}\textbf{31.93}    & \cellcolor[HTML]{B7E1CD}\textbf{49.33} & \cellcolor[HTML]{B7E1CD}\textbf{4.18} \\ \hline
$\ell_\infty$      & 0  & 85.53   & 59.36  & 50.98  & 6.34   & 56.27  & \textbf{68.94}   & 36.79  & 20.57  & 54.02  & 51     & 64.24  & \textbf{75.94}   & 39.44     & 48.66  & 1.31  \\
$\ell_\infty$      & 0.1& \textbf{86.18} & \cellcolor[HTML]{B7E1CD}60.45   & \textbf{51.52}   & 7.07   & \cellcolor[HTML]{B7E1CD}57.5    & 68.54  & \cellcolor[HTML]{B7E1CD}38.96   & 21.24  & \cellcolor[HTML]{B7E1CD}56.43   & 50.39  & 64.66  & 75.78  & 38.74     & 49.27  & 1.8   \\
$\ell_\infty$      & 0.2& 85.19   & \cellcolor[HTML]{B7E1CD}\textbf{61.63} & 50.12  & \cellcolor[HTML]{B7E1CD}17.67   & \cellcolor[HTML]{B7E1CD}67.92   & 69.6   & \cellcolor[HTML]{B7E1CD}\textbf{40.02} & \cellcolor[HTML]{B7E1CD}23.22   & \cellcolor[HTML]{B7E1CD}\textbf{57.72} & \cellcolor[HTML]{B7E1CD}\textbf{52.76} & \cellcolor[HTML]{B7E1CD}\textbf{66.28} & 75     & \cellcolor[HTML]{B7E1CD}\textbf{40.86}    & \cellcolor[HTML]{B7E1CD}51.9    & \cellcolor[HTML]{B7E1CD}3.93   \\
$\ell_\infty$      & 0.5& \cellcolor[HTML]{F4C7C3}80.65 & 59.74  & \cellcolor[HTML]{F4C7C3}46.12   & \cellcolor[HTML]{B7E1CD}\textbf{34.57} & \cellcolor[HTML]{B7E1CD}\textbf{70.49} & 68.33  & 35.8   & \cellcolor[HTML]{B7E1CD}\textbf{26.04} & \cellcolor[HTML]{B7E1CD}57.28   & 51.98  & \cellcolor[HTML]{B7E1CD}65.46   & \cellcolor[HTML]{F4C7C3}70.73   & \cellcolor[HTML]{F4C7C3}38.21      & \cellcolor[HTML]{B7E1CD}\textbf{52.06} & \cellcolor[HTML]{B7E1CD}\textbf{6.28} \\ \hline
StAdv     & 0  & \textbf{87.12} & 5.48   & 0.07   & 56.22  & 5.69   & 17.62  & \textbf{57.8}    & 5.93   & 11.09  & \textbf{76.02}   & \textbf{77.47}   & 54.04  & \textbf{43.4}     & 34.24  & 0.05  \\
StAdv     & 0.1& \cellcolor[HTML]{F4C7C3}72.19 & \cellcolor[HTML]{F4C7C3}1.16    & 0.02   & \cellcolor[HTML]{B7E1CD}62.36   & \cellcolor[HTML]{B7E1CD}42.38   & \cellcolor[HTML]{B7E1CD}44.84   & \cellcolor[HTML]{F4C7C3}35.9    & \cellcolor[HTML]{B7E1CD}10.54   & \cellcolor[HTML]{B7E1CD}16.17   & \cellcolor[HTML]{F4C7C3}52.73   & \cellcolor[HTML]{F4C7C3}65.04   & \cellcolor[HTML]{F4C7C3}50.52   & \cellcolor[HTML]{F4C7C3}34.58      & 34.69  & 0.02  \\
StAdv     & 0.2& \cellcolor[HTML]{F4C7C3}79.63 & \cellcolor[HTML]{B7E1CD}11.11   & 0.42   & \cellcolor[HTML]{B7E1CD}\textbf{72.58} & \cellcolor[HTML]{B7E1CD}57.96   & \cellcolor[HTML]{B7E1CD}43.27   & \cellcolor[HTML]{F4C7C3}40.28   & \cellcolor[HTML]{B7E1CD}11.09   & \cellcolor[HTML]{B7E1CD}25.26   & \cellcolor[HTML]{F4C7C3}60.64   & \cellcolor[HTML]{F4C7C3}70.43   & \cellcolor[HTML]{B7E1CD}57.7    & \cellcolor[HTML]{F4C7C3}36.15      & \cellcolor[HTML]{B7E1CD}40.57   & 0.34  \\
StAdv     & 0.5& \cellcolor[HTML]{F4C7C3}76.35 & \cellcolor[HTML]{B7E1CD}\textbf{30.61} & \cellcolor[HTML]{B7E1CD}\textbf{4.69}  & \cellcolor[HTML]{B7E1CD}66.84   & \cellcolor[HTML]{B7E1CD}\textbf{61.11} & \cellcolor[HTML]{B7E1CD}\textbf{50.18} & \cellcolor[HTML]{F4C7C3}36.59   & \cellcolor[HTML]{B7E1CD}\textbf{16.76} & \cellcolor[HTML]{B7E1CD}\textbf{39.22} & \cellcolor[HTML]{F4C7C3}57      & \cellcolor[HTML]{F4C7C3}67.71   & \cellcolor[HTML]{B7E1CD}\textbf{62.31} & \cellcolor[HTML]{F4C7C3}34.1& \cellcolor[HTML]{B7E1CD}\textbf{43.92} & \cellcolor[HTML]{B7E1CD}\textbf{3.13} \\ \hline
ReColor   & 0  & \textbf{93.61} & 37.17  & 7.03   & 0.01   & 67.48  & 55.53  & \textbf{37.14}   & 8.27   & 45.36  & 35.55  & 60.92  & \textbf{77.2}    & \textbf{32.28}    & 38.66  & 0     \\
ReColor   & 0.1& 93.51   & \cellcolor[HTML]{F4C7C3}36.05   & \cellcolor[HTML]{B7E1CD}8.59    & \cellcolor[HTML]{B7E1CD}24.25   & \cellcolor[HTML]{B7E1CD}79.15   & \cellcolor[HTML]{B7E1CD}65.07   & 36.75  & \cellcolor[HTML]{B7E1CD}17.11   & \cellcolor[HTML]{B7E1CD}49.12   & 35.86  & \cellcolor[HTML]{B7E1CD}65.18   & \cellcolor[HTML]{F4C7C3}76.15   & 31.97     & \cellcolor[HTML]{B7E1CD}43.77   & \cellcolor[HTML]{B7E1CD}3.03   \\
ReColor   & 0.2& \cellcolor[HTML]{F4C7C3}91.49 & \textbf{37.6}    & \cellcolor[HTML]{B7E1CD}\textbf{9.06}  & \cellcolor[HTML]{B7E1CD}37.99   & \cellcolor[HTML]{B7E1CD}\textbf{84.17} & \cellcolor[HTML]{B7E1CD}\textbf{69.1}  & \cellcolor[HTML]{F4C7C3}34.02   & \cellcolor[HTML]{B7E1CD}\textbf{18.72} & \cellcolor[HTML]{B7E1CD}\textbf{51.51} & \cellcolor[HTML]{B7E1CD}\textbf{38.12} & \cellcolor[HTML]{B7E1CD}\textbf{68.6}  & \cellcolor[HTML]{F4C7C3}73.16   & 31.63     & \cellcolor[HTML]{B7E1CD}\textbf{46.14} & \cellcolor[HTML]{B7E1CD}3.02   \\
ReColor   & 0.5& \cellcolor[HTML]{F4C7C3}74.12 & \cellcolor[HTML]{F4C7C3}34.15   & \cellcolor[HTML]{B7E1CD}8.14    & \cellcolor[HTML]{B7E1CD}\textbf{41.98} & \cellcolor[HTML]{B7E1CD}71.03   & \cellcolor[HTML]{B7E1CD}58.2    & \cellcolor[HTML]{F4C7C3}23.92   & \cellcolor[HTML]{B7E1CD}18.45   & \cellcolor[HTML]{F4C7C3}43.7    & \cellcolor[HTML]{F4C7C3}32.07   & \cellcolor[HTML]{F4C7C3}57.27   & \cellcolor[HTML]{F4C7C3}55.79   & \cellcolor[HTML]{F4C7C3}24.93      & 39.14  & \cellcolor[HTML]{B7E1CD}\textbf{3.53} \\ \hline
Gabor     & 0  & \textbf{94.08} & 0.3    & 0.01   & 0.01   & 4.43   & \textbf{92.39}   & 16.96  & 8.96   & 2.08   & 2.31   & 17.99  & 41.61  & 11.87     & 16.58  & 0     \\
Gabor     & 0.1& \cellcolor[HTML]{F4C7C3}91.52 & \cellcolor[HTML]{B7E1CD}10.06   & 0.38   & \cellcolor[HTML]{B7E1CD}\textbf{12.04} & \cellcolor[HTML]{B7E1CD}\textbf{49.59} & \cellcolor[HTML]{F4C7C3}89.29   & \cellcolor[HTML]{B7E1CD}\textbf{26.41} & \cellcolor[HTML]{B7E1CD}\textbf{19.27} & \cellcolor[HTML]{B7E1CD}\textbf{32.58} & \cellcolor[HTML]{B7E1CD}23.99   & \cellcolor[HTML]{B7E1CD}50.29   & \cellcolor[HTML]{B7E1CD}54.77   & \cellcolor[HTML]{B7E1CD}16.1& \cellcolor[HTML]{B7E1CD}32.06   & 0.13  \\
Gabor     & 0.2& \cellcolor[HTML]{F4C7C3}88.56 & \cellcolor[HTML]{B7E1CD}9.45    & 0.34   & 0.72   & \cellcolor[HTML]{B7E1CD}23.46   & \cellcolor[HTML]{F4C7C3}85.27   & \cellcolor[HTML]{B7E1CD}20.59   & 8.44   & \cellcolor[HTML]{B7E1CD}17.77   & \cellcolor[HTML]{B7E1CD}19.99   & \cellcolor[HTML]{B7E1CD}49      & \cellcolor[HTML]{B7E1CD}54.96   & \cellcolor[HTML]{B7E1CD}15.62      & \cellcolor[HTML]{B7E1CD}25.47   & 0.04  \\
Gabor     & 0.5& \cellcolor[HTML]{F4C7C3}83.09 & \cellcolor[HTML]{B7E1CD}\textbf{31.81} & \cellcolor[HTML]{B7E1CD}\textbf{4.79}  & \cellcolor[HTML]{B7E1CD}1.81    & \cellcolor[HTML]{B7E1CD}26.15   & \cellcolor[HTML]{F4C7C3}79.32   & \cellcolor[HTML]{B7E1CD}20.61   & \cellcolor[HTML]{B7E1CD}10.17   & \cellcolor[HTML]{B7E1CD}30.49   & \cellcolor[HTML]{B7E1CD}\textbf{34.34} & \cellcolor[HTML]{B7E1CD}\textbf{64.64} & \cellcolor[HTML]{B7E1CD}\textbf{64.17} & \cellcolor[HTML]{B7E1CD}\textbf{19.19}    & \cellcolor[HTML]{B7E1CD}\textbf{32.29} & \textbf{0.44}   \\ \hline
Snow      & 0  & \textbf{95.89} & 0.05   & 0      & 0.01   & 2.63   & 30.13  & \textbf{92.02}   & 7.23   & 0.9    & 15.17  & 31.49  & 47.98  & \textbf{22.39}    & 20.83  & 0     \\
Snow      & 0.1& \cellcolor[HTML]{F4C7C3}91.74 & \cellcolor[HTML]{B7E1CD}9.94    & 0.1    & 0.51   & \cellcolor[HTML]{B7E1CD}19.93   & \cellcolor[HTML]{B7E1CD}69.04   & \cellcolor[HTML]{F4C7C3}82.63   & 7.67   & \cellcolor[HTML]{B7E1CD}12.96   & \cellcolor[HTML]{B7E1CD}34.24   & \cellcolor[HTML]{B7E1CD}44.21   & \cellcolor[HTML]{B7E1CD}70.07   & \cellcolor[HTML]{F4C7C3}15.53      & \cellcolor[HTML]{B7E1CD}30.57   & 0     \\
Snow      & 0.2& \cellcolor[HTML]{F4C7C3}89.18 & \cellcolor[HTML]{B7E1CD}19.18   & 0.57   & 0.95   & \cellcolor[HTML]{B7E1CD}18.26   & \cellcolor[HTML]{B7E1CD}\textbf{73.07} & \cellcolor[HTML]{F4C7C3}76.98   & \cellcolor[HTML]{B7E1CD}9.2     & \cellcolor[HTML]{B7E1CD}19.02   & \cellcolor[HTML]{B7E1CD}36.91   & \cellcolor[HTML]{B7E1CD}\textbf{44.69} & \cellcolor[HTML]{B7E1CD}\textbf{70.51} & \cellcolor[HTML]{F4C7C3}14.24      & \cellcolor[HTML]{B7E1CD}\textbf{31.96} & 0.04  \\
Snow      & 0.5& \cellcolor[HTML]{F4C7C3}74.52 & \cellcolor[HTML]{B7E1CD}\textbf{24.82} & \cellcolor[HTML]{B7E1CD}\textbf{1.68}  & \cellcolor[HTML]{B7E1CD}\textbf{3.67}  & \cellcolor[HTML]{B7E1CD}\textbf{33.1}  & \cellcolor[HTML]{B7E1CD}57.71   & \cellcolor[HTML]{F4C7C3}65      & \cellcolor[HTML]{B7E1CD}\textbf{10.01} & \cellcolor[HTML]{B7E1CD}\textbf{22.39} & \cellcolor[HTML]{B7E1CD}\textbf{40}    & \cellcolor[HTML]{B7E1CD}40.36   & \cellcolor[HTML]{B7E1CD}61.17   & 21.48     & \cellcolor[HTML]{B7E1CD}31.78   & \textbf{0.56}   \\ \hline
Pixel     & 0  & \textbf{94.76} & 0.07   & 0      & 0.01   & 8.87   & 57.65  & \textbf{36.6}    & \textbf{88.35}   & 1.77   & 14.52  & 38.18  & 67.5   & 16.21     & 27.48  & 0     \\
Pixel     & 0.1& \cellcolor[HTML]{F4C7C3}86.83 & 0.9    & 0      & \cellcolor[HTML]{B7E1CD}1.1     & \cellcolor[HTML]{B7E1CD}\textbf{41.32} & \cellcolor[HTML]{B7E1CD}67.39   & \cellcolor[HTML]{F4C7C3}32.7    & \cellcolor[HTML]{F4C7C3}77.05   & \cellcolor[HTML]{B7E1CD}6.22    & 15.43  & \cellcolor[HTML]{B7E1CD}42.24   & \cellcolor[HTML]{F4C7C3}51.45   & \cellcolor[HTML]{B7E1CD}18.18      & \cellcolor[HTML]{B7E1CD}29.5    & 0     \\
Pixel     & 0.2& \cellcolor[HTML]{F4C7C3}89.6  & \cellcolor[HTML]{B7E1CD}4.25    & 0.01   & 0.14   & \cellcolor[HTML]{B7E1CD}20.42   & \cellcolor[HTML]{B7E1CD}\textbf{68.55} & \cellcolor[HTML]{F4C7C3}29.56   & \cellcolor[HTML]{F4C7C3}76.23   & \cellcolor[HTML]{B7E1CD}3.57    & \cellcolor[HTML]{B7E1CD}25.36   & \cellcolor[HTML]{B7E1CD}\textbf{46.03} & \textbf{68.2}    & \cellcolor[HTML]{B7E1CD}19.96      & \cellcolor[HTML]{B7E1CD}\textbf{30.19} & 0     \\
Pixel     & 0.5& \cellcolor[HTML]{F4C7C3}61.01 & \cellcolor[HTML]{B7E1CD}\textbf{23.09} & \cellcolor[HTML]{B7E1CD}\textbf{3.41}  & \cellcolor[HTML]{B7E1CD}\textbf{6.93}  & \cellcolor[HTML]{B7E1CD}34.6    & \cellcolor[HTML]{F4C7C3}49.26   & \cellcolor[HTML]{F4C7C3}19.29   & \cellcolor[HTML]{F4C7C3}44.75   & \cellcolor[HTML]{B7E1CD}\textbf{17.11} & \cellcolor[HTML]{B7E1CD}\textbf{30.67} & \cellcolor[HTML]{B7E1CD}44.62   & \cellcolor[HTML]{F4C7C3}52.42   & \cellcolor[HTML]{B7E1CD}\textbf{28.13}    & \cellcolor[HTML]{B7E1CD}29.52   & \textbf{0.89}   \\ \hline
JPEG      & 0  & \textbf{90.26} & 56.48  & 21.5   & 0.52   & 34.74  & 68.59  & 21.12  & 10.57  & 73.46  & 40     & 74.3   & \textbf{78.35}   & \textbf{28.02}    & 42.3   & 0.09  \\
JPEG      & 0.1& \cellcolor[HTML]{F4C7C3}89.2  & \cellcolor[HTML]{B7E1CD}58.97   & \cellcolor[HTML]{B7E1CD}25.22   & \cellcolor[HTML]{B7E1CD}2.37    & \cellcolor[HTML]{B7E1CD}38.03   & \cellcolor[HTML]{B7E1CD}\textbf{72.82} & 22.11  & \cellcolor[HTML]{B7E1CD}12.85   & 74.03  & \cellcolor[HTML]{B7E1CD}41.87   & \cellcolor[HTML]{B7E1CD}\textbf{76.66} & 77.74  & \cellcolor[HTML]{F4C7C3}25.58      & \cellcolor[HTML]{B7E1CD}44.02   & 0.76  \\
JPEG      & 0.2& \cellcolor[HTML]{F4C7C3}88.78 & \cellcolor[HTML]{B7E1CD}\textbf{61.28} & \cellcolor[HTML]{B7E1CD}\textbf{27.5}  & \cellcolor[HTML]{B7E1CD}8.27    & \cellcolor[HTML]{B7E1CD}43.35   & \cellcolor[HTML]{B7E1CD}71.66   & \cellcolor[HTML]{B7E1CD}22.78   & \cellcolor[HTML]{B7E1CD}14.46   & \cellcolor[HTML]{B7E1CD}\textbf{74.65} & \cellcolor[HTML]{B7E1CD}\textbf{43.63} & \cellcolor[HTML]{B7E1CD}75.89   & 78     & \cellcolor[HTML]{F4C7C3}26.09      & \cellcolor[HTML]{B7E1CD}45.63   & \cellcolor[HTML]{B7E1CD}2.01   \\
JPEG      & 0.5& \cellcolor[HTML]{F4C7C3}87.31 & \cellcolor[HTML]{B7E1CD}59.83   & \cellcolor[HTML]{B7E1CD}27.21   & \cellcolor[HTML]{B7E1CD}\textbf{15.54} & \cellcolor[HTML]{B7E1CD}\textbf{45.36} & \cellcolor[HTML]{B7E1CD}71.31   & \cellcolor[HTML]{B7E1CD}\textbf{23.82} & \cellcolor[HTML]{B7E1CD}\textbf{17.46} & 73.08  & \cellcolor[HTML]{B7E1CD}42.7    & \cellcolor[HTML]{B7E1CD}76.09   & \cellcolor[HTML]{F4C7C3}76.9    & \cellcolor[HTML]{F4C7C3}23.25      & \cellcolor[HTML]{B7E1CD}\textbf{46.05} & \cellcolor[HTML]{B7E1CD}\textbf{3.92} \\ \hline
Elastic   & 0  & \textbf{94.06} & 1.32   & 0.02   & 7.5    & 7.92   & 25.41  & \textbf{53.68}   & 9.16   & 11.2   & \textbf{79.47}   & \textbf{72.94}   & 50.24  & 33.1      & 29.33  & 0.01  \\
Elastic   & 0.1& \cellcolor[HTML]{F4C7C3}85.66 & \cellcolor[HTML]{B7E1CD}13.24   & 0.91   & \cellcolor[HTML]{F4C7C3}5.6     & \cellcolor[HTML]{B7E1CD}14.65   & \cellcolor[HTML]{B7E1CD}46.65   & \cellcolor[HTML]{F4C7C3}37.92   & \cellcolor[HTML]{F4C7C3}5.34    & \cellcolor[HTML]{B7E1CD}18.66   & \cellcolor[HTML]{F4C7C3}63.35   & \cellcolor[HTML]{F4C7C3}70.45   & \cellcolor[HTML]{B7E1CD}58.79   & \cellcolor[HTML]{F4C7C3}27.89      & 30.29  & 0.39  \\
Elastic   & 0.2& \cellcolor[HTML]{F4C7C3}85.22 & \cellcolor[HTML]{B7E1CD}28.9    & \cellcolor[HTML]{B7E1CD}2.8     & \cellcolor[HTML]{F4C7C3}6.43    & \cellcolor[HTML]{B7E1CD}18.55   & \cellcolor[HTML]{B7E1CD}53.96   & \cellcolor[HTML]{F4C7C3}37.91   & \cellcolor[HTML]{F4C7C3}6.43    & \cellcolor[HTML]{B7E1CD}35.93   & \cellcolor[HTML]{F4C7C3}63.82   & \cellcolor[HTML]{F4C7C3}69.97   & \cellcolor[HTML]{B7E1CD}65.53   & \textbf{33.61}    & \cellcolor[HTML]{B7E1CD}35.32   & 0.26  \\
Elastic   & 0.5& \cellcolor[HTML]{F4C7C3}76.85 & \cellcolor[HTML]{B7E1CD}\textbf{42.02} & \cellcolor[HTML]{B7E1CD}\textbf{10.72} & \cellcolor[HTML]{B7E1CD}\textbf{26.19} & \cellcolor[HTML]{B7E1CD}\textbf{46.27} & \cellcolor[HTML]{B7E1CD}\textbf{58.45} & \cellcolor[HTML]{F4C7C3}30.94   & \cellcolor[HTML]{B7E1CD}\textbf{11.09} & \cellcolor[HTML]{B7E1CD}\textbf{48.24} & \cellcolor[HTML]{F4C7C3}57.25   & \cellcolor[HTML]{F4C7C3}62.24   & \cellcolor[HTML]{B7E1CD}\textbf{65.81} & \cellcolor[HTML]{F4C7C3}28.98      & \cellcolor[HTML]{B7E1CD}\textbf{40.68} & \cellcolor[HTML]{B7E1CD}\textbf{2.76} \\ \hline
Wood      & 0  & \textbf{93.57} & 0.03   & 0      & 0.4    & 1.27   & 18.47  & \textbf{39.44}   & 3.43   & 0.37   & 33.68  & \textbf{93.04}   & 28.04  & 14.11     & 19.36  & 0     \\
Wood      & 0.1& \cellcolor[HTML]{F4C7C3}91.08 & \cellcolor[HTML]{B7E1CD}1.29    & 0.12   & \cellcolor[HTML]{B7E1CD}11.65   & \cellcolor[HTML]{B7E1CD}19.03   & \cellcolor[HTML]{B7E1CD}49.39   & \cellcolor[HTML]{F4C7C3}30.26   & \cellcolor[HTML]{B7E1CD}7.09    & \cellcolor[HTML]{B7E1CD}10.12   & \cellcolor[HTML]{B7E1CD}44.63   & \cellcolor[HTML]{F4C7C3}89.51   & \cellcolor[HTML]{B7E1CD}50.35   & \cellcolor[HTML]{B7E1CD}22.37      & \cellcolor[HTML]{B7E1CD}27.98   & 0.01  \\
Wood      & 0.2& \cellcolor[HTML]{F4C7C3}90.9  & \cellcolor[HTML]{B7E1CD}3.9     & 0.12   & \cellcolor[HTML]{B7E1CD}\textbf{18.97} & \cellcolor[HTML]{B7E1CD}\textbf{27.03} & \cellcolor[HTML]{B7E1CD}50.3    & \cellcolor[HTML]{F4C7C3}31.63   & \cellcolor[HTML]{B7E1CD}7.97    & \cellcolor[HTML]{B7E1CD}18.49   & \cellcolor[HTML]{B7E1CD}\textbf{50.87} & \cellcolor[HTML]{F4C7C3}89.35   & \cellcolor[HTML]{B7E1CD}58.41   & \cellcolor[HTML]{B7E1CD}22.88      & \cellcolor[HTML]{B7E1CD}31.66   & 0.05  \\
Wood      & 0.5& \cellcolor[HTML]{F4C7C3}77.59 & \cellcolor[HTML]{B7E1CD}\textbf{29.6}  & \cellcolor[HTML]{B7E1CD}\textbf{6.65}  & \cellcolor[HTML]{B7E1CD}9.33    & \cellcolor[HTML]{B7E1CD}23.37   & \cellcolor[HTML]{B7E1CD}\textbf{55.4}  & \cellcolor[HTML]{F4C7C3}19.74   & \cellcolor[HTML]{B7E1CD}\textbf{8.3}   & \cellcolor[HTML]{B7E1CD}\textbf{34.75} & \cellcolor[HTML]{B7E1CD}49.11   & \cellcolor[HTML]{F4C7C3}75.26   & \cellcolor[HTML]{B7E1CD}\textbf{59.65} & \cellcolor[HTML]{B7E1CD}\textbf{24.26}    & \cellcolor[HTML]{B7E1CD}\textbf{32.95} & \textbf{0.87}   \\ \hline
Glitch    & 0  & \textbf{93.26} & 0.02   & 0      & 0      & 11.49  & 49.03  & \textbf{24.44}   & 12.47  & 3.14   & 10.89  & 31.99  & \textbf{90.77}   & 16.61     & 20.9   & 0     \\
Glitch    & 0.1& \cellcolor[HTML]{F4C7C3}85.96 & \cellcolor[HTML]{B7E1CD}1.29    & 0.13   & 0.08   & \cellcolor[HTML]{B7E1CD}16.46   & \cellcolor[HTML]{B7E1CD}56.48   & \cellcolor[HTML]{F4C7C3}19.95   & \cellcolor[HTML]{F4C7C3}8.38    & \cellcolor[HTML]{B7E1CD}6.56    & \cellcolor[HTML]{B7E1CD}20.67   & \cellcolor[HTML]{B7E1CD}44.52   & \cellcolor[HTML]{F4C7C3}78.95   & \cellcolor[HTML]{B7E1CD}18.31      & \cellcolor[HTML]{B7E1CD}22.65   & 0     \\
Glitch    & 0.2& \cellcolor[HTML]{F4C7C3}83.4  & \cellcolor[HTML]{B7E1CD}18.14   & \cellcolor[HTML]{B7E1CD}1.94    & 0.23   & \cellcolor[HTML]{B7E1CD}24.5    & \cellcolor[HTML]{B7E1CD}\textbf{60.76} & \cellcolor[HTML]{F4C7C3}19.24   & 11.85  & \cellcolor[HTML]{B7E1CD}27.46   & \cellcolor[HTML]{B7E1CD}33.38   & \cellcolor[HTML]{B7E1CD}57.36   & \cellcolor[HTML]{F4C7C3}77.04   & \cellcolor[HTML]{B7E1CD}\textbf{23.12}    & \cellcolor[HTML]{B7E1CD}29.59   & 0.01  \\
Glitch    & 0.5& \cellcolor[HTML]{F4C7C3}75.81 & \cellcolor[HTML]{B7E1CD}\textbf{36.99} & \cellcolor[HTML]{B7E1CD}\textbf{6.88}  & \cellcolor[HTML]{B7E1CD}\textbf{1.22}  & \cellcolor[HTML]{B7E1CD}\textbf{27.29} & \cellcolor[HTML]{B7E1CD}59.99   & \cellcolor[HTML]{F4C7C3}17.29   & \cellcolor[HTML]{B7E1CD}\textbf{14.18} & \cellcolor[HTML]{B7E1CD}\textbf{34.71} & \cellcolor[HTML]{B7E1CD}\textbf{36.39} & \cellcolor[HTML]{B7E1CD}\textbf{60.77} & \cellcolor[HTML]{F4C7C3}68.21   & \cellcolor[HTML]{B7E1CD}22.75      & \cellcolor[HTML]{B7E1CD}\textbf{32.22} & \textbf{0.21}   \\ \hline
\begin{tabular}[c]{@{}l@{}}Kaleid-\\ oscope\end{tabular} & 0  & \textbf{96.03} & 0      & 0      & 0      & 0.8    & 39.49  & \textbf{40.94}   & 5.75   & 0.02   & 2.4    & 43.08  & 33.71  & \textbf{91.97}    & 21.51  & 0     \\
\begin{tabular}[c]{@{}l@{}}Kaleid-\\ oscope\end{tabular} & 0.1& \cellcolor[HTML]{F4C7C3}83.96 & \cellcolor[HTML]{B7E1CD}1.15    & 0      & 0.07   & \cellcolor[HTML]{B7E1CD}19.63   & \cellcolor[HTML]{B7E1CD}53.87   & \cellcolor[HTML]{F4C7C3}29.05   & \cellcolor[HTML]{B7E1CD}7.32    & \cellcolor[HTML]{B7E1CD}1.72    & \cellcolor[HTML]{B7E1CD}15.51   & \cellcolor[HTML]{B7E1CD}\textbf{47.8}  & \cellcolor[HTML]{B7E1CD}\textbf{52.33} & \cellcolor[HTML]{F4C7C3}72.8& \cellcolor[HTML]{B7E1CD}25.1    & 0     \\
\begin{tabular}[c]{@{}l@{}}Kaleid-\\ oscope\end{tabular} & 0.2& \cellcolor[HTML]{F4C7C3}82.48 & 0.8    & 0      & 0.07   & \cellcolor[HTML]{B7E1CD}3.81    & \cellcolor[HTML]{B7E1CD}\textbf{59.93} & \cellcolor[HTML]{F4C7C3}25.75   & 6.5    & 0.89   & \cellcolor[HTML]{B7E1CD}10.87   & \cellcolor[HTML]{F4C7C3}40.35   & \cellcolor[HTML]{B7E1CD}50.44   & \cellcolor[HTML]{F4C7C3}66.42      & 22.15  & 0     \\
\begin{tabular}[c]{@{}l@{}}Kaleid-\\ oscope\end{tabular} & 0.5& \cellcolor[HTML]{F4C7C3}46.61 & \cellcolor[HTML]{B7E1CD}\textbf{28.96} & \cellcolor[HTML]{B7E1CD}\textbf{11.27} & \cellcolor[HTML]{B7E1CD}\textbf{23.59} & \cellcolor[HTML]{B7E1CD}\textbf{23.05} & \cellcolor[HTML]{B7E1CD}41.04   & \cellcolor[HTML]{F4C7C3}15.98   & \cellcolor[HTML]{B7E1CD}\textbf{15.01} & \cellcolor[HTML]{B7E1CD}\textbf{33.14} & \cellcolor[HTML]{B7E1CD}\textbf{30.43} & 42.2   & \cellcolor[HTML]{B7E1CD}41.4    & \cellcolor[HTML]{F4C7C3}44.04      & \cellcolor[HTML]{B7E1CD}\textbf{29.18} & \cellcolor[HTML]{B7E1CD}\textbf{5.31} \\ \hline
\end{tabular}}}
\caption{\textbf{Intial Training Ablations- Gaussian regularization on CIFAR-10. }Accuracy of initially trained models on CIFAR-10 trained using different attacks as indicated in ``Train Attack" column measured across different attacks.  Gaussian regularization (with $\sigma=0.2$) is also considered during initial training, with regularization strength $\lambda$.  Results where regularization improves over no regularization ($\lambda = 0$) by at least 1\% accuracy are highlighted in green, while results where regularization incurs at least a 1\% drop in accuracy are highlighted in red.  Best performing with respect to regularization strength are bolded.}
\label{app:IT_ablation_GR_CIF}
\end{table*}

\begin{table*}[]{\renewcommand{\arraystretch}{1.2}
\scalebox{0.73}{
\begin{tabular}{|l|c|c|cccccccccccc|cc|}
\hline
\begin{tabular}[c]{@{}l@{}}Train\\ Attack\end{tabular} & $\lambda$ & Clean & $\ell_2$ & $\ell_\infty$ & StAdv & ReColor & Gabor & Snow & Pixel & JPEG & Elastic & Wood & Glitch & \begin{tabular}[c]{@{}c@{}}Kaleid-\\ oscope\end{tabular} & Avg & Union \\ \hline
$\ell_2$ & 0 & \textbf{90.04} & 83.95 & 7.57 & 5.27 & 33.91 & 65.17 & \textbf{89.3} & 28.99 & 67.52 & 62.85 & 49.22 & 45.78 & 12.76 & 46.03 & 0.51 \\
$\ell_2$ & 0.1 & \cellcolor[HTML]{F4CCCC}88.97 & 83.13 & \cellcolor[HTML]{B7E1CD}12.69 & \cellcolor[HTML]{B7E1CD}8.54 & \cellcolor[HTML]{B7E1CD}35.64 & \cellcolor[HTML]{B7E1CD}66.65 & \cellcolor[HTML]{F4CCCC}88.2 & \cellcolor[HTML]{B7E1CD}31.39 & \cellcolor[HTML]{B7E1CD}68.69 & \cellcolor[HTML]{F4CCCC}61.73 & 49.22 & 46.34 & \cellcolor[HTML]{B7E1CD}17.3 & \cellcolor[HTML]{B7E1CD}47.46 & 1.2 \\
$\ell_2$ & 0.2 & 89.68 & 84.36 & \cellcolor[HTML]{B7E1CD}16.94 & \cellcolor[HTML]{B7E1CD}9.96 & \cellcolor[HTML]{B7E1CD}39.95 & \cellcolor[HTML]{B7E1CD}66.37 & 88.89 & \cellcolor[HTML]{B7E1CD}32.48 & \cellcolor[HTML]{B7E1CD}72.87 & \textbf{63.67} & \cellcolor[HTML]{B7E1CD}52.25 & \cellcolor[HTML]{B7E1CD}48.64 & \cellcolor[HTML]{B7E1CD}17.1 & \cellcolor[HTML]{B7E1CD}49.46 & \cellcolor[HTML]{B7E1CD}1.83 \\
$\ell_2$ & 0.5 & 89.76 & \textbf{84.41} & \cellcolor[HTML]{B7E1CD}\textbf{25.32} & \cellcolor[HTML]{B7E1CD}\textbf{16.08} & \cellcolor[HTML]{B7E1CD}\textbf{44.38} & \cellcolor[HTML]{B7E1CD}\textbf{69.25} & \cellcolor[HTML]{F4CCCC}87.69 & \cellcolor[HTML]{B7E1CD}\textbf{35.46} & \cellcolor[HTML]{B7E1CD}\textbf{74.42} & 62.47 & \cellcolor[HTML]{B7E1CD}\textbf{57.66} & \cellcolor[HTML]{B7E1CD}\textbf{50.37} & \cellcolor[HTML]{B7E1CD}\textbf{20.84} & \cellcolor[HTML]{B7E1CD}\textbf{52.36} & \cellcolor[HTML]{B7E1CD}\textbf{41.27} \\ \hline
$\ell_\infty$ & 0 & \textbf{84.51} & \textbf{81.71} & 58.39 & 43.49 & 67.82 & 72.61 & \textbf{83.31} & 41.83 & \textbf{65.35} & 63.9 & \textbf{67.18} & 63.64 & 30.75 & 61.67 & 13.2 \\
$\ell_\infty$ & 0.1 & \cellcolor[HTML]{F4C7C3}82.29 & \cellcolor[HTML]{F4C7C3}80.25 & 58.5 & \cellcolor[HTML]{B7E1CD}47.85 & \cellcolor[HTML]{F4C7C3}66.8 & \cellcolor[HTML]{F4C7C3}69.55 & \cellcolor[HTML]{F4C7C3}81.5 & 41.4 & 65.53 & \cellcolor[HTML]{F4C7C3}61.45 & \cellcolor[HTML]{F4C7C3}63.13 & 64.23 & \cellcolor[HTML]{B7E1CD}33.89 & 61.17 & \cellcolor[HTML]{B7E1CD}16.15 \\
$\ell_\infty$ & 0.2 & 83.56 & 81.04 & \cellcolor[HTML]{B7E1CD}\textbf{59.75} & \cellcolor[HTML]{B7E1CD}46.04 & 67.69 & \textbf{73.12} & 82.75 & \textbf{42.27} & \cellcolor[HTML]{B7E1CD}\textbf{67.26} & \textbf{63.97} & 67.01 & 62.93 & \cellcolor[HTML]{B7E1CD}34.04 & \textbf{62.32} & \cellcolor[HTML]{B7E1CD}16.92 \\
$\ell_\infty$ & 0.5 & \cellcolor[HTML]{F4C7C3}81.53 & \cellcolor[HTML]{F4C7C3}77.91 & \cellcolor[HTML]{B7E1CD}59.62 & \cellcolor[HTML]{B7E1CD}\textbf{48.48} & \cellcolor[HTML]{B7E1CD}\textbf{69.3} & \cellcolor[HTML]{F4C7C3}67.87 & \cellcolor[HTML]{F4C7C3}80.23 & 41.86 & \cellcolor[HTML]{F4C7C3}54.85 & \cellcolor[HTML]{F4C7C3}61.71 & \cellcolor[HTML]{F4C7C3}64.18 & \cellcolor[HTML]{B7E1CD}\textbf{67.77} & \cellcolor[HTML]{B7E1CD}\textbf{38.98} & 61.06 & \cellcolor[HTML]{B7E1CD}\textbf{19.31} \\ \hline
StAdv & 0 & \textbf{83.31} & \textbf{77.58} & 1.45 & \textbf{69.81} & \textbf{13.43} & 36.66 & \textbf{81.5} & \textbf{20.56} & 49.89 & 70.32 & 60.76 & 36.15 & \textbf{24.84} & \textbf{45.25} & 1.04 \\
StAdv & 0.1 & \cellcolor[HTML]{F4C7C3}81.89 & \cellcolor[HTML]{F4C7C3}76.31 & \textbf{2.24} & 69.17 & \cellcolor[HTML]{F4C7C3}12.23 & \cellcolor[HTML]{F4C7C3}32.23 & \cellcolor[HTML]{F4C7C3}79.92 & \cellcolor[HTML]{F4C7C3}18.39 & \cellcolor[HTML]{B7E1CD}\textbf{52.25} & \textbf{70.42} & 60.46 & \cellcolor[HTML]{B7E1CD}39.29 & \cellcolor[HTML]{F4C7C3}20.59 & 44.46 & 1.25 \\
StAdv & 0.2 & \cellcolor[HTML]{F4C7C3}82.09 & 76.66 & 1.71 & 69.04 & \cellcolor[HTML]{F4C7C3}11.11 & \cellcolor[HTML]{B7E1CD}45.25 & \cellcolor[HTML]{F4C7C3}80.13 & \cellcolor[HTML]{F4C7C3}17.1 & \cellcolor[HTML]{F4C7C3}44.74 & 69.61 & \textbf{60.89} & \cellcolor[HTML]{B7E1CD}40.03 & \cellcolor[HTML]{F4C7C3}21.32 & 44.8 & 1.25 \\
StAdv & 0.5 & \cellcolor[HTML]{F4C7C3}80.66 & \cellcolor[HTML]{F4C7C3}73.86 & 1.68 & \cellcolor[HTML]{F4C7C3}67.67 & \cellcolor[HTML]{F4C7C3}10.65 & \cellcolor[HTML]{B7E1CD}\textbf{60.03} & \cellcolor[HTML]{F4C7C3}77.58 & \cellcolor[HTML]{F4C7C3}15.39 & \cellcolor[HTML]{F4C7C3}44.97 & \cellcolor[HTML]{F4C7C3}67.8 & \cellcolor[HTML]{F4C7C3}59.41 & \cellcolor[HTML]{B7E1CD}\textbf{41.86} & \cellcolor[HTML]{F4C7C3}21.78 & 45.22 & \textbf{1.35} \\ \hline
ReColor & 0 & 91.34 & 81.53 & 0.03 & 0.41 & 79.08 & 42.55 & 90.6 & 22.39 & 25.2 & 64.31 & 54.8 & 18.65 & 8.94 & 40.71 & 0 \\
ReColor & 0.1 & 91.26 & 82.5 & \textbf{0.05} & 0.48 & \cellcolor[HTML]{B7E1CD}80.74 & \cellcolor[HTML]{B7E1CD}47.03 & 91.08 & \cellcolor[HTML]{B7E1CD}25.3 & \cellcolor[HTML]{B7E1CD}27.16 & 63.54 & 55.39 & \cellcolor[HTML]{B7E1CD}20.94 & \cellcolor[HTML]{B7E1CD}11.75 & \cellcolor[HTML]{B7E1CD}42.16 & 0 \\
ReColor & 0.2 & \textbf{91.87} & \cellcolor[HTML]{B7E1CD}\textbf{83.67} & 0.03 & \textbf{0.89} & \cellcolor[HTML]{B7E1CD}\textbf{81.76} & \cellcolor[HTML]{B7E1CD}\textbf{49.63} & \textbf{91.44} & \cellcolor[HTML]{B7E1CD}23.62 & \cellcolor[HTML]{B7E1CD}\textbf{34.62} & \cellcolor[HTML]{B7E1CD}\textbf{66.37} & \cellcolor[HTML]{B7E1CD}57.61 & \cellcolor[HTML]{B7E1CD}23.64 & \cellcolor[HTML]{B7E1CD}10.93 & \cellcolor[HTML]{B7E1CD}43.68 & 0 \\
ReColor & 0.5 & \cellcolor[HTML]{F4C7C3}90.32 & 81.43 & 0 & 0.84 & \cellcolor[HTML]{B7E1CD}80.69 & \cellcolor[HTML]{B7E1CD}46.32 & \cellcolor[HTML]{F4C7C3}89.55 & \cellcolor[HTML]{B7E1CD}\textbf{30.09} & \cellcolor[HTML]{B7E1CD}32.18 & \cellcolor[HTML]{B7E1CD}65.63 & \cellcolor[HTML]{B7E1CD}\textbf{61.17} & \cellcolor[HTML]{B7E1CD}\textbf{26.42} & \cellcolor[HTML]{B7E1CD}\textbf{13.35} & \cellcolor[HTML]{B7E1CD}\textbf{43.97} & 0 \\ \hline
Gabor & 0 & \textbf{89.12} & 85.27 & 4.41 & 2.11 & 37.83 & 87.31 & \textbf{87.62} & 20.28 & 57.66 & 52.33 & 38.73 & 38.88 & 9.22 & 43.47 & 0.1 \\
Gabor & 0.1 & \cellcolor[HTML]{F4C7C3}87.44 & \cellcolor[HTML]{F4C7C3}83.77 & \cellcolor[HTML]{B7E1CD}9.45 & \cellcolor[HTML]{B7E1CD}5.96 & \cellcolor[HTML]{B7E1CD}40.05 & \cellcolor[HTML]{F4C7C3}86.01 & \cellcolor[HTML]{F4C7C3}85.5 & \cellcolor[HTML]{B7E1CD}21.94 & 58.47 & 52.41 & \cellcolor[HTML]{B7E1CD}41.61 & \cellcolor[HTML]{B7E1CD}43.57 & \cellcolor[HTML]{B7E1CD}16.13 & \cellcolor[HTML]{B7E1CD}45.41 & \cellcolor[HTML]{B7E1CD}1.25 \\
Gabor & 0.2 & 88.56 & \textbf{85.48} & \cellcolor[HTML]{B7E1CD}13.4 & \cellcolor[HTML]{B7E1CD}8.84 & \cellcolor[HTML]{B7E1CD}44.28 & \textbf{87.36} & 87.54 & \cellcolor[HTML]{B7E1CD}27.77 & \cellcolor[HTML]{B7E1CD}62.98 & \cellcolor[HTML]{B7E1CD}54.98 & \cellcolor[HTML]{B7E1CD}47.92 & \cellcolor[HTML]{B7E1CD}48.05 & \cellcolor[HTML]{B7E1CD}\textbf{16.87} & \cellcolor[HTML]{B7E1CD}48.79 & \cellcolor[HTML]{B7E1CD}1.4 \\
Gabor & 0.5 & \cellcolor[HTML]{F4C7C3}87.06 & 84.33 & \cellcolor[HTML]{B7E1CD}\textbf{21.04} & \cellcolor[HTML]{B7E1CD}\textbf{15.34} & \cellcolor[HTML]{B7E1CD}\textbf{50.45} & \cellcolor[HTML]{F4C7C3}85.38 & \cellcolor[HTML]{F4C7C3}84.18 & \cellcolor[HTML]{B7E1CD}\textbf{32.05} & \cellcolor[HTML]{B7E1CD}\textbf{64.84} & \cellcolor[HTML]{B7E1CD}\textbf{55.62} & \cellcolor[HTML]{B7E1CD}\textbf{53.61} & \cellcolor[HTML]{B7E1CD}\textbf{51.44} & \cellcolor[HTML]{B7E1CD}15.46 & \cellcolor[HTML]{B7E1CD}\textbf{51.14} & \cellcolor[HTML]{B7E1CD}\textbf{4.05} \\ \hline
Snow & 0 & 87.69 & 71.59 & \textbf{0.08} & 1.53 & 11.97 & 30.68 & 62.9 & 7.31 & 8.59 & 66.24 & 70.78 & 11.82 & 9.91 & 29.45 & \textbf{0.05} \\
Snow & 0.1 & \textbf{88.18} & \cellcolor[HTML]{F4C7C3}70.47 & \textbf{0.08} & 1.32 & 11.9 & 29.86 & \cellcolor[HTML]{F4C7C3}59.97 & 6.96 & \cellcolor[HTML]{F4C7C3}7.26 & \textbf{66.57} & 71.24 & 11.67 & \cellcolor[HTML]{F4C7C3}8.61 & 28.83 & 0.03 \\
Snow & 0.2 & 87.69 & \textbf{71.85} & 0.03 & 1.81 & 12.71 & \cellcolor[HTML]{B7E1CD}\textbf{44.33} & 62.27 & \cellcolor[HTML]{B7E1CD}8.31 & 9.22 & 66.39 & 71.08 & \cellcolor[HTML]{B7E1CD}\textbf{15.54} & \cellcolor[HTML]{B7E1CD}\textbf{13.12} & \cellcolor[HTML]{B7E1CD}\textbf{31.39} & 0.03 \\
Snow & 0.5 & 88 & 71.39 & 0.05 & \textbf{2.06} & \cellcolor[HTML]{B7E1CD}\textbf{15.77} & \cellcolor[HTML]{B7E1CD}38.88 & \textbf{62.32} & \cellcolor[HTML]{B7E1CD}\textbf{10.22} & \cellcolor[HTML]{B7E1CD}\textbf{10.24} & 66.34 & \cellcolor[HTML]{B7E1CD}\textbf{72.25} & \cellcolor[HTML]{B7E1CD}15.13 & 10.47 & \cellcolor[HTML]{B7E1CD}31.26 & 0 \\ \hline
Pixel & 0 & 88.64 & 67.24 & 0 & 0.59 & 34.96 & 30.37 & 87.18 & 78.6 & 0.25 & 61.63 & 52.08 & 49.38 & 23.97 & 40.52 & 0 \\
Pixel & 0.1 & \cellcolor[HTML]{B7E1CD}89.73 & \cellcolor[HTML]{B7E1CD}69.17 & 0 & 0.79 & \cellcolor[HTML]{B7E1CD}39.18 & \cellcolor[HTML]{B7E1CD}\textbf{37.81} & \cellcolor[HTML]{B7E1CD}88.61 & \cellcolor[HTML]{B7E1CD}81.48 & 0.51 & 62.17 & \cellcolor[HTML]{F4C7C3}49.02 & 50.11 & \cellcolor[HTML]{B7E1CD}26.32 & \cellcolor[HTML]{B7E1CD}42.1 & 0 \\
Pixel & 0.2 & \cellcolor[HTML]{B7E1CD}\textbf{90.47} & \cellcolor[HTML]{B7E1CD}\textbf{70.55} & 0 & 0.61 & \cellcolor[HTML]{B7E1CD}45.02 & \cellcolor[HTML]{B7E1CD}32.69 & \cellcolor[HTML]{B7E1CD}\textbf{89.07} & \cellcolor[HTML]{B7E1CD}\textbf{83.06} & \textbf{1.07} & \cellcolor[HTML]{B7E1CD}\textbf{65.99} & 51.59 & \cellcolor[HTML]{B7E1CD}53.32 & \cellcolor[HTML]{B7E1CD}28.23 & \cellcolor[HTML]{B7E1CD}\textbf{43.43} & 0 \\
Pixel & 0.5 & 88.2 & \cellcolor[HTML]{F4C7C3}64.38 & 0 & \textbf{1.12} & \cellcolor[HTML]{B7E1CD}\textbf{45.68} & 30.11 & 87.59 & \cellcolor[HTML]{B7E1CD}82.37 & 0.74 & \cellcolor[HTML]{B7E1CD}64.13 & \cellcolor[HTML]{B7E1CD}\textbf{53.35} & \cellcolor[HTML]{B7E1CD}\textbf{59.46} & \cellcolor[HTML]{B7E1CD}\textbf{30.27} & \cellcolor[HTML]{B7E1CD}43.27 & 0 \\ \hline
JPEG & 0 & \textbf{88.43} & 85.63 & 15.29 & 5.43 & 41.78 & 77.35 & \textbf{86.85} & 23.21 & 80.87 & 53.81 & 43.39 & 44.79 & 15.39 & 47.82 & 0.74 \\
JPEG & 0.1 & 88.23 & 85.68 & \cellcolor[HTML]{B7E1CD}22.98 & \cellcolor[HTML]{B7E1CD}9.17 & \cellcolor[HTML]{B7E1CD}45.22 & \cellcolor[HTML]{B7E1CD}84.03 & 86.17 & 24.08 & 81.76 & \cellcolor[HTML]{B7E1CD}\textbf{56.87} & \cellcolor[HTML]{B7E1CD}46.52 & \cellcolor[HTML]{B7E1CD}46.47 & 15.08 & \cellcolor[HTML]{B7E1CD}50.34 & \cellcolor[HTML]{B7E1CD}2.04 \\
JPEG & 0.2 & 88.2 & \textbf{85.71} & \cellcolor[HTML]{B7E1CD}25.68 & \cellcolor[HTML]{B7E1CD}11.82 & \cellcolor[HTML]{B7E1CD}44.31 & \cellcolor[HTML]{B7E1CD}82.85 & 86.17 & \cellcolor[HTML]{B7E1CD}24.71 & 81.86 & \cellcolor[HTML]{B7E1CD}56.25 & \cellcolor[HTML]{B7E1CD}47.67 & \cellcolor[HTML]{B7E1CD}47.29 & \cellcolor[HTML]{B7E1CD}17.66 & \cellcolor[HTML]{B7E1CD}51 & \cellcolor[HTML]{B7E1CD}2.55 \\
JPEG & 0.5 & \cellcolor[HTML]{F4C7C3}87.08 & 84.89 & \cellcolor[HTML]{B7E1CD}\textbf{31.8} & \cellcolor[HTML]{B7E1CD}\textbf{15.75} & \cellcolor[HTML]{B7E1CD}\textbf{51.36} & \cellcolor[HTML]{B7E1CD}\textbf{84.46} & \cellcolor[HTML]{F4C7C3}84.92 & \cellcolor[HTML]{B7E1CD}\textbf{28.82} & \cellcolor[HTML]{B7E1CD}\textbf{82.06} & \cellcolor[HTML]{B7E1CD}54.93 & \cellcolor[HTML]{B7E1CD}\textbf{48.18} & \cellcolor[HTML]{B7E1CD}\textbf{54.85} & \cellcolor[HTML]{B7E1CD}\textbf{18.6} & \cellcolor[HTML]{B7E1CD}\textbf{53.38} & \cellcolor[HTML]{B7E1CD}\textbf{3.8} \\ \hline
Elastic & 0 & 89.66 & 77.48 & 0 & 0.82 & 12.25 & 21.81 & 88.05 & 16.84 & 5.71 & 78.6 & 62.01 & 16.79 & 11.69 & 32.67 & 0 \\
Elastic & 0.1 & \cellcolor[HTML]{B7E1CD}\textbf{90.96} & \cellcolor[HTML]{B7E1CD}80.36 & 0 & \cellcolor[HTML]{B7E1CD}2.57 & 12.54 & \cellcolor[HTML]{B7E1CD}28.05 & \cellcolor[HTML]{B7E1CD}\textbf{89.71} & 16.18 & \cellcolor[HTML]{B7E1CD}14.04 & \cellcolor[HTML]{B7E1CD}82.62 & \cellcolor[HTML]{B7E1CD}65.58 & 16.99 & \cellcolor[HTML]{B7E1CD}\textbf{14.47} & \cellcolor[HTML]{B7E1CD}35.26 & 0 \\
Elastic & 0.2 & \cellcolor[HTML]{F4C7C3}88.41 & \cellcolor[HTML]{B7E1CD}79.18 & \textbf{0.05} & \cellcolor[HTML]{B7E1CD}5.15 & 13.04 & \cellcolor[HTML]{B7E1CD}26.42 & \cellcolor[HTML]{F4C7C3}86.7 & \cellcolor[HTML]{B7E1CD}17.89 & \cellcolor[HTML]{B7E1CD}15.24 & \cellcolor[HTML]{B7E1CD}81.27 & \cellcolor[HTML]{B7E1CD}66.11 & \cellcolor[HTML]{B7E1CD}\textbf{21.02} & \cellcolor[HTML]{B7E1CD}13.55 & \cellcolor[HTML]{B7E1CD}35.47 & \textbf{0.03} \\
Elastic & 0.5 & 89.53 & \cellcolor[HTML]{B7E1CD}\textbf{80.79} & 0.03 & \cellcolor[HTML]{B7E1CD}\textbf{8.79} & \cellcolor[HTML]{B7E1CD}\textbf{16.71} & \cellcolor[HTML]{B7E1CD}\textbf{29.66} & 87.62 & \cellcolor[HTML]{B7E1CD}\textbf{19.52} & \cellcolor[HTML]{B7E1CD}\textbf{22.29} & \cellcolor[HTML]{B7E1CD}\textbf{83.97} & \cellcolor[HTML]{B7E1CD}\textbf{66.88} & \cellcolor[HTML]{B7E1CD}20.31 & 11.21 & \cellcolor[HTML]{B7E1CD}\textbf{37.31} & 0 \\ \hline
Wood & 0 & 85.91 & 50.09 & 0 & 0.82 & 11.11 & 35.87 & 83.31 & 11.13 & 1.2 & 60.94 & 78.83 & 14.96 & \textbf{11.13} & 29.95 & 0 \\
Wood & 0.1 & \cellcolor[HTML]{B7E1CD}88.28 & \cellcolor[HTML]{B7E1CD}\textbf{75.77} & 0 & \cellcolor[HTML]{B7E1CD}2.42 & \cellcolor[HTML]{B7E1CD}\textbf{14.14} & \cellcolor[HTML]{B7E1CD}38.9 & \cellcolor[HTML]{B7E1CD}86.09 & \cellcolor[HTML]{B7E1CD}\textbf{12.97} & \cellcolor[HTML]{B7E1CD}\textbf{9.94} & \cellcolor[HTML]{B7E1CD}67.18 & \cellcolor[HTML]{B7E1CD}84.31 & \cellcolor[HTML]{B7E1CD}\textbf{22.09} & \cellcolor[HTML]{F4C7C3}9.66 & \cellcolor[HTML]{B7E1CD}\textbf{35.29} & 0 \\
Wood & 0.2 & \cellcolor[HTML]{B7E1CD}\textbf{89.45} & \cellcolor[HTML]{B7E1CD}72.61 & \textbf{0.03} & \cellcolor[HTML]{B7E1CD}1.91 & 11.44 & \cellcolor[HTML]{B7E1CD}\textbf{43.69} & \cellcolor[HTML]{B7E1CD}\textbf{87.36} & \cellcolor[HTML]{F4C7C3}8.05 & \cellcolor[HTML]{B7E1CD}9.86 & \cellcolor[HTML]{B7E1CD}\textbf{67.49} & \cellcolor[HTML]{B7E1CD}\textbf{86.88} & \cellcolor[HTML]{B7E1CD}16.79 & 10.17 & \cellcolor[HTML]{B7E1CD}34.69 & \textbf{0.03} \\
Wood & 0.5 & 85.99 & \cellcolor[HTML]{B7E1CD}67.18 & \textbf{0.03} & \cellcolor[HTML]{B7E1CD}\textbf{4.1} & \cellcolor[HTML]{B7E1CD}12.87 & \cellcolor[HTML]{B7E1CD}38.8 & 83.01 & 10.34 & \cellcolor[HTML]{B7E1CD}9.91 & \cellcolor[HTML]{B7E1CD}66.37 & \cellcolor[HTML]{B7E1CD}85.07 & \cellcolor[HTML]{B7E1CD}20.46 & \cellcolor[HTML]{F4C7C3}9.32 & \cellcolor[HTML]{B7E1CD}33.96 & \textbf{0.03} \\ \hline
Glitch & 0 & \textbf{88.51} & 36.41 & 0 & 0 & 6.7 & 18.47 & \textbf{86.96} & 17.1 & 0 & 60 & 50.93 & 84.97 & 6.37 & 30.66 & 0 \\
Glitch & 0.1 & 88.33 & \cellcolor[HTML]{F4C7C3}32.38 & 0 & 0.08 & 7.26 & \cellcolor[HTML]{B7E1CD}21.86 & 86.27 & \cellcolor[HTML]{F4C7C3}15.36 & 0.03 & \cellcolor[HTML]{B7E1CD}61.2 & 50.6 & \cellcolor[HTML]{B7E1CD}\textbf{86.68} & 6.73 & 30.7 & 0 \\
Glitch & 0.2 & 87.85 & \cellcolor[HTML]{B7E1CD}48.89 & \textbf{0.03} & 0.43 & \cellcolor[HTML]{B7E1CD}22.78 & \cellcolor[HTML]{B7E1CD}23.75 & \cellcolor[HTML]{F4C7C3}85.35 & \cellcolor[HTML]{B7E1CD}20.08 & \cellcolor[HTML]{B7E1CD}2.52 & \cellcolor[HTML]{B7E1CD}63.13 & 50.34 & \cellcolor[HTML]{B7E1CD}86.19 & \cellcolor[HTML]{B7E1CD}10.42 & \cellcolor[HTML]{B7E1CD}34.49 & 0 \\
Glitch & 0.5 & \cellcolor[HTML]{F4C7C3}86.98 & \cellcolor[HTML]{B7E1CD}\textbf{65.1} & 0 & \cellcolor[HTML]{B7E1CD}\textbf{2.57} & \cellcolor[HTML]{B7E1CD}\textbf{23.11} & \cellcolor[HTML]{B7E1CD}\textbf{32.18} & \cellcolor[HTML]{F4C7C3}84.05 & \cellcolor[HTML]{B7E1CD}\textbf{30.7} & \cellcolor[HTML]{B7E1CD}\textbf{7.11} & \cellcolor[HTML]{B7E1CD}\textbf{64.36} & \cellcolor[HTML]{B7E1CD}\textbf{58.75} & 85.35 & \cellcolor[HTML]{B7E1CD}\textbf{16.64} & \cellcolor[HTML]{B7E1CD}\textbf{39.16} & 0 \\ \hline
\begin{tabular}[c]{@{}l@{}}Kaleid-\\ oscope\end{tabular} & 0 & 88.1 & 73.5 & 0 & 0.31 & 7.03 & 28.66 & 85.91 & 18.83 & 2.22 & 62.98 & 29.78 & 21.07 & 84.89 & 34.6 & 0 \\
\begin{tabular}[c]{@{}l@{}}Kaleid-\\ oscope\end{tabular} & 0.1 & 88.38 & \cellcolor[HTML]{B7E1CD}78.98 & 0.03 & \cellcolor[HTML]{B7E1CD}2.68 & \cellcolor[HTML]{B7E1CD}13.96 & \cellcolor[HTML]{B7E1CD}\textbf{33.53} & \textbf{85.96} & \cellcolor[HTML]{B7E1CD}25.12 & \cellcolor[HTML]{B7E1CD}11.87 & \cellcolor[HTML]{B7E1CD}67.13 & \cellcolor[HTML]{B7E1CD}49.12 & \cellcolor[HTML]{B7E1CD}27.77 & 85.66 & \cellcolor[HTML]{B7E1CD}40.15 & 0 \\
\begin{tabular}[c]{@{}l@{}}Kaleid-\\ oscope\end{tabular} & 0.2 & \textbf{88.51} & \cellcolor[HTML]{B7E1CD}78.78 & 0.03 & \cellcolor[HTML]{B7E1CD}6.5 & \cellcolor[HTML]{B7E1CD}20.48 & \cellcolor[HTML]{B7E1CD}31.95 & 85.66 & \cellcolor[HTML]{B7E1CD}29.4 & \cellcolor[HTML]{B7E1CD}10.22 & \cellcolor[HTML]{B7E1CD}68.51 & \cellcolor[HTML]{B7E1CD}34.85 & \cellcolor[HTML]{B7E1CD}36.74 & \cellcolor[HTML]{B7E1CD}\textbf{86.09} & \cellcolor[HTML]{B7E1CD}40.77 & 0.03 \\
\begin{tabular}[c]{@{}l@{}}Kaleid-\\ oscope\end{tabular} & 0.5 & \cellcolor[HTML]{F4C7C3}87.01 & \cellcolor[HTML]{B7E1CD}\textbf{79.62} & \textbf{0.56} & \cellcolor[HTML]{B7E1CD}\textbf{16.38} & \cellcolor[HTML]{B7E1CD}\textbf{24.13} & \cellcolor[HTML]{B7E1CD}31.67 & \cellcolor[HTML]{F4C7C3}84.48 & \cellcolor[HTML]{B7E1CD}\textbf{32.74} & \cellcolor[HTML]{B7E1CD}\textbf{23.06} & \cellcolor[HTML]{B7E1CD}\textbf{68.61} & \cellcolor[HTML]{B7E1CD}\textbf{58.09} & \cellcolor[HTML]{B7E1CD}\textbf{37.94} & 84.2 & \cellcolor[HTML]{B7E1CD}\textbf{45.12} & \textbf{0.43} \\ \hline
\end{tabular}}}
\caption{\textbf{Intial Training Ablations- worst-case $\ell_2$ regularization on ImageNette. }Accuracy of initially trained models on ImageNette trained using different attacks as indicated in ``Train Attack" column measured across different attacks. $\ell_2$ regularization computed using single step optimization is also considered during initial training, with regularization strength $\lambda$.  Results where regularization improves over no regularization ($\lambda = 0$) by at least 1\% accuracy are highlighted in green, while results where regularization incurs at least a 1\% drop in accuracy are highlighted in red.  Best performing with respect to regularization strength are bolded.}
\label{app:IT_ablation_L2_IM}
\end{table*}

\begin{table*}[]{\renewcommand{\arraystretch}{1.2}
\scalebox{0.73}{
\begin{tabular}{|l|c|c|cccccccccccc|cc|}
\hline
\begin{tabular}[c]{@{}l@{}}Train\\ Attack\end{tabular} & $\lambda$ & Clean & $\ell_2$ & $\ell_\infty$ & StAdv & ReColor & Gabor & Snow & Pixel & JPEG & Elastic & Wood & Glitch & \begin{tabular}[c]{@{}c@{}}Kaleid-\\ oscope\end{tabular} & Avg & Union \\ \hline
$\ell_2$ & 0 & \textbf{90.04} & 83.95 & 7.57 & 5.27 & 33.91 & \textbf{65.17} & \textbf{89.3} & 28.99 & 67.52 & 62.85 & 49.22 & \textbf{45.78} & 12.76 & 46.03 & 0.51 \\
$\ell_2$ & 0.05 & \cellcolor[HTML]{F4CCCC}88.51 & \cellcolor[HTML]{F4CCCC}82.75 & \cellcolor[HTML]{B7E1CD}13.55 & \cellcolor[HTML]{B7E1CD}10.14 & 33.5 & \cellcolor[HTML]{F4CCCC}63.03 & \cellcolor[HTML]{F4CCCC}87.11 & 29.66 & \cellcolor[HTML]{B7E1CD}69.17 & \cellcolor[HTML]{B7E1CD}\textbf{63.9} & \cellcolor[HTML]{B7E1CD}53.27 & \cellcolor[HTML]{F4CCCC}41.78 & \cellcolor[HTML]{B7E1CD}17.12 & \cellcolor[HTML]{B7E1CD}47.08 & \cellcolor[HTML]{B7E1CD}2.22 \\
$\ell_2$ & 0.1 & 89.5 & 83.64 & \cellcolor[HTML]{B7E1CD}16.89 & \cellcolor[HTML]{B7E1CD}9.2 & \cellcolor[HTML]{B7E1CD}41.71 & \cellcolor[HTML]{F4CCCC}56.97 & \cellcolor[HTML]{F4CCCC}86.93 & \cellcolor[HTML]{B7E1CD}\textbf{37.91} & \cellcolor[HTML]{B7E1CD}71.52 & 62.62 & \cellcolor[HTML]{B7E1CD}\textbf{54.34} & \cellcolor[HTML]{B7E1CD}\textbf{51.85} & \cellcolor[HTML]{B7E1CD}\textbf{20.71} & \cellcolor[HTML]{B7E1CD}49.52 & 1.45 \\
$\ell_2$ & 0.2 & \cellcolor[HTML]{F4CCCC}88.87 & \textbf{84.17} & \cellcolor[HTML]{B7E1CD}\textbf{25.27} & \cellcolor[HTML]{B7E1CD}\textbf{14.04} & \cellcolor[HTML]{B7E1CD}\textbf{44.71} & \cellcolor[HTML]{B7E1CD}74.27 & \cellcolor[HTML]{F4CCCC}86.96 & \cellcolor[HTML]{B7E1CD}36.74 & \cellcolor[HTML]{B7E1CD}\textbf{74.39} & 63.36 & \cellcolor[HTML]{B7E1CD}53.07 & \cellcolor[HTML]{B7E1CD}49.45 & \cellcolor[HTML]{B7E1CD}15.34 & \cellcolor[HTML]{B7E1CD}\textbf{51.82} & \cellcolor[HTML]{B7E1CD}\textbf{2.42} \\ \hline
$\ell_\infty$ & 0 & \textbf{84.51} & \textbf{81.71} & 58.39 & 43.49 & 67.82 & \textbf{72.61} & \textbf{83.31} & 41.83 & \textbf{65.35} & \textbf{63.9} & \textbf{67.18} & 63.64 & 30.75 & \textbf{61.67} & 13.2 \\
$\ell_\infty$ & 0.05 & \cellcolor[HTML]{F4C7C3}83.13 & \cellcolor[HTML]{F4C7C3}80.15 & 58.73 & \cellcolor[HTML]{B7E1CD}44.92 & 67.44 & \cellcolor[HTML]{F4C7C3}65.53 & 83.24 & 41.53 & \cellcolor[HTML]{F4C7C3}61.58 & \cellcolor[HTML]{F4C7C3}62.42 & \cellcolor[HTML]{F4C7C3}62.98 & \cellcolor[HTML]{B7E1CD}65.07 & \cellcolor[HTML]{F4C7C3}29.63 & \cellcolor[HTML]{F4C7C3}60.26 & \cellcolor[HTML]{B7E1CD}14.55 \\
$\ell_\infty$ & 0.1 & \cellcolor[HTML]{F4C7C3}83.36 & \cellcolor[HTML]{F4C7C3}80.23 & 58.93 & 43.57 & 67.85 & \cellcolor[HTML]{F4C7C3}69.71 & 82.85 & \cellcolor[HTML]{F4C7C3}40.13 & \cellcolor[HTML]{F4C7C3}56.79 & 63.13 & \cellcolor[HTML]{F4C7C3}65.58 & \cellcolor[HTML]{B7E1CD}65.66 & \cellcolor[HTML]{B7E1CD}31.97 & \cellcolor[HTML]{F4C7C3}60.53 & \cellcolor[HTML]{B7E1CD}16.08 \\
$\ell_\infty$ & 0.2 & \cellcolor[HTML]{F4C7C3}81.22 & \cellcolor[HTML]{F4C7C3}77.83 & \textbf{59.21} & \cellcolor[HTML]{B7E1CD}\textbf{51.57} & \textbf{67.97} & \cellcolor[HTML]{F4C7C3}65.66 & \cellcolor[HTML]{F4C7C3}80.69 & \textbf{42.52} & \cellcolor[HTML]{F4C7C3}58.19 & \cellcolor[HTML]{F4C7C3}61.76 & \cellcolor[HTML]{F4C7C3}61.15 & \cellcolor[HTML]{B7E1CD}\textbf{67.44} & \cellcolor[HTML]{B7E1CD}\textbf{40.03} & 61.17 & \cellcolor[HTML]{B7E1CD}\textbf{19.13} \\ \hline
StAdv & 0 & \textbf{83.31} & \textbf{77.58} & 1.45 & 69.81 & 13.43 & 36.66 & \textbf{81.5} & 20.56 & 49.89 & \textbf{70.32} & 60.76 & 36.15 & 24.84 & 45.25 & 1.04 \\
StAdv & 0.05 & \cellcolor[HTML]{F4C7C3}82.14 & 77.17 & 2.06 & \cellcolor[HTML]{B7E1CD}73.02 & \cellcolor[HTML]{B7E1CD}24.94 & \cellcolor[HTML]{B7E1CD}45.5 & \cellcolor[HTML]{F4C7C3}78.62 & \cellcolor[HTML]{B7E1CD}\textbf{22.09} & 48.97 & 69.99 & \cellcolor[HTML]{B7E1CD}62.47 & \cellcolor[HTML]{B7E1CD}\textbf{38.73} & \cellcolor[HTML]{B7E1CD}\textbf{28.79} & \cellcolor[HTML]{B7E1CD}\textbf{47.7} & 1.63 \\
StAdv & 0.1 & \cellcolor[HTML]{F4C7C3}79.08 & \cellcolor[HTML]{F4C7C3}72.79 & 1.86 & 70.19 & \cellcolor[HTML]{B7E1CD}\textbf{35.87} & \cellcolor[HTML]{B7E1CD}48.92 & \cellcolor[HTML]{F4C7C3}74.52 & 20.25 & \cellcolor[HTML]{F4C7C3}43.9 & \cellcolor[HTML]{F4C7C3}63.29 & \cellcolor[HTML]{F4C7C3}57.96 & 35.21 & 24.05 & 45.73 & 1.55 \\
StAdv & 0.2 & \cellcolor[HTML]{F4C7C3}80.89 & \cellcolor[HTML]{F4C7C3}74.09 & \cellcolor[HTML]{B7E1CD}\textbf{3.31} & \cellcolor[HTML]{B7E1CD}\textbf{75.49} & \cellcolor[HTML]{B7E1CD}30.42 & \cellcolor[HTML]{B7E1CD}\textbf{51.21} & \cellcolor[HTML]{F4C7C3}77.61 & \cellcolor[HTML]{F4C7C3}16.56 & \cellcolor[HTML]{B7E1CD}\textbf{52.56} & \cellcolor[HTML]{F4C7C3}67.06 & \cellcolor[HTML]{B7E1CD}\textbf{62.75} & 36.28 & \cellcolor[HTML]{F4C7C3}22.39 & \cellcolor[HTML]{B7E1CD}47.48 & \cellcolor[HTML]{B7E1CD}\textbf{2.98} \\ \hline
ReColor & 0 & 91.34 & 81.53 & 0.03 & 0.41 & 79.08 & 42.55 & 90.6 & 22.39 & 25.2 & 64.31 & 54.8 & 18.65 & 8.94 & 40.71 & 0 \\
ReColor & 0.05 & 91.08 & 80.64 & 0.03 & 0.56 & 79.24 & \cellcolor[HTML]{B7E1CD}46.8 & 90.45 & \cellcolor[HTML]{B7E1CD}29.3 & \cellcolor[HTML]{B7E1CD}27.24 & 64.03 & 55.31 & \cellcolor[HTML]{B7E1CD}26.55 & \cellcolor[HTML]{B7E1CD}12.03 & \cellcolor[HTML]{B7E1CD}42.68 & 0 \\
ReColor & 0.1 & \textbf{92.18} & \cellcolor[HTML]{B7E1CD}\textbf{84.33} & 0.18 & \cellcolor[HTML]{B7E1CD}1.61 & \cellcolor[HTML]{B7E1CD}82.93 & \cellcolor[HTML]{B7E1CD}53.66 & 91.11 & \cellcolor[HTML]{B7E1CD}29.66 & \cellcolor[HTML]{B7E1CD}40.79 & \cellcolor[HTML]{B7E1CD}66.04 & \cellcolor[HTML]{B7E1CD}57.25 & \cellcolor[HTML]{B7E1CD}31.16 & \cellcolor[HTML]{B7E1CD}\textbf{15.24} & \cellcolor[HTML]{B7E1CD}46.16 & 0 \\
ReColor & 0.2 & 92.1 & \cellcolor[HTML]{B7E1CD}83.92 & \textbf{0.25} & \cellcolor[HTML]{B7E1CD}\textbf{2.45} & \cellcolor[HTML]{B7E1CD}\textbf{83.9} & \cellcolor[HTML]{B7E1CD}\textbf{54.52} & \textbf{91.54} & \cellcolor[HTML]{B7E1CD}\textbf{33.73} & \cellcolor[HTML]{B7E1CD}\textbf{43.26} & \cellcolor[HTML]{B7E1CD}\textbf{67.57} & \cellcolor[HTML]{B7E1CD}\textbf{57.83} & \cellcolor[HTML]{B7E1CD}\textbf{37.94} & \cellcolor[HTML]{B7E1CD}12.51 & \cellcolor[HTML]{B7E1CD}\textbf{47.45} & \textbf{0.1} \\ \hline
Gabor & 0 & \textbf{89.12} & \textbf{85.27} & 4.41 & 2.11 & 37.83 & 87.31 & 87.62 & 20.28 & \textbf{57.66} & 52.33 & 38.73 & 38.88 & 9.22 & 43.47 & 0.1 \\
Gabor & 0.05 & 88.76 & 84.48 & \cellcolor[HTML]{B7E1CD}5.61 & \cellcolor[HTML]{B7E1CD}3.54 & \cellcolor[HTML]{F4C7C3}35.49 & \textbf{87.54} & \textbf{88} & \cellcolor[HTML]{F4C7C3}18.73 & \cellcolor[HTML]{F4C7C3}56.61 & \cellcolor[HTML]{B7E1CD}\textbf{56.08} & \cellcolor[HTML]{B7E1CD}43.21 & \cellcolor[HTML]{F4C7C3}37.45 & \cellcolor[HTML]{B7E1CD}12.69 & 44.12 & 0.61 \\
Gabor & 0.1 & \cellcolor[HTML]{F4C7C3}87.92 & \cellcolor[HTML]{F4C7C3}84.1 & \cellcolor[HTML]{B7E1CD}8.36 & \cellcolor[HTML]{B7E1CD}4.79 & \cellcolor[HTML]{B7E1CD}41.81 & \cellcolor[HTML]{F4C7C3}85.66 & \cellcolor[HTML]{F4C7C3}84.92 & \cellcolor[HTML]{B7E1CD}26.29 & \cellcolor[HTML]{F4C7C3}54.29 & 52.97 & \cellcolor[HTML]{B7E1CD}45.17 & \cellcolor[HTML]{B7E1CD}45.2 & \cellcolor[HTML]{B7E1CD}11.08 & \cellcolor[HTML]{B7E1CD}45.39 & 0.36 \\
Gabor & 0.2 & \cellcolor[HTML]{F4C7C3}87.97 & 84.33 & \cellcolor[HTML]{B7E1CD}\textbf{12.46} & \cellcolor[HTML]{B7E1CD}\textbf{8.99} & \cellcolor[HTML]{B7E1CD}\textbf{44.36} & 86.55 & \cellcolor[HTML]{F4C7C3}85.27 & \cellcolor[HTML]{B7E1CD}\textbf{29.12} & 57.4 & \cellcolor[HTML]{B7E1CD}55.67 & \cellcolor[HTML]{B7E1CD}\textbf{51.41} & \cellcolor[HTML]{B7E1CD}\textbf{49.91} & \cellcolor[HTML]{B7E1CD}\textbf{13.25} & \cellcolor[HTML]{B7E1CD}\textbf{48.23} & \cellcolor[HTML]{B7E1CD}\textbf{1.55} \\ \hline
Snow & 0 & \textbf{87.69} & \textbf{71.59} & 0.08 & 1.53 & \textbf{11.97} & 30.68 & 62.9 & 7.31 & 8.59 & \textbf{66.24} & 70.78 & 11.82 & 9.91 & \textbf{29.45} & 0.05 \\
Snow & 0.05 & 86.85 & \cellcolor[HTML]{F4C7C3}70.34 & 0.05 & 1.48 & \cellcolor[HTML]{F4C7C3}10.01 & \cellcolor[HTML]{B7E1CD}\textbf{33.2} & \cellcolor[HTML]{B7E1CD}\textbf{64.71} & 7.52 & \textbf{9.15} & \cellcolor[HTML]{F4C7C3}64.46 & \cellcolor[HTML]{F4C7C3}69.55 & 11.41 & 8.94 & 29.24 & 0.03 \\
Snow & 0.1 & \cellcolor[HTML]{F4C7C3}86.04 & 70.96 & \textbf{0.18} & \textbf{1.86} & \cellcolor[HTML]{F4C7C3}10.57 & \cellcolor[HTML]{F4C7C3}23.77 & \cellcolor[HTML]{F4C7C3}61.43 & \cellcolor[HTML]{B7E1CD}\textbf{8.48} & \cellcolor[HTML]{F4C7C3}7.49 & 65.32 & 70.34 & \cellcolor[HTML]{B7E1CD}\textbf{15.21} & \cellcolor[HTML]{F4C7C3}7.77 & 28.62 & \textbf{0.13} \\
Snow & 0.2 & 87.62 & \cellcolor[HTML]{F4C7C3}68.89 & 0.15 & 1.61 & \textbf{11.97} & 29.91 & \cellcolor[HTML]{B7E1CD}64.38 & 7.06 & 8.03 & 66.09 & \cellcolor[HTML]{B7E1CD}\textbf{71.8} & 12 & \textbf{9.96} & 29.32 & 0.1 \\ \hline
Pixel & 0 & 88.64 & 67.24 & 0 & 0.59 & 34.96 & 30.37 & 87.18 & 78.6 & 0.25 & 61.63 & \textbf{52.08} & 49.38 & 23.97 & 40.52 & 0 \\
Pixel & 0.05 & 89.07 & \cellcolor[HTML]{B7E1CD}68.99 & 0 & \textbf{1.12} & \cellcolor[HTML]{B7E1CD}40.66 & 29.38 & 87.92 & \cellcolor[HTML]{B7E1CD}80.79 & 0.64 & 62.32 & \cellcolor[HTML]{F4C7C3}49.32 & \cellcolor[HTML]{B7E1CD}52.05 & \cellcolor[HTML]{B7E1CD}25.86 & \cellcolor[HTML]{B7E1CD}41.59 & 0 \\
Pixel & 0.1 & 89.2 & \cellcolor[HTML]{B7E1CD}\textbf{70.73} & 0 & 1.04 & \cellcolor[HTML]{B7E1CD}43.49 & \cellcolor[HTML]{B7E1CD}\textbf{33.71} & 88.1 & \cellcolor[HTML]{B7E1CD}82.55 & 0.79 & \cellcolor[HTML]{B7E1CD}\textbf{64.48} & \cellcolor[HTML]{F4C7C3}49.78 & 49.94 & \cellcolor[HTML]{B7E1CD}29.3 & \cellcolor[HTML]{B7E1CD}42.83 & 0 \\
Pixel & 0.2 & \cellcolor[HTML]{B7E1CD}\textbf{90.7} & 67.41 & 0 & 1.02 & \cellcolor[HTML]{B7E1CD}\textbf{46.45} & \cellcolor[HTML]{B7E1CD}33.25 & \cellcolor[HTML]{B7E1CD}\textbf{89.25} & \cellcolor[HTML]{B7E1CD}\textbf{83.95} & \cellcolor[HTML]{B7E1CD}\textbf{1.73} & \cellcolor[HTML]{B7E1CD}63.77 & \cellcolor[HTML]{F4C7C3}50.8 & \cellcolor[HTML]{B7E1CD}\textbf{52.89} & \cellcolor[HTML]{B7E1CD}\textbf{31.24} & \cellcolor[HTML]{B7E1CD}\textbf{43.48} & 0 \\ \hline
JPEG & 0 & \textbf{88.43} & \textbf{85.63} & 15.29 & 5.43 & 41.78 & 77.35 & \textbf{86.85} & 23.21 & \textbf{80.87} & 53.81 & 43.39 & 44.79 & 15.39 & 47.82 & 0.74 \\
JPEG & 0.05 & \cellcolor[HTML]{F4C7C3}87.31 & \cellcolor[HTML]{F4C7C3}84.54 & \cellcolor[HTML]{B7E1CD}23.72 & \cellcolor[HTML]{B7E1CD}8.51 & \cellcolor[HTML]{B7E1CD}\textbf{46.7} & \cellcolor[HTML]{B7E1CD}80.56 & 86.11 & \cellcolor[HTML]{B7E1CD}24.82 & \textbf{80.87} & 54.39 & \cellcolor[HTML]{B7E1CD}45.02 & 45.3 & \cellcolor[HTML]{B7E1CD}17.32 & \cellcolor[HTML]{B7E1CD}49.82 & \cellcolor[HTML]{B7E1CD}2.01 \\
JPEG & 0.1 & \cellcolor[HTML]{F4C7C3}86.75 & \cellcolor[HTML]{F4C7C3}84.05 & \cellcolor[HTML]{B7E1CD}24.33 & \cellcolor[HTML]{B7E1CD}11.39 & \cellcolor[HTML]{B7E1CD}43.13 & \cellcolor[HTML]{B7E1CD}79.8 & \cellcolor[HTML]{F4C7C3}84.89 & 24.15 & 80.23 & \cellcolor[HTML]{F4C7C3}52.79 & \cellcolor[HTML]{B7E1CD}47.34 & \cellcolor[HTML]{B7E1CD}47.82 & 16.36 & \cellcolor[HTML]{B7E1CD}49.69 & \cellcolor[HTML]{B7E1CD}2.7 \\
JPEG & 0.2 & \cellcolor[HTML]{F4C7C3}86.65 & \cellcolor[HTML]{F4C7C3}83.64 & \cellcolor[HTML]{B7E1CD}\textbf{27.57} & \cellcolor[HTML]{B7E1CD}\textbf{15.44} & \cellcolor[HTML]{B7E1CD}45.61 & \cellcolor[HTML]{B7E1CD}\textbf{81.17} & \cellcolor[HTML]{F4C7C3}84.74 & \cellcolor[HTML]{B7E1CD}\textbf{26.78} & 80.64 & \cellcolor[HTML]{B7E1CD}\textbf{57.68} & \cellcolor[HTML]{B7E1CD}\textbf{49.99} & \cellcolor[HTML]{B7E1CD}\textbf{50.93} & \cellcolor[HTML]{B7E1CD}\textbf{20.08} & \cellcolor[HTML]{B7E1CD}\textbf{52.02} & \cellcolor[HTML]{B7E1CD}\textbf{4.13} \\ \hline
Elastic & 0 & \textbf{89.66} & \textbf{77.48} & 0 & 0.82 & 12.25 & 21.81 & 88.05 & \textbf{16.84} & \textbf{5.71} & 78.6 & 62.01 & \textbf{16.79} & 11.69 & \textbf{32.67} & 0 \\
Elastic & 0.05 & 88.99 & \cellcolor[HTML]{F4C7C3}75.67 & 0 & 0.74 & \cellcolor[HTML]{F4C7C3}11.11 & \cellcolor[HTML]{F4C7C3}19.06 & 88.23 & \cellcolor[HTML]{F4C7C3}15.34 & 5.17 & 79.21 & \cellcolor[HTML]{F4C7C3}59.97 & \cellcolor[HTML]{F4C7C3}15.69 & \cellcolor[HTML]{F4C7C3}7.85 & \cellcolor[HTML]{F4C7C3}31.5 & 0 \\
Elastic & 0.1 & 89.43 & \textbf{77.48} & 0 & \textbf{1.12} & \cellcolor[HTML]{F4C7C3}10.14 & \cellcolor[HTML]{B7E1CD}25.07 & \textbf{88.43} & \cellcolor[HTML]{F4C7C3}12.43 & 5.53 & \textbf{79.44} & \textbf{62.09} & \cellcolor[HTML]{F4C7C3}13.38 & \cellcolor[HTML]{F4C7C3}8.82 & 31.99 & 0 \\
Elastic & 0.2 & \cellcolor[HTML]{F4C7C3}88.36 & \cellcolor[HTML]{F4C7C3}74.14 & 0 & 0.66 & \textbf{12.33} & \cellcolor[HTML]{B7E1CD}\textbf{26.93} & 87.31 & \cellcolor[HTML]{F4C7C3}15.13 & \cellcolor[HTML]{F4C7C3}3.13 & \cellcolor[HTML]{F4C7C3}77.25 & \cellcolor[HTML]{F4C7C3}58.98 & \cellcolor[HTML]{F4C7C3}14.75 & \cellcolor[HTML]{B7E1CD}\textbf{13.99} & 32.05 & 0 \\ \hline
Wood & 0 & 85.91 & 50.09 & 0 & 0.82 & 11.11 & \textbf{35.87} & 83.31 & \textbf{11.13} & 1.2 & 60.94 & 78.83 & 14.96 & \textbf{11.13} & 29.95 & 0 \\
Wood & 0.05 & \cellcolor[HTML]{B7E1CD}\textbf{88.61} & \cellcolor[HTML]{B7E1CD}72.03 & \textbf{0.03} & 0.36 & 10.27 & \cellcolor[HTML]{F4C7C3}29.43 & \cellcolor[HTML]{B7E1CD}86.11 & \cellcolor[HTML]{F4C7C3}8.43 & \cellcolor[HTML]{B7E1CD}4.1 & \cellcolor[HTML]{B7E1CD}\textbf{67.69} & \cellcolor[HTML]{B7E1CD}\textbf{85.27} & \cellcolor[HTML]{B7E1CD}17.99 & \cellcolor[HTML]{F4C7C3}9.76 & \cellcolor[HTML]{B7E1CD}32.62 & 0 \\
Wood & 0.1 & \cellcolor[HTML]{B7E1CD}87.54 & \cellcolor[HTML]{B7E1CD}70.93 & \textbf{0.03} & 0.48 & \cellcolor[HTML]{B7E1CD}\textbf{12.25} & \cellcolor[HTML]{F4C7C3}23.59 & \cellcolor[HTML]{B7E1CD}\textbf{86.62} & 10.6 & \cellcolor[HTML]{B7E1CD}5.2 & \cellcolor[HTML]{B7E1CD}65.1 & \cellcolor[HTML]{B7E1CD}83.24 & \cellcolor[HTML]{B7E1CD}\textbf{20.87} & \cellcolor[HTML]{F4C7C3}8 & \cellcolor[HTML]{B7E1CD}32.24 & \textbf{0.03} \\
Wood & 0.2 & \cellcolor[HTML]{B7E1CD}87.77 & \cellcolor[HTML]{B7E1CD}\textbf{73.89} & 0 & \cellcolor[HTML]{B7E1CD}\textbf{2.22} & 10.14 & \cellcolor[HTML]{F4C7C3}34.78 & \cellcolor[HTML]{B7E1CD}86.04 & \cellcolor[HTML]{F4C7C3}9.15 & \cellcolor[HTML]{B7E1CD}\textbf{9.27} & \cellcolor[HTML]{B7E1CD}66.14 & \cellcolor[HTML]{B7E1CD}84.08 & \cellcolor[HTML]{B7E1CD}20.28 & 11.03 & \cellcolor[HTML]{B7E1CD}\textbf{33.92} & 0 \\ \hline
Glitch & 0 & \textbf{88.51} & 36.41 & 0 & 0 & 6.7 & 18.47 & \textbf{86.96} & 17.1 & 0 & 60 & 50.93 & 84.97 & 6.37 & 30.66 & 0 \\
Glitch & 0.05 & 87.97 & \cellcolor[HTML]{F4C7C3}6.62 & 0 & 0 & \cellcolor[HTML]{F4C7C3}4.2 & \cellcolor[HTML]{B7E1CD}\textbf{29.81} & 86.5 & \cellcolor[HTML]{F4C7C3}12.1 & 0 & \cellcolor[HTML]{F4C7C3}53.71 & \cellcolor[HTML]{F4C7C3}49.12 & \cellcolor[HTML]{F4C7C3}81.91 & \cellcolor[HTML]{F4C7C3}2.96 & \cellcolor[HTML]{F4C7C3}27.24 & 0 \\
Glitch & 0.1 & 87.52 & \cellcolor[HTML]{F4C7C3}31.75 & 0 & 0.05 & \cellcolor[HTML]{B7E1CD}16.1 & \cellcolor[HTML]{B7E1CD}21.25 & 86.06 & \cellcolor[HTML]{B7E1CD}19.21 & 0.05 & \cellcolor[HTML]{B7E1CD}\textbf{62.62} & \cellcolor[HTML]{B7E1CD}\textbf{55.01} & \cellcolor[HTML]{B7E1CD}\textbf{86.24} & 6.04 & \cellcolor[HTML]{B7E1CD}32.03 & 0 \\
Glitch & 0.2 & \cellcolor[HTML]{F4C7C3}86.47 & \cellcolor[HTML]{B7E1CD}\textbf{53.61} & 0 & \textbf{0.64} & \cellcolor[HTML]{B7E1CD}\textbf{23.21} & \cellcolor[HTML]{B7E1CD}24.51 & \cellcolor[HTML]{F4C7C3}85.22 & \cellcolor[HTML]{B7E1CD}\textbf{25.55} & \cellcolor[HTML]{B7E1CD}\textbf{1.35} & \cellcolor[HTML]{B7E1CD}61.15 & \cellcolor[HTML]{B7E1CD}52.08 & 84.79 & \cellcolor[HTML]{B7E1CD}\textbf{9.86} & \cellcolor[HTML]{B7E1CD}\textbf{35.16} & 0 \\ \hline
\begin{tabular}[c]{@{}l@{}}Kaleid-\\ oscope\end{tabular} & 0 & \textbf{88.1} & 73.5 & 0 & 0.31 & 7.03 & 28.66 & 85.91 & 18.83 & 2.22 & \textbf{62.98} & 29.78 & 21.07 & 84.89 & 34.6 & 0 \\
\begin{tabular}[c]{@{}l@{}}Kaleid-\\ oscope\end{tabular} & 0.05 & \cellcolor[HTML]{F4C7C3}86.9 & \cellcolor[HTML]{F4C7C3}71.03 & \textbf{0.03} & 0.64 & 6.24 & \textbf{29.15} & 85.32 & \cellcolor[HTML]{F4C7C3}15.69 & 3.11 & \cellcolor[HTML]{F4C7C3}59.85 & \cellcolor[HTML]{B7E1CD}\textbf{44.71} & \cellcolor[HTML]{B7E1CD}23.49 & 84.23 & \textbf{35.29} & 0 \\
\begin{tabular}[c]{@{}l@{}}Kaleid-\\ oscope\end{tabular} & 0.1 & 87.64 & 74.14 & 0 & 0.56 & \cellcolor[HTML]{B7E1CD}9.15 & \cellcolor[HTML]{F4C7C3}25.71 & 85.4 & \cellcolor[HTML]{F4C7C3}17.58 & \cellcolor[HTML]{B7E1CD}\textbf{3.87} & \cellcolor[HTML]{F4C7C3}60.87 & \cellcolor[HTML]{F4C7C3}27.52 & \cellcolor[HTML]{B7E1CD}\textbf{25.81} & 85.15 & 34.65 & 0 \\
\begin{tabular}[c]{@{}l@{}}Kaleid-\\ oscope\end{tabular} & 0.2 & 87.46 & \cellcolor[HTML]{B7E1CD}\textbf{74.8} & 0 & \textbf{1.25} & \cellcolor[HTML]{B7E1CD}\textbf{9.78} & \cellcolor[HTML]{F4C7C3}24.36 & \textbf{86.06} & \cellcolor[HTML]{B7E1CD}\textbf{20.87} & \cellcolor[HTML]{B7E1CD}3.49 & 62.39 & \cellcolor[HTML]{F4C7C3}28.15 & \cellcolor[HTML]{B7E1CD}23.64 & \textbf{85.17} & 35 & 0 \\ \hline
\end{tabular}}}
\caption{\textbf{Intial Training Ablations- variation regularization on ImageNette. }Accuracy of initially trained models on ImageNette trained using different attacks as indicated in ``Train Attack" column measured across different attacks.  Variation regularization computed using single step optimization is also considered during initial training, with regularization strength $\lambda$.  Results where regularization improves over no regularization ($\lambda = 0$) by at least 1\% accuracy are highlighted in green, while results where regularization incurs at least a 1\% drop in accuracy are highlighted in red.  Best performing with respect to regularization strength are bolded.}
\label{app:IT_ablation_VR_IM}
\end{table*}

\begin{table*}[]{\renewcommand{\arraystretch}{1.2}
\scalebox{0.73}{
\begin{tabular}{|l|c|c|cccccccccccc|cc|}
\hline
\begin{tabular}[c]{@{}l@{}}Train\\ Attack\end{tabular}   & $\lambda$ & Clean  & $\ell_2$     & $\ell_\infty$   & StAdv  & ReColor& Gabor  & Snow   & Pixel  & JPEG   & Elastic& Wood   & Glitch & \begin{tabular}[c]{@{}c@{}}Kaleid\\ -oscope\end{tabular} & Avg    & Union  \\ \hline
$\ell_2$ & 0  & \textbf{90.04}   & \textbf{83.95}   & 7.57   & 5.27   & 33.91  & 65.17  & \textbf{89.3}    & 28.99  & 67.52  & 62.85  & 49.22  & \textbf{45.78}   & 12.76     & 46.03  & 0.51   \\
$\ell_2$ & 0.5& 89.68  & 83.49  & \cellcolor[HTML]{B7E1CD}8.64    & 6.17   & 34.17  & 64.59  & 88.92  & 29.1   & 68.41  & \cellcolor[HTML]{B7E1CD}\textbf{64.59} & \cellcolor[HTML]{B7E1CD}\textbf{51.49} & \cellcolor[HTML]{F4CCCC}42.14   & 13.58     & 46.27  & 0.36   \\
$\ell_2$ & 1  & 89.45  & \cellcolor[HTML]{F4CCCC}82.9    & \cellcolor[HTML]{B7E1CD}10.37   & \cellcolor[HTML]{B7E1CD}\textbf{7.49}  & 33.1   & \cellcolor[HTML]{B7E1CD}\textbf{67.95} & \cellcolor[HTML]{F4CCCC}87.59   & 28.05  & \cellcolor[HTML]{B7E1CD}\textbf{69.38} & \cellcolor[HTML]{F4CCCC}61.27   & \cellcolor[HTML]{B7E1CD}50.83   & \cellcolor[HTML]{F4CCCC}42.5    & \cellcolor[HTML]{B7E1CD}\textbf{14.9}     & \textbf{46.36}   & \textbf{0.76}    \\
$\ell_2$ & 2  & 89.76  & 83.36  & \cellcolor[HTML]{B7E1CD}\textbf{10.6}  & \cellcolor[HTML]{B7E1CD}6.73    & \cellcolor[HTML]{B7E1CD}\textbf{35.01} & \cellcolor[HTML]{F4CCCC}63.03   & 88.46  & \textbf{29.17}   & \cellcolor[HTML]{B7E1CD}68.79   & 62.83  & \cellcolor[HTML]{B7E1CD}51.11   & \cellcolor[HTML]{F4CCCC}42.22   & 12.87     & 46.18  & 0.61   \\ \hline
$\ell_\infty$      & 0  & 84.51  & 81.71  & 58.39  & 43.49  & 67.82  & \textbf{72.61}   & 83.31  & 41.83  & 65.35  & 63.9   & \textbf{67.18}   & 63.64  & 30.75     & 61.67  & 13.2   \\
$\ell_\infty$      & 0.5& 84.51  & 81.63  & 58.93  & \cellcolor[HTML]{B7E1CD}44.66   & 67.62  & 72.05  & 83.69  & \cellcolor[HTML]{B7E1CD}43.31   & \cellcolor[HTML]{F4C7C3}62.62   & \cellcolor[HTML]{F4C7C3}62.22   & \cellcolor[HTML]{F4C7C3}64.54   & \cellcolor[HTML]{B7E1CD}66.42   & \cellcolor[HTML]{B7E1CD}32.71      & 61.7   & \cellcolor[HTML]{B7E1CD}14.42   \\
$\ell_\infty$      & 1  & 83.97  & 81.22  & \textbf{59.01}   & \cellcolor[HTML]{B7E1CD}44.87   & \cellcolor[HTML]{F4C7C3}66.6    & 71.9   & \cellcolor[HTML]{F4C7C3}81.73   & \cellcolor[HTML]{B7E1CD}43.52   & \cellcolor[HTML]{B7E1CD}66.47   & 63.26  & 66.39  & \cellcolor[HTML]{B7E1CD}\textbf{66.73} & \cellcolor[HTML]{B7E1CD}\textbf{33.99}    & 62.14  & \cellcolor[HTML]{B7E1CD}\textbf{15.97} \\
$\ell_\infty$      & 2  & \textbf{85.2}    & \cellcolor[HTML]{B7E1CD}\textbf{83.36} & 58.42  & \cellcolor[HTML]{B7E1CD}\textbf{47.92} & \cellcolor[HTML]{B7E1CD}\textbf{69.27} & \cellcolor[HTML]{F4C7C3}69.99   & \textbf{84.18}   & \cellcolor[HTML]{B7E1CD}\textbf{43.97} & \cellcolor[HTML]{B7E1CD}\textbf{73.53} & \textbf{64.33}   & 66.7   & 62.75  & \cellcolor[HTML]{F4C7C3}26.75      & \textbf{62.6}    & 13.81  \\ \hline
StAdv     & 0  & 83.31  & 77.58  & 1.45   & \textbf{69.81}   & 13.43  & 36.66  & 81.5   & 20.56  & 49.89  & 70.32  & 60.76  & 36.15  & 24.84     & 45.25  & 1.04   \\
StAdv     & 0.5& 84.1   & \cellcolor[HTML]{B7E1CD}79.31   & 2.09   & \textbf{69.81}   & \cellcolor[HTML]{B7E1CD}16.28   & \cellcolor[HTML]{B7E1CD}39.52   & 81.55  & \cellcolor[HTML]{B7E1CD}\textbf{23.95} & \cellcolor[HTML]{B7E1CD}55.97   & \cellcolor[HTML]{B7E1CD}71.49   & \cellcolor[HTML]{B7E1CD}62.6    & \cellcolor[HTML]{B7E1CD}\textbf{42.42} & \cellcolor[HTML]{B7E1CD}\textbf{27.95}    & \cellcolor[HTML]{B7E1CD}\textbf{47.75} & 1.5    \\
StAdv     & 1  & \cellcolor[HTML]{B7E1CD}\textbf{84.89} & \cellcolor[HTML]{B7E1CD}\textbf{79.67} & \cellcolor[HTML]{B7E1CD}2.52    & \cellcolor[HTML]{F4C7C3}68.64   & 14.27  & \cellcolor[HTML]{B7E1CD}\textbf{41.73} & \textbf{82.27}   & 21.17  & \cellcolor[HTML]{B7E1CD}56.59   & \cellcolor[HTML]{B7E1CD}71.49   & \cellcolor[HTML]{B7E1CD}\textbf{63.64} & \cellcolor[HTML]{B7E1CD}37.81   & 24.03     & \cellcolor[HTML]{B7E1CD}46.99   & \textbf{1.81}    \\
StAdv     & 2  & 83.62  & \cellcolor[HTML]{B7E1CD}79.34   & \cellcolor[HTML]{B7E1CD}\textbf{2.96}  & 69.66  & \cellcolor[HTML]{B7E1CD}\textbf{16.59} & \cellcolor[HTML]{B7E1CD}40.08   & 81.66  & \cellcolor[HTML]{B7E1CD}22.06   & \cellcolor[HTML]{B7E1CD}\textbf{59.87} & \cellcolor[HTML]{B7E1CD}\textbf{71.54} & \cellcolor[HTML]{B7E1CD}62.01   & \cellcolor[HTML]{B7E1CD}39.18   & \cellcolor[HTML]{F4C7C3}21.66      & \cellcolor[HTML]{B7E1CD}47.22   & 1.68   \\ \hline
ReColor   & 0  & 91.34  & 81.53  & 0.03   & 0.41   & 79.08  & 42.55  & 90.6   & 22.39  & 25.2   & 64.31  & 54.8   & 18.65  & 8.94      & 40.71  & 0      \\
ReColor   & 0.5& 91.82  & \cellcolor[HTML]{B7E1CD}82.85   & 0.08   & 0.38   & 79.75  & \cellcolor[HTML]{B7E1CD}50.83   & 91.08  & 22.83  & \cellcolor[HTML]{B7E1CD}28.28   & \cellcolor[HTML]{F4C7C3}62.57   & \cellcolor[HTML]{F4C7C3}53.17   & \cellcolor[HTML]{B7E1CD}21.48   & 9.91      & \cellcolor[HTML]{B7E1CD}41.93   & 0      \\
ReColor   & 1  & 91.77  & \cellcolor[HTML]{B7E1CD}85.2    & 0.15   & 0.41   & \cellcolor[HTML]{B7E1CD}\textbf{81.2}  & \cellcolor[HTML]{B7E1CD}\textbf{54.22} & 91.39  & \cellcolor[HTML]{B7E1CD}\textbf{27.08} & \cellcolor[HTML]{B7E1CD}38.32   & \textbf{64.82}   & \textbf{55.08}   & \cellcolor[HTML]{B7E1CD}\textbf{23.06} & \cellcolor[HTML]{B7E1CD}\textbf{10.65}    & \cellcolor[HTML]{B7E1CD}\textbf{44.3}  & 0      \\
ReColor   & 2  & \textbf{92.31}   & \cellcolor[HTML]{B7E1CD}\textbf{85.53} & \textbf{0.28}    & \textbf{0.54}    & \cellcolor[HTML]{B7E1CD}80.99   & \cellcolor[HTML]{B7E1CD}52.69   & \cellcolor[HTML]{B7E1CD}\textbf{91.67} & \cellcolor[HTML]{B7E1CD}24.46   & \cellcolor[HTML]{B7E1CD}\textbf{44.38} & \textbf{64.82}   & \cellcolor[HTML]{F4C7C3}52.61   & \cellcolor[HTML]{B7E1CD}21.94   & 9.81      & \cellcolor[HTML]{B7E1CD}44.14   & \textbf{0.03}    \\ \hline
Gabor     & 0  & \textbf{89.12}   & \textbf{85.27}   & 4.41   & 2.11   & \textbf{37.83}   & \textbf{87.31}   & \textbf{87.62}   & \textbf{20.28}   & 57.66  & 52.33  & 38.73  & 38.88  & 9.22      & \textbf{43.47}   & 0.1    \\
Gabor     & 0.5& 88.33  & \cellcolor[HTML]{F4C7C3}84.1    & 4.41   & 2.9    & \cellcolor[HTML]{F4C7C3}33.4    & 86.75  & \cellcolor[HTML]{F4C7C3}86.06   & \cellcolor[HTML]{F4C7C3}18.62   & \cellcolor[HTML]{F4C7C3}54.29   & \cellcolor[HTML]{B7E1CD}\textbf{53.68} & \cellcolor[HTML]{F4C7C3}35.49   & \cellcolor[HTML]{F4C7C3}36.33   & \cellcolor[HTML]{B7E1CD}10.78      & \cellcolor[HTML]{F4C7C3}42.24   & 0.13   \\
Gabor     & 1  & 88.23  & 84.41  & \cellcolor[HTML]{B7E1CD}6.57    & \cellcolor[HTML]{B7E1CD}4.03    & \cellcolor[HTML]{F4C7C3}31.64   & \cellcolor[HTML]{F4C7C3}85.43   & 87.24  & \cellcolor[HTML]{F4C7C3}18.09   & 57.12  & \cellcolor[HTML]{B7E1CD}53.4    & \cellcolor[HTML]{B7E1CD}\textbf{40.2}  & 38.09  & \cellcolor[HTML]{B7E1CD}11.11      & 43.11  & 0.71   \\
Gabor     & 2  & \cellcolor[HTML]{F4C7C3}86.19   & \cellcolor[HTML]{F4C7C3}82.39   & \cellcolor[HTML]{B7E1CD}\textbf{9.12}  & \cellcolor[HTML]{B7E1CD}\textbf{5.91}  & \cellcolor[HTML]{F4C7C3}33.1    & \cellcolor[HTML]{F4C7C3}84.03   & \cellcolor[HTML]{F4C7C3}84.38   & \cellcolor[HTML]{F4C7C3}17.68   & \cellcolor[HTML]{B7E1CD}\textbf{63.16} & \cellcolor[HTML]{F4C7C3}44.89   & \cellcolor[HTML]{F4C7C3}33.1    & \cellcolor[HTML]{B7E1CD}\textbf{43.67} & \cellcolor[HTML]{B7E1CD}\textbf{14.32}    & 42.98  & \textbf{0.84}    \\ \hline
Snow      & 0  & 87.69  & 71.59  & 0.08   & 1.53   & 11.97  & 30.68  & \textbf{62.9}    & 7.31   & 8.59   & 66.24  & \textbf{70.78}   & 11.82  & \textbf{9.91}     & 29.45  & 0.05   \\
Snow      & 0.5& \textbf{88.46}   & \cellcolor[HTML]{B7E1CD}76.46   & 0.03   & 1.12   & \cellcolor[HTML]{B7E1CD}13.22   & \cellcolor[HTML]{B7E1CD}36.94   & \cellcolor[HTML]{F4C7C3}58.78   & \cellcolor[HTML]{B7E1CD}8.84    & \cellcolor[HTML]{B7E1CD}10.96   & 66.42  & 70.62  & \cellcolor[HTML]{B7E1CD}18.27   & \cellcolor[HTML]{F4C7C3}6.42& \cellcolor[HTML]{B7E1CD}30.67   & 0.03   \\
Snow      & 1  & \textbf{87.82}   & \cellcolor[HTML]{B7E1CD}77.04   & 0.1    & 1.48   & \cellcolor[HTML]{B7E1CD}14.14   & \cellcolor[HTML]{B7E1CD}43.21   & \cellcolor[HTML]{F4C7C3}59.06   & \cellcolor[HTML]{B7E1CD}8.64    & \cellcolor[HTML]{B7E1CD}13.48   & \cellcolor[HTML]{F4C7C3}65.22   & \cellcolor[HTML]{F4C7C3}68.89   & \cellcolor[HTML]{B7E1CD}17.76   & \cellcolor[HTML]{F4C7C3}7.67& \cellcolor[HTML]{B7E1CD}31.39   & 0.08   \\
Snow      & 2  & 88.36  & \cellcolor[HTML]{B7E1CD}\textbf{79.29} & \textbf{0.15}    & \textbf{2.34}    & \cellcolor[HTML]{B7E1CD}\textbf{16.2}  & \cellcolor[HTML]{B7E1CD}\textbf{48.99} & \cellcolor[HTML]{F4C7C3}59.69   & \cellcolor[HTML]{B7E1CD}\textbf{10.14} & \cellcolor[HTML]{B7E1CD}\textbf{21.48} & \cellcolor[HTML]{B7E1CD}\textbf{68.13} & 70.24  & \cellcolor[HTML]{B7E1CD}\textbf{19.03} & 9.58      & \cellcolor[HTML]{B7E1CD}\textbf{33.77} & \textbf{0.1}     \\ \hline
Pixel     & 0  & 88.64  & 67.24  & 0      & 0.59   & 34.96  & 30.37  & 87.18  & 78.6   & 0.25   & 61.63  & \textbf{52.08}   & 49.38  & 23.97     & 40.52  & 0      \\
Pixel     & 0.5& \cellcolor[HTML]{B7E1CD}\textbf{89.81} & \cellcolor[HTML]{B7E1CD}75.54   & 0      & 0.43   & \cellcolor[HTML]{B7E1CD}38.8    & 31.06  & 87.67  & \cellcolor[HTML]{B7E1CD}\textbf{80.48} & 1.1    & \cellcolor[HTML]{B7E1CD}\textbf{63.67} & \cellcolor[HTML]{F4C7C3}49.58   & \textbf{50.01}   & 23.26     & \cellcolor[HTML]{B7E1CD}41.8    & 0      \\
Pixel     & 1  & 89.43  & \cellcolor[HTML]{B7E1CD}\textbf{75.85} & 0      & 0.84   & \cellcolor[HTML]{B7E1CD}\textbf{39.95} & \cellcolor[HTML]{B7E1CD}\textbf{34.52} & \textbf{88}      & \cellcolor[HTML]{B7E1CD}80.25   & \cellcolor[HTML]{B7E1CD}1.27    & \cellcolor[HTML]{B7E1CD}63.18   & \cellcolor[HTML]{F4C7C3}49.27   & \cellcolor[HTML]{F4C7C3}46.96   & \cellcolor[HTML]{F4C7C3}21.78      & \cellcolor[HTML]{B7E1CD}\textbf{41.82} & 0      \\
Pixel     & 2  & \cellcolor[HTML]{F4C7C3}87.62   & \cellcolor[HTML]{B7E1CD}75.62   & 0      & \textbf{1.27}    & \cellcolor[HTML]{B7E1CD}39.03   & 30.22  & \cellcolor[HTML]{F4C7C3}86.04   & 77.86  & \cellcolor[HTML]{B7E1CD}\textbf{2.88}  & 61.15  & \textbf{52.08}   & 49.3   & \textbf{24.1}     & \cellcolor[HTML]{B7E1CD}41.63   & 0      \\ \hline
JPEG      & 0  & 88.43  & 85.63  & 15.29  & 5.43   & 41.78  & 77.35  & 86.85  & 23.21  & 80.87  & 53.81  & \textbf{43.39}   & 44.79  & 15.39     & 47.82  & 0.74   \\
JPEG      & 0.5& \textbf{89.02}   & \textbf{86.17}   & \cellcolor[HTML]{B7E1CD}\textbf{17.89} & \cellcolor[HTML]{B7E1CD}6.52    & \cellcolor[HTML]{B7E1CD}\textbf{43.29} & \cellcolor[HTML]{B7E1CD}\textbf{79.01} & \textbf{87.52}   & \cellcolor[HTML]{B7E1CD}\textbf{24.56} & \textbf{81.15}   & \cellcolor[HTML]{B7E1CD}55.62   & 42.78  & \textbf{45.55}   & \textbf{15.92}    & \cellcolor[HTML]{B7E1CD}\textbf{48.83} & 1.3    \\
JPEG      & 1  & 87.9   & 84.89  & \cellcolor[HTML]{B7E1CD}16.59   & \cellcolor[HTML]{B7E1CD}7.34    & \cellcolor[HTML]{F4C7C3}40.13   & 78.06  & 86.65  & \cellcolor[HTML]{F4C7C3}21.94   & 80.38  & \cellcolor[HTML]{B7E1CD}55.57   & 42.93  & 44.48  & 15.77     & 47.89  & \textbf{1.66}    \\
JPEG      & 2  & 88.08  & 85.1   & \cellcolor[HTML]{B7E1CD}17.68   & \cellcolor[HTML]{B7E1CD}\textbf{7.46}  & 41.58  & \cellcolor[HTML]{F4C7C3}73.02   & 86.29  & 23.49  & \cellcolor[HTML]{F4C7C3}79.85   & \cellcolor[HTML]{B7E1CD}\textbf{56.51} & 42.8   & 45.25  & \cellcolor[HTML]{F4C7C3}13.3& 47.69  & 0.97   \\ \hline
Elastic   & 0  & 89.66  & 77.48  & 0      & 0.82   & 12.25  & 21.81  & 88.05  & 16.84  & 5.71   & 78.6   & 62.01  & 16.79  & 11.69     & 32.67  & 0      \\
Elastic   & 0.5& \cellcolor[HTML]{B7E1CD}90.68   & \cellcolor[HTML]{B7E1CD}81.61   & \textbf{0.03}    & 1.66   & \cellcolor[HTML]{B7E1CD}14.11   & \cellcolor[HTML]{B7E1CD}\textbf{35.57} & \cellcolor[HTML]{B7E1CD}89.63   & \cellcolor[HTML]{B7E1CD}\textbf{18.27} & \cellcolor[HTML]{B7E1CD}11.87   & \cellcolor[HTML]{B7E1CD}80.64   & \cellcolor[HTML]{F4C7C3}60.61   & \cellcolor[HTML]{B7E1CD}18.45   & \cellcolor[HTML]{B7E1CD}\textbf{13.76}    & \cellcolor[HTML]{B7E1CD}35.52   & \textbf{0.03}    \\
Elastic   & 1  & \cellcolor[HTML]{B7E1CD}\textbf{91.13} & \cellcolor[HTML]{B7E1CD}82.93   & 0      & 1.78   & \cellcolor[HTML]{B7E1CD}14.37   & \cellcolor[HTML]{B7E1CD}31.69   & \cellcolor[HTML]{B7E1CD}\textbf{89.99} & 17.07  & \cellcolor[HTML]{B7E1CD}17.4    & \cellcolor[HTML]{B7E1CD}81.48   & \cellcolor[HTML]{B7E1CD}\textbf{65.1}  & \cellcolor[HTML]{B7E1CD}\textbf{21.53} & 10.96     & \cellcolor[HTML]{B7E1CD}36.19   & 0      \\
Elastic   & 2  & 90.06  & \cellcolor[HTML]{B7E1CD}\textbf{82.96} & \textbf{0.03}    & \cellcolor[HTML]{B7E1CD}\textbf{1.86}  & \cellcolor[HTML]{B7E1CD}\textbf{16.54} & \cellcolor[HTML]{B7E1CD}31.75   & \cellcolor[HTML]{B7E1CD}89.1    & 17.22  & \cellcolor[HTML]{B7E1CD}\textbf{20.38} & \cellcolor[HTML]{B7E1CD}\textbf{81.53} & \cellcolor[HTML]{B7E1CD}64.61   & \cellcolor[HTML]{B7E1CD}18.88   & \cellcolor[HTML]{F4C7C3}10.42      & \cellcolor[HTML]{B7E1CD}\textbf{36.27} & 0      \\ \hline
Wood      & 0  & 85.91  & 50.09  & 0      & 0.82   & 11.11  & 35.87  & 83.31  & 11.13  & 1.2    & 60.94  & 78.83  & 14.96  & 11.13     & 29.95  & 0      \\
Wood      & 0.5& \cellcolor[HTML]{B7E1CD}88.03   & \cellcolor[HTML]{B7E1CD}74.22   & 0      & 0.97   & 11.77  & \cellcolor[HTML]{B7E1CD}41.02   & \cellcolor[HTML]{B7E1CD}85.68   & \cellcolor[HTML]{F4C7C3}9.81    & \cellcolor[HTML]{B7E1CD}6.57    & \cellcolor[HTML]{B7E1CD}64.54   & \cellcolor[HTML]{B7E1CD}81.76   & \cellcolor[HTML]{B7E1CD}\textbf{18.24} & 11.01     & \cellcolor[HTML]{B7E1CD}33.8    & 0      \\
Wood      & 1  & \cellcolor[HTML]{B7E1CD}88.82   & \cellcolor[HTML]{B7E1CD}78.68   & 0.03   & 0.69   & \cellcolor[HTML]{B7E1CD}14.06   & \cellcolor[HTML]{B7E1CD}40.46   & \cellcolor[HTML]{B7E1CD}86.65   & \cellcolor[HTML]{F4C7C3}9.83    & \cellcolor[HTML]{B7E1CD}11.49   & \cellcolor[HTML]{B7E1CD}66.7    & \cellcolor[HTML]{B7E1CD}\textbf{84.94} & \cellcolor[HTML]{B7E1CD}17.02   & \cellcolor[HTML]{F4C7C3}9.76& \cellcolor[HTML]{B7E1CD}35.03   & 0      \\
Wood      & 2  & \cellcolor[HTML]{B7E1CD}\textbf{89.38} & \cellcolor[HTML]{B7E1CD}\textbf{81.71} & \textbf{0.31}    & \cellcolor[HTML]{B7E1CD}\textbf{2.62}  & \cellcolor[HTML]{B7E1CD}\textbf{15.26} & \cellcolor[HTML]{B7E1CD}\textbf{43.64} & \cellcolor[HTML]{B7E1CD}\textbf{87.64} & \textbf{11.16}   & \cellcolor[HTML]{B7E1CD}\textbf{24.64} & \cellcolor[HTML]{B7E1CD}\textbf{68.99} & \cellcolor[HTML]{B7E1CD}84.84   & \cellcolor[HTML]{B7E1CD}17.81   & \cellcolor[HTML]{B7E1CD}\textbf{12.56}    & \cellcolor[HTML]{B7E1CD}\textbf{37.6}  & \textbf{0.18}    \\ \hline
Glitch    & 0  & 88.51  & 36.41  & 0      & 0      & 6.7    & 18.47  & 86.96  & 17.1   & 0      & 60     & 50.93  & 84.97  & 6.37      & 30.66  & 0      \\
Glitch    & 0.5& \cellcolor[HTML]{B7E1CD}\textbf{90.01} & \cellcolor[HTML]{B7E1CD}74.73   & 0      & 0.54   & \cellcolor[HTML]{B7E1CD}14.96   & \cellcolor[HTML]{B7E1CD}24.38   & \cellcolor[HTML]{B7E1CD}\textbf{87.97} & \cellcolor[HTML]{B7E1CD}24.69   & \cellcolor[HTML]{B7E1CD}2.7     & \cellcolor[HTML]{B7E1CD}64.82   & \cellcolor[HTML]{B7E1CD}52.92   & \cellcolor[HTML]{B7E1CD}\textbf{86.98} & \cellcolor[HTML]{B7E1CD}8& \cellcolor[HTML]{B7E1CD}36.89   & 0      \\
Glitch    & 1  & 88.05  & \cellcolor[HTML]{B7E1CD}76.25   & \textbf{0.03}    & \cellcolor[HTML]{B7E1CD}\textbf{1.38}  & \cellcolor[HTML]{B7E1CD}14.5    & \cellcolor[HTML]{B7E1CD}\textbf{28.25} & \cellcolor[HTML]{F4C7C3}85.73   & \cellcolor[HTML]{B7E1CD}25.3    & \cellcolor[HTML]{B7E1CD}3.41    & \cellcolor[HTML]{B7E1CD}62.17   & \cellcolor[HTML]{B7E1CD}\textbf{53.1}  & 84.69  & \cellcolor[HTML]{B7E1CD}8.79& \cellcolor[HTML]{B7E1CD}36.97   & 0      \\
Glitch    & 2  & \cellcolor[HTML]{B7E1CD}89.55   & \cellcolor[HTML]{B7E1CD}\textbf{80.97} & \textbf{0.03}    & \cellcolor[HTML]{B7E1CD}1.25    & \cellcolor[HTML]{B7E1CD}\textbf{15.16} & \cellcolor[HTML]{B7E1CD}25.89   & 87.9   & \cellcolor[HTML]{B7E1CD}\textbf{27.54} & \cellcolor[HTML]{B7E1CD}\textbf{7.52}  & \cellcolor[HTML]{B7E1CD}\textbf{65.07} & \cellcolor[HTML]{B7E1CD}52.05   & \cellcolor[HTML]{B7E1CD}85.99   & \cellcolor[HTML]{B7E1CD}\textbf{10.47}    & \cellcolor[HTML]{B7E1CD}\textbf{38.32} & 0      \\ \hline
\begin{tabular}[c]{@{}l@{}}Kaleid-\\ oscope\end{tabular} & 0  & \textbf{88.1}    & 73.5   & 0      & 0.31   & 7.03   & 28.66  & 85.91  & 18.83  & 2.22   & 62.98  & 29.78  & 21.07  & \textbf{84.89}    & 34.6   & 0      \\
\begin{tabular}[c]{@{}l@{}}Kaleid-\\ oscope\end{tabular} & 0.5& \cellcolor[HTML]{F4C7C3}85.2    & \cellcolor[HTML]{B7E1CD}75.8    & 0.05   & \cellcolor[HTML]{B7E1CD}3.01    & \cellcolor[HTML]{B7E1CD}12.79   & \cellcolor[HTML]{B7E1CD}47.67   & \cellcolor[HTML]{F4C7C3}82.45   & 19.8   & \cellcolor[HTML]{B7E1CD}15.67   & \cellcolor[HTML]{F4C7C3}59.18   & \cellcolor[HTML]{B7E1CD}\textbf{41.48} & \cellcolor[HTML]{B7E1CD}29.71   & \cellcolor[HTML]{F4C7C3}81.32      & \cellcolor[HTML]{B7E1CD}39.08   & \textbf{0.05}    \\
\begin{tabular}[c]{@{}l@{}}Kaleid-\\ oscope\end{tabular} & 1  & 87.52  & \cellcolor[HTML]{B7E1CD}\textbf{80.48} & 0.03   & \cellcolor[HTML]{B7E1CD}2.52    & \cellcolor[HTML]{B7E1CD}\textbf{20.89} & \cellcolor[HTML]{B7E1CD}45.89   & \textbf{86.04}   & \cellcolor[HTML]{B7E1CD}23.85   & \cellcolor[HTML]{B7E1CD}\textbf{27.57} & \cellcolor[HTML]{B7E1CD}\textbf{64.28} & \cellcolor[HTML]{B7E1CD}33.63   & \cellcolor[HTML]{B7E1CD}27.69   & \cellcolor[HTML]{F4C7C3}83.03      & \cellcolor[HTML]{B7E1CD}\textbf{41.32} & 0      \\
\begin{tabular}[c]{@{}l@{}}Kaleid-\\ oscope\end{tabular} & 2  & \cellcolor[HTML]{F4C7C3}80.41   & \cellcolor[HTML]{F4C7C3}71.54   & \textbf{0.84}    & \cellcolor[HTML]{B7E1CD}\textbf{5.78}  & \cellcolor[HTML]{B7E1CD}19.06   & \cellcolor[HTML]{B7E1CD}\textbf{54.55} & \cellcolor[HTML]{F4C7C3}76.2    & \cellcolor[HTML]{B7E1CD}\textbf{25.02} & \cellcolor[HTML]{B7E1CD}24.41   & \cellcolor[HTML]{F4C7C3}52.56   & \cellcolor[HTML]{F4C7C3}27.82   & \cellcolor[HTML]{B7E1CD}\textbf{35.41} & \cellcolor[HTML]{F4C7C3}75.49      & \cellcolor[HTML]{B7E1CD}39.06   & 0.46   \\ \hline
\end{tabular}}}
\caption{\textbf{Intial Training Ablations- Uniform regularization on ImageNette. }Accuracy of initially trained models on ImageNette trained using different attacks as indicated in ``Train Attack" column measured across different attacks.  Uniform regularization (with $\sigma=2$) is also considered during initial training, with regularization strength $\lambda$.  Results where regularization improves over no regularization ($\lambda = 0$) by at least 1\% accuracy are highlighted in green, while results where regularization incurs at least a 1\% drop in accuracy are highlighted in red.  Best performing with respect to regularization strength are bolded.}
\label{app:IT_ablation_UR_IM}

\end{table*}

\begin{table*}[]
{\renewcommand{\arraystretch}{1.2}
\scalebox{0.73}{
\begin{tabular}{|l|c|c|cccccccccccc|cc|}
\hline
\begin{tabular}[c]{@{}l@{}}Train\\ Attack\end{tabular}   & $\lambda$ & Clean  & $\ell_2$     & $\ell_\infty$   & StAdv  & ReColor& Gabor  & Snow   & Pixel  & JPEG   & Elastic& Wood   & Glitch & \begin{tabular}[c]{@{}c@{}}Kaleid-\\ oscope\end{tabular} & Avg    & Union  \\ \hline
$\ell_2$ & 0  & \textbf{90.04}   & \textbf{83.95}   & 7.57   & 5.27   & 33.91  & 65.17  & \textbf{89.3}    & 28.99  & 67.52  & \textbf{62.85}   & 49.22  & 45.78  & 12.76     & 46.03  & 0.51   \\
$\ell_2$ & 0.1& 89.35  & 83.13  & \cellcolor[HTML]{B7E1CD}12.84   & 6.22   & \cellcolor[HTML]{B7E1CD}35.59   & \cellcolor[HTML]{B7E1CD}72.05   & \cellcolor[HTML]{F4CCCC}87.82   & 29.45  & \cellcolor[HTML]{B7E1CD}71.36   & 62.22  & 50.14  & 45.02  & \cellcolor[HTML]{B7E1CD}16.18      & \cellcolor[HTML]{B7E1CD}47.67   & 1.07   \\
$\ell_2$ & 0.2& 89.38  & 83.54  & \cellcolor[HTML]{B7E1CD}15.82   & \cellcolor[HTML]{B7E1CD}7.54    & \cellcolor[HTML]{B7E1CD}38.04   & \cellcolor[HTML]{B7E1CD}73.43   & 88.41  & \cellcolor[HTML]{B7E1CD}30.6    & \cellcolor[HTML]{B7E1CD}72.2    & \cellcolor[HTML]{F4CCCC}60.13   & \cellcolor[HTML]{B7E1CD}50.62   & 46.14  & \cellcolor[HTML]{B7E1CD}15.64      & \cellcolor[HTML]{B7E1CD}48.51   & 1.35   \\
$\ell_2$ & 0.5& \cellcolor[HTML]{F4CCCC}87.03   & \cellcolor[HTML]{F4CCCC}81.89   & \cellcolor[HTML]{B7E1CD}\textbf{22.09} & \cellcolor[HTML]{B7E1CD}\textbf{12.31} & \cellcolor[HTML]{B7E1CD}\textbf{40.36} & \cellcolor[HTML]{B7E1CD}\textbf{74.09} & \cellcolor[HTML]{F4CCCC}84.15   & \cellcolor[HTML]{B7E1CD}\textbf{34.98} & \cellcolor[HTML]{B7E1CD}\textbf{72.79} & \cellcolor[HTML]{F4CCCC}60.13   & \cellcolor[HTML]{B7E1CD}\textbf{51.49} & \cellcolor[HTML]{B7E1CD}\textbf{52.2}  & \cellcolor[HTML]{B7E1CD}\textbf{18.45}    & \cellcolor[HTML]{B7E1CD}\textbf{50.41} & \cellcolor[HTML]{B7E1CD}\textbf{2.96}  \\ \hline
$\ell_\infty$      & 0  & \textbf{84.51}   & 81.71  & 58.39  & 43.49  & 67.82  & 72.61  & \textbf{83.31}   & 41.83  & 65.35  & 63.9   & \textbf{67.18}   & \textbf{63.64}   & 30.75     & 61.67  & 13.2   \\
$\ell_\infty$      & 0.1& 83.9   & 82.09  & 57.86  & \cellcolor[HTML]{B7E1CD}47.77   & 68.38  & 73.4   & 82.96  & \cellcolor[HTML]{B7E1CD}44.92   & \cellcolor[HTML]{B7E1CD}74.55   & 63.21  & \cellcolor[HTML]{F4C7C3}64.41   & 63.18  & \cellcolor[HTML]{B7E1CD}\textbf{36.36}    & \cellcolor[HTML]{B7E1CD}63.26   & \cellcolor[HTML]{B7E1CD}17.04   \\
$\ell_\infty$      & 0.2& 84.25  & \textbf{82.39}   & 58.39  & \cellcolor[HTML]{B7E1CD}\textbf{49.38} & \textbf{68.59}   & \cellcolor[HTML]{B7E1CD}\textbf{76.33} & 82.9   & \cellcolor[HTML]{B7E1CD}45.45   & \cellcolor[HTML]{B7E1CD}76.46   & \textbf{64.56}   & 66.19  & 63.36  & 31.44     & \cellcolor[HTML]{B7E1CD}\textbf{63.79} & \cellcolor[HTML]{B7E1CD}16.25   \\
$\ell_\infty$      & 0.5& 83.87  & 82.22  & \textbf{58.42}   & \cellcolor[HTML]{B7E1CD}47.87   & 68.23  & \cellcolor[HTML]{B7E1CD}75.69   & 82.78  & \cellcolor[HTML]{B7E1CD}\textbf{46.24} & \cellcolor[HTML]{B7E1CD}\textbf{77.53} & \cellcolor[HTML]{F4C7C3}62.14   & \cellcolor[HTML]{F4C7C3}64.87   & \cellcolor[HTML]{F4C7C3}62.5    & \cellcolor[HTML]{B7E1CD}35.31      & \cellcolor[HTML]{B7E1CD}63.65   & \cellcolor[HTML]{B7E1CD}\textbf{17.66} \\ \hline
StAdv     & 0  & \textbf{83.31}   & 77.58  & 1.45   & \textbf{69.81}   & 13.43  & 36.66  & \textbf{81.5}    & 20.56  & 49.89  & \textbf{70.32}   & 60.76  & 36.15  & 24.84     & 45.25  & 1.04   \\
StAdv     & 0.1& 82.34  & 77.76  & \cellcolor[HTML]{B7E1CD}4.18    & \cellcolor[HTML]{F4C7C3}66.45   & \cellcolor[HTML]{B7E1CD}16.54   & \cellcolor[HTML]{B7E1CD}\textbf{57.91} & \cellcolor[HTML]{F4C7C3}79.52   & \cellcolor[HTML]{B7E1CD}22.14   & \cellcolor[HTML]{B7E1CD}60.43   & \cellcolor[HTML]{F4C7C3}68.56   & \cellcolor[HTML]{B7E1CD}\textbf{62.29} & \cellcolor[HTML]{B7E1CD}40.59   & \cellcolor[HTML]{F4C7C3}23.46      & \cellcolor[HTML]{B7E1CD}48.32   & \cellcolor[HTML]{B7E1CD}2.78    \\
StAdv     & 0.2& 83.11  & \cellcolor[HTML]{B7E1CD}79.21   & \cellcolor[HTML]{B7E1CD}6.5     & \cellcolor[HTML]{F4C7C3}67.64   & \cellcolor[HTML]{B7E1CD}\textbf{18.7}  & \cellcolor[HTML]{B7E1CD}45.32   & \cellcolor[HTML]{F4C7C3}80.03   & \cellcolor[HTML]{B7E1CD}26.34   & \cellcolor[HTML]{B7E1CD}64.84   & 70.27  & \cellcolor[HTML]{B7E1CD}61.94   & \cellcolor[HTML]{B7E1CD}\textbf{50.14} & \cellcolor[HTML]{B7E1CD}30.37      & \cellcolor[HTML]{B7E1CD}50.11   & \cellcolor[HTML]{B7E1CD}3.34    \\
StAdv     & 0.5& 83.13  & \cellcolor[HTML]{B7E1CD}\textbf{80.03} & \cellcolor[HTML]{B7E1CD}\textbf{9.35}  & \cellcolor[HTML]{F4C7C3}68.25   & \cellcolor[HTML]{B7E1CD}18.34   & \cellcolor[HTML]{B7E1CD}57.63   & \cellcolor[HTML]{F4C7C3}80.28   & \cellcolor[HTML]{B7E1CD}\textbf{27.44} & \cellcolor[HTML]{B7E1CD}\textbf{68.08} & 69.78  & 60.05  & \cellcolor[HTML]{B7E1CD}49.4    & \cellcolor[HTML]{B7E1CD}\textbf{32.66}    & \cellcolor[HTML]{B7E1CD}\textbf{51.77} & \cellcolor[HTML]{B7E1CD}\textbf{5.22}  \\ \hline
ReColor   & 0  & \textbf{91.34}   & 81.53  & 0.03   & 0.41   & 79.08  & 42.55  & 90.6   & 22.39  & 25.2   & \textbf{64.31}   & \textbf{54.8}    & 18.65  & 8.94      & 40.71  & 0      \\
ReColor   & 0.1& 90.83  & \cellcolor[HTML]{B7E1CD}84.97   & 0.38   & 0.92   & \textbf{79.69}   & \cellcolor[HTML]{B7E1CD}57.17   & \textbf{90.62}   & \cellcolor[HTML]{B7E1CD}29.94   & \cellcolor[HTML]{B7E1CD}48.41   & \cellcolor[HTML]{F4C7C3}60.94   & \cellcolor[HTML]{F4C7C3}53.02   & \cellcolor[HTML]{B7E1CD}33.81   & 8.54      & \cellcolor[HTML]{B7E1CD}45.7    & 0.05   \\
ReColor   & 0.2& 90.8   & \cellcolor[HTML]{B7E1CD}\textbf{85.73} & \cellcolor[HTML]{B7E1CD}1.2     & 1.17   & 78.98  & \cellcolor[HTML]{B7E1CD}62.88   & 90.5   & \cellcolor[HTML]{B7E1CD}27.87   & \cellcolor[HTML]{B7E1CD}55.34   & \cellcolor[HTML]{F4C7C3}60.74   & \cellcolor[HTML]{F4C7C3}49.07   & \cellcolor[HTML]{B7E1CD}31.57   & \cellcolor[HTML]{B7E1CD}14.29      & \cellcolor[HTML]{B7E1CD}46.61   & 0.18   \\
ReColor   & 0.5& \cellcolor[HTML]{F4C7C3}89.15   & \cellcolor[HTML]{B7E1CD}85.2    & \cellcolor[HTML]{B7E1CD}\textbf{5.81}  & \cellcolor[HTML]{B7E1CD}\textbf{2.75}  & \cellcolor[HTML]{F4C7C3}77.81   & \cellcolor[HTML]{B7E1CD}\textbf{71.26} & \cellcolor[HTML]{F4C7C3}88.46   & \cellcolor[HTML]{B7E1CD}\textbf{31.34} & \cellcolor[HTML]{B7E1CD}\textbf{65.07} & \cellcolor[HTML]{F4C7C3}59.77   & \cellcolor[HTML]{F4C7C3}52.41   & \cellcolor[HTML]{B7E1CD}\textbf{41.27} & \cellcolor[HTML]{B7E1CD}\textbf{15.39}    & \cellcolor[HTML]{B7E1CD}\textbf{49.71} & \textbf{0.33}    \\ \hline
Gabor     & 0  & \textbf{89.12}   & \textbf{85.27}   & 4.41   & 2.11   & 37.83  & \textbf{87.31}   & \textbf{87.62}   & 20.28  & 57.66  & 52.33  & 38.73  & 38.88  & 9.22      & 43.47  & 0.1    \\
Gabor     & 0.1& \cellcolor[HTML]{F4C7C3}87.29   & \cellcolor[HTML]{F4C7C3}83.77   & \cellcolor[HTML]{B7E1CD}11.26   & \cellcolor[HTML]{B7E1CD}5.53    & 38.22  & \cellcolor[HTML]{F4C7C3}85.58   & \cellcolor[HTML]{F4C7C3}84.54   & 20.64  & \cellcolor[HTML]{B7E1CD}66.11   & \cellcolor[HTML]{F4C7C3}48.82   & \cellcolor[HTML]{F4C7C3}37.02   & \cellcolor[HTML]{B7E1CD}41.96   & \cellcolor[HTML]{B7E1CD}15.06      & \cellcolor[HTML]{B7E1CD}44.87   & 0.76   \\
Gabor     & 0.2& \cellcolor[HTML]{F4C7C3}86.22   & \cellcolor[HTML]{F4C7C3}82.88   & \cellcolor[HTML]{B7E1CD}12.97   & \cellcolor[HTML]{B7E1CD}6.47    & 37.27  & \cellcolor[HTML]{F4C7C3}84.69   & \cellcolor[HTML]{F4C7C3}83.92   & \cellcolor[HTML]{B7E1CD}22.45   & \cellcolor[HTML]{B7E1CD}67.77   & \cellcolor[HTML]{F4C7C3}48.46   & \cellcolor[HTML]{F4C7C3}36.31   & \cellcolor[HTML]{B7E1CD}41.15   & \cellcolor[HTML]{B7E1CD}19.01      & \cellcolor[HTML]{B7E1CD}45.28   & 1.04   \\
Gabor     & 0.5& \cellcolor[HTML]{F4C7C3}87.29   & \cellcolor[HTML]{F4C7C3}84.13   & \cellcolor[HTML]{B7E1CD}\textbf{19.87} & \cellcolor[HTML]{B7E1CD}\textbf{9.48}  & \cellcolor[HTML]{B7E1CD}\textbf{41.35} & \cellcolor[HTML]{F4C7C3}85.4    & \cellcolor[HTML]{F4C7C3}85.07   & \cellcolor[HTML]{B7E1CD}\textbf{26.8}  & \cellcolor[HTML]{B7E1CD}\textbf{72.28} & \textbf{52.79}   & \cellcolor[HTML]{B7E1CD}\textbf{42.29} & \cellcolor[HTML]{B7E1CD}\textbf{46.19} & \cellcolor[HTML]{B7E1CD}\textbf{19.9}     & \cellcolor[HTML]{B7E1CD}\textbf{48.8}  & \cellcolor[HTML]{B7E1CD}\textbf{1.99}  \\ \hline
Snow      & 0  & \textbf{87.69}   & 71.59  & 0.08   & 1.53   & 11.97  & 30.68  & \textbf{62.9}    & 7.31   & 8.59   & 66.24  & 70.78  & 11.82  & 9.91      & 29.45  & 0.05   \\
Snow      & 0.1& \textbf{87.69}   & \cellcolor[HTML]{B7E1CD}80.25   & \cellcolor[HTML]{B7E1CD}1.35    & \cellcolor[HTML]{B7E1CD}5.81    & \cellcolor[HTML]{B7E1CD}16.41   & \cellcolor[HTML]{B7E1CD}60.87   & \cellcolor[HTML]{F4C7C3}57.02   & \cellcolor[HTML]{B7E1CD}10.09   & \cellcolor[HTML]{B7E1CD}38.24   & \cellcolor[HTML]{F4C7C3}65.17   & \textbf{71.77}   & \cellcolor[HTML]{B7E1CD}22.62   & 10.85     & \cellcolor[HTML]{B7E1CD}36.7    & 0.38   \\
Snow      & 0.2& 86.8   & \cellcolor[HTML]{B7E1CD}80.36   & \cellcolor[HTML]{B7E1CD}1.94    & \cellcolor[HTML]{B7E1CD}\textbf{23.82} & \cellcolor[HTML]{B7E1CD}\textbf{33.45} & \cellcolor[HTML]{B7E1CD}61.55   & \cellcolor[HTML]{F4C7C3}58.19   & \cellcolor[HTML]{B7E1CD}11.97   & \cellcolor[HTML]{B7E1CD}47.77   & \textbf{67.16}   & 71.54  & \cellcolor[HTML]{B7E1CD}25.68   & \cellcolor[HTML]{B7E1CD}\textbf{15.9}     & \cellcolor[HTML]{B7E1CD}41.61   & \cellcolor[HTML]{B7E1CD}1.66    \\
Snow      & 0.5& \cellcolor[HTML]{F4C7C3}85.04   & \cellcolor[HTML]{B7E1CD}\textbf{80.46} & \cellcolor[HTML]{B7E1CD}\textbf{4.31}  & \cellcolor[HTML]{B7E1CD}15.34   & \cellcolor[HTML]{B7E1CD}21.55   & \cellcolor[HTML]{B7E1CD}\textbf{66.65} & \cellcolor[HTML]{F4C7C3}55.82   & \cellcolor[HTML]{B7E1CD}\textbf{14.04} & \cellcolor[HTML]{B7E1CD}\textbf{56.33} & 65.3   & \cellcolor[HTML]{F4C7C3}69.48   & \cellcolor[HTML]{B7E1CD}\textbf{32.51} & \cellcolor[HTML]{B7E1CD}13.61      & \cellcolor[HTML]{B7E1CD}\textbf{41.28} & \cellcolor[HTML]{B7E1CD}\textbf{2.6}   \\ \hline
Pixel     & 0  & 88.64  & 67.24  & 0      & 0.59   & 34.96  & 30.37  & 87.18  & 78.6   & 0.25   & \textbf{61.63}   & \textbf{52.08}   & \textbf{49.38}   & 23.97     & 40.52  & 0      \\
Pixel     & 0.1& 88.15  & \cellcolor[HTML]{B7E1CD}76.99   & 0.03   & 0.59   & \cellcolor[HTML]{B7E1CD}41.5    & \cellcolor[HTML]{B7E1CD}42.04   & \cellcolor[HTML]{F4C7C3}85.2    & \cellcolor[HTML]{F4C7C3}77.25   & \cellcolor[HTML]{B7E1CD}8.56    & \cellcolor[HTML]{F4C7C3}56.64   & \cellcolor[HTML]{F4C7C3}46.88   & \cellcolor[HTML]{F4C7C3}43.92   & \cellcolor[HTML]{B7E1CD}28.2& \cellcolor[HTML]{B7E1CD}42.32   & 0      \\
Pixel     & 0.2& \textbf{88.89}   & \cellcolor[HTML]{B7E1CD}80.99   & 0.03   & 1.4    & \cellcolor[HTML]{B7E1CD}46.7    & \cellcolor[HTML]{B7E1CD}44.23   & \textbf{87.64}   & \textbf{78.73}   & \cellcolor[HTML]{B7E1CD}20.18   & 61.4   & \cellcolor[HTML]{F4C7C3}46.85   & 49.12  & \cellcolor[HTML]{B7E1CD}26.37      & \cellcolor[HTML]{B7E1CD}45.3    & 0      \\
Pixel     & 0.5& \cellcolor[HTML]{F4C7C3}86.6    & \cellcolor[HTML]{B7E1CD}\textbf{82.14} & \cellcolor[HTML]{B7E1CD}\textbf{1.43}  & \cellcolor[HTML]{B7E1CD}\textbf{6.62}  & \cellcolor[HTML]{B7E1CD}\textbf{46.73} & \cellcolor[HTML]{B7E1CD}\textbf{53.45} & \cellcolor[HTML]{F4C7C3}84.97   & \cellcolor[HTML]{F4C7C3}77.35   & \cellcolor[HTML]{B7E1CD}\textbf{44.99} & \cellcolor[HTML]{F4C7C3}59.08   & \cellcolor[HTML]{F4C7C3}48.28   & \cellcolor[HTML]{B7E1CD}53.73   & \cellcolor[HTML]{B7E1CD}\textbf{28.66}    & \cellcolor[HTML]{B7E1CD}\textbf{48.95} & \textbf{0.38}    \\ \hline
JPEG      & 0  & 88.43  & 85.63  & 15.29  & 5.43   & 41.78  & 77.35  & 86.85  & 23.21  & 80.87  & 53.81  & 43.39  & 44.79  & 15.39     & 47.82  & 0.74   \\
JPEG      & 0.1& 88.18  & 84.79  & \cellcolor[HTML]{B7E1CD}20.33   & \cellcolor[HTML]{B7E1CD}9.2     & 42.34  & \cellcolor[HTML]{B7E1CD}80.54   & \cellcolor[HTML]{F4C7C3}85.45   & 23.36  & 80.18  & \cellcolor[HTML]{B7E1CD}\textbf{57.38} & \cellcolor[HTML]{B7E1CD}46.55   & \cellcolor[HTML]{B7E1CD}46.78   & \cellcolor[HTML]{B7E1CD}17.96      & \cellcolor[HTML]{B7E1CD}49.57   & \cellcolor[HTML]{B7E1CD}2.22    \\
JPEG      & 0.2& \textbf{88.92}   & \textbf{86.04}   & \cellcolor[HTML]{B7E1CD}23.31   & \cellcolor[HTML]{B7E1CD}8.28    & \cellcolor[HTML]{B7E1CD}\textbf{46.88} & \cellcolor[HTML]{B7E1CD}\textbf{80.61} & \textbf{87.36}   & \cellcolor[HTML]{B7E1CD}26.73   & \textbf{81.68}   & \cellcolor[HTML]{B7E1CD}57.22   & \cellcolor[HTML]{B7E1CD}47.18   & \cellcolor[HTML]{B7E1CD}48.94   & 16.36     & \cellcolor[HTML]{B7E1CD}50.88   & \cellcolor[HTML]{B7E1CD}1.86    \\
JPEG      & 0.5& \cellcolor[HTML]{F4C7C3}87.18   & \cellcolor[HTML]{F4C7C3}84.41   & \cellcolor[HTML]{B7E1CD}\textbf{26.5}  & \cellcolor[HTML]{B7E1CD}\textbf{12.46} & \cellcolor[HTML]{B7E1CD}44.94   & \cellcolor[HTML]{B7E1CD}79.36   & \cellcolor[HTML]{F4C7C3}85.07   & \cellcolor[HTML]{B7E1CD}\textbf{28.36} & 80.54  & \cellcolor[HTML]{B7E1CD}56.59   & \cellcolor[HTML]{B7E1CD}\textbf{47.77} & \cellcolor[HTML]{B7E1CD}\textbf{52.71} & \cellcolor[HTML]{B7E1CD}\textbf{19.01}    & \cellcolor[HTML]{B7E1CD}\textbf{51.48} & \cellcolor[HTML]{B7E1CD}\textbf{3.06}  \\ \hline
Elastic   & 0  & \textbf{89.66}   & 77.48  & 0      & 0.82   & 12.25  & 21.81  & \textbf{88.05}   & 16.84  & 5.71   & 78.6   & 62.01  & 16.79  & 11.69     & 32.67  & 0      \\
Elastic   & 0.1& \cellcolor[HTML]{F4C7C3}88.59   & \cellcolor[HTML]{B7E1CD}82.65   & 0.46   & \cellcolor[HTML]{B7E1CD}4.31    & \cellcolor[HTML]{B7E1CD}17.83   & \cellcolor[HTML]{B7E1CD}45.53   & 87.44  & \cellcolor[HTML]{B7E1CD}18.8    & \cellcolor[HTML]{B7E1CD}35.85   & \cellcolor[HTML]{B7E1CD}79.85   & \cellcolor[HTML]{B7E1CD}\textbf{64.23} & \cellcolor[HTML]{B7E1CD}27.11   & 11.11     & \cellcolor[HTML]{B7E1CD}39.6    & 0.08   \\
Elastic   & 0.2& 89.35  & \cellcolor[HTML]{B7E1CD}\textbf{84.03} & \cellcolor[HTML]{B7E1CD}1.66    & \cellcolor[HTML]{B7E1CD}5.55    & \cellcolor[HTML]{B7E1CD}\textbf{20.64} & \cellcolor[HTML]{B7E1CD}53.22   & 88     & \cellcolor[HTML]{B7E1CD}22.45   & \cellcolor[HTML]{B7E1CD}50.75   & \cellcolor[HTML]{B7E1CD}\textbf{80.15} & 62.29  & \cellcolor[HTML]{B7E1CD}31.75   & \cellcolor[HTML]{F4C7C3}8.1& \cellcolor[HTML]{B7E1CD}42.38   & 0.23   \\
Elastic   & 0.5& \cellcolor[HTML]{F4C7C3}82.17   & 77.61  & \cellcolor[HTML]{B7E1CD}\textbf{6.9}   & \cellcolor[HTML]{B7E1CD}\textbf{62.17} & \cellcolor[HTML]{B7E1CD}63.8    & \cellcolor[HTML]{B7E1CD}\textbf{55.44} & \cellcolor[HTML]{F4C7C3}78.32   & \cellcolor[HTML]{B7E1CD}\textbf{29.61} & \cellcolor[HTML]{B7E1CD}\textbf{55.75} & \cellcolor[HTML]{F4C7C3}73.22   & \cellcolor[HTML]{B7E1CD}63.36   & \cellcolor[HTML]{B7E1CD}\textbf{36.66} & \cellcolor[HTML]{B7E1CD}\textbf{19.52}    & \cellcolor[HTML]{B7E1CD}\textbf{51.86} & \cellcolor[HTML]{B7E1CD}\textbf{4.15}  \\ \hline
Wood      & 0  & 85.91  & 50.09  & 0      & 0.82   & 11.11  & 35.87  & 83.31  & 11.13  & 1.2    & 60.94  & 78.83  & 14.96  & 11.13     & 29.95  & 0      \\
Wood      & 0.1& \cellcolor[HTML]{B7E1CD}88.1    & \cellcolor[HTML]{B7E1CD}80.71   & 0.87   & \cellcolor[HTML]{B7E1CD}6.68    & \cellcolor[HTML]{B7E1CD}14.19   & \cellcolor[HTML]{B7E1CD}46.55   & \cellcolor[HTML]{B7E1CD}86.22   & 11.85  & \cellcolor[HTML]{B7E1CD}33.3    & \cellcolor[HTML]{B7E1CD}66.68   & \cellcolor[HTML]{B7E1CD}82.88   & \cellcolor[HTML]{B7E1CD}30.8    & \cellcolor[HTML]{F4C7C3}8.1& \cellcolor[HTML]{B7E1CD}39.07   & 0.33   \\
Wood      & 0.2& \cellcolor[HTML]{B7E1CD}\textbf{88.99} & \cellcolor[HTML]{B7E1CD}83.03   & \cellcolor[HTML]{B7E1CD}2.14    & \cellcolor[HTML]{B7E1CD}7.67    & \cellcolor[HTML]{B7E1CD}17.17   & \cellcolor[HTML]{B7E1CD}45.25   & \cellcolor[HTML]{B7E1CD}\textbf{86.98} & \cellcolor[HTML]{B7E1CD}13.96   & \cellcolor[HTML]{B7E1CD}45.22   & \cellcolor[HTML]{B7E1CD}68.33   & \cellcolor[HTML]{B7E1CD}\textbf{83.67} & \cellcolor[HTML]{B7E1CD}30.29   & \cellcolor[HTML]{B7E1CD}14.19      & \cellcolor[HTML]{B7E1CD}41.49   & \cellcolor[HTML]{B7E1CD}1.15    \\
Wood      & 0.5& \cellcolor[HTML]{B7E1CD}88.13   & \cellcolor[HTML]{B7E1CD}\textbf{83.92} & \cellcolor[HTML]{B7E1CD}\textbf{5.53}  & \cellcolor[HTML]{B7E1CD}\textbf{13.66} & \cellcolor[HTML]{B7E1CD}\textbf{24.87} & \cellcolor[HTML]{B7E1CD}\textbf{64.15} & \cellcolor[HTML]{B7E1CD}86.68   & \cellcolor[HTML]{B7E1CD}\textbf{18.83} & \cellcolor[HTML]{B7E1CD}\textbf{60.41} & \cellcolor[HTML]{B7E1CD}\textbf{69.94} & \cellcolor[HTML]{B7E1CD}82.98   & \cellcolor[HTML]{B7E1CD}\textbf{39.9}  & \cellcolor[HTML]{B7E1CD}\textbf{15.77}    & \cellcolor[HTML]{B7E1CD}\textbf{47.22} & \cellcolor[HTML]{B7E1CD}\textbf{1.99}  \\ \hline
Glitch    & 0  & \textbf{88.51}   & 36.41  & 0      & 0      & 6.7    & 18.47  & \textbf{86.96}   & 17.1   & 0      & 60     & 50.93  & \textbf{84.97}   & 6.37      & 30.66  & 0      \\
Glitch    & 0.1& \cellcolor[HTML]{F4C7C3}87.41   & \cellcolor[HTML]{B7E1CD}80.08   & \cellcolor[HTML]{B7E1CD}1.32    & \cellcolor[HTML]{B7E1CD}4.56    & \cellcolor[HTML]{B7E1CD}27.77   & \cellcolor[HTML]{B7E1CD}48.15   & \cellcolor[HTML]{F4C7C3}85.1    & \cellcolor[HTML]{B7E1CD}26.93   & \cellcolor[HTML]{B7E1CD}46.19   & \cellcolor[HTML]{B7E1CD}61.45   & \cellcolor[HTML]{F4C7C3}48.56   & \cellcolor[HTML]{F4C7C3}82.47   & \cellcolor[HTML]{B7E1CD}12.36      & \cellcolor[HTML]{B7E1CD}43.75   & 0.33   \\
Glitch    & 0.2& \cellcolor[HTML]{F4C7C3}86.93   & \cellcolor[HTML]{B7E1CD}83.54   & \cellcolor[HTML]{B7E1CD}4.54    & \cellcolor[HTML]{B7E1CD}8.31    & \cellcolor[HTML]{B7E1CD}24.08   & \cellcolor[HTML]{B7E1CD}47.34   & \cellcolor[HTML]{F4C7C3}85.73   & \cellcolor[HTML]{B7E1CD}32.94   & \cellcolor[HTML]{B7E1CD}53.61   & \cellcolor[HTML]{B7E1CD}\textbf{62.29} & \cellcolor[HTML]{B7E1CD}\textbf{52.36} & \cellcolor[HTML]{F4C7C3}83.21   & \cellcolor[HTML]{B7E1CD}17.4& \cellcolor[HTML]{B7E1CD}46.28   & \cellcolor[HTML]{B7E1CD}1.27    \\
Glitch    & 0.5& \cellcolor[HTML]{F4C7C3}87.16   & \cellcolor[HTML]{B7E1CD}\textbf{84.38} & \cellcolor[HTML]{B7E1CD}\textbf{13.73} & \cellcolor[HTML]{B7E1CD}\textbf{14.96} & \cellcolor[HTML]{B7E1CD}\textbf{28.94} & \cellcolor[HTML]{B7E1CD}\textbf{62.11} & \cellcolor[HTML]{F4C7C3}84.89   & \cellcolor[HTML]{B7E1CD}\textbf{35.9}  & \cellcolor[HTML]{B7E1CD}\textbf{67.92} & 60.92  & 51.46  & \cellcolor[HTML]{F4C7C3}82.8    & \cellcolor[HTML]{B7E1CD}\textbf{20.48}    & \cellcolor[HTML]{B7E1CD}\textbf{50.71} & \cellcolor[HTML]{B7E1CD}\textbf{2.8}   \\ \hline
\begin{tabular}[c]{@{}l@{}}Kaleid-\\ oscope\end{tabular} & 0  & \textbf{88.1}    & \textbf{73.5}    & 0      & 0.31   & 7.03   & 28.66  & \textbf{85.91}   & 18.83  & 2.22   & \textbf{62.98}   & 29.78  & 21.07  & \textbf{84.89}    & 34.6   & 0      \\
\begin{tabular}[c]{@{}l@{}}Kaleid-\\ oscope\end{tabular} & 0.1& \cellcolor[HTML]{F4C7C3}84.05   & \cellcolor[HTML]{F4C7C3}72.25   & 0.1    & \cellcolor[HTML]{B7E1CD}2.98    & \cellcolor[HTML]{B7E1CD}14.27   & \cellcolor[HTML]{B7E1CD}40.74   & \cellcolor[HTML]{F4C7C3}80.87   & \cellcolor[HTML]{F4C7C3}14.7    & \cellcolor[HTML]{B7E1CD}14.29   & \cellcolor[HTML]{F4C7C3}55.39   & \cellcolor[HTML]{B7E1CD}\textbf{46.42} & 21.15  & \cellcolor[HTML]{F4C7C3}74.96      & \cellcolor[HTML]{B7E1CD}36.51   & 0.1    \\
\begin{tabular}[c]{@{}l@{}}Kaleid-\\ oscope\end{tabular} & 0.2& \cellcolor[HTML]{F4C7C3}69.17   & \cellcolor[HTML]{F4C7C3}62.19   & \cellcolor[HTML]{B7E1CD}1.27    & \cellcolor[HTML]{B7E1CD}10.11   & \cellcolor[HTML]{B7E1CD}18.7    & \cellcolor[HTML]{B7E1CD}55.72   & \cellcolor[HTML]{F4C7C3}66.14   & \cellcolor[HTML]{F4C7C3}17.81   & \cellcolor[HTML]{B7E1CD}30.57   & \cellcolor[HTML]{F4C7C3}41.78   & \cellcolor[HTML]{F4C7C3}21.68   & \cellcolor[HTML]{B7E1CD}32.36   & \cellcolor[HTML]{F4C7C3}66.8& 35.43  & 0.66   \\
\begin{tabular}[c]{@{}l@{}}Kaleid-\\ oscope\end{tabular} & 0.5& \cellcolor[HTML]{F4C7C3}71.95   & \cellcolor[HTML]{F4C7C3}66.85   & \cellcolor[HTML]{B7E1CD}\textbf{7.82}  & \cellcolor[HTML]{B7E1CD}\textbf{28.54} & \cellcolor[HTML]{B7E1CD}\textbf{21.91} & \cellcolor[HTML]{B7E1CD}\textbf{58.5}  & \cellcolor[HTML]{F4C7C3}68.64   & \cellcolor[HTML]{B7E1CD}\textbf{24.92} & \cellcolor[HTML]{B7E1CD}\textbf{47.44} & \cellcolor[HTML]{F4C7C3}47.77   & \cellcolor[HTML]{B7E1CD}38.32   & \cellcolor[HTML]{B7E1CD}\textbf{43.62} & \cellcolor[HTML]{F4C7C3}65.04      & \cellcolor[HTML]{B7E1CD}\textbf{43.28} & \cellcolor[HTML]{B7E1CD}\textbf{6.01}  \\ \hline
\end{tabular}}}
\caption{\textbf{Intial Training Ablations- Gaussian regularization on ImageNette. }Accuracy of initially trained models on ImageNette trained using different attacks as indicated in ``Train Attack" column measured across different attacks.  Gaussian regularization (with $\sigma=0.2$) is also considered during initial training, with regularization strength $\lambda$.  Results where regularization improves over no regularization ($\lambda = 0$) by at least 1\% accuracy are highlighted in green, while results where regularization incurs at least a 1\% drop in accuracy are highlighted in red.  Best performing with respect to regularization strength are bolded.}
\label{app:IT_ablation_GR_IM}
\end{table*}


\subsection{Impact of random noise parameter $\sigma$}
\label{app:random_noise_var_abl}
To investigate the impact of the noise parameter $\sigma$, we perform initial training on CIFAR-10 with uniform and gaussian regularization at different values of $\sigma$.  We maintain a value of regularization strength $\lambda=5$ to isolate the impact of the noise variance from regularization strength.  We report results in Table \ref{tab:rand_noise_sigma}.  Overall, we find that $\sigma$ has an effect similar to the effect of increasing $\lambda$ where higher values of $\sigma$ leads to higher average and union robust accuracies at the cost of lower clean accuracy and accuracy on the initial attack.
\begin{table*}[]
\centering
\begin{tabular}{|l|c|c|cccc|cc|}
\hline
Noise type & $\sigma$ & Clean & $\ell_2$ & $\ell_\infty$ & StAdv & Recolor & Avg & Union \\ \hline
Uniform & 0.5 & 90.40 & 70.21 & 31.91 & 1.31 & 39.83 & 35.81 & 0.86 \\
Uniform & 1 & 90.76 & 69.89 & 32.58 & 1.49 & 40.14 & 36.02 & 1.15 \\
Uniform & 2 & 85.28 & 63.65 & 50.73 & 10.62 & 60.10 & 46.28 & 8.40 \\ \hline
Gaussian & 0.05 & 90.04 & 69.62 & 31.61 & 7.25 & 43.84 & 38.08 & 6.64 \\
Gaussian & 0.1 & 88.53 & 68.54 & 32.04 & 14.5 & 51.73 & 41.70 & 12.63 \\
Gaussian & 0.2 & 87.00 & 64.88 & 27.46 & 31.82 & 63.59 & 46.94 & 18.7 \\ \hline
\end{tabular}
\caption{\textbf{Impact of $\sigma$ on regularization based on random noise in initial training.} We maintain regularization strength $\lambda = 5$ and perform initial training on CIFAR-10 with $\ell_2$ attacks. We report the clean accuracy, accuracy on $\ell_2$, $\ell_\infty$, StAdv, and Recolor attacks, and the average and union accuracies on the set.}
\label{tab:rand_noise_sigma}
\end{table*}

\subsection{Impact of starting and new attack pairs}
\label{app:fine-tuning_pairs_init_train}
In order to see how much our results depend on attack choice, we experiment with starting with a model initially trained with a starting attack and then fine-tuned for robustness to a new attack for different starting and new attack pairs on Imagenette.  In this section, we ask the question: does regularization in initial training generally lead to better starting points for fine-tuning?  In the following experiments, we use adversarial training as the base initial training procedure and \citet{croce2022adversarial}'s fine-tuning approach as the base fine-tuning procedure.  We consider these approaches with and without regularization.

We compare models initially trained with regularization (and fine-tuned without regularization) to models initially trained without regularization (and fine-tuned without regularization).  We present the differences in average accuracy across the 2 attacks, union accuracy across the 2 attacks, accuracy on the starting attack, accuracy on the new attack, and clean accuracy between the 2 settings for adversarial $\ell_2$ regularization (with $\lambda=0.5$) in \cref{fig:imagenette_finetune_abl_froml2}, for variation regularization in \cref{fig:imagenette_finetune_abl_fromvarreg} (with $\lambda=0.2$), for uniform regularization (with $\sigma=2$ and $\lambda=1$) in Figure \ref{fig:imagenette_finetune_abl_fromuniform}, and for gaussian regularization (with $\sigma=0.2$ and $\lambda=0.5$) in Figure \ref{fig:imagenette_finetune_abl_fromgauss}. In these figures, we highlight gains in accuracy larger than 1\% in green and drops in accuracy larger than 1\% in red.

\textbf{Regularization in initial training generally improves performance. }Across all figures, we can see that for most pairs of attacks, regularization leads to improvements on average accuracy, union accuracy, accuracy on the initial attack, accuracy on the new attack.  We find that this improvement is more consistent across attack types when using adversarial versions of regularization such as adversarial $\ell_2$ regularization or variation regularization in comparison to random noise based regularizations.  This improvement in performance may be due to the fact that regularization improves unforeseen robustness, causing the initial accuracy on the new attack to generally be higher, and thus a better starting point for fine-tuning the model for robustness against new attacks.

\textbf{Uniform regularization in initial training can improve clean accuracy for certain starting attack types. }From Figure \ref{fig:imagenette_finetune_abl_fromuniform}e, we observe that using uniform regularization in initial training can lead to increases in clean accuracy after fine-tuning for several initial attack types: StAdv, ReColor, Pixel, Elastic, Wood, and Kaleidoscope attacks.  In comparison, Figure \ref{fig:imagenette_finetune_abl_froml2}e, demonstrates that using adversarial $\ell_2$ regularization does not improve clean accuracy for as many threat models as uniform regularization; for adversarial $\ell_2$ regularization, the most improvements in clean accuracy are when the initial attack is Elastic attack or when the new attack is $\ell_\infty$ attack. Adversarial $\ell_2$ regularization generally maintains clean accuracy for most attacks, but leads a drop in clean accuracy when the starting attack type is StAdv attack.  We find that similarly, variation regularization also maintains clean accuracy.  Gaussian regularization on the other hand either maintains or exhibits a tradeoff with clean accuracy.

\begin{figure*}[t!]
    \centering
    \begin{subfigure}[t]{0.4\textwidth}
        \centering
        \includegraphics[width=0.95\textwidth]{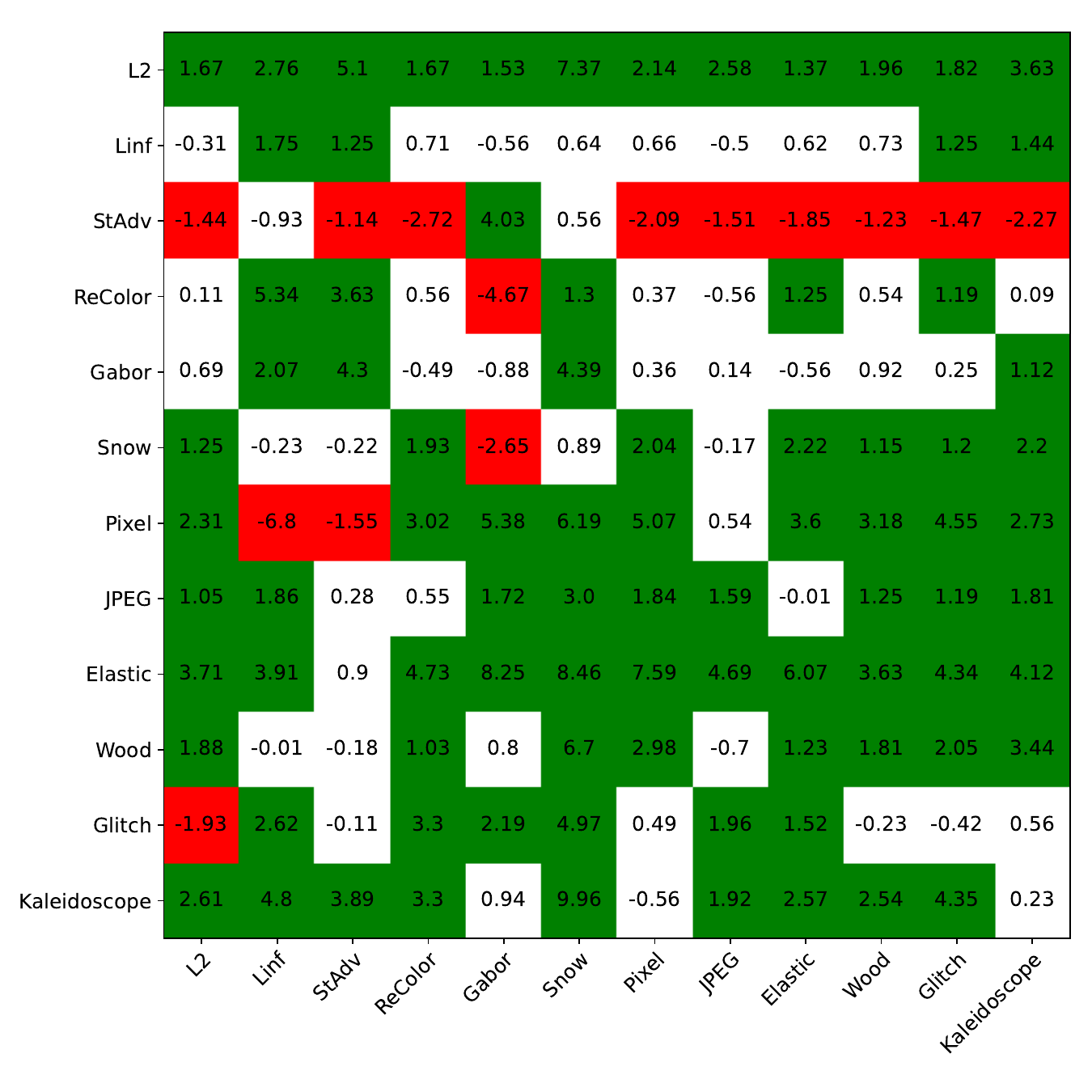}
        \vspace{-10pt}
        \caption{Difference in Avg Acc}
    \end{subfigure}%
    \begin{subfigure}[t]{0.4\textwidth}
        \centering
        \includegraphics[width=0.95\textwidth]{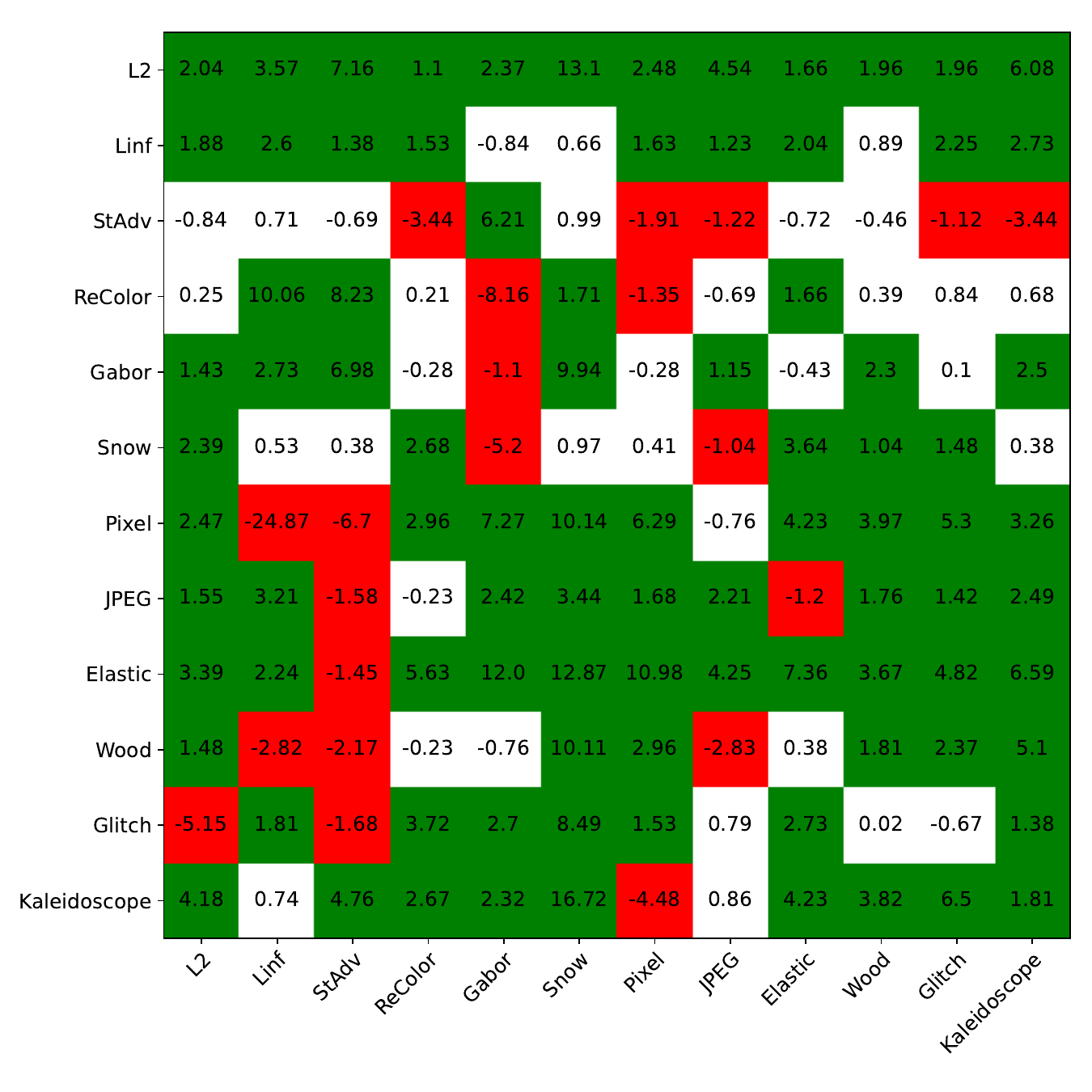}
        \vspace{-10pt}
        \caption{Difference in Union Acc}
    \end{subfigure}
    \begin{subfigure}[t]{0.4\textwidth}
        \centering
        \includegraphics[width=0.95\textwidth]{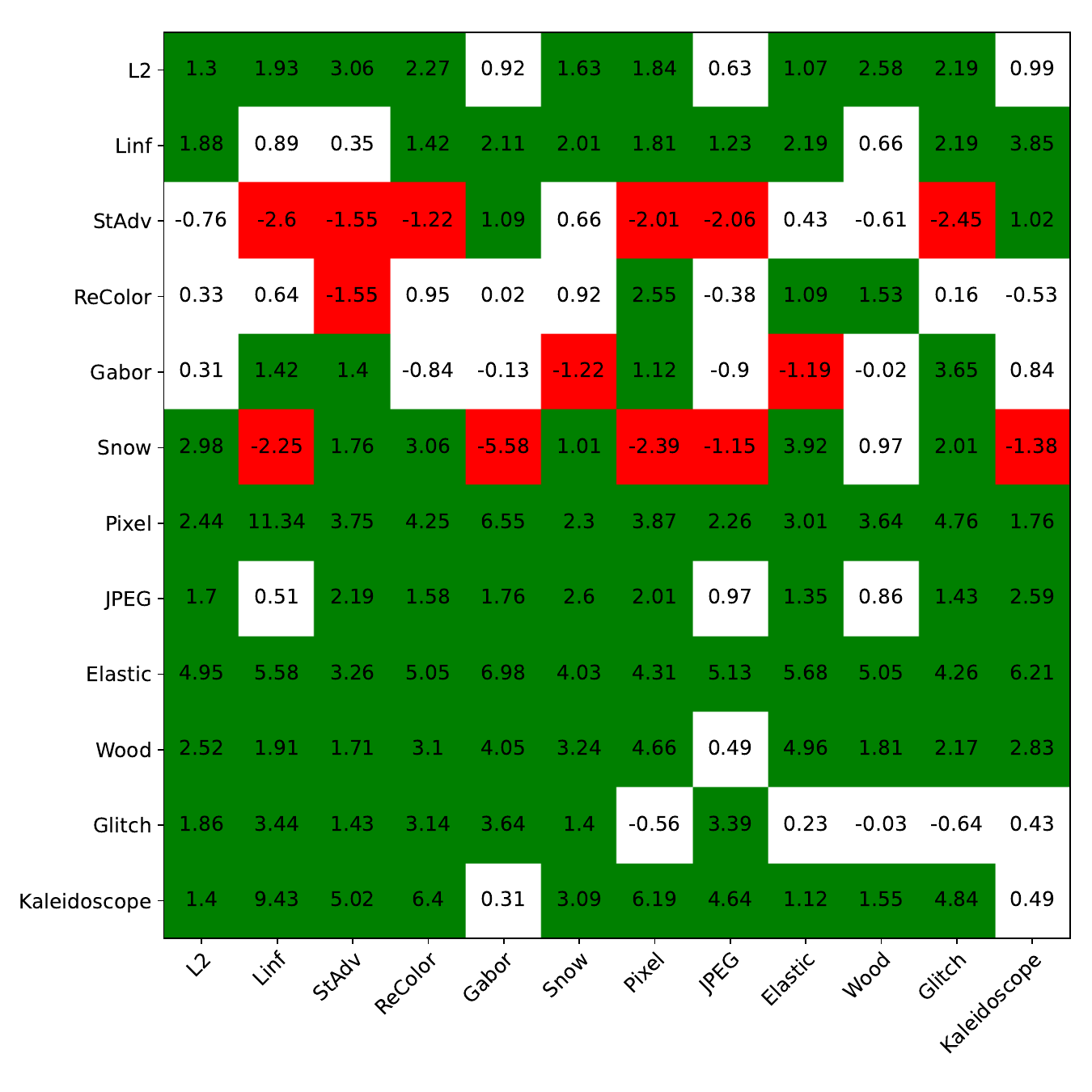}
        \vspace{-10pt}
        \caption{Difference in Initial Attack Acc}
    \end{subfigure}%
    \begin{subfigure}[t]{0.4\textwidth}
        \centering
        \includegraphics[width=0.95\textwidth]{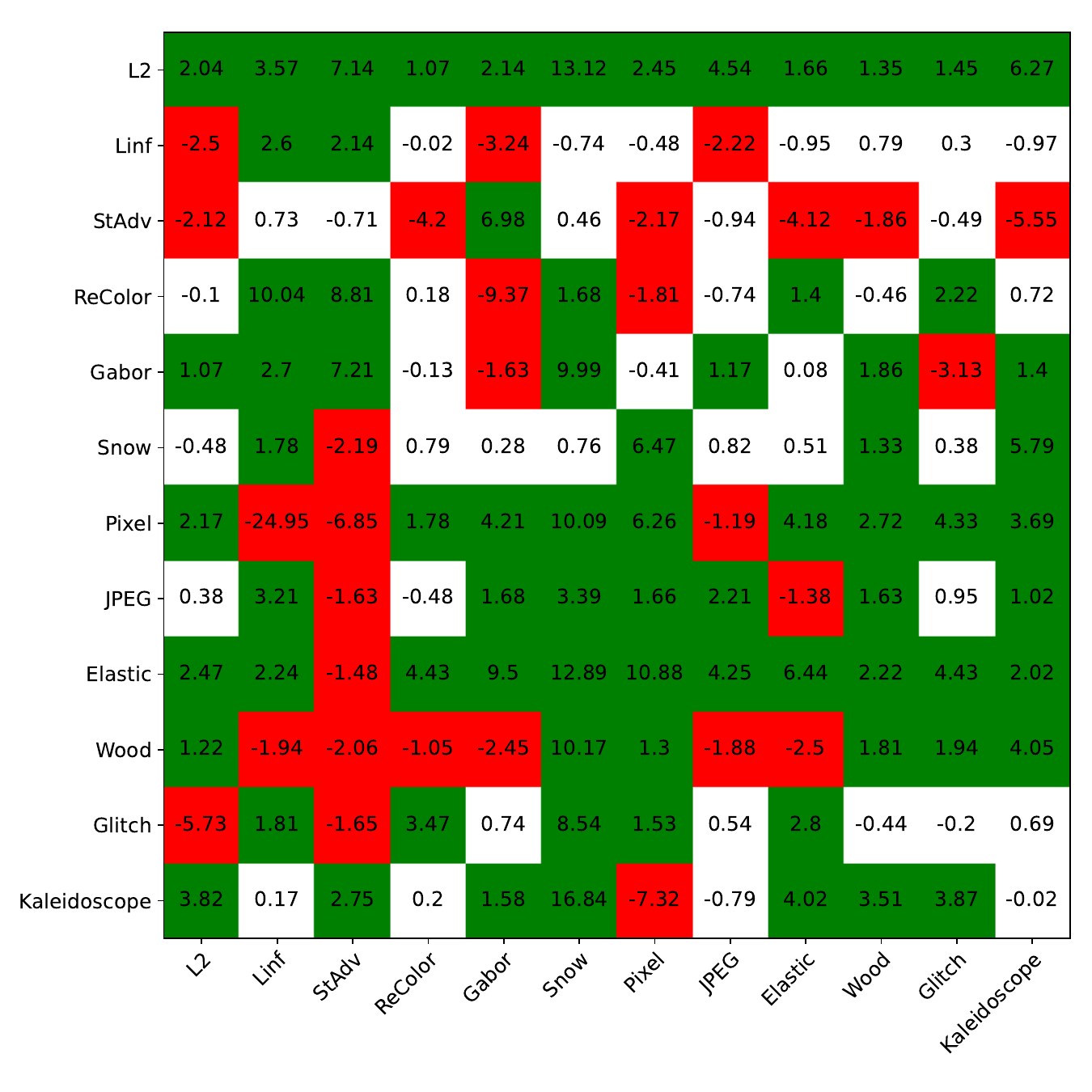}
        \vspace{-10pt}
        \caption{Difference in New Attack Acc}
    \end{subfigure}
    \begin{subfigure}[t]{0.4\textwidth}
        \centering
        \includegraphics[width=0.95\textwidth]{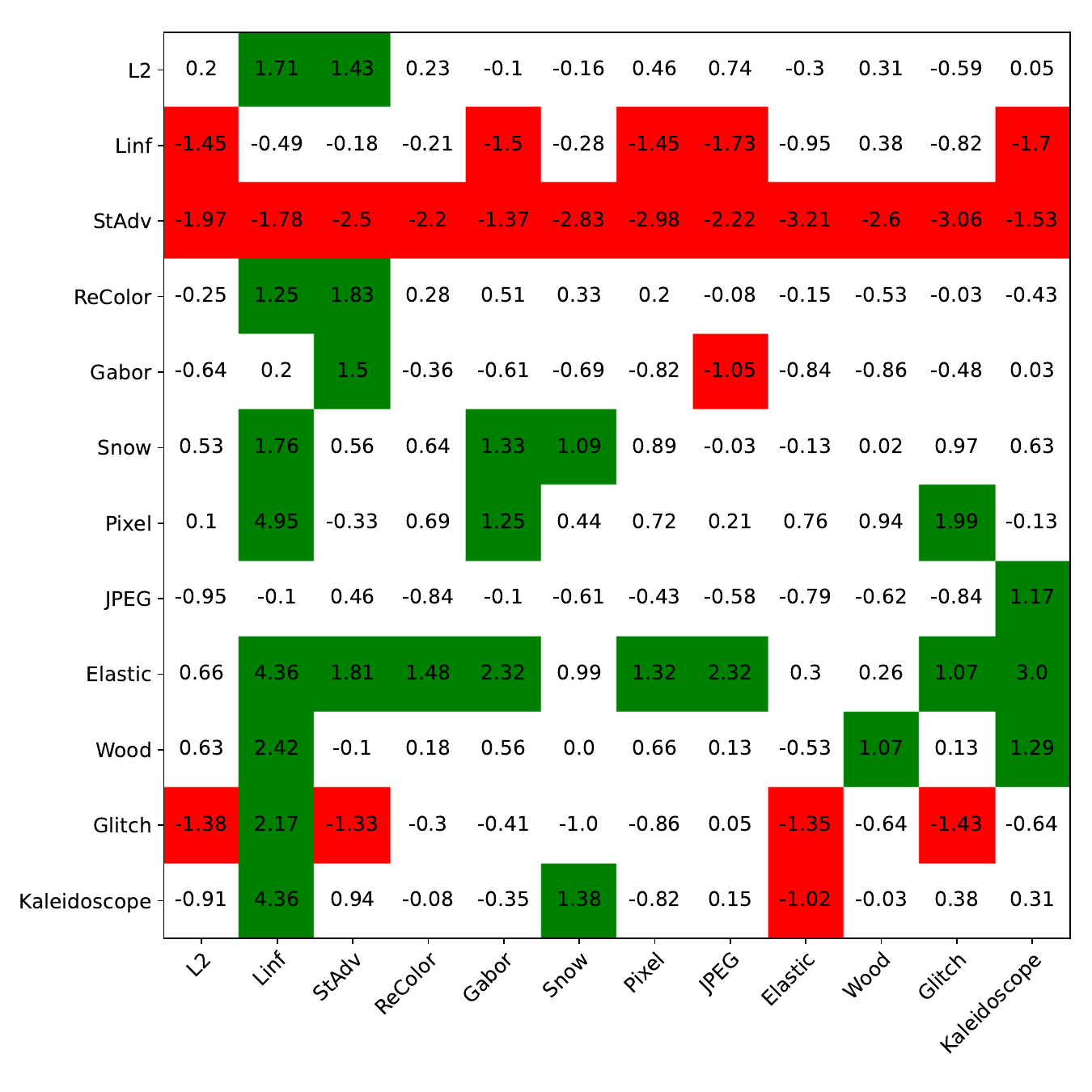}
        \vspace{-10pt}
        \caption{Difference in Clean Acc}
    \end{subfigure}
    \caption{\textbf{Change in robust accuracy after fine-tuning with models initally trained with adversarial $\ell_2$ regularization different initial attack and new attack pairs.}  We fine-tune models on Imagenette across 144 pairs of initial attack and new attack.  The initial attack corresponds to the row of each grid and new attack corresponds to each column.  Values represent differences between the accuracy measured on a model fine-tuned with and without regularization in initial training.  Gains in accuracy of at least 1\% are highlighted in green, while drops in accuracy of at least 1\% are highlighted in red. 
    }
    \label{fig:imagenette_finetune_abl_froml2}
\end{figure*}

\begin{figure*}[t!]
    \centering
    \begin{subfigure}[t]{0.4\textwidth}
        \centering
        \includegraphics[width=0.95\textwidth]{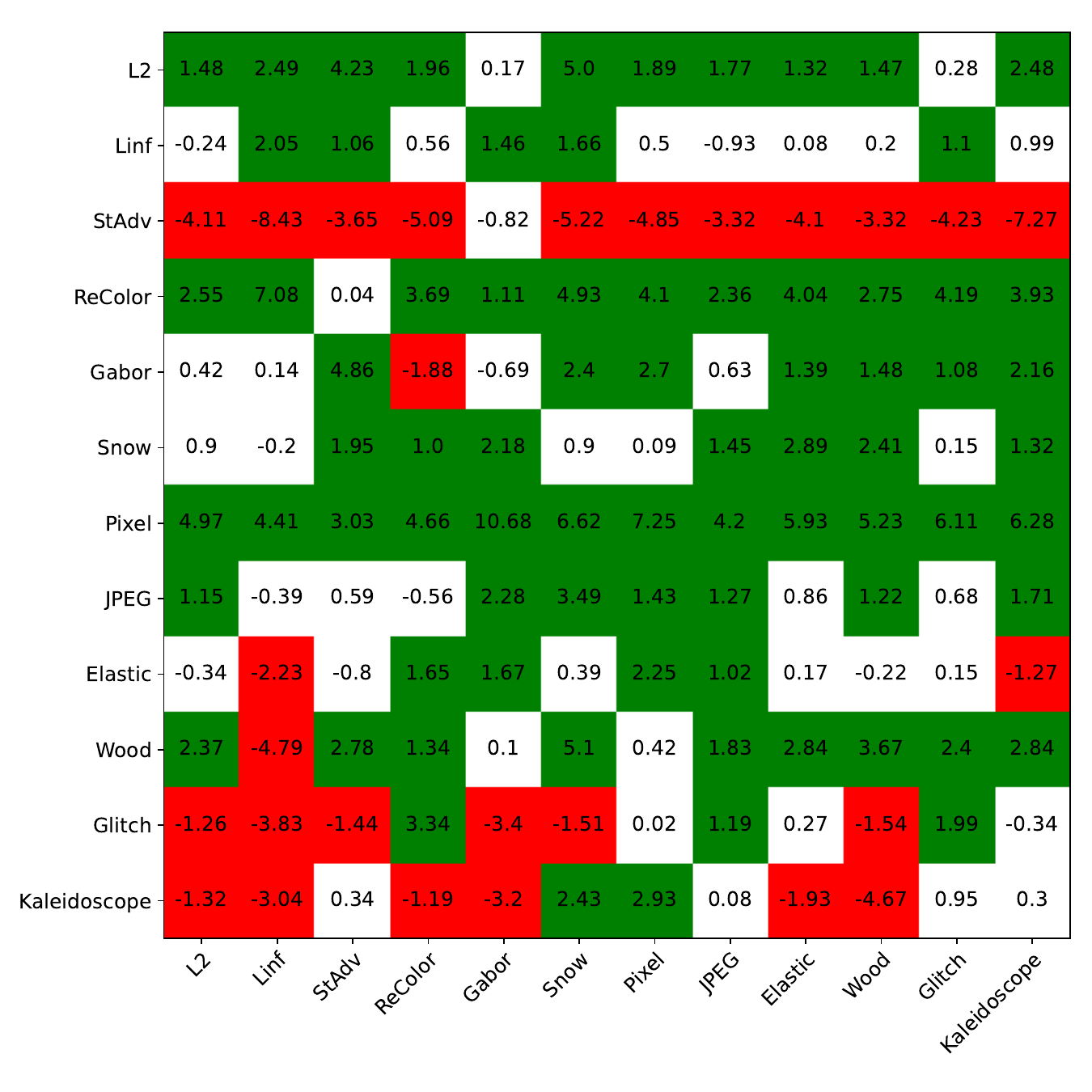}
        \vspace{-10pt}
        \caption{Difference in Avg Acc}
    \end{subfigure}%
    \begin{subfigure}[t]{0.4\textwidth}
        \centering
        \includegraphics[width=0.95\textwidth]{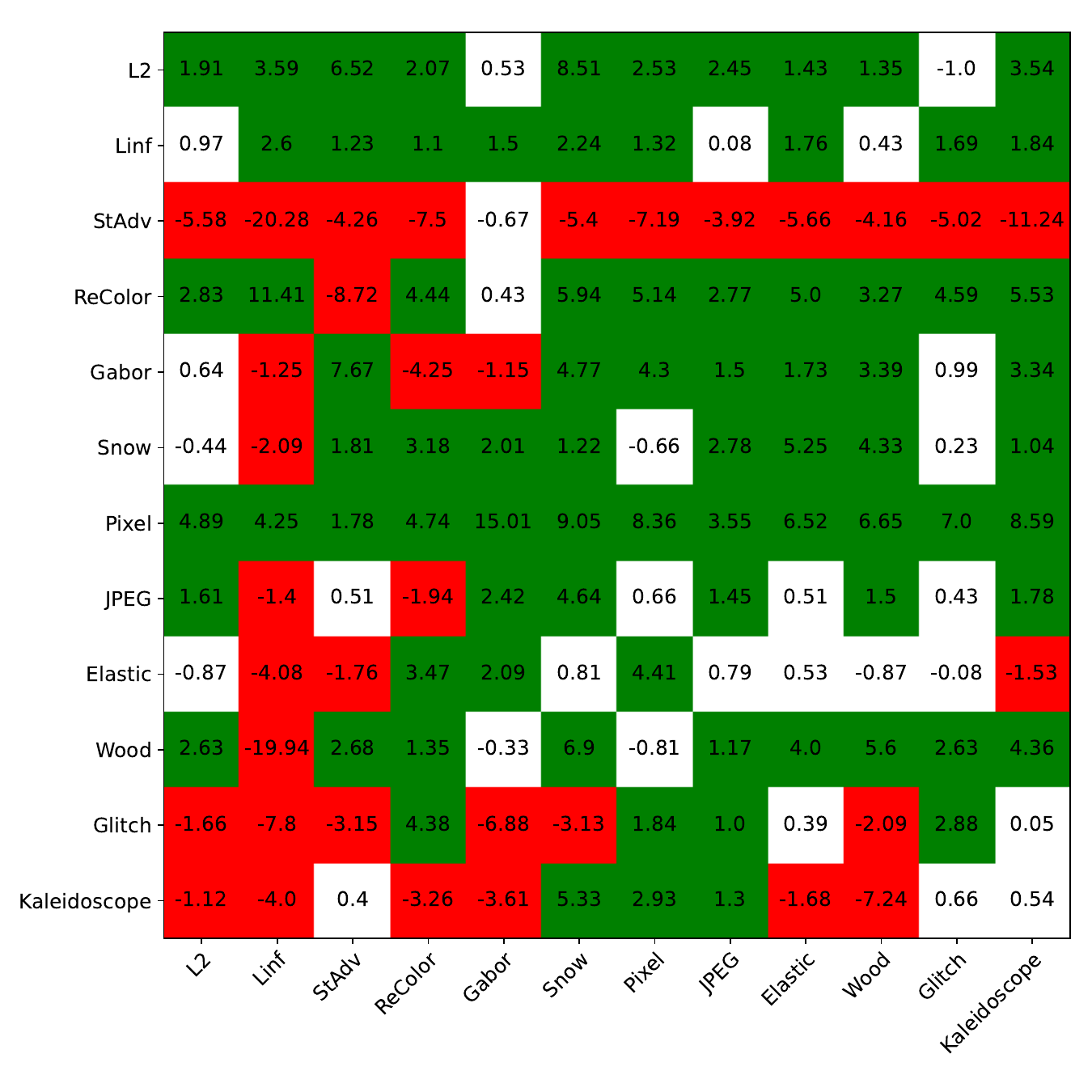}
        \vspace{-10pt}
        \caption{Difference in Union Acc}
    \end{subfigure}
    \begin{subfigure}[t]{0.4\textwidth}
        \centering
        \includegraphics[width=0.95\textwidth]{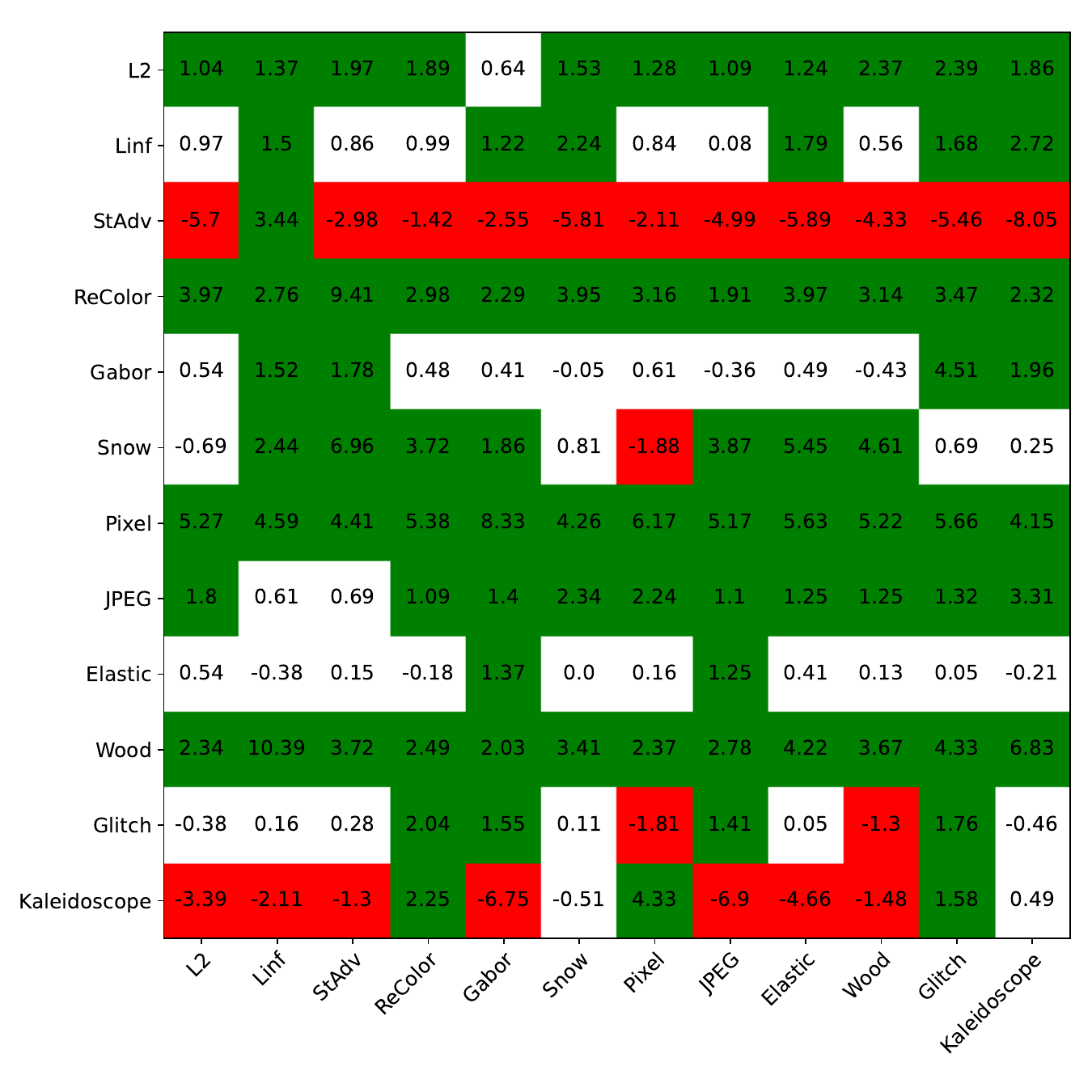}
        \vspace{-10pt}
        \caption{Difference in Initial Attack Acc}
    \end{subfigure}%
    \begin{subfigure}[t]{0.4\textwidth}
        \centering
        \includegraphics[width=0.95\textwidth]{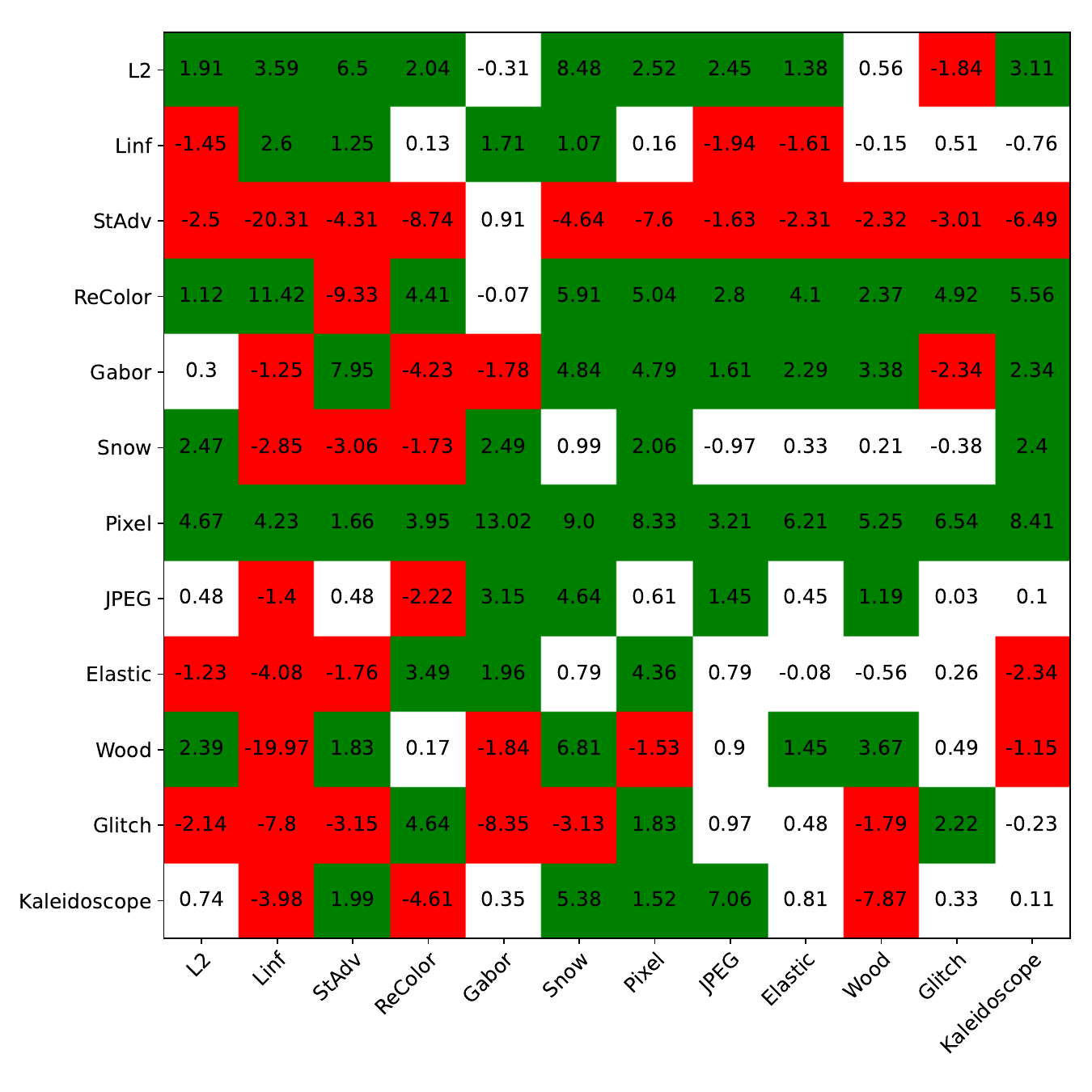}
        \vspace{-10pt}
        \caption{Difference in New Attack Acc}
    \end{subfigure}
    \begin{subfigure}[t]{0.4\textwidth}
        \centering
        \includegraphics[width=0.95\textwidth]{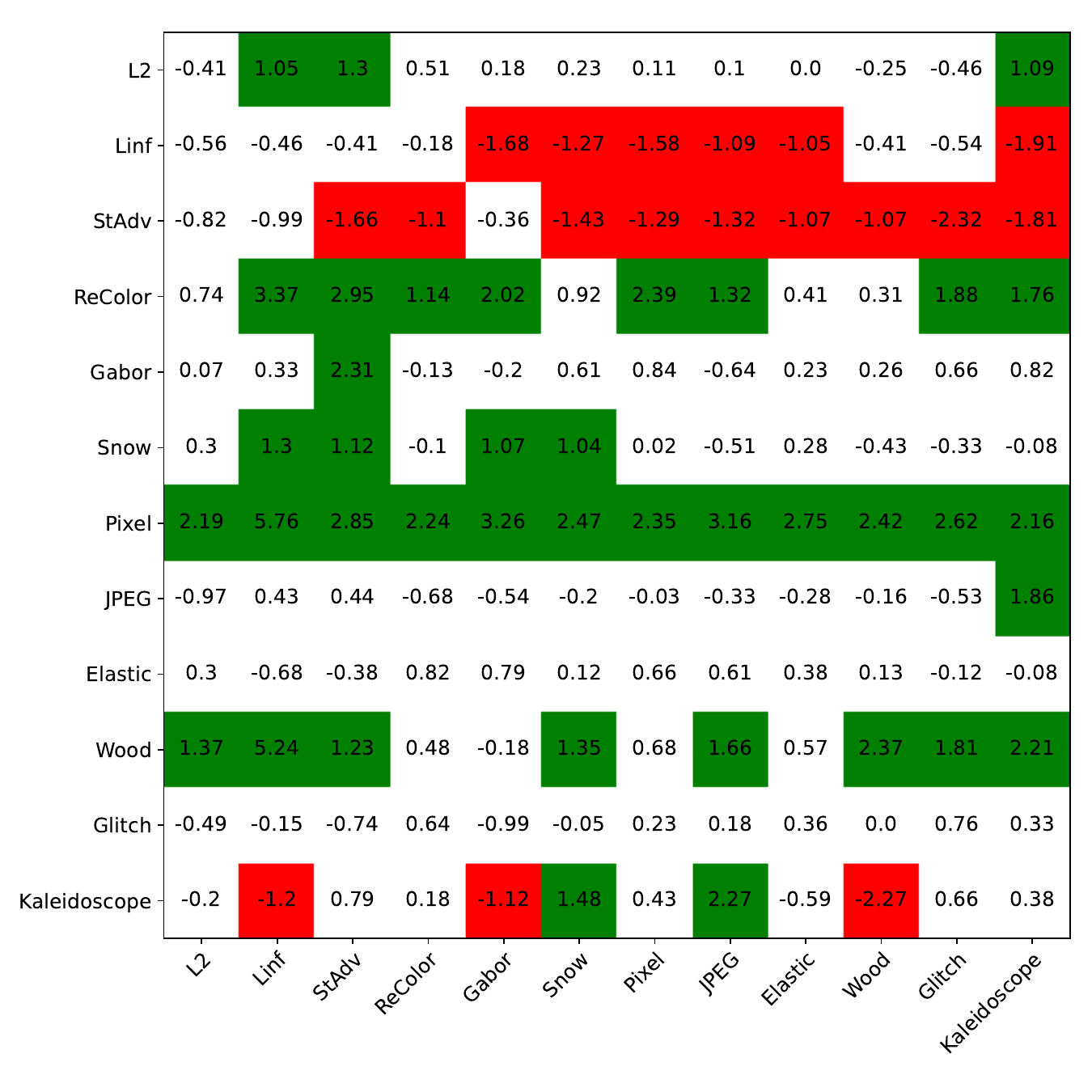}
        \vspace{-10pt}
        \caption{Difference in Clean Acc}
    \end{subfigure}
    \caption{\textbf{Change in robust accuracy after fine-tuning with models initally trained with variation regularization different initial attack and new attack pairs.}  We fine-tune models on Imagenette across 144 pairs of initial attack and new attack.  The initial attack corresponds to the row of each grid and new attack corresponds to each column.  Values represent differences between the accuracy measured on a model fine-tuned with and without regularization in initial training.  Gains in accuracy of at least 1\% are highlighted in green, while drops in accuracy of at least 1\% are highlighted in red. 
    }
    \label{fig:imagenette_finetune_abl_fromvarreg}
\end{figure*}

\begin{figure*}[t!]
    \centering
    \begin{subfigure}[t]{0.4\textwidth}
        \centering
        \includegraphics[width=0.95\textwidth]{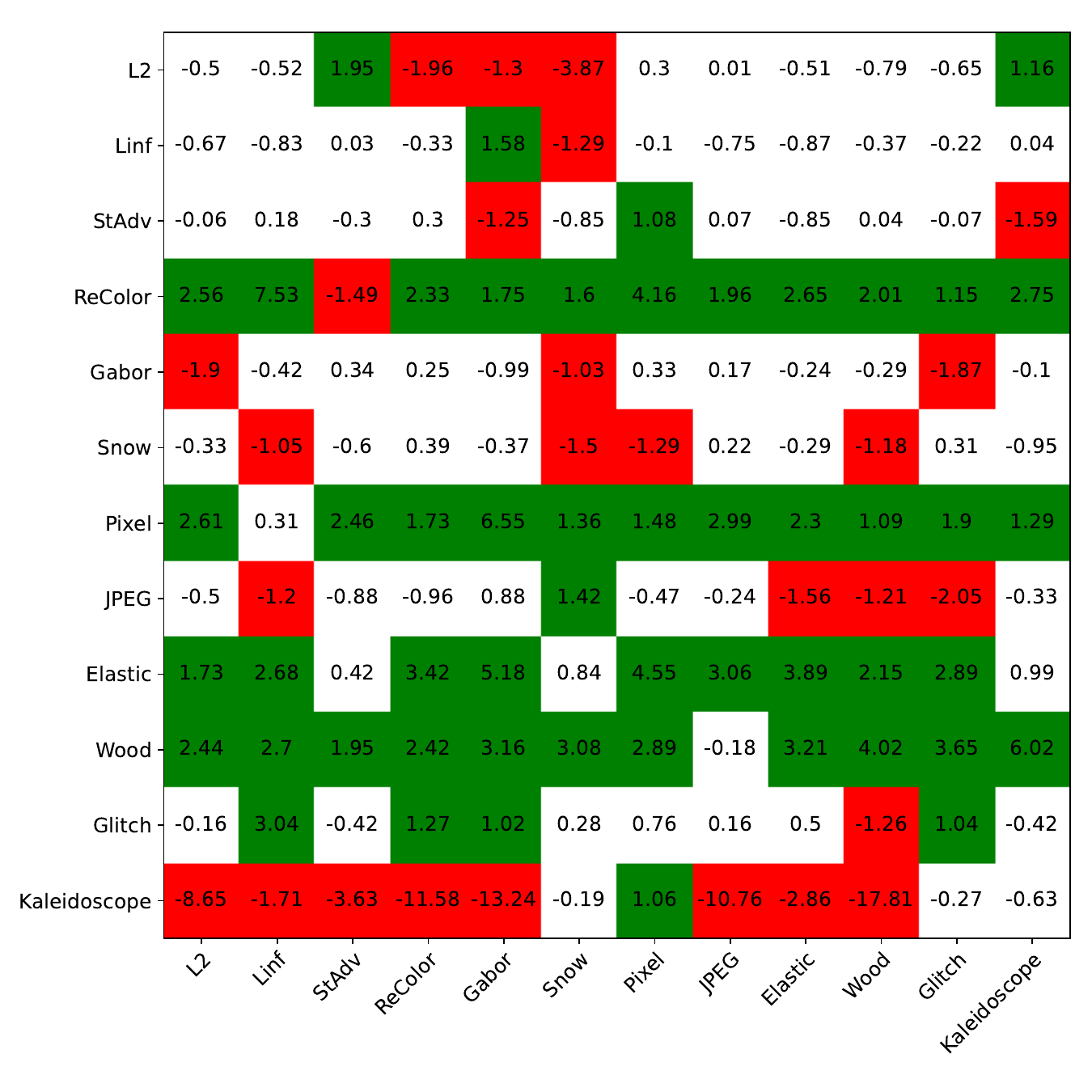}
        \vspace{-10pt}
        \caption{Difference in Avg Acc}
    \end{subfigure}%
    \begin{subfigure}[t]{0.4\textwidth}
        \centering
        \includegraphics[width=0.95\textwidth]{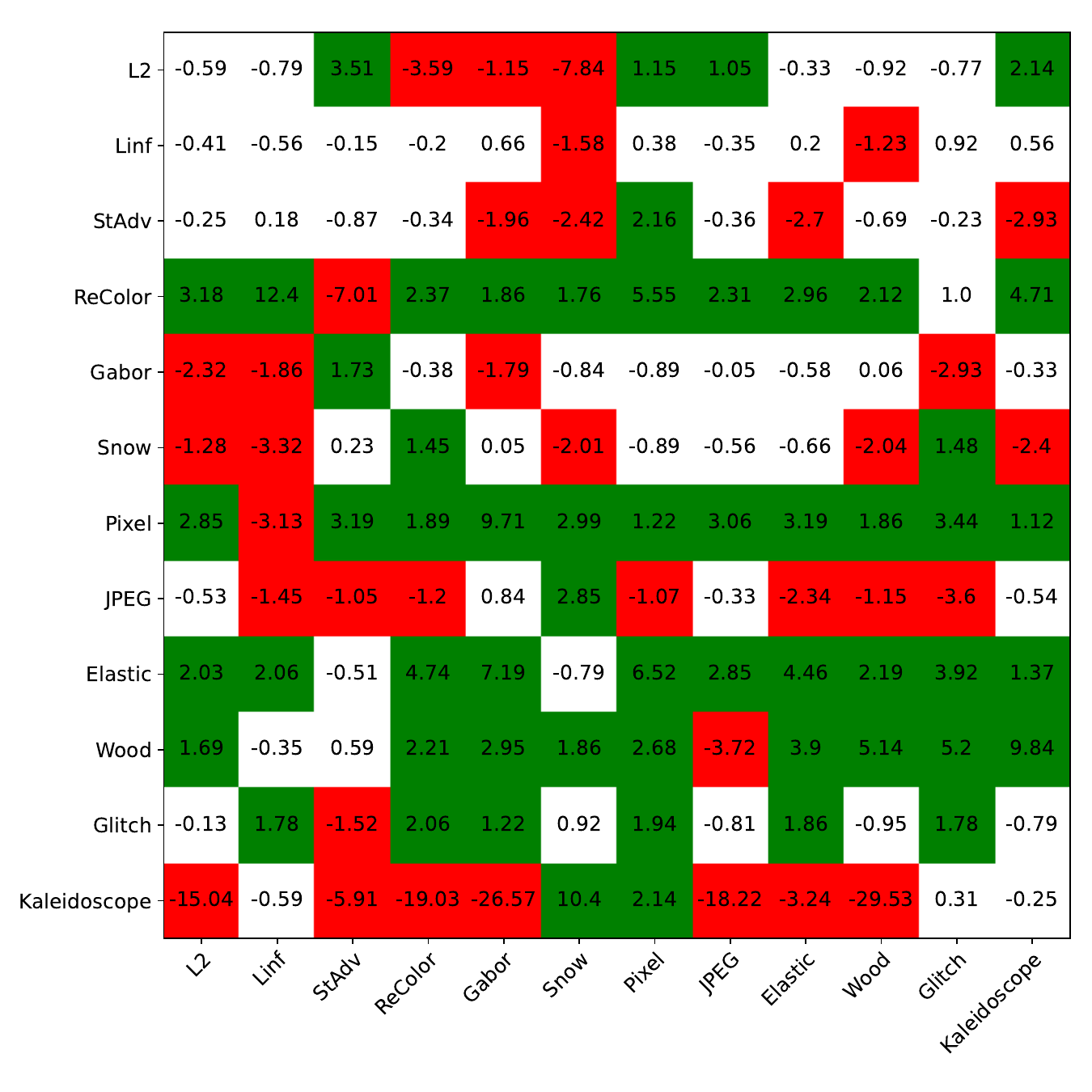}
        \vspace{-10pt}
        \caption{Difference in Union Acc}
    \end{subfigure}
    \begin{subfigure}[t]{0.4\textwidth}
        \centering
        \includegraphics[width=0.95\textwidth]{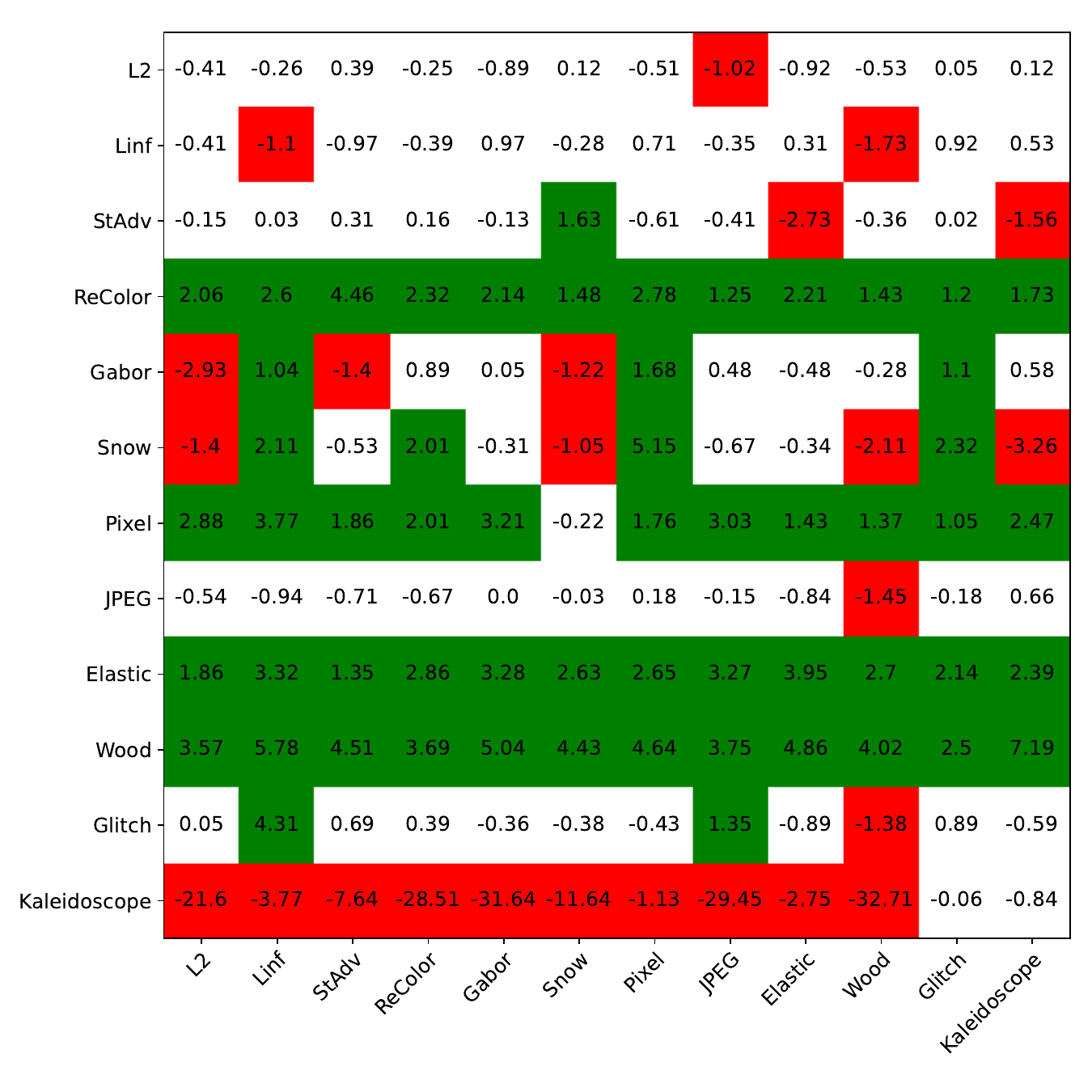}
        \vspace{-10pt}
        \caption{Difference in Initial Attack Acc}
    \end{subfigure}%
    \begin{subfigure}[t]{0.4\textwidth}
        \centering
        \includegraphics[width=0.95\textwidth]{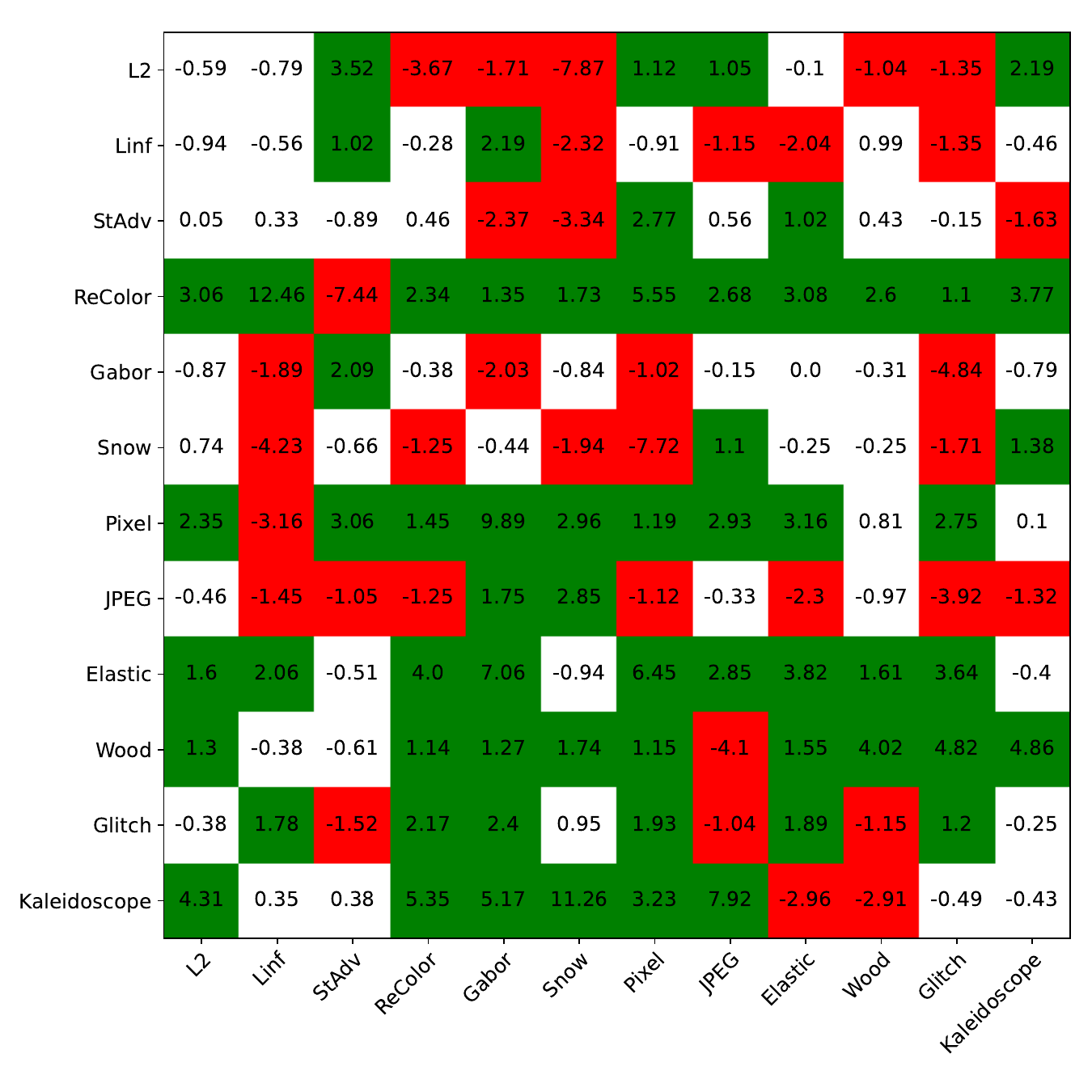}
        \vspace{-10pt}
        \caption{Difference in New Attack Acc}
    \end{subfigure}
    \begin{subfigure}[t]{0.4\textwidth}
        \centering
        \includegraphics[width=0.95\textwidth]{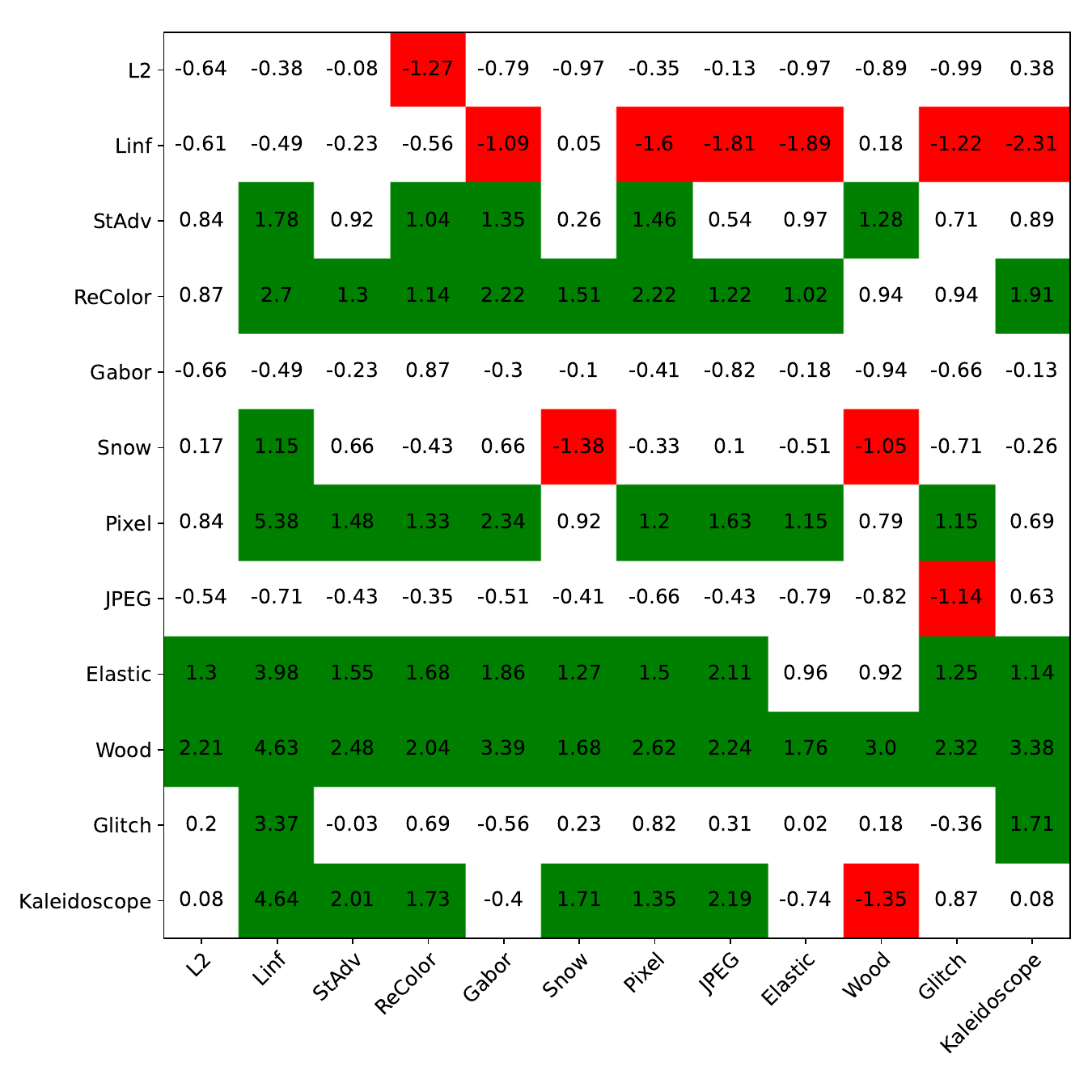}
        \vspace{-10pt}
        \caption{Difference in Clean Acc}
    \end{subfigure}
    \caption{\textbf{Change in robust accuracy after fine-tuning with models initally trained with uniform regularization different initial attack and new attack pairs.}  We fine-tune models on Imagenette across 144 pairs of initial attack and new attack.  The initial attack corresponds to the row of each grid and new attack corresponds to each column.  Values represent differences between the accuracy measured on a model fine-tuned with and without regularization in initial training.  Gains in accuracy of at least 1\% are highlighted in green, while drops in accuracy of at least 1\% are highlighted in red. 
    }
    \label{fig:imagenette_finetune_abl_fromuniform}
\end{figure*}

\begin{figure*}[t!]
    \centering
    \begin{subfigure}[t]{0.4\textwidth}
        \centering
        \includegraphics[width=0.95\textwidth]{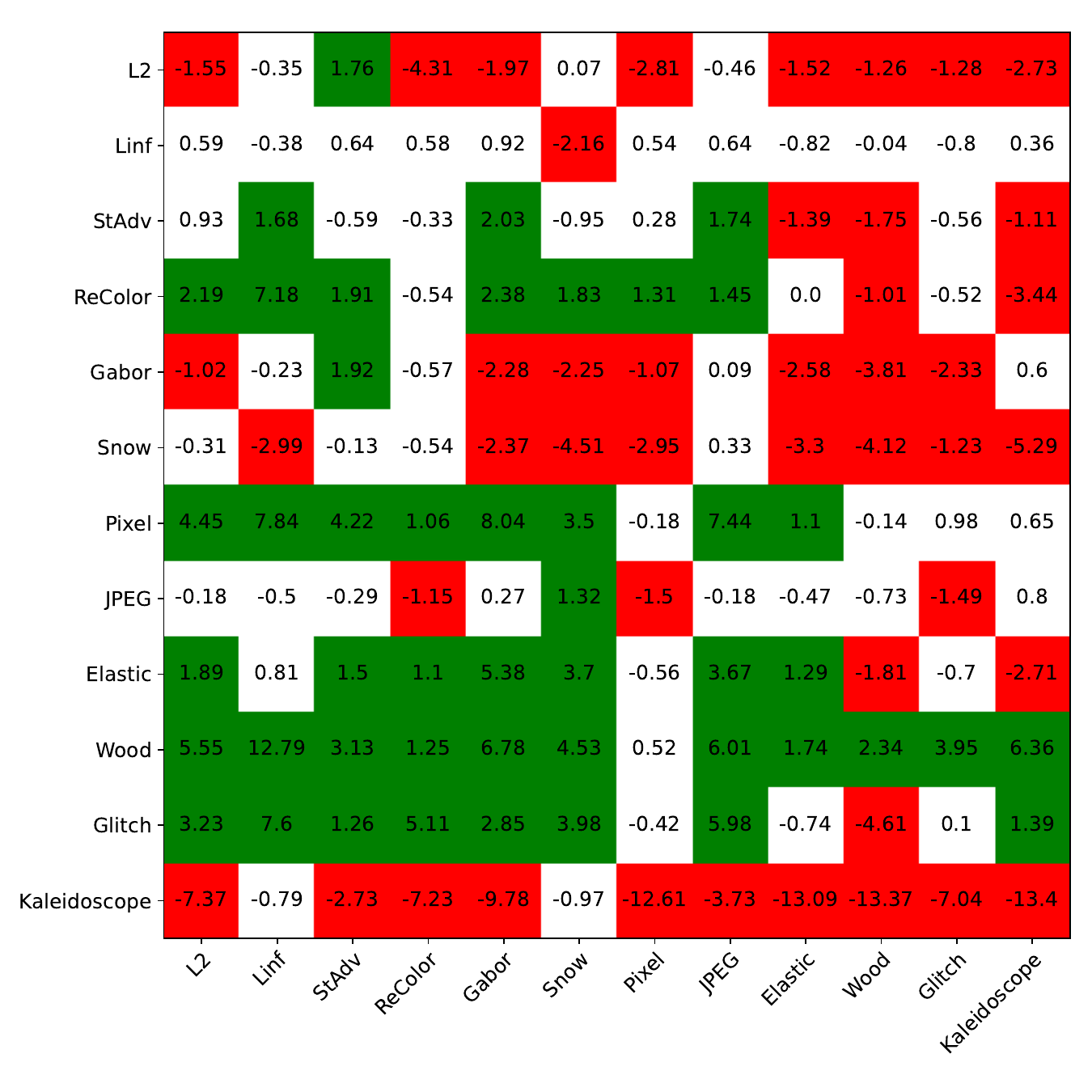}
        \vspace{-10pt}
        \caption{Difference in Avg Acc}
    \end{subfigure}%
    \begin{subfigure}[t]{0.4\textwidth}
        \centering
        \includegraphics[width=0.95\textwidth]{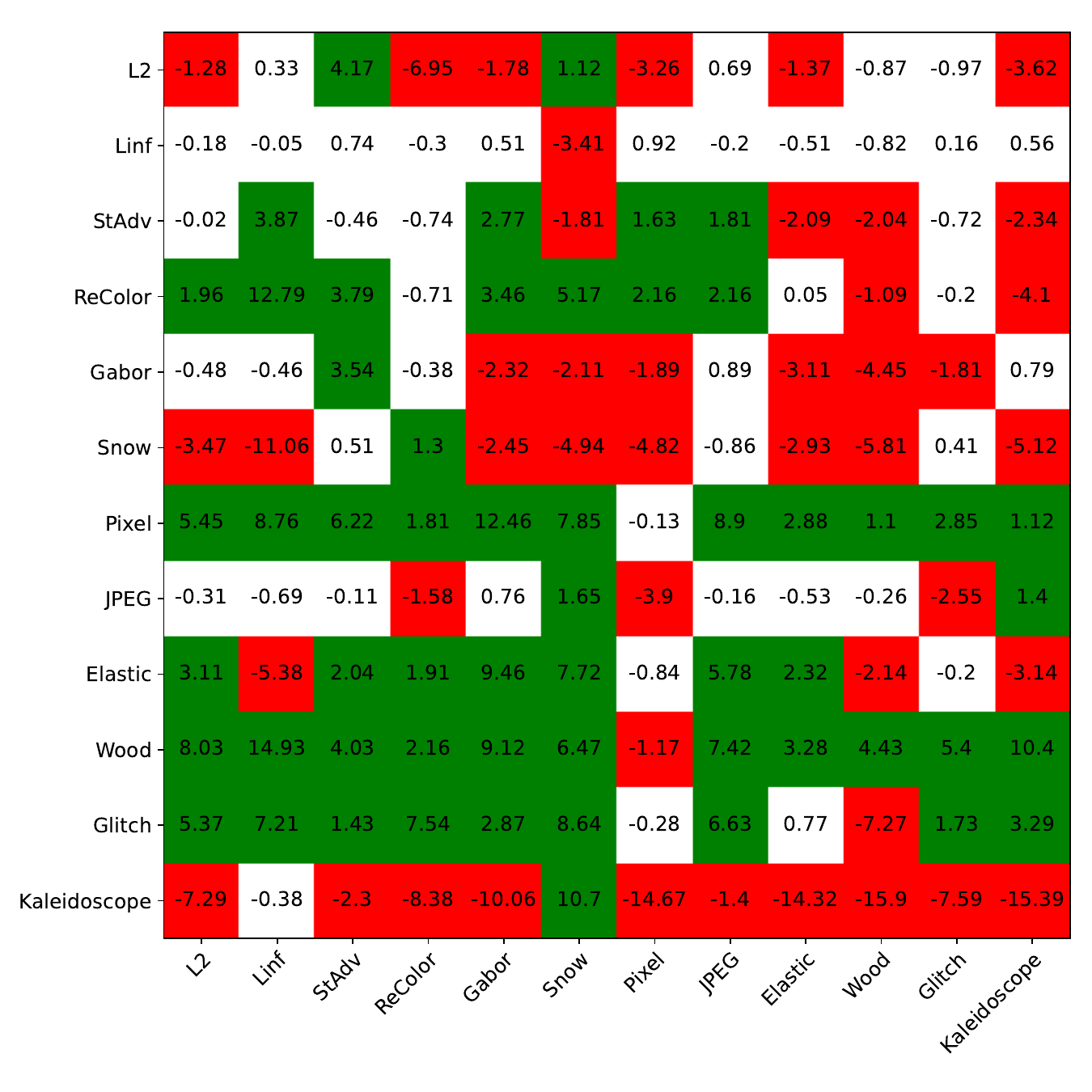}
        \vspace{-10pt}
        \caption{Difference in Union Acc}
    \end{subfigure}
    \begin{subfigure}[t]{0.4\textwidth}
        \centering
        \includegraphics[width=0.95\textwidth]{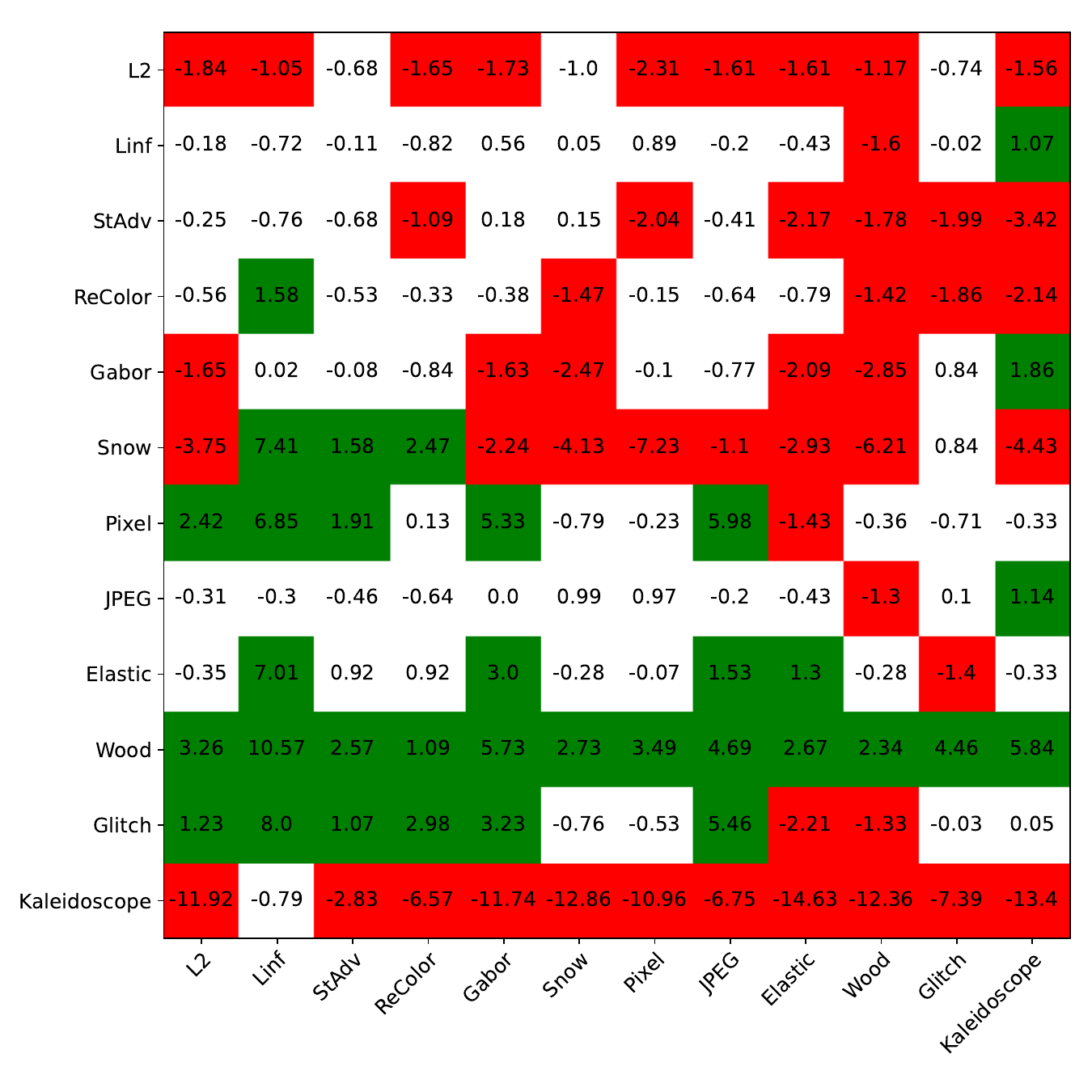}
        \vspace{-10pt}
        \caption{Difference in Initial Attack Acc}
    \end{subfigure}%
    \begin{subfigure}[t]{0.4\textwidth}
        \centering
        \includegraphics[width=0.95\textwidth]{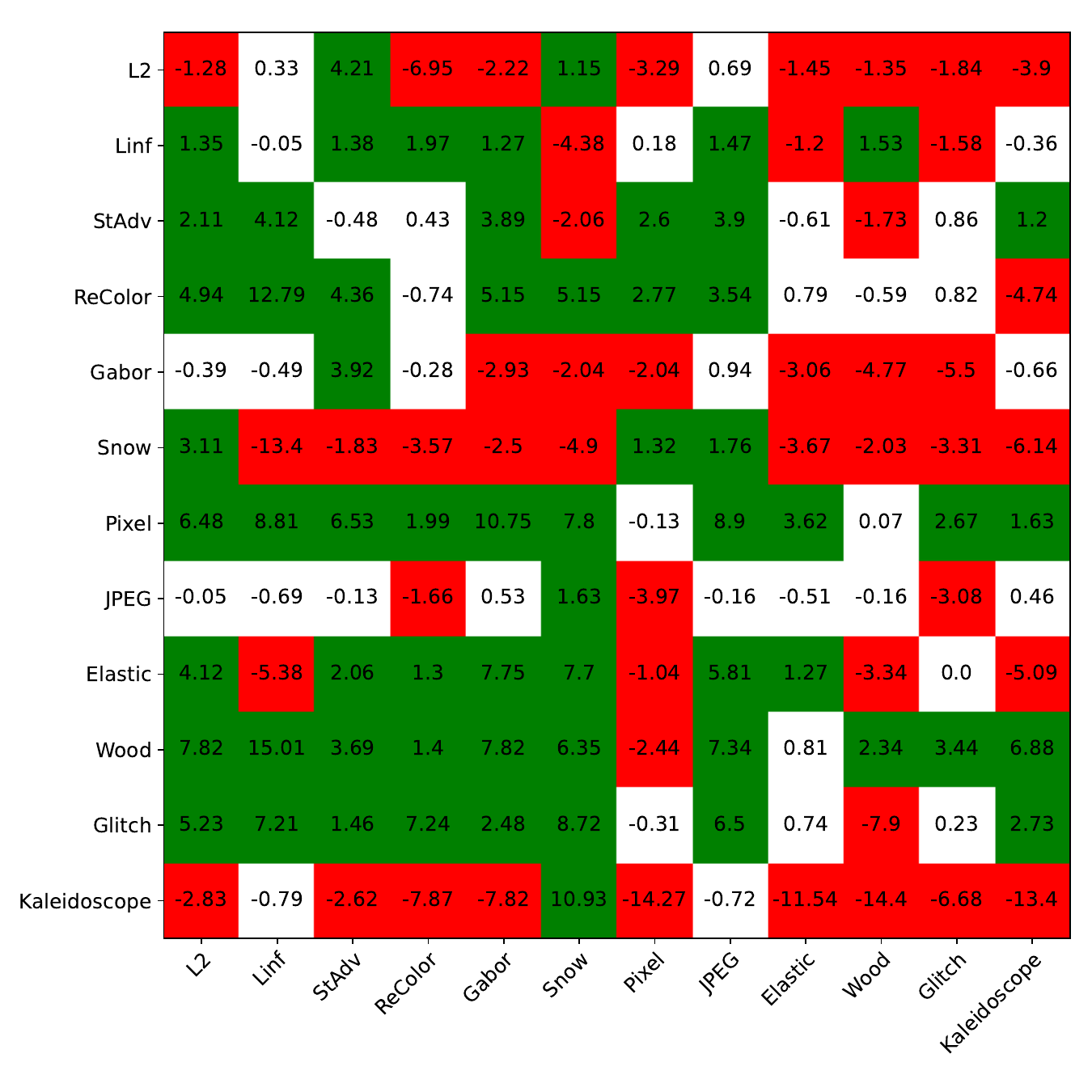}
        \vspace{-10pt}
        \caption{Difference in New Attack Acc}
    \end{subfigure}
    \begin{subfigure}[t]{0.4\textwidth}
        \centering
        \includegraphics[width=0.95\textwidth]{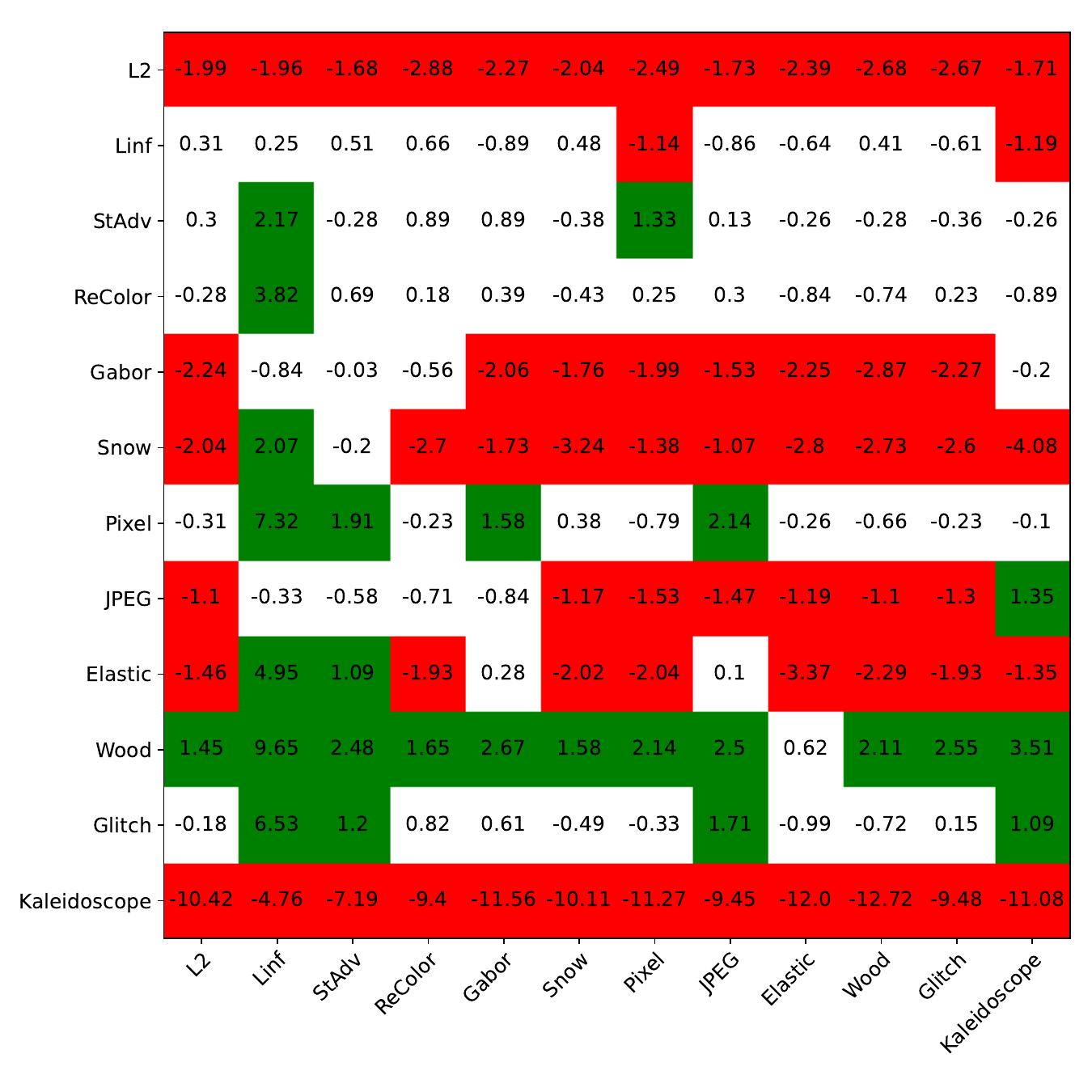}
        \vspace{-10pt}
        \caption{Difference in Clean Acc}
    \end{subfigure}
    \caption{\textbf{Change in robust accuracy after fine-tuning with models initally trained with Gaussian regularization different initial attack and new attack pairs.}  We fine-tune models on Imagenette across 144 pairs of initial attack and new attack.  The initial attack corresponds to the row of each grid and new attack corresponds to each column.  Values represent differences between the accuracy measured on a model fine-tuned with and without regularization in initial training.  Gains in accuracy of at least 1\% are highlighted in green, while drops in accuracy of at least 1\% are highlighted in red. 
    }
    \label{fig:imagenette_finetune_abl_fromgauss}
\end{figure*}

\section{Fine-tuning Ablations}
\label{app:fine-tuning}
\subsection{Impact of starting and new attack pairs}
\label{app:fine-tuning_pairs}
Similar to \cref{app:fine-tuning_pairs_init_train}, we ablate over starting and new attack pairs in finetuning.  In this section, we address the question: does regularization in fine-tuning generally lead to more robust models?  We follow the same setup as in \cref{app:fine-tuning_pairs_init_train} but  we compare models fine-tuned with regularization (with no regularization in pretraining) to models fine-tuned without regularization (with no regularization in pretraining).  We present the differences in average accuracy across the 2 attacks, union accuracy across the 2 attacks, accuracy on the starting attack, accuracy on the new attack, and clean accuracy between the 2 settings for adversarial $\ell_2$ regularization (with $\lambda=0.5$) in Figure \ref{fig:imagenette_finetune_abl} and for uniform regularization (with $\sigma=2$ and $\lambda=1$) in Figure \ref{fig:imagenette_finetune_abl_uniform}.  In these figures, we highlight gains in accuracy larger than 1\% in green and drops in accuracy larger than 1\% in red.

\textbf{Adversarial $\ell_2$ regularization in fine-tuning generally improves performance but trades off clean accuracy. }From Figure \ref{fig:imagenette_finetune_abl}, we can see that for many pairs of initial and new attack, regularization leads to improvements in union accuracy, average accuracy, and new attack accuracy.  However, this comes at a clear tradeoff with clean accuracy.  For accuracy on the initial attack, it is difficult to see clear trends; depending on threat models there can be gains in robustness or drops in robustness.  For example, when the new attack is $\ell_\infty$, we find that the initial attack accuracy generally drops.  We find that variation regularization can also lead to gains in performance, but these gains are much less consistent than compared to adversarial $\ell_2$ regularization.

\textbf{Random noise regularization in fine-tuning hurts overall robustness.}  Unlike adversarial $\ell_2$ regularization which can improve performance when used in both initial training and regularization, we find that uniform and Gaussian regularization generally hurts average, union, initial attack, and new attack accuracies when incorporated in fine-tuning.  This suggests that while random noise based regularization may help with initial training (and unforeseen robustness), they do not necessarily help with continual adaptive robustness through fine-tuning.

\begin{figure*}[t!]
    \centering
    \begin{subfigure}[t]{0.4\textwidth}
        \centering
        \includegraphics[width=0.95\textwidth]{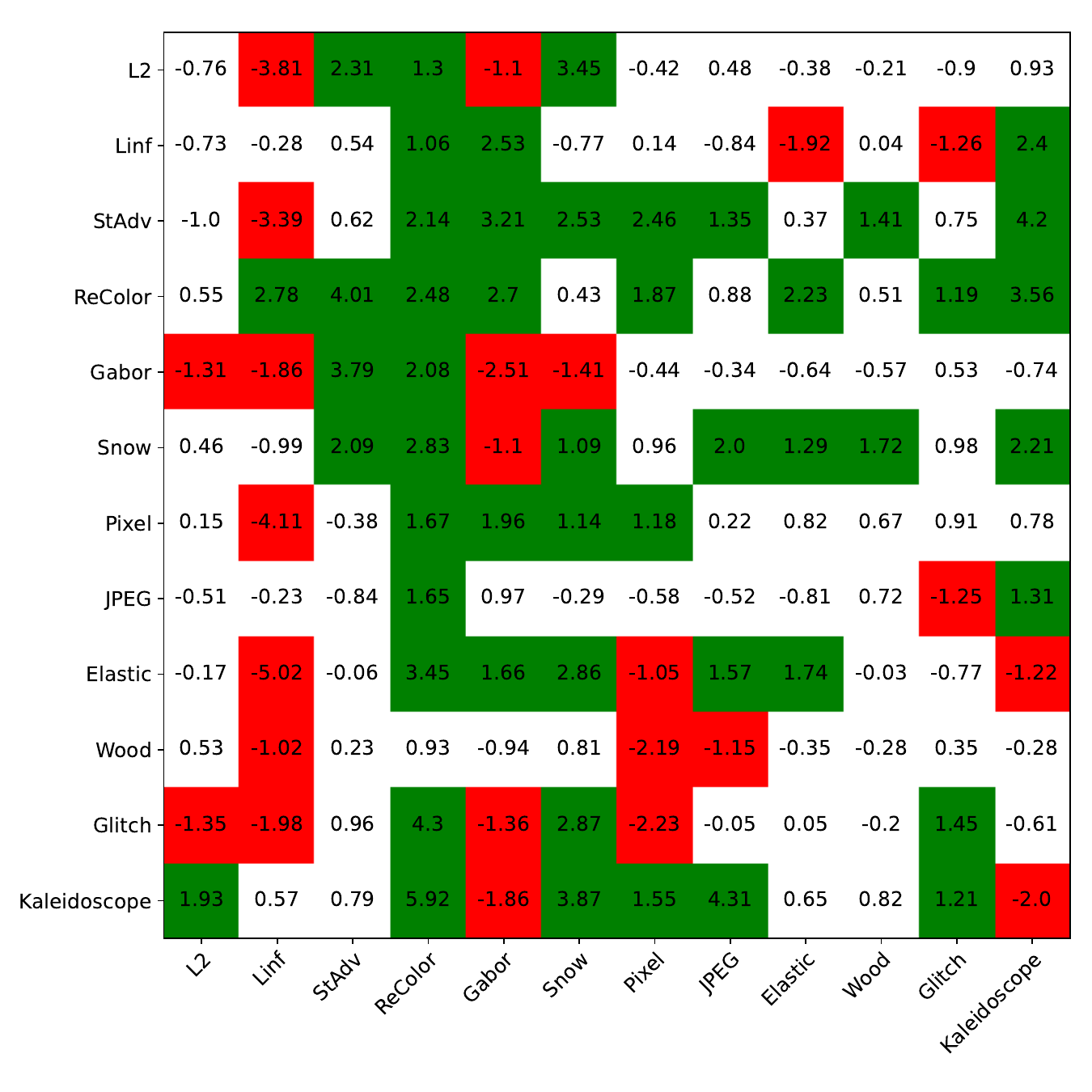}
        \vspace{-10pt}
        \caption{Difference in Avg Acc}
    \end{subfigure}%
    \begin{subfigure}[t]{0.4\textwidth}
        \centering
        \includegraphics[width=0.95\textwidth]{figures/imagenette_finetune_l2_reg_union.pdf}
        \vspace{-10pt}
        \caption{Difference in Union Acc}
    \end{subfigure}
    \begin{subfigure}[t]{0.4\textwidth}
        \centering
        \includegraphics[width=0.95\textwidth]{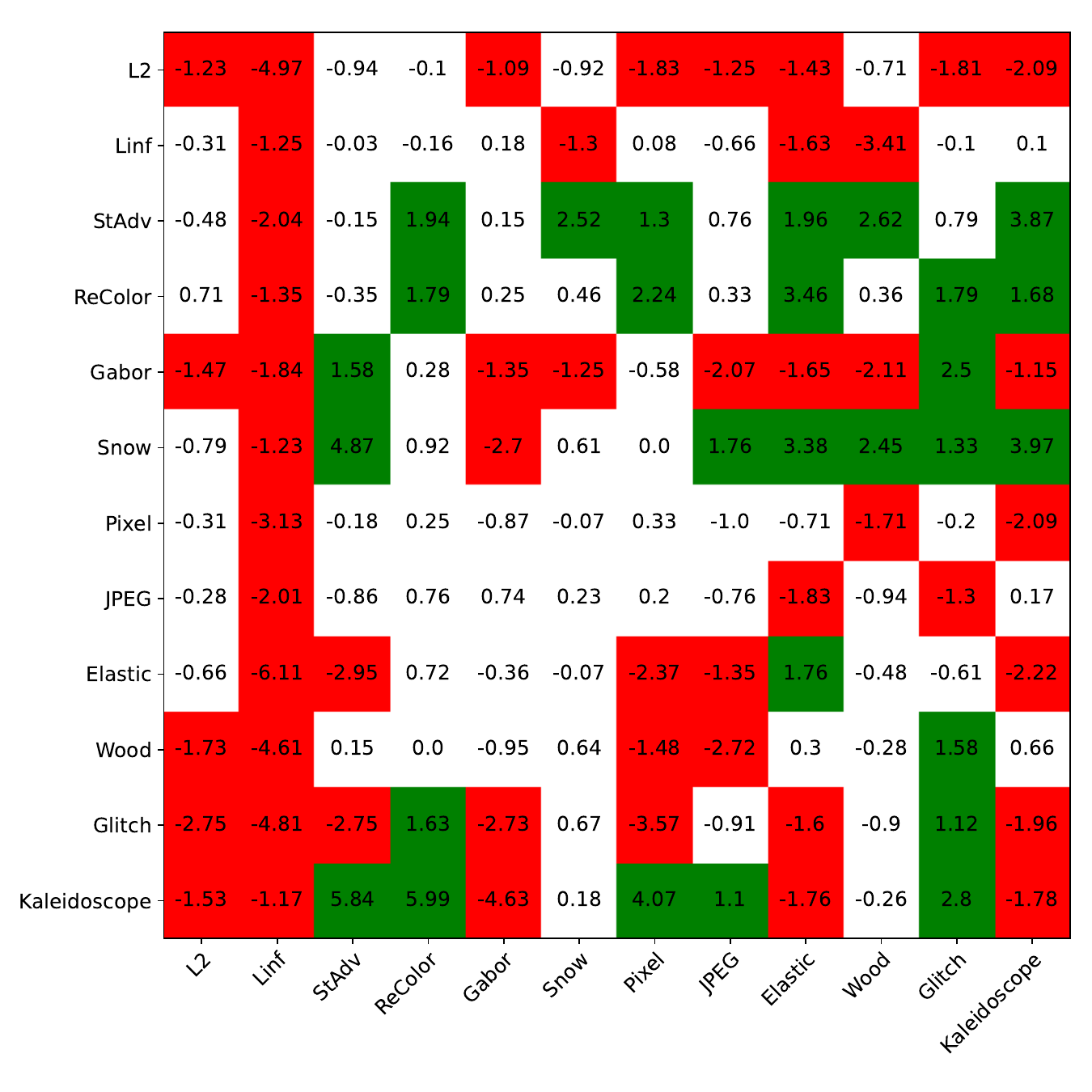}
        \vspace{-10pt}
        \caption{Difference in Initial Attack Acc}
    \end{subfigure}%
    \begin{subfigure}[t]{0.4\textwidth}
        \centering
        \includegraphics[width=0.95\textwidth]{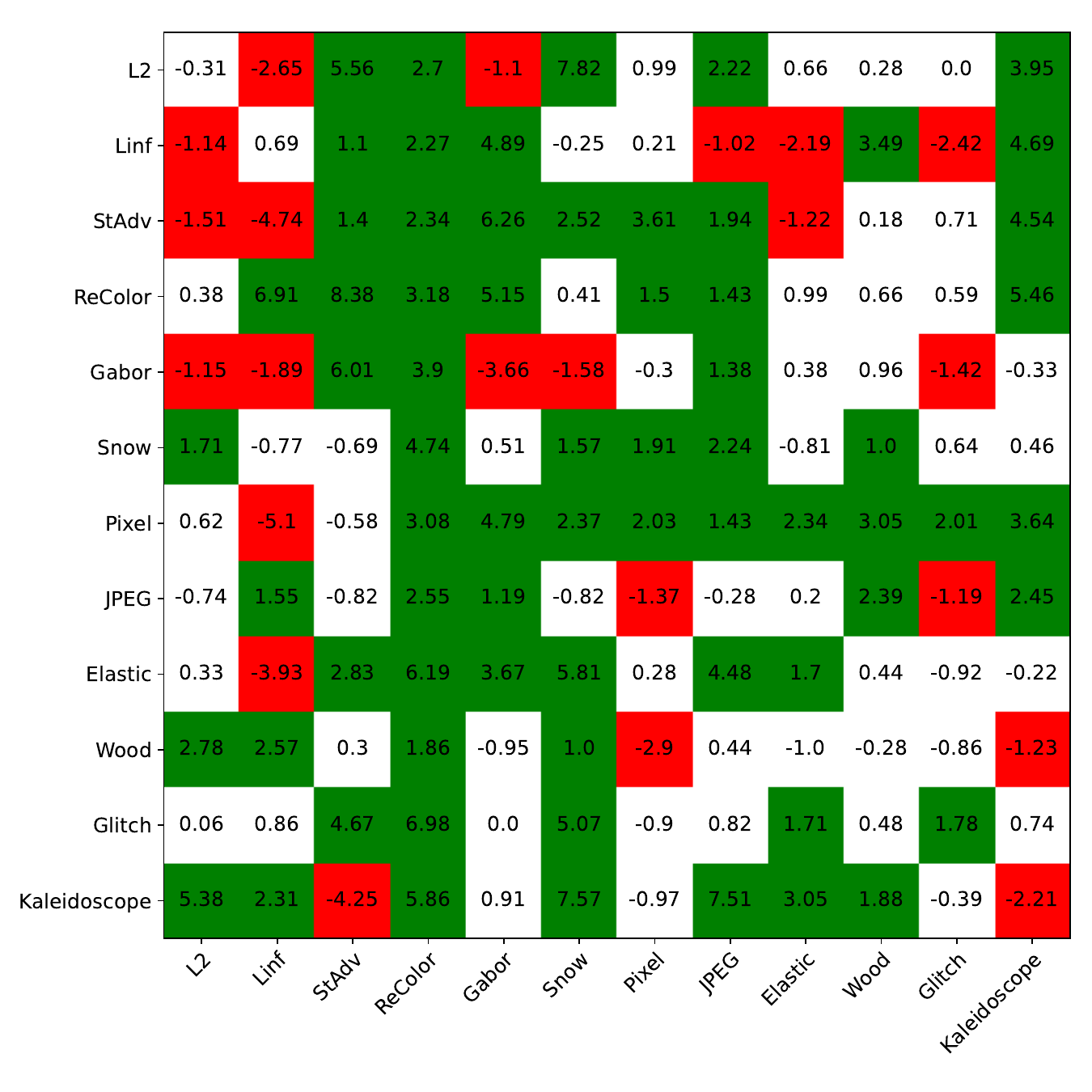}
        \vspace{-10pt}
        \caption{Difference in New Attack Acc}
    \end{subfigure}
    \begin{subfigure}[t]{0.4\textwidth}
        \centering
        \includegraphics[width=0.95\textwidth]{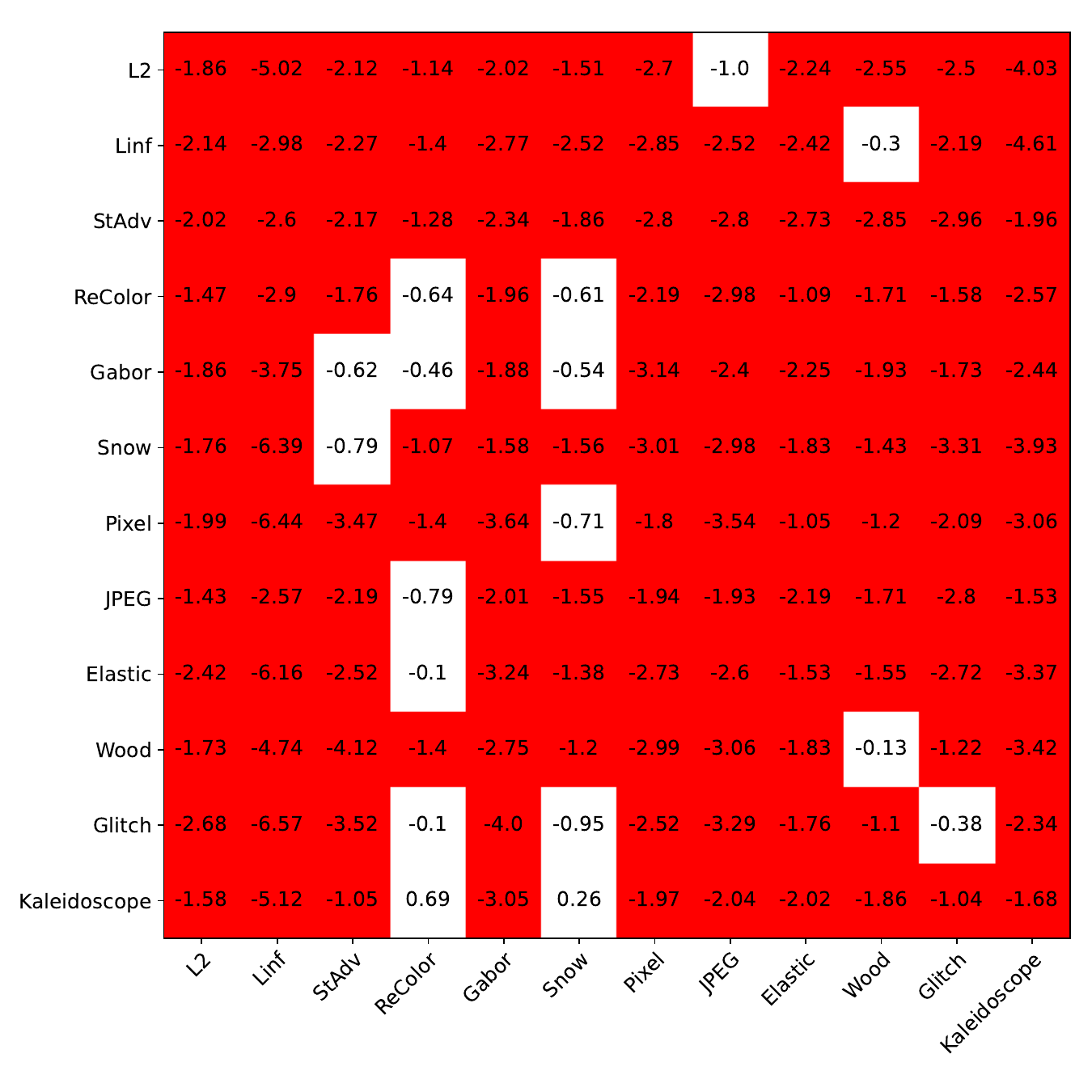}
        \vspace{-10pt}
        \caption{Difference in Clean Acc}
    \end{subfigure}
    \caption{\textbf{Change in robust accuracy after fine-tuning with adversarial $\ell_2$ regularization.}  We fine-tune models on Imagenette across 144 pairs of initial attack and new attack.  The initial attack corresponds to the row of each grid and new attack corresponds to each column.  Values represent differences between the accuracy measured on a model fine-tuned with and without regularization.  Gains in accuracy of at least 1\% are highlighted in green, while drops in accuracy of at least 1\% are highlighted in red.}
    \label{fig:imagenette_finetune_abl}
\end{figure*}

\begin{figure*}[t!]
    \centering
    \begin{subfigure}[t]{0.4\textwidth}
        \centering
        \includegraphics[width=0.95\textwidth]{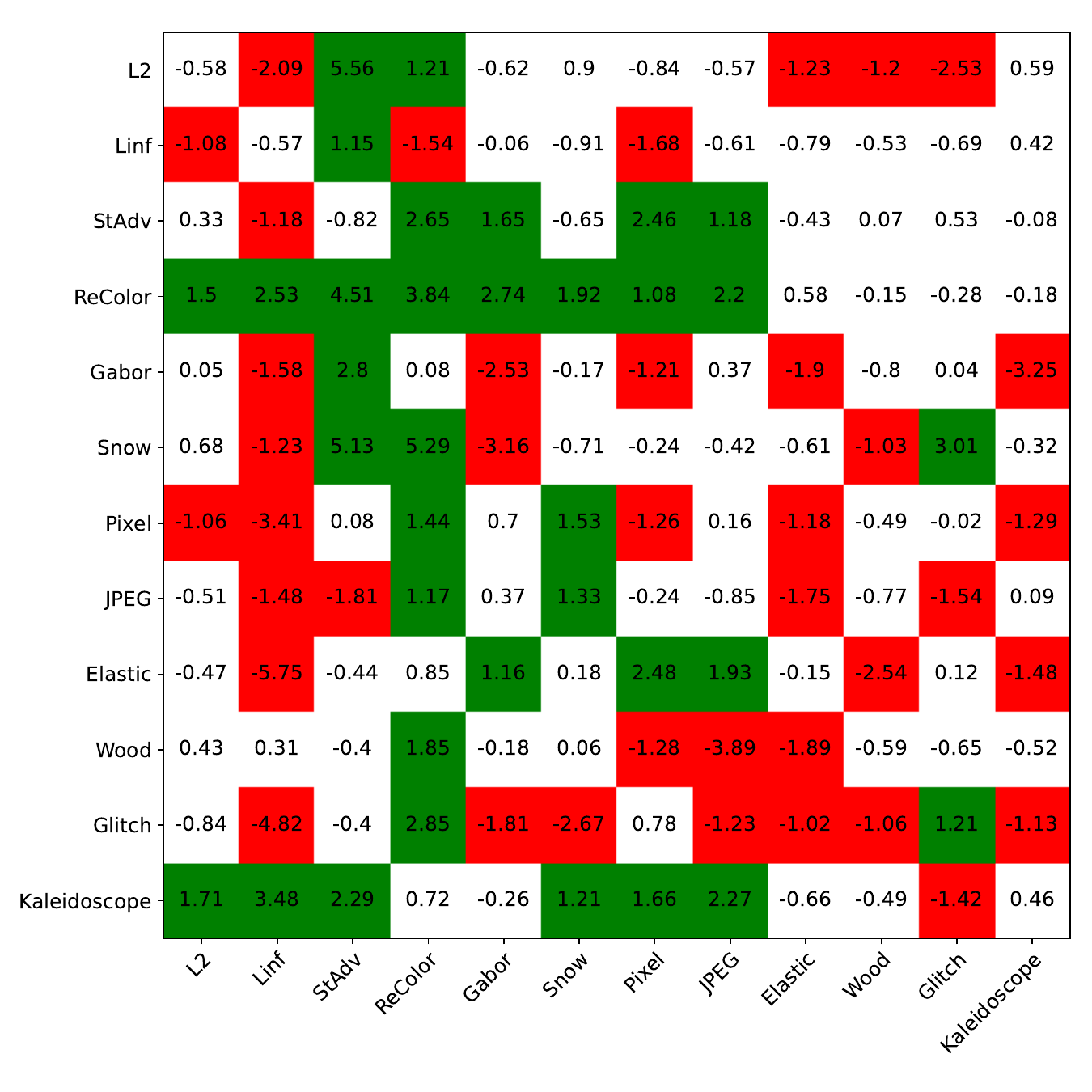}
        \vspace{-10pt}
        \caption{Difference in Avg Acc}
    \end{subfigure}%
    \begin{subfigure}[t]{0.4\textwidth}
        \centering
        \includegraphics[width=0.95\textwidth]{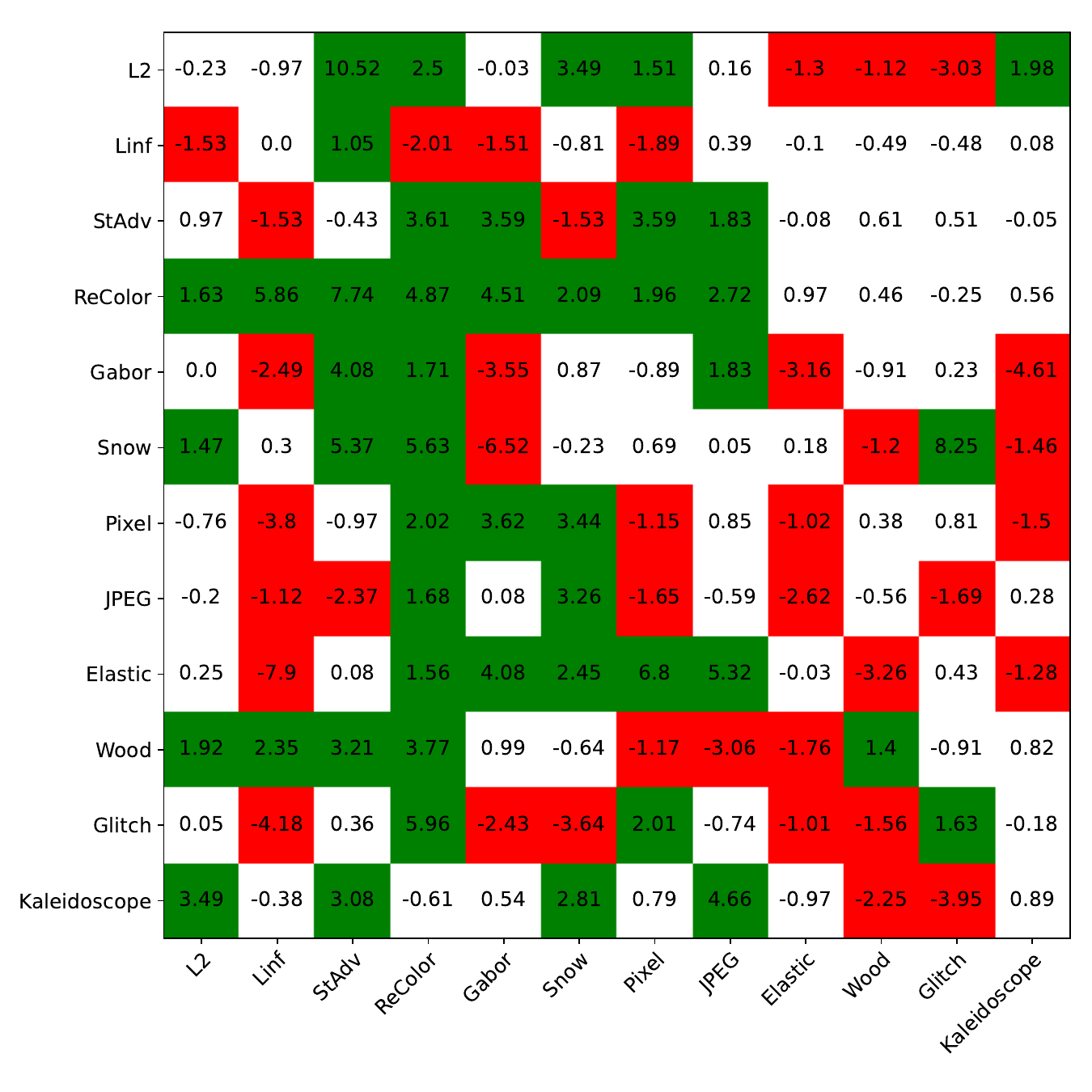}
        \vspace{-10pt}
        \caption{Difference in Union Acc}
    \end{subfigure}
    \begin{subfigure}[t]{0.4\textwidth}
        \centering
        \includegraphics[width=0.95\textwidth]{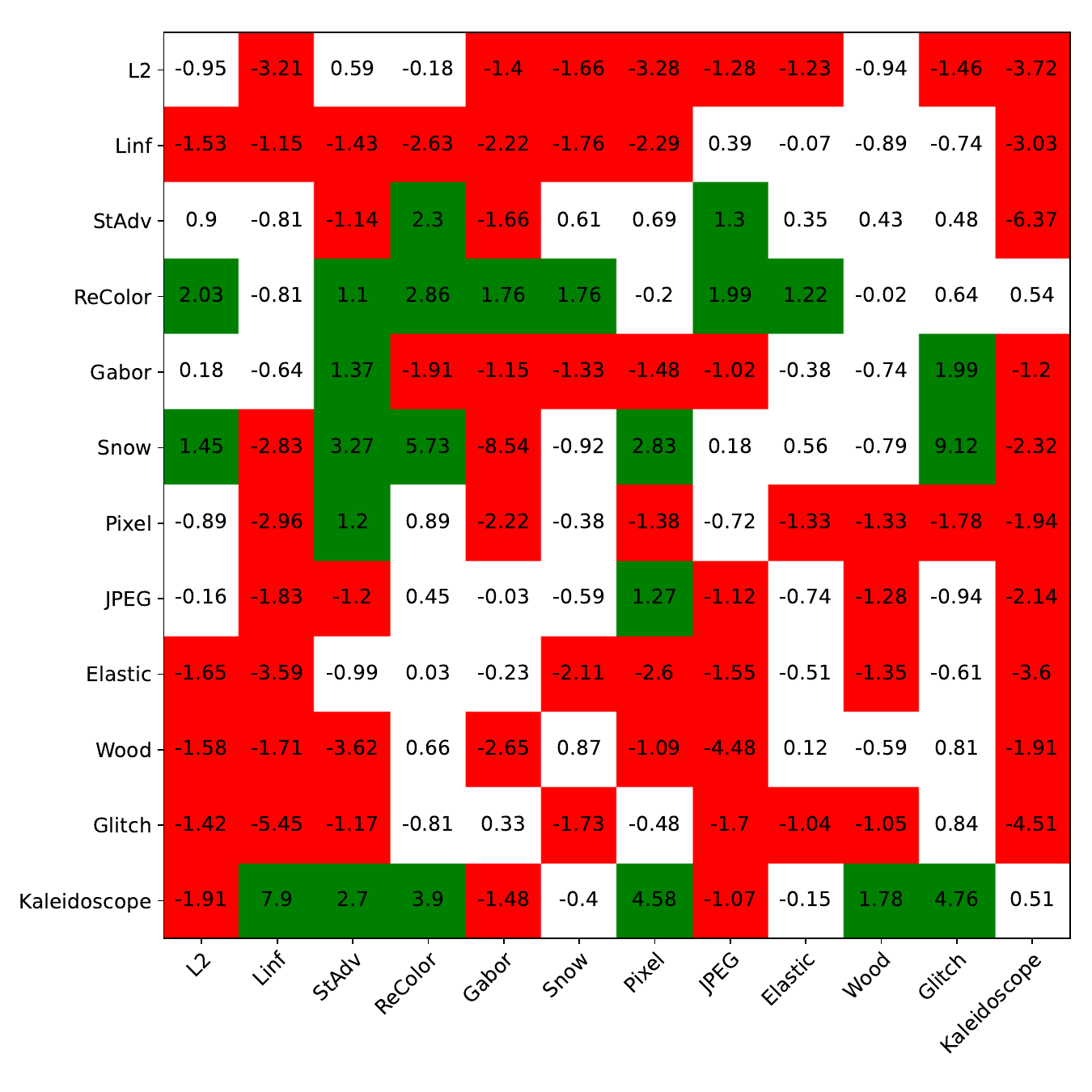}
        \vspace{-10pt}
        \caption{Difference in Initial Attack Acc}
    \end{subfigure}%
    \begin{subfigure}[t]{0.4\textwidth}
        \centering
        \includegraphics[width=0.95\textwidth]{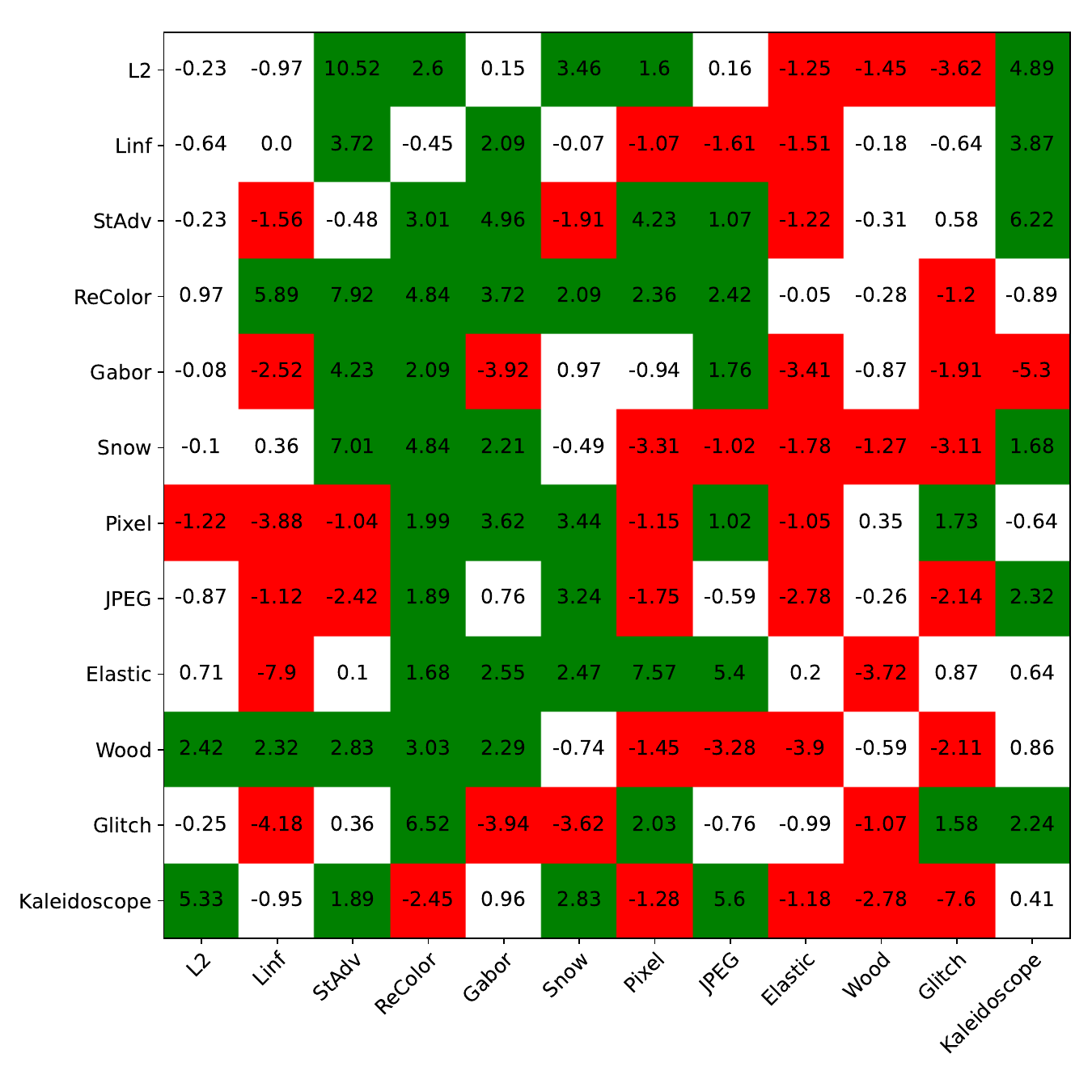}
        \vspace{-10pt}
        \caption{Difference in New Attack Acc}
    \end{subfigure}
    \begin{subfigure}[t]{0.4\textwidth}
        \centering
        \includegraphics[width=0.95\textwidth]{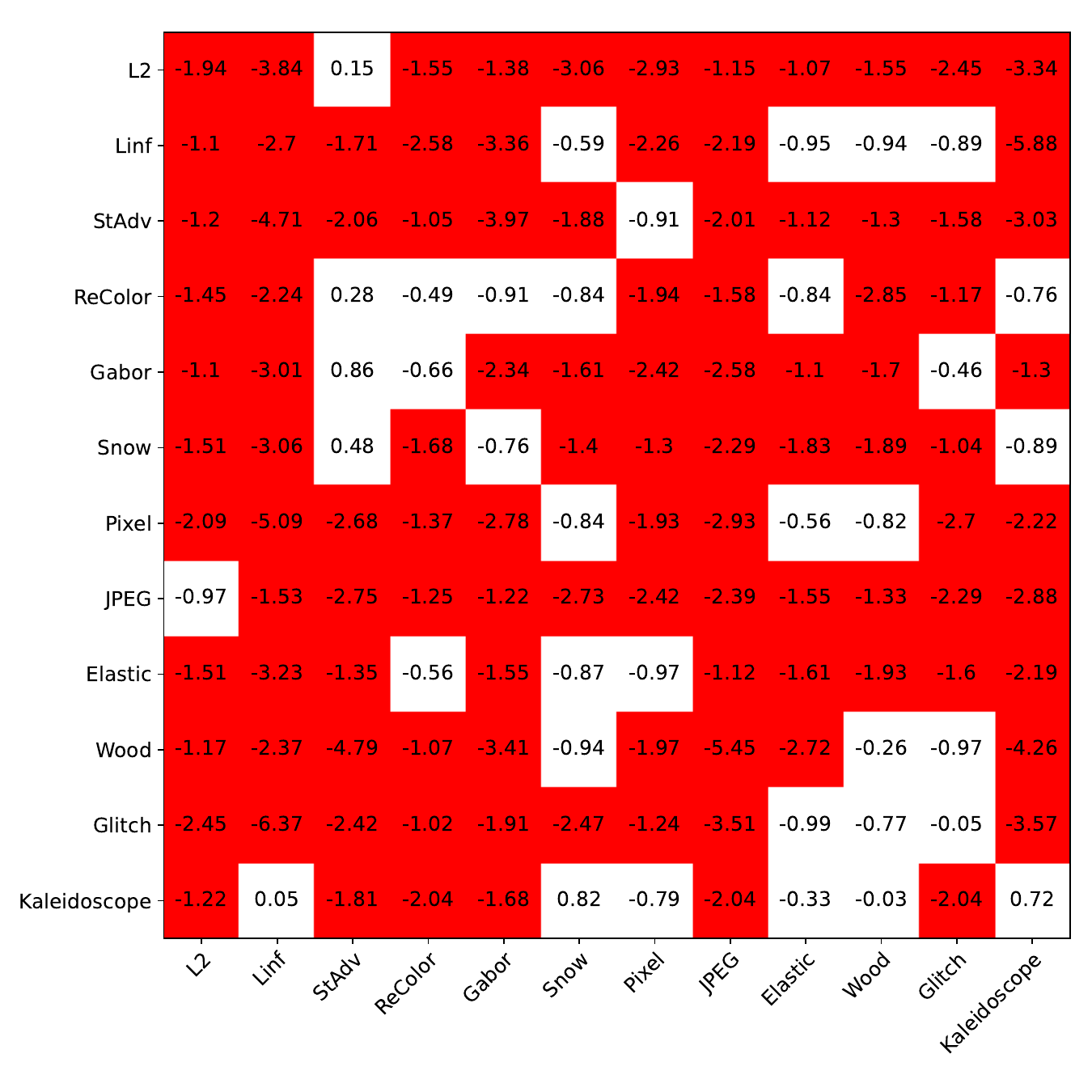}
        \vspace{-10pt}
        \caption{Difference in Clean Acc}
    \end{subfigure}
    \caption{\textbf{Change in robust accuracy after fine-tuning with variation regularization.}  We fine-tune models on Imagenette across 144 pairs of initial attack and new attack.  The initial attack corresponds to the row of each grid and new attack corresponds to each column.  Values represent differences between the accuracy measured on a model fine-tuned with and without regularization.  Gains in accuracy of at least 1\% are highlighted in green, while drops in accuracy of at least 1\% are highlighted in red.}
    \label{fig:imagenette_finetune_abl_varreg}
\end{figure*}

\begin{figure*}[t!]
    \centering
    \begin{subfigure}[t]{0.4\textwidth}
        \centering
        \includegraphics[width=0.95\textwidth]{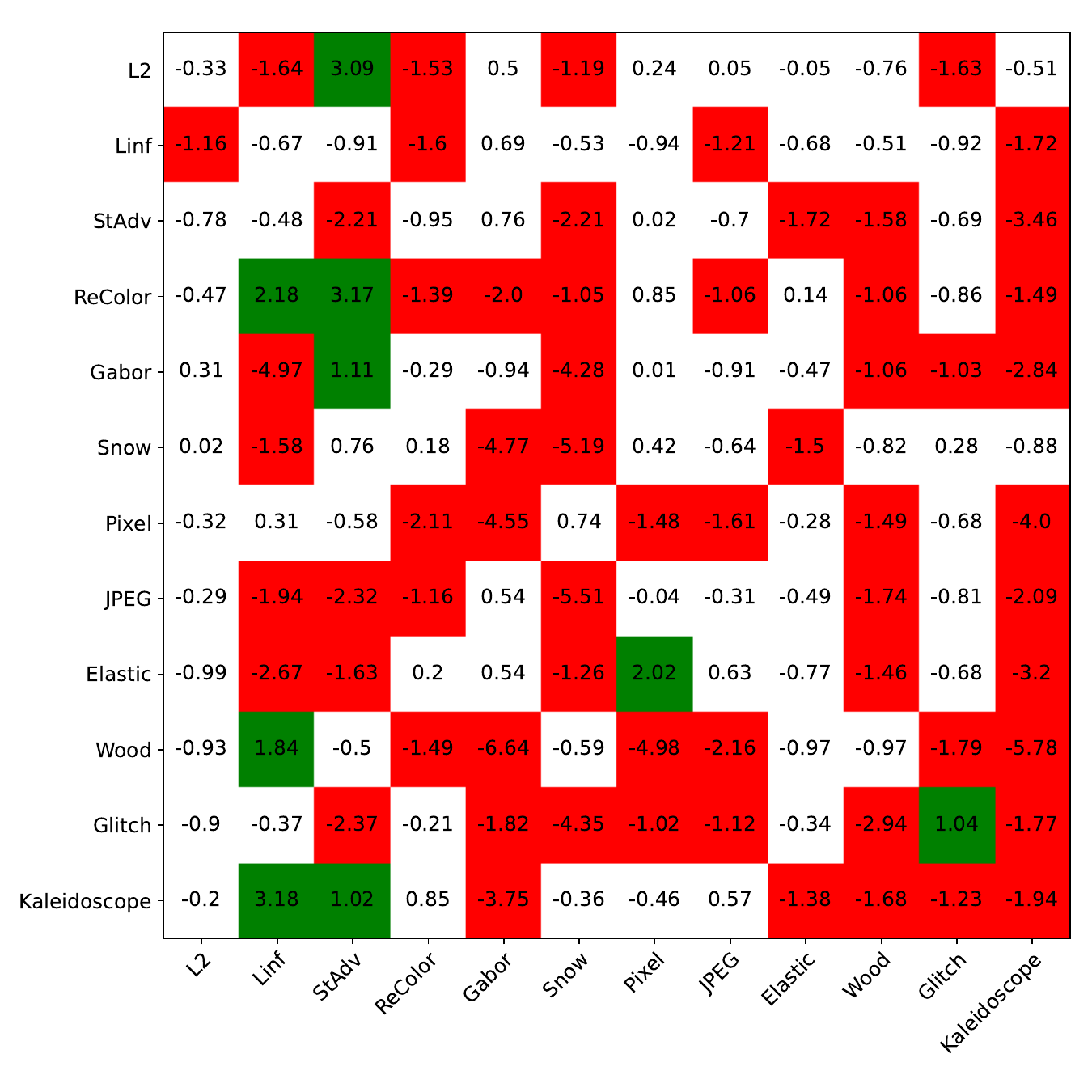}
        \vspace{-10pt}
        \caption{Difference in Avg Acc}
    \end{subfigure}%
    \begin{subfigure}[t]{0.4\textwidth}
        \centering
        \includegraphics[width=0.95\textwidth]{figures/imagenette_finetune_uniform_union.pdf}
        \vspace{-10pt}
        \caption{Difference in Union Acc}
    \end{subfigure}
    \begin{subfigure}[t]{0.4\textwidth}
        \centering
        \includegraphics[width=0.95\textwidth]{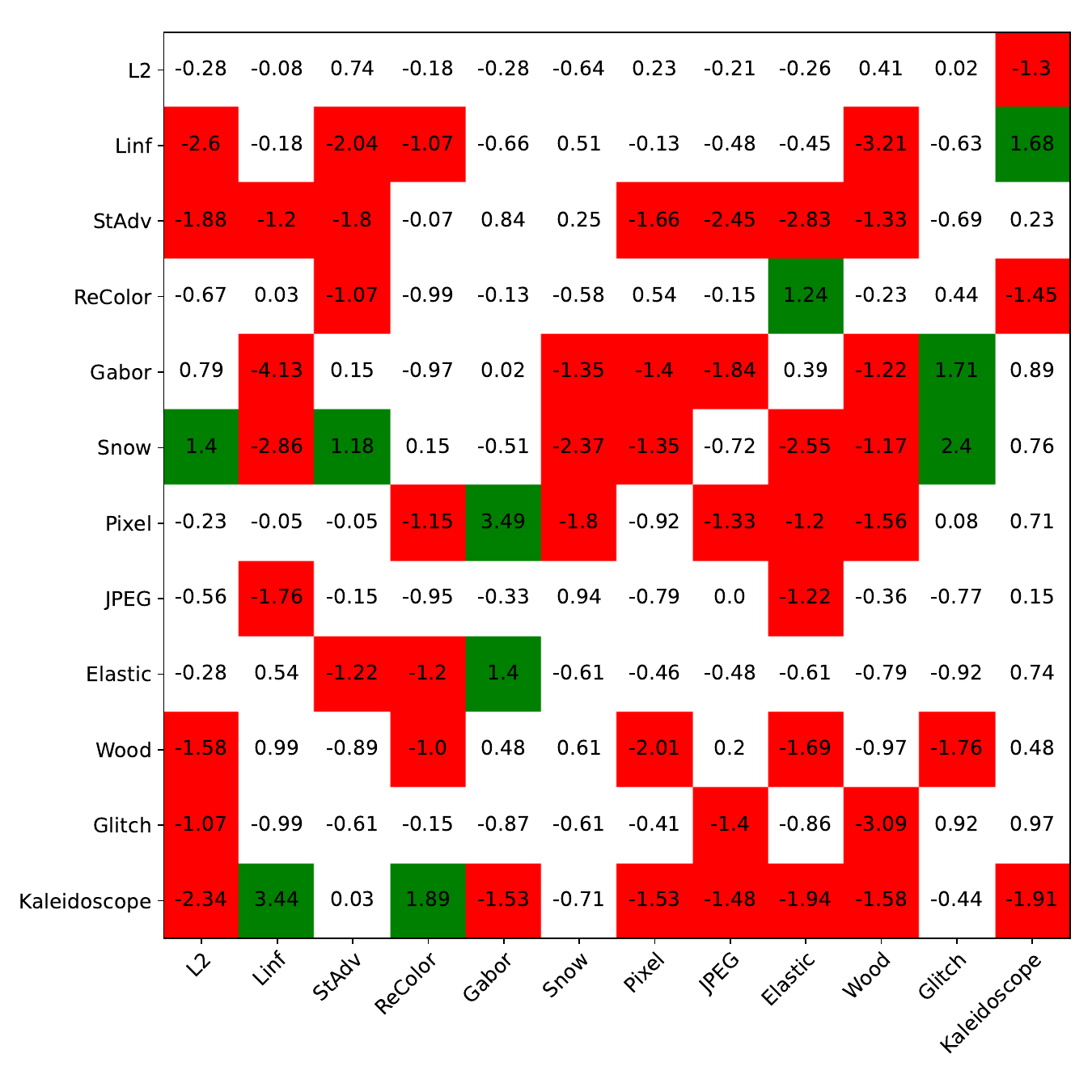}
        \vspace{-10pt}
        \caption{Difference in Initial Attack Acc}
    \end{subfigure}%
    \begin{subfigure}[t]{0.4\textwidth}
        \centering
        \includegraphics[width=0.95\textwidth]{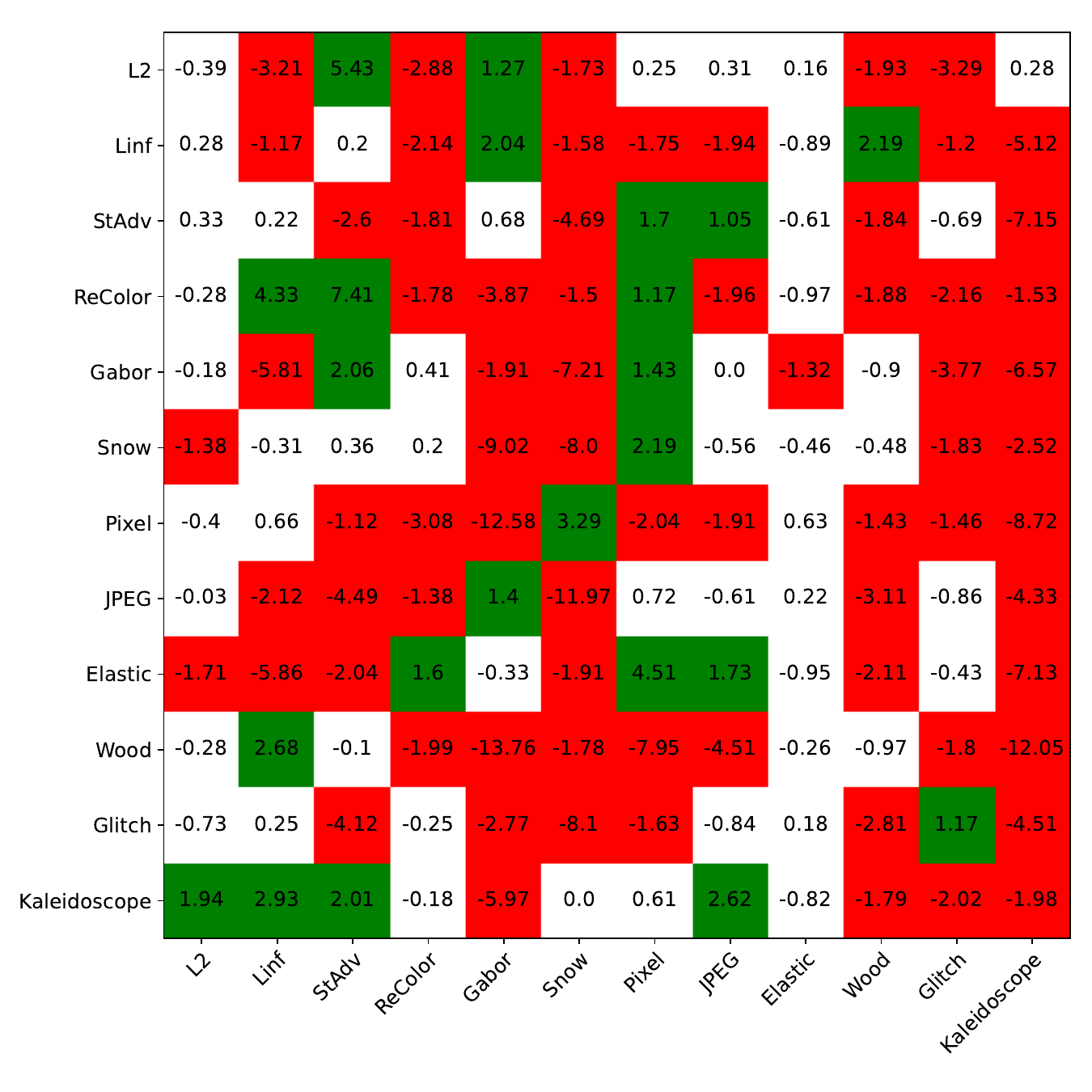}
        \vspace{-10pt}
        \caption{Difference in New Attack Acc}
    \end{subfigure}
    \begin{subfigure}[t]{0.4\textwidth}
        \centering
        \includegraphics[width=0.95\textwidth]{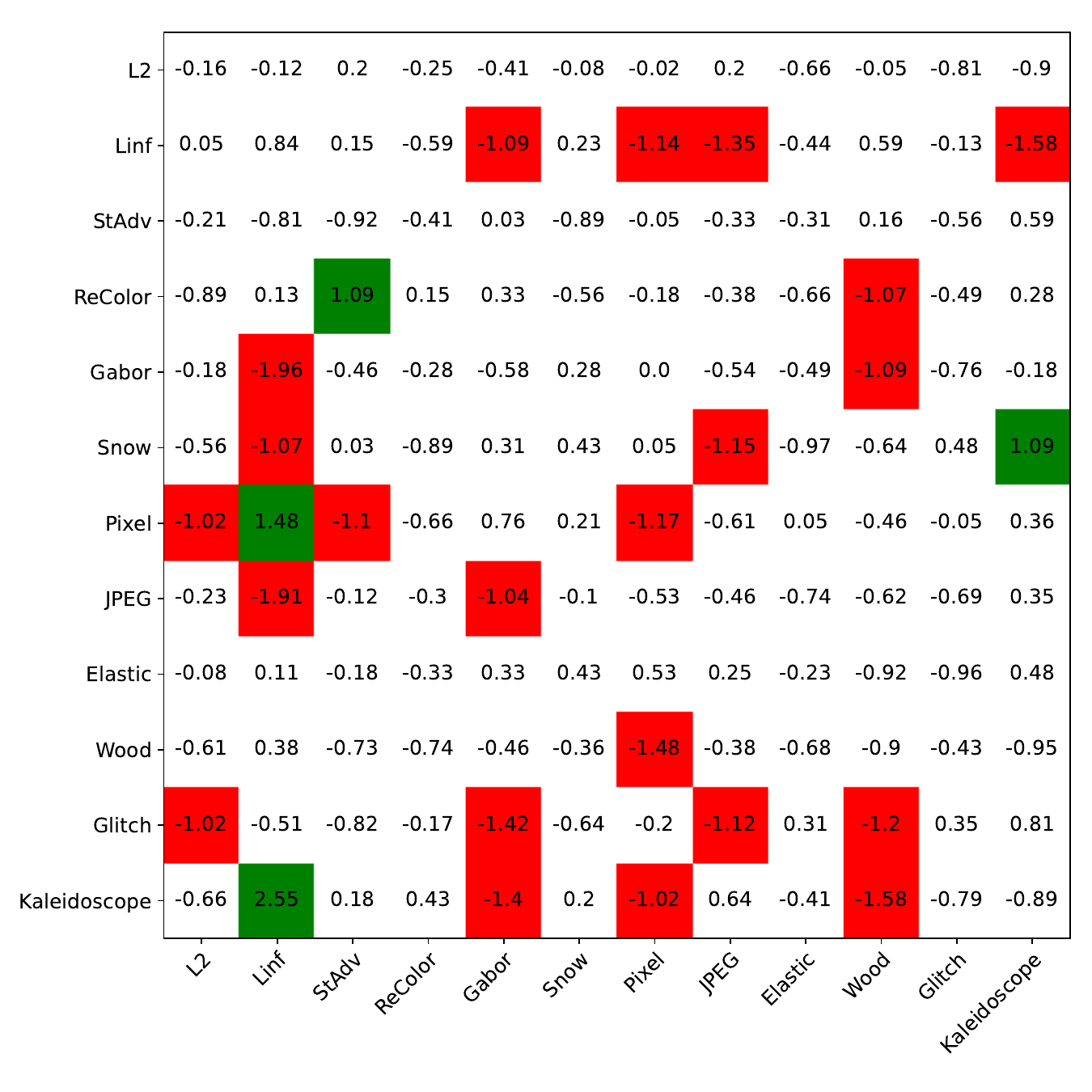}
        \vspace{-10pt}
        \caption{Difference in Clean Acc}
    \end{subfigure}
    \caption{\textbf{Change in robust accuracy after fine-tuning with uniform regularization.}  We fine-tune models on Imagenette across 144 pairs of initial attack and new attack.  The initial attack corresponds to the row of each grid and new attack corresponds to each column.  Values represent differences between the accuracy measured on a model fine-tuned with and without regularization.  Gains in accuracy of at least 1\% are highlighted in green, while drops in accuracy of at least 1\% are highlighted in red.}
    \label{fig:imagenette_finetune_abl_uniform}
\end{figure*}

\begin{figure*}[t!]
    \centering
    \begin{subfigure}[t]{0.4\textwidth}
        \centering
        \includegraphics[width=0.95\textwidth]{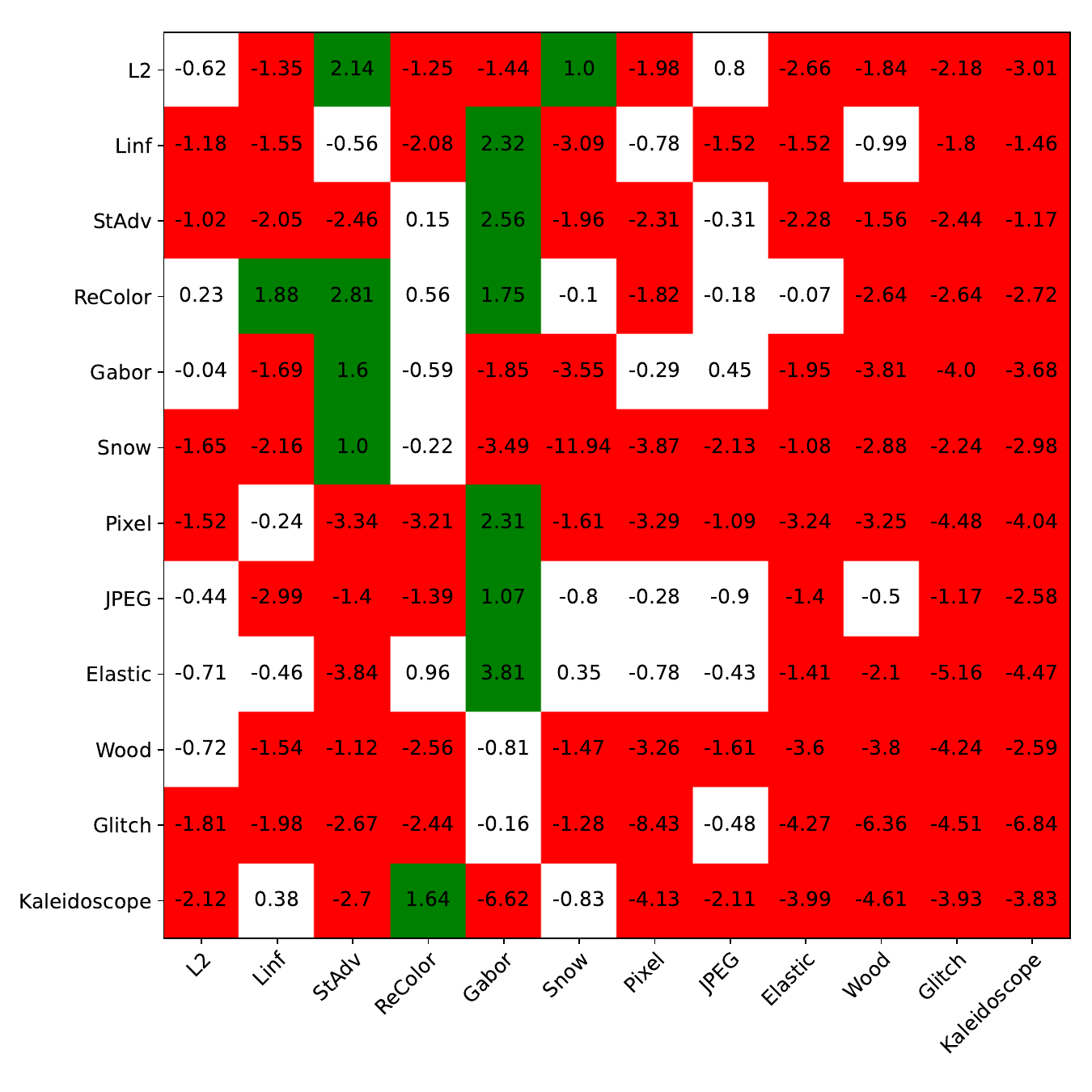}
        \vspace{-10pt}
        \caption{Difference in Avg Acc}
    \end{subfigure}%
    \begin{subfigure}[t]{0.4\textwidth}
        \centering
        \includegraphics[width=0.95\textwidth]{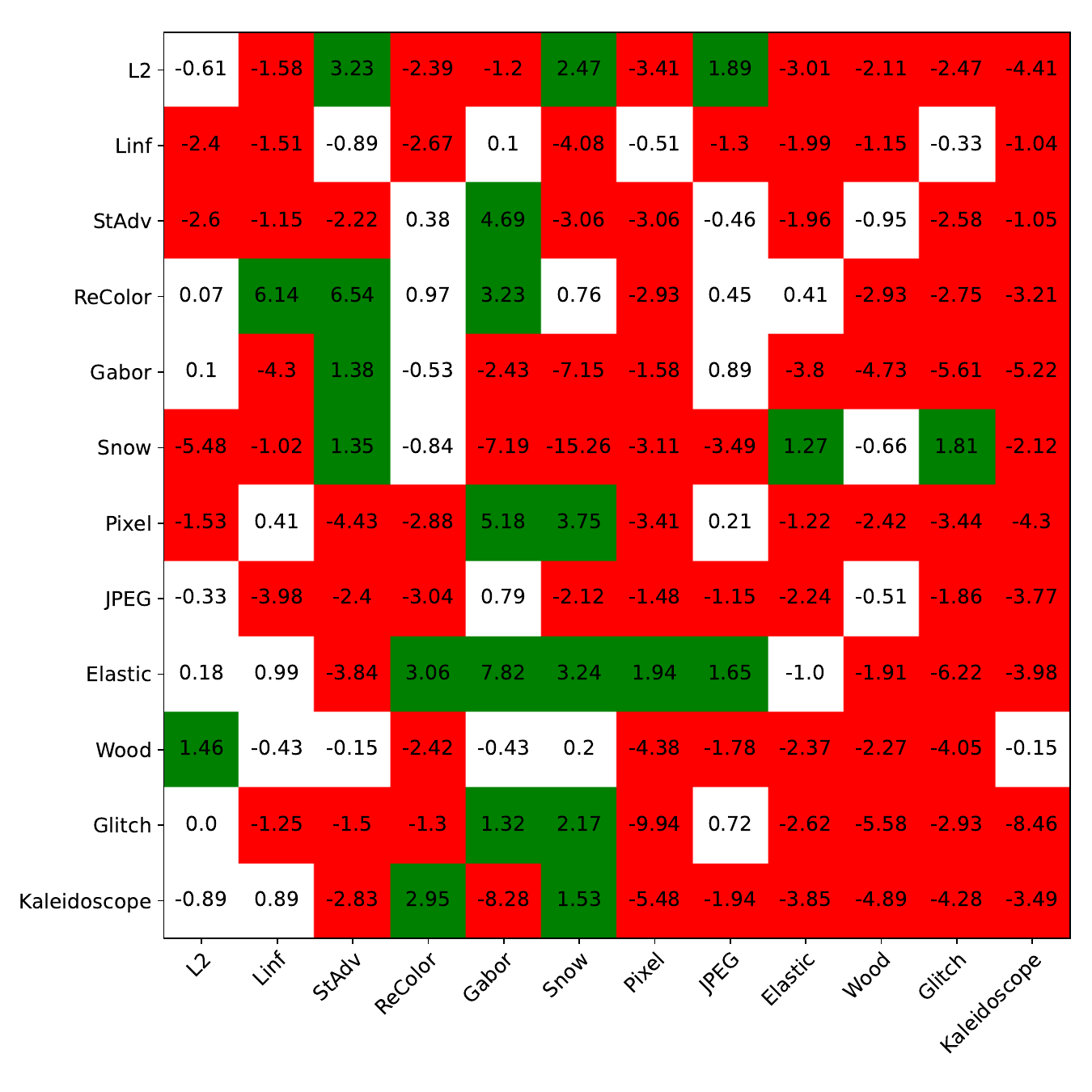}
        \vspace{-10pt}
        \caption{Difference in Union Acc}
    \end{subfigure}
    \begin{subfigure}[t]{0.4\textwidth}
        \centering
        \includegraphics[width=0.95\textwidth]{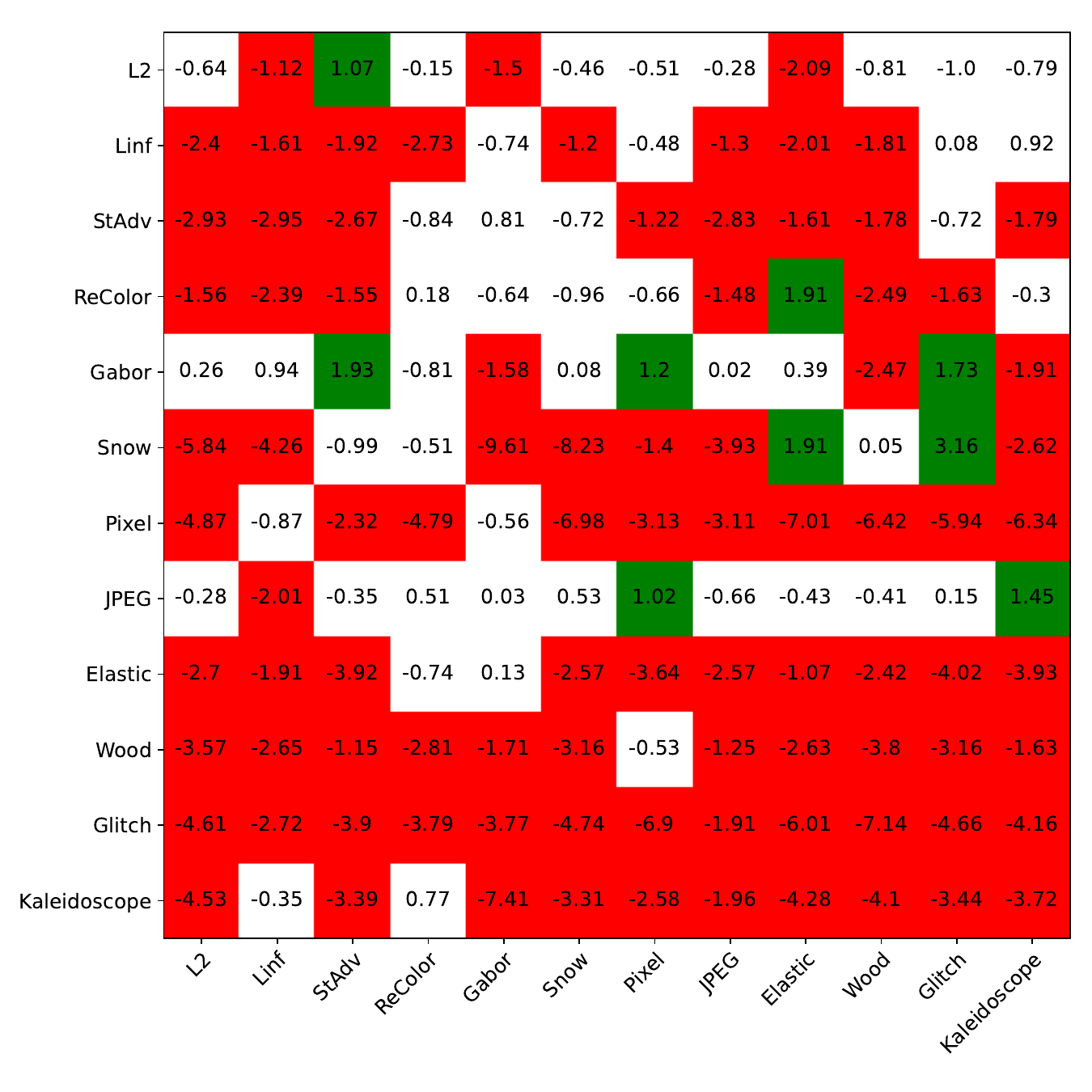}
        \vspace{-10pt}
        \caption{Difference in Initial Attack Acc}
    \end{subfigure}%
    \begin{subfigure}[t]{0.4\textwidth}
        \centering
        \includegraphics[width=0.95\textwidth]{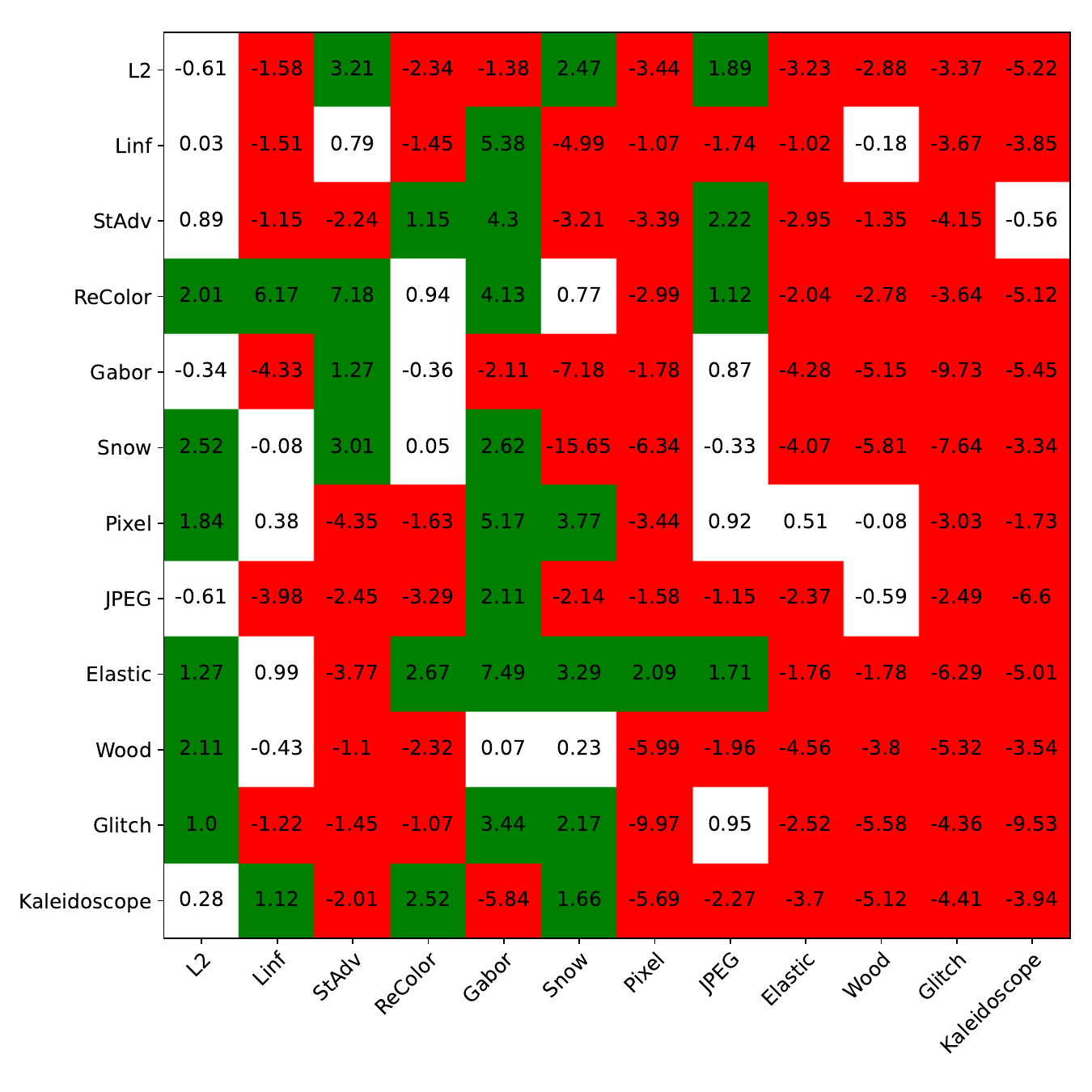}
        \vspace{-10pt}
        \caption{Difference in New Attack Acc}
    \end{subfigure}
    \begin{subfigure}[t]{0.4\textwidth}
        \centering
        \includegraphics[width=0.95\textwidth]{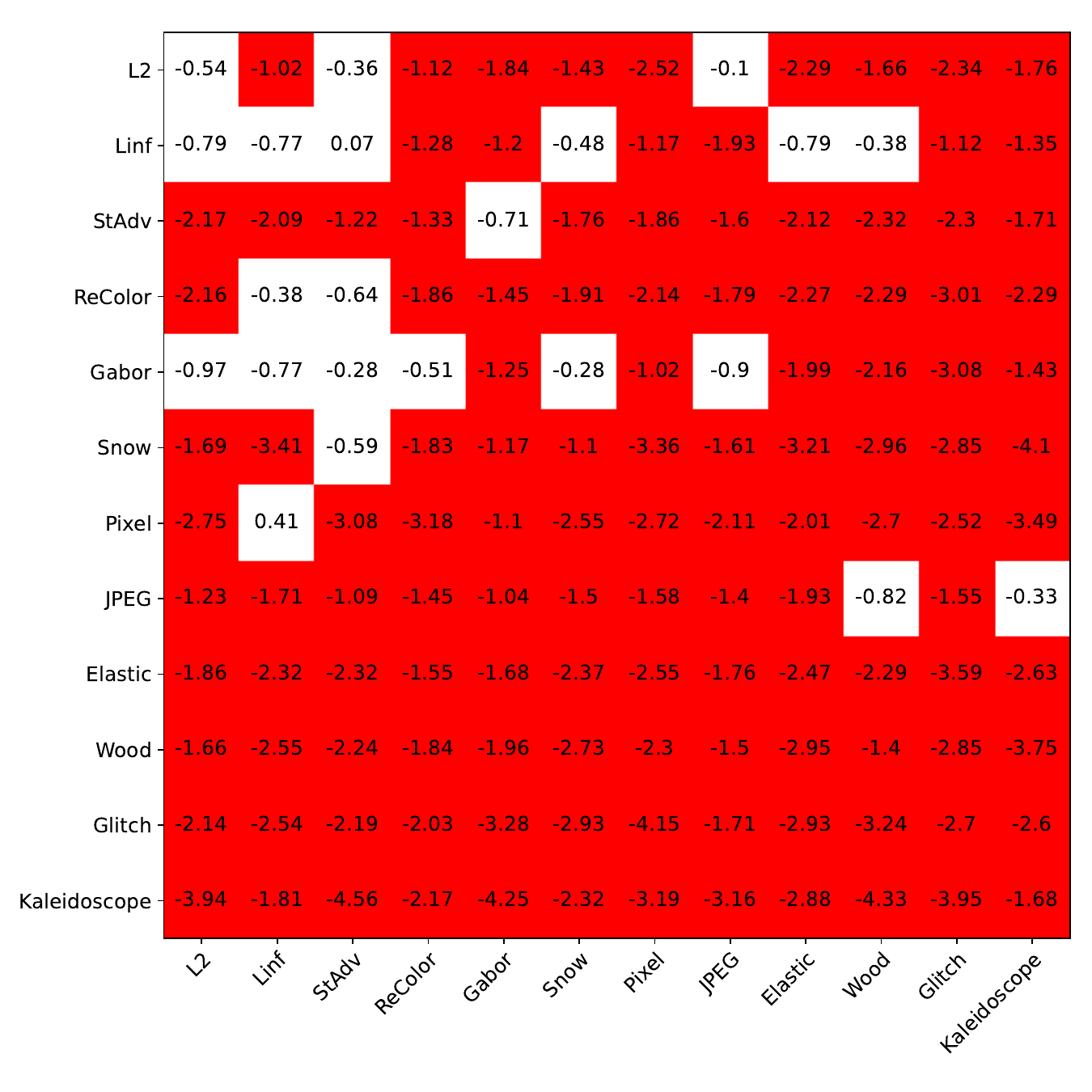}
        \vspace{-10pt}
        \caption{Difference in Clean Acc}
    \end{subfigure}
    \caption{\textbf{Change in robust accuracy after fine-tuning with Gaussian regularization.}  We fine-tune models on Imagenette across 144 pairs of initial attack and new attack.  The initial attack corresponds to the row of each grid and new attack corresponds to each column.  Values represent differences between the accuracy measured on a model fine-tuned with and without regularization.  Gains in accuracy of at least 1\% are highlighted in green, while drops in accuracy of at least 1\% are highlighted in red.}
    \label{fig:imagenette_finetune_abl_uniform}
\end{figure*}

\end{document}